\newcommand{\iclr}[1]{\iftoggle{iclr}{#1}{}}
\newcommand{\arxiv}[1]{\iftoggle{iclr}{}{#1}}
\newcommand{\loose}{\looseness=-1}
\newcommand{\neutralize}[1]{\expandafter\let\csname c@#1\endcsname\count@}
   \newenvironment{lemmod}[2]
  {%
   \neutralize{lemma}\phantomsection
   \begin{lemma}}
  {\end{lemma}}
\newenvironment{thmmod}[2]
  {%
   \neutralize{theorem}\phantomsection
   \begin{theorem}}
  {\end{theorem}}
\declaretheorem[name=Theorem,parent=section]{theorem}
\declaretheorem[name=Lemma,parent=section]{lemma}
\declaretheorem[name=Assumption, parent=section]{assumption}
\declaretheorem[name=Definition, parent=section]{definition}
\declaretheorem[name=Condition, parent=section]{condition}
\declaretheorem[name=Corollary, parent=section]{corollary}
\declaretheorem[qed=$\triangleleft$,name=Example,style=definition, parent=section]{example}
\declaretheorem[name=Remark, parent=section]{remark}
\declaretheorem[name=Proposition, parent=section]{proposition}
    \let\Cref\crtCref
    \let\cref\crtcref
\newcommand{\creftitle}[1]{\cref{#1}}
  \renewenvironment{proof}[1][Proof]%
  {%
   \par\noindent{\bfseries\upshape {#1.}\ }%
  }%
  {\qed\newline}
\xpatchcmd{\proof}{\itshape}{\normalfont\proofnameformat}{}{}
\newcommand{\proofnameformat}{\bfseries}
\newcommand{\pref}[1]{\cref{#1}}
\newcommand{\pfref}[1]{Proof of \pref{#1}}
\renewcommand{\eqref}[1]{\texorpdfstring{\hyperref[#1]{(\ref*{#1})}}{(\ref*{#1})}}
\Crefname{assumption}{Assumption}{Assumptions}
\crefname{fact}{Fact}{Facts}
\DeclareDocumentCommand{\XDeclarePairedDelimiter}{mm}
 {
  \__egreg_delimiter_clear_keys: %
  \keys_set:nn { egreg/delimiters } { #2 }
  \use:x %
   {
    \exp_not:n {\NewDocumentCommand{#1}{sO{}m} }
     {
      \exp_not:n { \IfBooleanTF{##1} }
       {
        \exp_not:N \egreg_paired_delimiter_expand:nnnn
         { \exp_not:V \l_egreg_delimiter_left_tl }
         { \exp_not:V \l_egreg_delimiter_right_tl }
         { \exp_not:n { ##3 } }
         { \exp_not:V \l_egreg_delimiter_subscript_tl }
       }
       {
        \exp_not:N \egreg_paired_delimiter_fixed:nnnnn 
         { \exp_not:n { ##2 } }
         { \exp_not:V \l_egreg_delimiter_left_tl }
         { \exp_not:V \l_egreg_delimiter_right_tl }
         { \exp_not:n { ##3 } }
         { \exp_not:V \l_egreg_delimiter_subscript_tl }
       }
     }
   }
 }
\XDeclarePairedDelimiter{\supnorm}{
  left=\lVert,
  right=\rVert,
  subscript=\infty
  }
\setlist[enumerate]{leftmargin=*}
\setlist[itemize]{leftmargin=*}
\newtcolorbox{mainbox}[1][]{
  colframe=blue!40!black,
  colback=blue!2!white,
  enhanced,
  attach boxed title to top text left={yshift=-2mm},
  boxed title style={size=small,colframe=blue!40!black,colback=blue!40!black} 
  title={\textbf{#1}}
}
\DeclareFontFamily{U}{jkpmia}{}
\DeclareFontShape{U}{jkpmia}{m}{it}{<->s*jkpmia}{}
\DeclareFontShape{U}{jkpmia}{bx}{it}{<->s*jkpbmia}{}
\DeclareMathAlphabet{\mathfrak}{U}{jkpmia}{m}{it}
\SetMathAlphabet{\mathfrak}{bold}{U}{jkpmia}{bx}{it}
\DeclarePairedDelimiter{\abs}{\lvert}{\rvert} %
\DeclarePairedDelimiter{\brk}{[}{]}
\DeclarePairedDelimiter{\crl}{\{}{\}}
\DeclarePairedDelimiter{\prn}{(}{)}
\DeclarePairedDelimiter{\nrm}{\|}{\|}
\DeclarePairedDelimiter{\floor}{\lfloor}{\rfloor}
\let\Pr\undefined
\DeclareMathOperator{\En}{\mathbb{E}}
\DeclareMathOperator{\Pr}{Pr}
\DeclareMathOperator*{\argmin}{arg\,min} %
\DeclareMathOperator*{\argmax}{arg\,max}
\newcommand{\mb}[1]{\boldsymbol{#1}}
\newcommand{\wt}[1]{\widetilde{#1}}
\newcommand{\wh}[1]{\widehat{#1}}
\newcommand{\wb}[1]{\widebar{#1}}
\def\ddefloop#1{\ifx\ddefloop#1\else\ddef{#1}\expandafter\ddefloop\fi}
\def\ddef#1{\expandafter\def\csname bb#1\endcsname{\ensuremath{\mathbb{#1}}}}
\def\ddefloop#1{\ifx\ddefloop#1\else\ddef{#1}\expandafter\ddefloop\fi}
\def\ddef#1{\expandafter\def\csname b#1\endcsname{\ensuremath{\mathbf{#1}}}}
\def\ddef#1{\expandafter\def\csname sf#1\endcsname{\ensuremath{\mathsf{#1}}}}
\def\ddef#1{\expandafter\def\csname c#1\endcsname{\ensuremath{\mathcal{#1}}}}
\def\ddef#1{\expandafter\def\csname h#1\endcsname{\ensuremath{\widehat{#1}}}}
\def\ddef#1{\expandafter\def\csname hc#1\endcsname{\ensuremath{\widehat{\mathcal{#1}}}}}
\def\ddef#1{\expandafter\def\csname t#1\endcsname{\ensuremath{\widetilde{#1}}}}
\def\ddef#1{\expandafter\def\csname tc#1\endcsname{\ensuremath{\widetilde{\mathcal{#1}}}}}
\def\ddefloop#1{\ifx\ddefloop#1\else\ddef{#1}\expandafter\ddefloop\fi}
\def\ddef#1{\expandafter\def\csname scr#1\endcsname{\ensuremath{\mathscr{#1}}}}
\newcommand{\eps}{\epsilon}
\newcommand{\veps}{\varepsilon}
\newcommand{\ldef}{\vcentcolon=}
\newcommand{\rdef}{=\vcentcolon}
\newcommand{\xpo}{\texttt{XPO}\xspace}
\newcommand{\rself}{r_{\texttt{self}}}
\newcommand{\algofont}[1]{{\scalefont{1.0}{\texttt{{#1}}}}}
\newcommand{\mlsharp}{\text{maximum-likelihood sharpening}\xspace}
\newcommand{\Mlsharp}{\text{Maximum-likelihood sharpening}\xspace}
\newcommand{\pibase}{\piref}
\newcommand{\piref}{\pi_{\texttt{base}}}
\newcommand{\pirefh}[1][h]{\pi_{\texttt{base},#1}}
\newcommand{\pihatbon}{\pihat^{\texttt{BoN}}}
\newcommand{\pihatada}{\pihat^{\texttt{AdaBoN}}}
\newcommand{\ybon}{y^{\texttt{BoN}}}
\newcommand{\yada}{y^{\texttt{AdaBoN}}}
\newcommand{\hatytau}[1][\tau]{\wh{y}_{#1}}
\newcommand{\taumu}{N_{\mu}}
\newcommand{\framework}{sample-and-evaluate\xspace}
\newcommand{\rhs}{right-hand side\xspace}
\newcommand{\gametwentyfour}{\texttt{GameOf24}\xspace}
\newcommand{\prontoqa}{\texttt{ProntoQA}\xspace}
\newcommand{\mmlu}{\texttt{MMLU}\xspace}
\newcommand{\gsm}{\texttt{GSM8k}\xspace}
\newcommand{\mathdataset}{\texttt{MATH}\xspace}
\newcommand{\gptthree}{\texttt{gpt-3.5-turbo-instruct}\xspace}
\newcommand{\phithreemini}{\texttt{Phi3-Mini}\xspace}
\newcommand{\phithreesmall}{\texttt{Phi3-Small}\xspace}
\newcommand{\phithreemedium}{\texttt{Phi3-Medium}\xspace}
\newcommand{\phithreefivemini}{\texttt{Phi3.5-Mini}\xspace}
\newcommand{\llamathree}{\texttt{Llama3.2-3B-Instruct}\xspace}
\newcommand{\llamagame}{\texttt{llama2-7b-game24-policy-hf}\xspace}
\newcommand{\mistral}{\texttt{Mistral-7B-Instruct-v0.3}\xspace}
\newcommand{\cdist}{\mu} %
\newcommand{\Closs}{C_{\texttt{loss}}}
\newcommand{\Ccon}{C_{\texttt{conc}}}
\newcommand{\Cconc}{C_{\texttt{conc}}}
\newcommand{\Cstar}{C_{\texttt{cov}}}
\newcommand{\Cstarb}{\wb{C}_{\texttt{cov}}}
\newcommand{\Cstargp}{C_{\texttt{cov},\gamma,p}}
\newcommand{\Cstarg}{C_{\texttt{cov},\gamma}}
\newcommand{\cDpref}{\cD_{\texttt{pref}}}
\newcommand{\pin}{\pi^{\texttt{BoN}}_N}
\newcommand{\piada}{\pi^{\texttt{AdaBoN}}_\mu}
\newcommand{\Nstar}{N^{\star}}
\newcommand{\Ystar}{\mb{y}^{\star}}
\newcommand{\Ygamma}[1][\gamma]{\mb{y}^{\star}_{#1}}
\newcommand{\Ypi}[1][\pi]{\mb{y}^{#1}}
\newcommand{\Ypigamma}[1][\pi]{\mb{y}^{#1}_{\gamma}}
\newcommand{\YIgamma}[1][\gamma]{\mb{y}^{\cI}_{#1}}
\newcommand{\dirac}{\updelta}
\newcommand{\deltafail}{\rho}
\newcommand{\deltaf}{\deltafail}
\newcommand{\yhat}{\wh{y}}
\newcommand{\Cprob}{C_{\texttt{prob}}}
\newcommand{\bestofn}{best-of-$N$\xspace}
\newcommand{\BestofN}{Best-of-$N$\xspace}
\newcommand{\bestofnalg}{\texttt{SFT-Sharpening}\xspace}
\newcommand{\sftalg}{\texttt{SFT-Sharpening}\xspace}
\newcommand{\rlhfalg}{\algofont{RLHF-Sharpening}\xspace}
\newcommand{\Cpp}[2]{\cC_{\nicefrac{#1}{#2};\beta}}
\newcommand{\rstar}{r^{\star}}
\newcommand{\rhat}{\wh{r}}
\newcommand{\pistarb}{\pi^{\star}_{\beta}}
  \newcommand{\ystar}{y^{\star}}
\newcommand{\vepsstat}{\veps_{\texttt{stat}}}
\newcommand{\Unif}{\mathsf{Unif}}
\newcommand{\pibar}{\wb{\pi}}
\renewcommand{\emptyset}{\varnothing}
\newcommand{\M}[1]{^{{\scriptscriptstyle M}}}  %
\newcommand{\pistar}{\pi^{\star}}
\newcommand{\pihat}{\wh{\pi}}
\newcommand{\MV}{\mathcal{V}}
\newcommand{\NP}{\mathsf{NP}}
\newcommand{\approxleq}{\lesssim}
\newcommand{\approxgeq}{\gtrsim}
\newcommand{\bigoh}{O}
\newcommand{\bigoht}{\wt{O}}
\newcommand{\indic}{\mathbb{I}}
\renewcommand{\Pr}{\bbP}
\newcommand{\poly}{\mathrm{poly}}
\newcommand{\Dkl}[2]{D_{\mathsf{KL}}\prn*{#1\,\|\,#2}}
\newcommand{\Dhels}[2]{D^{2}_{\mathsf{H}}\prn*{#1,#2}}
\newcommand{\supp}{\mathrm{supp}}
\newcommand{\Pstar}{P^{\star}}
\newcommand{\mathand}{\quad\text{and}\quad}
\def\multiset#1#2{\ensuremath{\left(\kern-.3em\left(\genfrac{}{}{0pt}{}{#1}{#2}\right)\kern-.3em\right)}}
\newcommand{\Rmax}{R_{\mathsf{max}}}
\newcommand{\Vmax}{V_{\mathsf{max}}}
\newcommand{\NN}{\mathbb{N}}
\newcommand{\MX}{\mathcal{X}}
\newcommand{\MY}{\mathcal{Y}}
\newcommand{\BP}{\mathbb{P}}
\newcommand{\EE}{\mathbb{E}}
\newcommand{\MD}{\mathcal{D}}
\DeclareMathOperator*{\E}{\mathbb{E}}
\newcommand{\RR}{\mathbb{R}}
\DeclareMathOperator{\Tr}{Tr}
\newcommand{\norm}[1]{\lVert #1 \rVert}
\newcommand{\SEC}{\mathsf{SEC}}
\newcommand{\phit}{\widetilde \phi}
\newcommand{\thetat}{\widetilde \theta}
\newcommand{\MF}{\mathcal{F}}
\newcommand{\piz}{\pi_{\mathsf{zero}}}
\newcommand{\ty}{\widetilde y}
\newcommand{\epdisc}{\epsilon_{\mathsf{disc}}}
\renewcommand{\emptyset}{\varnothing}
\newcommand{\delfail}{\deltafail}
\newcommand{\gammargin}{\gamma_{\mathsf{margin}}}
  \newcommand{\dfc}[1]{\dfcomment{#1}}
 \newcommand{\dhruv}[1]{\drcomment{#1}}
\let\OldStatex\Statex
\renewcommand{\Statex}[1][3]{%
  \setlength\@tempdima{\algorithmicindent}%
  \OldStatex\hskip\dimexpr#1\@tempdima\relax}
\let\oldparagraph\paragraph
\renewcommand{\paragraph}[1]{\oldparagraph{#1.}}
\title{Self-Improvement in Language Models: \\The Sharpening
  Mechanism}
  \title{\huge Self-Improvement in Language Models:\\ The Sharpening
  Mechanism}
  \author{
    \begin{tabular}{c c c c}
      \makecell{Audrey Huang\thanks{Equal contribution.} \\
      {\footnotesize\href{mailto:audreyh5@illinois.edu}{\texttt{audreyh5@illinois.edu}}}
      }
      &
              \makecell{Adam Block\footnotemark[1] \\
      {\footnotesize\href{mailto:blockadam@microsoft.com}{\texttt{blockadam@microsoft.com}}}
      }
      &
              \makecell{Dylan J. Foster\footnotemark[1] \\
      {\footnotesize \href{mailto:dylanfoster@microsoft.com}{\texttt{dylanfoster@microsoft.com}}}
      }
      &
        \makecell{Dhruv Rohatgi \\
      {\footnotesize \href{mailto:drohatgi@mit.edu}{\texttt{drohatgi@mit.edu}}}
      }
    \end{tabular}
    \and
        \begin{tabular}{c c c c}
      \makecell{Cyril Zhang \\
      {\footnotesize \href{mailto:cyrilzhang@microsoft.com}{\texttt{cyrilzhang@microsoft.com}}}
      }
      &
              \makecell{Max Simchowitz \\
      {\footnotesize \href{mailto:msimchow@andrew.cmu.edu}{\texttt{msimchow@andrew.cmu.edu}}}
      }
      &
              \makecell{Jordan T. Ash \\
      {\footnotesize \href{mailto:ash.jordan@microsoft.com}{\texttt{ash.jordan@microsoft.com}}}
      }
      &
        \makecell{Akshay Krishnamurthy \\
      {\footnotesize \href{mailto:akshaykr@microsoft.com}{\texttt{akshaykr@microsoft.com}}}
      }
\end{tabular}
  }
\date{}
\begin{document}
\maketitle

\begin{abstract}
  Recent work in language modeling has raised the possibility of
\emph{self-improvement}, where a language models evaluates and refines its own
generations to achieve higher performance without external feedback. 
It is impossible for
this self-improvement to create information that is not already in the
model, so
why should we expect that this will lead to
improved capabilities? 

We offer a new perspective on the capabilities of self-improvement
through a lens we refer to as \emph{sharpening}. Motivated by the observation
that language models are often better at verifying response quality than they are at generating correct
responses, we formalize self-improvement
        as using the model itself as a verifier during
        post-training in order to ``sharpen'' the model to one
        placing large mass on high-quality sequences, thereby amortizing the expensive
        inference-time computation of generating good sequences. We begin by introducing a new statistical framework
        for sharpening in which the learner aims to sharpen a
        pre-trained base policy via sample access, and establish
        fundamental limits. Then,
        we analyze two natural families of self-improvement algorithms based on SFT and
RLHF. We find that (i)
the SFT-based approach is minimax optimal whenever the
initial model has sufficient coverage, but (ii) the RLHF-based approach
can improve over SFT-based self-improvement by leveraging
online exploration, bypassing the need for
coverage. Finally, we empirically validate the sharpening mechanism via inference-time and amortization experiments. We view these findings as a starting point toward a foundational
understanding that can guide the design and evaluation of self-improvement
algorithms. \loose

\end{abstract}

\section{Introduction}
\label{sec:intro}
Contemporary language models are remarkably proficient on a wide range of natural language tasks \citep{brown2020language,ouyang2022training,touvron2023llama,achiam2023gpt,anil2023palm},  
but inherit shortcomings of the data on which they were trained. 
A fundamental challenge is to achieve better performance than what is directly induced by the distribution of available, human-generated training data. To this end, recent
work
\citep{huang2022large,wang2022self,bai2022constitutional,pang2023language,yuan2024self}
has raised the possibility of ``self-improvement,'' where a
model---typically through forms of self-play or self-training in which the
model critiques its own generations---learns to improve on its own,
without external feedback. This phenomenon is somewhat counterintuitive; at first glance it would seem to
disagree with the well-known data-processing inequality
\citep{cover1999elements}, which implies that no form of self-training
should be able to create information not already in the
model. This motivates the question of why we should expect such supervision-free interventions will lead to
stronger reasoning and planning capabilities.\loose%

  A dominant hypothesis for why improvement without external feedback might be possible is that 
  models contain ``hidden knowledge''~\citep{hinton2015distilling} that 
  is difficult to access. Self-improvement, rather than creating knowledge 
  from nothing, is a means of extracting and distilling this knowledge into a more
  accessible form, and thus is a computational
  phenomenon rather than a statistical one. 
  While there is a growing body of empirical evidence for this
  hidden-knowledge hypothesis
  \citep{furlanello2018born,gotmare2019closer,dong2019distillation,abnar2020transferring,allen2020towards},
  particularly in the context of self-distillation, a fundamental understanding of self-improvement remains missing. Concretely, where in the model is this hidden knowledge, and when and how can it be extracted? \loose

  \subsection{Our Perspective: The Sharpening Mechanism}

\begin{figure}[t]
    \centering
    \subfigure[]{
        \includegraphics[width=0.32\textwidth]{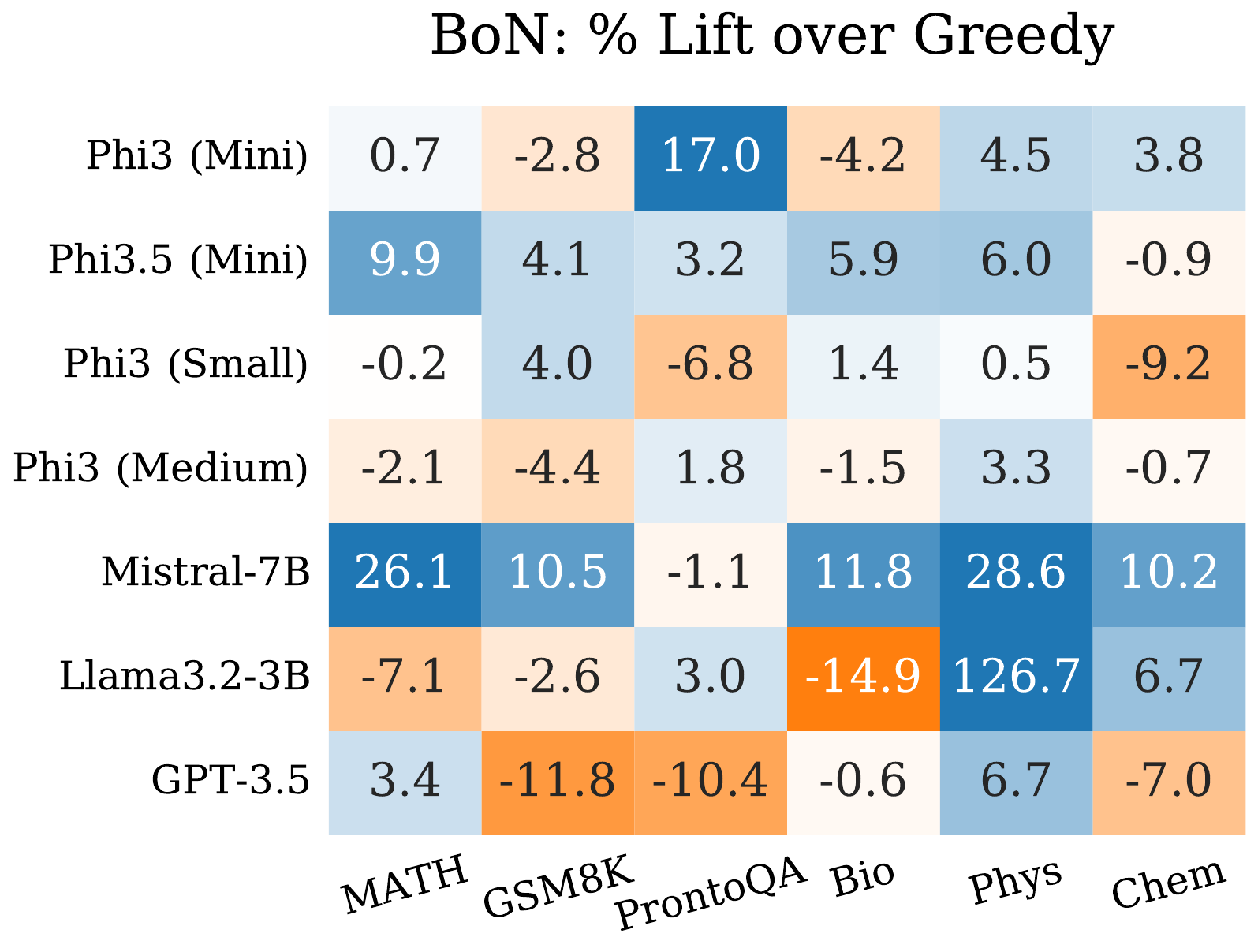}
        \label{sfig:bon-improvement} 
    }
    \hfill \subfigure[]{
        \includegraphics[width=0.28\textwidth]{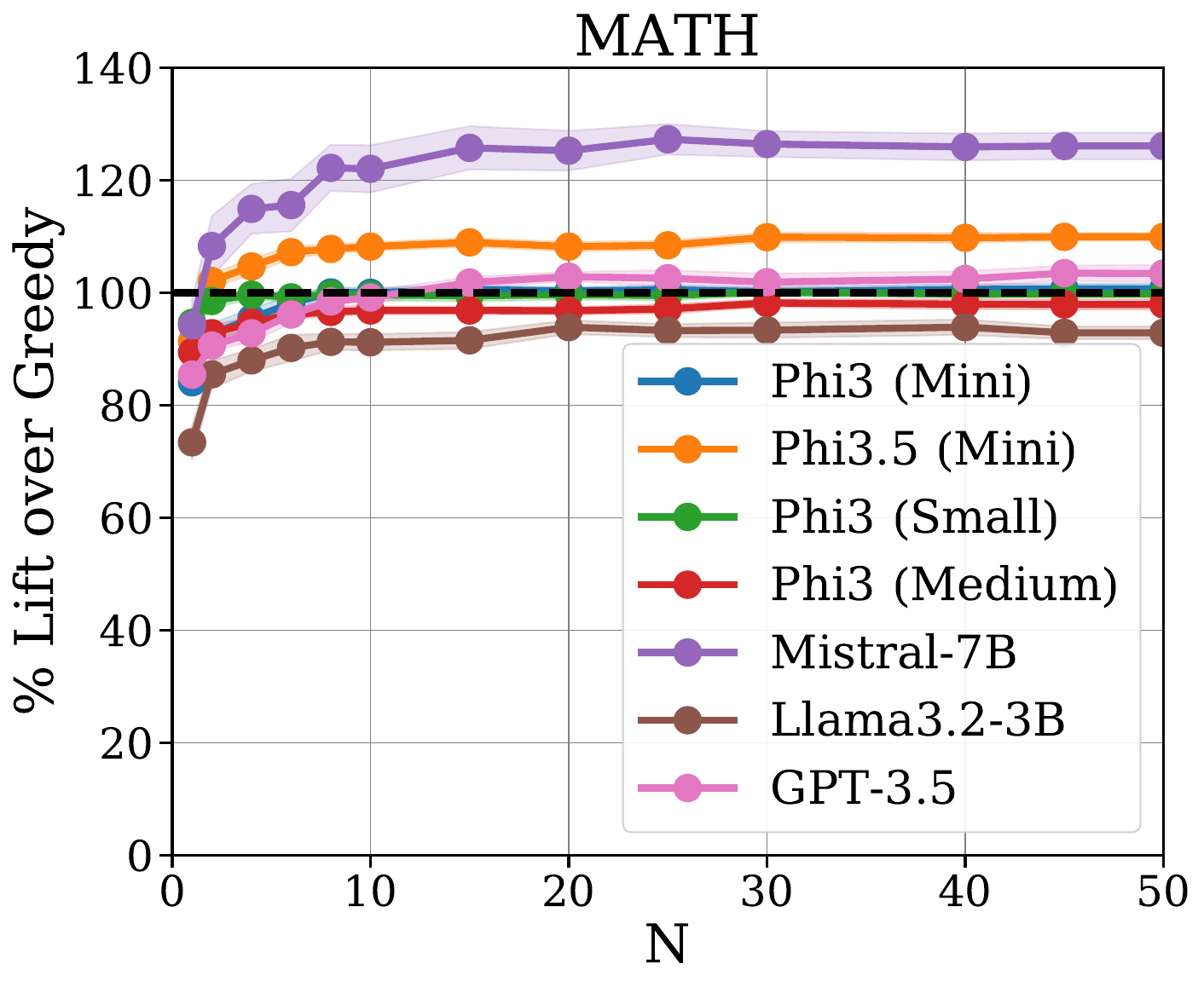}
        \label{sfig:bon-math-improvement}
    }
    \hfill
    \subfigure[]{
        \includegraphics[width=0.28\textwidth]{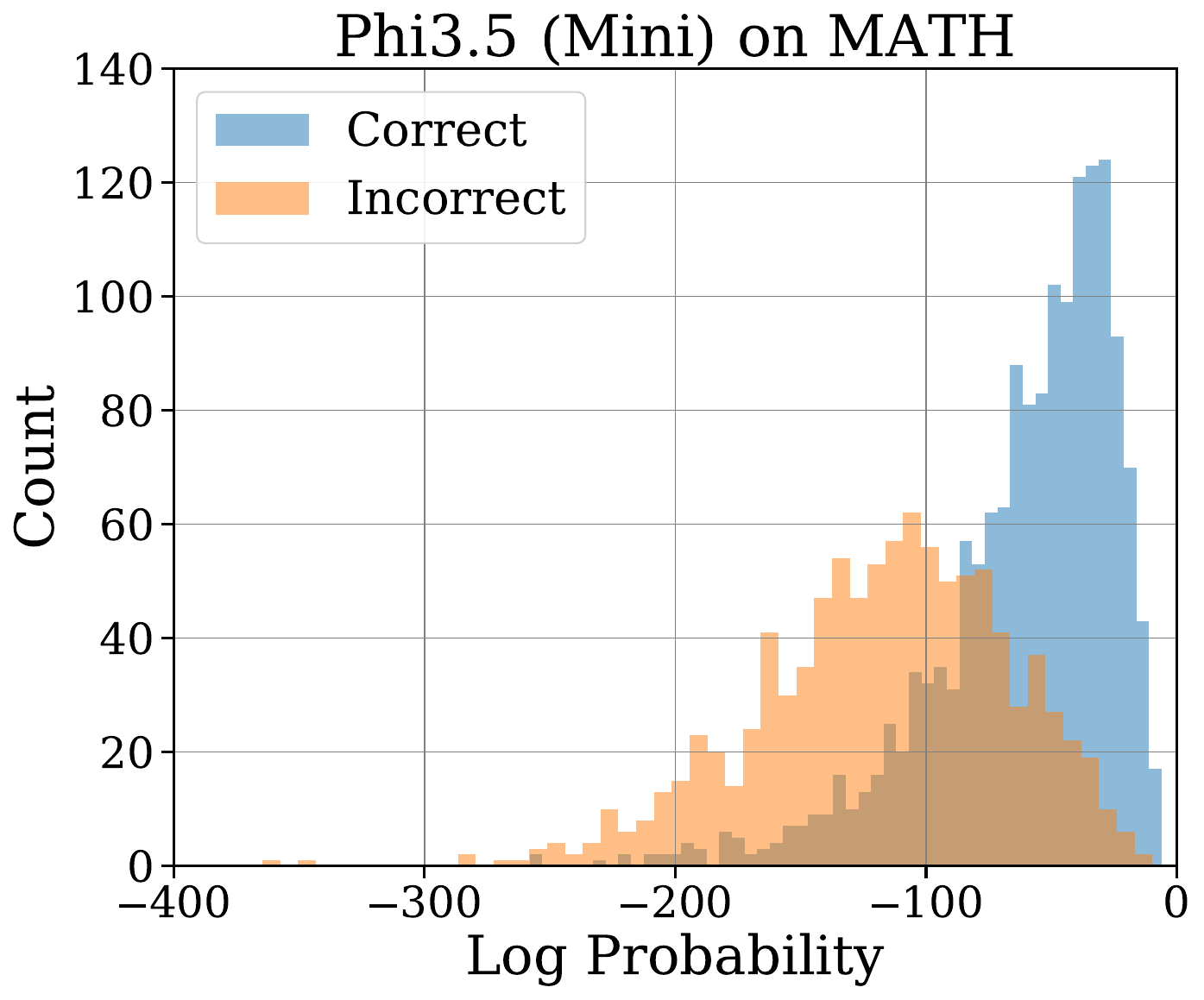}
        \label{sfig:logprob-sitribution}
    }
    \vspace{-0.25cm}
    \caption{Validation of maximum-likelihood sharpening, via
      Best-of-$N$ (BoN) sampling, at inference time. (a) Percent
      accuracy improvement over greedy decoding for BoN sharpening
      with $N=50$ on 6 tasks and 7 models, colored by performance.
      (b) Perecent accuracy improvement over greedy for BoN sharpening as a
      function of $N$ for 7 different models on the MATH dataset.  (c)
      Distribution over sequence-level log probabilities for sampled
      completions ($N=1$) from \phithreefivemini on the MATH dataset,
      conditioned on whether or not the completion is correct.
      Correct completions are noticeably in higher likelihood than incorrect completions, demonstrating the utility of inference-time sharpening.}\label{fig:validation}\vspace{-0.25cm}
\end{figure}

In this paper, we posit a potential source of hidden knowledge, and
offer a formal perspective on how
to extract it. %
Our starting point is the widely observed phenomenon that language models are often better at verifying whether
responses are correct than they are at generating correct
responses \citep{huang2022large,wang2022self,bai2022constitutional,pang2023language,yuan2024self}. %
This gap may be explained by the theory of computational complexity, which suggests that generating high-quality
responses can be less computationally tractable than verification
\citep{cook1971complexity,levin1973universal,karp1972reducibility}. In
autoregressive language modeling, computing the most likely response for a given
prompt is $\NP$-hard in the worst case (\cref{sec:hardness}), whereas the model's likelihood for
a given response can be easily evaluated.\looseness=-1

We view self-improvement as any attempt to narrow this gap, i.e., use the model as its own
verifier to improve generation and \emph{sharpen} the model toward
  high-quality responses.
Formally, consider a learner with access to a base model
$\piref:\cX\to\Delta(\cY)$ representing a conditional distribution
that maps a prompt $x\in\cX$ to a distribution over responses
(i.e., $\piref(y\mid{}x)$ is the probability that the model generates
the response $y$ given the prompt $x$).\iclr{\footnote{Our \arxiv{most }general results are
agnostic to the structure of $\cX$, $\cY$, and $\piref$, but an
important special case for language modeling is the autoregressive setting where
$\cY=\cV^{H}$ for a vocabulary space $\cV$ and sequence
  length $H$, and where $\piref$ has the autoregressive structure
$\piref(y_{1:H}\mid{}x)=\prod_{h=1}^{H}\pirefh(y_h\mid{}y_{1:h-1},x)$
for $y=y_{1:H}\in\cY$.\loose}} 
We posit that $\piref$ has already been trained in some manner
  (e.g., through next-token prediction or additional post-training steps such as SFT or RLHF), 
with the key feature being that $\piref$ is a good verifier, as measured by some \emph{self-reward} function $\rself (y \mid x; \piref)$ measuring model certainty.
The self-reward
function is derived purely from the base model $\piref$, without external supervision or feedback. Examples include
normalized and/or regularized sequence likelihood \citep{meister2020if}, models-as-judges \citep{zheng2024judging,yuan2024self,wu2024meta,wang2024self}, and model
confidence \citep{wang2024chain}.\loose

\begin{tcolorbox}[enhanced,title=Sharpening,
    colframe=blue!40!black,
    colback=blue!2!white,
    fonttitle=\bfseries,
  attach boxed title to top text left={xshift=30mm,yshift=-2.5mm},
  boxed title style={size=small,colframe=blue!40!black,colback=blue!40!black}]
    We refer to \textbf{sharpening} as any process that tilts $\piref$
  toward responses that are more certain in the sense that they
  enjoy greater self-reward $\rself$.
  That is, a sharpened model $\pihat$ is one that
  (approximately) maximizes the self-reward:
  \loose
  \begin{align}
    \label{eq:sharpening}
    \pihat(x)\approx \argmax_{y\in\cY}\rself(y\mid{}x; \piref).
  \end{align}
\end{tcolorbox}

\akdelete{Note that, in \cref{eq:sharpening}, $y$ denotes an entire response, rather than a single token.
Sharpening may be implemented at inference-time, or \textbf{amortized} via
self-training (\Cref{sec:algorithms}). Popular decoding
strategies such as greedy, low-temperature sampling, and beam-search can all be
viewed as instances of the former (albeit at the
token-level).\footnote{More sophisticated decoding strategies like
normalized/regularized sequence likelihood
\citep{meister2020if} or chain-of-thought decoding
\citep{wang2024chain} \iclr{\abedit{also} admit an interpretation as sharpening;
see \cref{sec:additional_related}.}\arxiv{use various metrics of model
``confidence'' to guide sampling in the
  absence of external verifiers; these too admit an informal
  interpretation as ``sharpening''. See \cref{sec:additional_related}.}\loose}
The latter captures many existing self-training schemes
\citep{huang2022large,wang2022self,bai2022constitutional,pang2023language,yuan2024self},
and is the main focus of this paper; we use the term \emph{sharpening} without further qualification to
  refer to the latter.\loose}

\akedit{An important special case for sharpening is in
  language/autoregressive modeling. Here, we have $\cY = \cV^H$ for a
  vocabulary space $\cV$ and sequence length $H$, and $\piref$ has the
  autoregressive structure $\piref(y_{1:H}\mid{}x) = \prod_{h=1}^H
  \pirefh(y_h \mid{}y_{1:h-1},x)$ for $y = y_{1:H} \in
  \cY$. Sharpening in this setting pertains to entire responses, i.e.,
  the optimization over responses in \Cref{eq:sharpening} is at the \emph{sequence level}. In contrast, popular decoding strategies such as greedy, low-temperature sampling, and beam search operate at the token-level; nevertheless, they can be viewed as heuristics for \emph{inference-time sharpening}.\footnote{More sophisticated decoding strategies like
normalized/regularized sequence likelihood
\citep{meister2020if} or chain-of-thought decoding
\citep{wang2024chain} \iclr{\abedit{also} admit an interpretation as sharpening;
see \cref{sec:additional_related}.}\arxiv{use various metrics of model
``confidence'' to guide sampling in the
  absence of external verifiers; these too admit an informal
  interpretation as ``sharpening''. See \cref{sec:additional_related}.}\loose} The combinatorial response space can make sharpening computationally demanding and so, an appealing alternative to inference-time sharpening is \emph{amortization via self-training} (\Cref{sec:algorithms}). The latter captures many existing self-training schemes
\citep{huang2022large,wang2022self,bai2022constitutional,pang2023language,yuan2024self},
and is the main focus of this paper; we use the term \emph{sharpening} without further qualification to
  refer to the latter.\loose}

We refer to the \textbf{sharpening mechanism} as the phenomenon where responses from a model with the highest certainty (in the sense
of large self-reward $\rself$) exhibit the greatest performance on a
task of interest. Though it is unclear a-priori whether there are self-rewards related to task performance, the successes of self-improvement in prior works \citep{huang2022large,wang2022self,bai2022constitutional,pang2023language,yuan2024self} give strong positive evidence.
These works suggest that, in many settings, models do have hidden knowledge: the model's own self-reward correlates with response quality, but it is computationally challenging to generate high self-rewarding---and thus high quality---responses. It is the role of (algorithmic) sharpening to leverage these verifications to improve the quality of generations, despite computational difficulty.\loose

\newcommand{\rbase}{r_{\mathrm{base}}}

\subsection{Contributions}

We initiate the theoretical study of self-improvement via the
sharpening mechanism. We disentangle the choice of self-reward from the algorithms used to optimize it, and aim to
understand: (i) When and how does self-training
achieve sharpening? (ii) What are the fundamental
limits for self-training algorithms?\loose
\arxiv{\dfc{above seems like the most clear description of what we do, but it
  would be nice if we could also weave in experiments.}}

\paragraph{Algorithms for sharpening (\cref{sec:algorithms})}
The starting point for our work is to consider
two natural families of self-improvement algorithms based on
supervised fine-tuning (SFT) and reinforcement learning (RL/RLHF),
respectively, \bestofnalg and \rlhfalg. Both algorithms \textbf{amortize} the
sharpening objective \eqref{eq:sharpening} into a dedicated
post-training/fine-tuning phase:
\begin{itemize}
\item \bestofnalg filters responses where the self-reward $\rself(y\mid{}x;\piref)$
  is large and fine-tunes on the resulting dataset, invoking common
  SFT pipelines
\arxiv{\citep{amini2024variational,sessa2024bond,gui2024bonbon,pace2024west}}\iclr{\citep{amini2024variational,sessa2024bond}}.\loose
  \item \rlhfalg{} directly applies reinforcement learning techniques (e.g.,
  PPO~\citep{schulman2017proximal} or DPO~\citep{rafailov2024direct}) to optimize the self-reward function $\rself(y\mid{}x;\piref)$.
\end{itemize}
In the remainder of the paper, we introduce a theoretical framework to analyze the performance of these
  algorithms\iclr{.}\arxiv{, and validate our findings empirically.} Our main contributions are as follows.

\paragraph{Maximum-likelihood sharpening objective
  (\Cref{sec:autoreg_sharpening})} As a concrete proposal for one
source of hidden knowledge, we focus on self-rewards defined by the model's sequence-level log-probabilities:
\begin{align}
  \label{eq:ml_sharpening}
    \rself(y \mid x; \piref) := \log \piref(y \mid x)
\end{align}
This is a 
stylized self-reward function, which offers perhaps the simplest objective for
self-improvement in the absence of external feedback (i.e., purely
supervision-free), yet also connects self-improvement to a rich body of theoretical computer science literature on computational trade-offs for optimization (inference) versus sampling (\Cref{sec:additional_related}). 
\arxiv{\akedit{We view \Cref{eq:ml_sharpening} as a \emph{clean and minimal objective} that reveals the interplay between hidden knowledge, computational bottlenecks, and self-improvement in generative model.}}
In spite of its simplicity,\arxiv{ we show empirically
that} \mlsharp is already sufficient to achieve non-trivial performance gains \akedit{over greedy decoding on a range of 
reasoning tasks with several language models;}\akdelete{reasoning tasks such as \gametwentyfour, \gsm, and \mathdataset \abedit{over greedy decoding};} cf. \cref{fig:validation}. We believe it can serve as a starting point toward
understanding forms of self-improvement that use more
sophisticated self-rewarding\arxiv{ or judging but are less amenable to theoretical analysis}
\citep{huang2022large,wang2022self,\arxiv{bai2022constitutional,}pang2023language,yuan2024self}. \loose

\paragraph{A statistical framework for sharpening
(\cref{sec:sample,sec:lower})} Though the goal of sharpening is
computational in nature, we recast self-training according to the
\mlsharp objective \cref{eq:ml_sharpening} as
a \textbf{statistical} problem where we aim to produce a model
approximating \eqref{eq:sharpening} using a polynomial number of (i)
sample prompts $x \sim \mu$, (ii) sampling queries of the form $y
\sim \piref(x)$, and (iii) likelihood evaluations of the form $\piref(y \mid x)$. 
\iclr{Evaluating the efficiency of the algorithm through the number of such queries, this abstraction
offers a natural way to evaluate the performance of self-improvement/sharpening
algorithms and establish fundamental limits; we use our framework to prove new
lower bounds that highlight the importance of
the base model's coverage.
}
\arxiv{Evaluating the efficiency of the algorithm through the number of such queries, this abstraction
offers a natural way to evaluate the performance of self-improvement/sharpening
algorithms and establish fundamental limits and minimax optimality,
similar to the role of information-based complexity in optimization
\citep{nemirovski1983problem,traub1988information,raginsky2011information,agarwal2012information}, 
statistical query complexity in computational learning theory
\citep{blum1994weakly,kearns1998efficient,feldman2012complete,feldman2017general},
and query complexity more broadly. We use our framework to prove new
lower bounds and fundamental limits which highlight the importance of \abedit{the base model's}
\emph{coverage} (that is, probability mass placed on high-quality responses). 
}

\paragraph{Analysis of sharpening algorithms (\cref{sec:theoretical_analysis})}
Within our statistical framework for \arxiv{maximum-likelihood }sharpening, we show that
\bestofnalg and \rlhfalg provably converge to sharpened models,
establishing several results:\iclr{ \textbf{(i) \bestofnalg is
    minimax optimal}, and learns a sharpened model whenever $\piref$ has
  sufficient coverage (we also show
  that a novel variant based on adaptive sampling can sidestep
  the minimax lower bound); \textbf{(ii) \rlhfalg benefits from
    on-policy exploration}, and can bypass
  the need for coverage---improving over \bestofnalg.\loose
  }
\arxiv{\begin{itemize}
\item \textbf{Optimality of \bestofnalg.} We show that \bestofnalg
  succeeds at learning a sharpened model whenever $\piref$ has
  sufficient coverage, and is minimax
  optimal in a worst-case sense. Perhaps surprisingly, we show
  that a novel variant based on adaptive sampling can bypass
  this lower bound.
\item \textbf{Benefits of \rlhfalg.} We show that \rlhfalg
  also succeeds at learning a sharpened model and achieves similar performance to \bestofnalg when $\piref$ has
  sufficient coverage. However, we show that this algorithm can bypass
  the need for coverage---improving over \bestofnalg---by leveraging
  deliberate \emph{exploration} of the response space.\loose
\end{itemize}
}
\paragraph{Empirical investigation (\Cref{sec:experiments})}  We
empirically explore the extent to which our theoretical framework can aid language models in a variety of tasks.  We first consider three choices of self-reward, including \mlsharp, and sharpen via a practical approximation, inference-time best-of-N sampling: given a prompt $x \in \cX$, we draw $N$ responses $y_1,\ldots,y_N\sim\piref(\cdot\mid{}x)$ and return the response $\widehat{y} = \argmax_{y_i} \rself(y_i \mid{}x)$; this is
equivalent to \iftoggle{workshop}{\cite{stiennon2020learning,gao2023scaling,yang2024asymptotics}}{\citet{stiennon2020learning,gao2023scaling,yang2024asymptotics}} \abedit{and is a popular approach in modern deployments}.\footnote{We mention in passing that
  inference-time \bestofn sampling enjoys provable guarantees for
  maximizing the \mlsharp objective when $N$ is sufficiently
  large. See \cref{sec:inference} for details.}
We consider an extensive list of model-dataset pairs and find that sharpening, even with the stylized maximum-likelihood self-reward, often improves performance over greedy decoding. 
  We then implement one of our algorithms, \bestofnalg, on a subset of
  these model-dataset pairs and observe a significant positive effect
  on performance, indicating that sharpening can indeed be amortized.  An overview of our inference-time experiments can be found in \Cref{fig:validation}.

\dfc{Feels like we should perhaps advertise the fact that we are doing
  this in a learning-theoretic way that abstracts away the role of
  optimization/model and allows for general function classes. For
  arxiv I think we should definitely add a paragraph about this at least.
  }

\subsection{Related Work}

Our work is most directly related to a growing body of empirical research that
studies self-training for language models in a
supervision-free setting with no external
feedback~\citep{huang2022large,wang2022self,bai2022constitutional,pang2023language,yuan2024self}.
The
specific algorithms for self-improvement/sharpening we study can be
viewed as applications of standard alignment algorithms
\citep{amini2024variational,sessa2024bond,\arxiv{gui2024bonbon,pace2024west,}christiano2017deep,bai2022training,ouyang2022training,rafailov2024direct}
with a specific choice of reward function. 
However, the maximum
likelihood sharpening objective \eqref{eq:ml_sharpening} used for our
theoretical results has been relatively unexplored within the
alignment and self-improvement literature.\loose

\dfc{Add yuda's paper once it's online.}

      On the theoretical side, current understanding of self-training is limited. One line of work, focusing on the
      \emph{self-distillation} objective \citep{hinton2015distilling}
      for classification and regression, aims to provide convergence
      guarantees for self-training in stylized setups such as linear
      models
      \citep{mobahi2020self,frei2022self,das2023understanding,das2024retraining,pareek2024understanding},
      with 
      \iftoggle{workshop}{\cite{allen2020towards}}{\citet{allen2020towards}} giving guarantees for feedforward
      neural networks and \citet{boix2024towards} proposing a general
      PAC-style framework. 
      To the best of our knowledge, our work is the first to study
      self-training in a general framework that subsumes language
      modeling. \iclr{See \cref{sec:additional_related} for a more extensive discussion of
        related work.\loose}

\arxiv{See \cref{sec:additional_related} for a more extensive discussion of
related work.} 

\arxiv{
\paragraph{Notation}
  For an integer $n\in\bbN$, we let $[n]$ denote the set
  $\{1,\dots,n\}$. For a set $\cX$, we let $\Delta(\cX)$ denote the
  set of all probability distributions over $\cX$. We adopt
    standard big-oh notation, and write $f=\bigoht(g)$ to denote that
    $f = \bigoh(g\cdot{}\max\crl*{1,\mathrm{polylog}(g)})$,
    $a\approxleq{}b$ as shorthand for $a=\bigoh(b)$, and $a \asymp{} b$ as shorthand for $a = \Theta(b)$. \loose
    }

\section{Sharpening Algorithms for Self-Improvement}
\label{sec:algorithms}
\newcommand{\givebase}{}

This section introduces the two families of self-improvement algorithms
for sharpening
that we study. {Going forward, we omit the dependence of
  $\rself$ on $\pibase$ when it is clear from context. We use the notation $\argmax_{\pi \in \Pi}$ or $\argmin_{\pi \in \Pi}$
  to denote exact optimization over a user-specified model class
  $\Pi$ for theoretical results
  \citep{agarwal2019reinforcement,foster2023foundations}; empirically,
  these operations can be implemented by training a neural network to low loss.}

\subsection{Self-Improvement through SFT: \bestofnalg}

\label{sec:bestofn}
\bestofnalg filters responses for which the self-reward
$\rself(y\mid{}x)$ 
is large, and applies standard supervised fine-tuning on the resulting dataset
\citep{amini2024variational,sessa2024bond,gui2024bonbon,pace2024west}. This can be viewed as 
amortizing inference-time sharpening via the  effective-but-costly  \bestofn sampling approach
\citep{brown2024large,snell2024scaling,wu2024empirical}. 
Concretely, suppose we have a collection of prompts
$x_1,\ldots,x_n$. For each prompt, we sample $N$ responses
$y_{i,1},\ldots,y_{i,N}\sim\piref(\cdot\mid{}x_i)$, then compute the
\bestofn response
$\ybon_i=\argmax_{j\in\brk{N}}\crl*{\rself(y_{i,j}\mid{}x_i\givebase)}$,
scoring via the model's self-reward function. We
 compute the sharpened model via supervised fine-tuning on the \bestofn responses:
\begin{align}
  \pihatbon = \argmax_{\pi \in \Pi}\sum_{i=1}^{n}\log \pi(\ybon_i\mid{}x_i\givebase).
\end{align}
\arxiv{\bestofnalg}\iclr{This} is a simple, flexible self-training scheme, and
converges to a sharpened model as $n, N\to \infty$.\arxiv{ In
\cref{sec:adaptive}, we consider a variant of
  \bestofnalg based on \emph{adaptive sampling}, which adjusts the
  number of sampled responses adaptively for better performance.}\loose

\subsection{Self-Improvement through RLHF: \rlhfalg}
\label{sec:rlhf}
A drawback of the \bestofnalg algorithm is that it may ignore useful information contained in the self-reward function $\rself(y\mid{}x\givebase)$. Fixing a regularization parameter $\beta > 0$ throughout, 
 our second class of algorithms solve a KL-regularized reinforcement
 learning problem in the spirit of RLHF and other alignment methods
\citep{christiano2017deep,\arxiv{bai2022training,ouyang2022training,}rafailov2024direct}.   Defining
$\En_{\pi}\brk{\cdot}=\En_{x\sim\cdist,y\sim\pi(\cdot\mid{}x)}\brk{\cdot}$
  and 
  $\Dkl{\pi}{\piref}=\En_{\pi}\brk[\big]{\log\frac{\pi(y\mid{}x)}{\piref(y\mid{}x)}}$,
  we choose
\begin{align}
  \label{eq:rlhf}
  \pihat \approx \argmax_{\pi\in\Pi}\crl*{
  \En_{\pi}\brk*{\rself(y\mid{}x\givebase)}
  -\beta\Dkl{\pi}{\piref}
  }.
\end{align}
  The exact optimizer $\pistarb = \argmax_{\pi\in\Pi}\crl*{
  \En_{\pi}\brk*{\rself(y\mid{}x\givebase)}
  -\beta\Dkl{\pi}{\piref}
}$ for this objective has the form \iclr{$
  \pistarb(y\mid{}x)\propto\piref(y\mid{}x)\cdot\exp\prn*{\beta^{-1}\rself(y\mid{}x\givebase)}$,}
\arxiv{\begin{align}
  \pistarb(y\mid{}x)\propto\piref(y\mid{}x)\cdot\exp\prn*{\beta^{-1}\rself(y\mid{}x\givebase)},
\end{align}}
which converges to the solution to the sharpening objective in
\cref{eq:sharpening} as $\beta\to{}0$. Thus, \cref{eq:rlhf} can be seen to
encourage sharpening.\loose

There are many choices for what RLHF/alignment algorithm one might use to solve \eqref{eq:rlhf}. 
For our theoretical results, we implement \cref{eq:rlhf} using an
approach inspired by DPO and its reward-based variants \citep{rafailov2024direct,gao2024rebel}. Given a dataset  $\cD=\crl*{(x,y,y')}$ of $n$ examples sampled via
$x\sim\cdist$ and $y,y'\sim\piref(y\mid{}x)$, we consider the algorithm
that solves
\begin{small}
  \begin{align}
    \label{eq:dpo}
    \pihat\in \argmin_{\pi\in\Pi}\sum_{(x,y,y')\in\cD}\prn*{
    \beta\log\frac{\pi(y\mid{}x)}{\piref(y\mid{}x)}-\beta\log\frac{\pi(y'\mid{}x)}{\piref(y'\mid{}x)}
    - \prn*{\rself(y\mid{}x\givebase)-\rself(y'\mid{}x\givebase)}
    }^2.
  \end{align}%
\end{small}%
In the sequel (\cref{sec:theoretical_analysis}), we show that this approach leads to comparable
guarantees to \bestofnalg, but that a more sophisticated DPO variant
that incorporates \emph{online exploration} \citep{xie2024exploratory} can offer provable
benefits.

\section{A Statistical Framework for Sharpening}
\label{sec:theoretical_framework}

This section introduces the theoretical framework within which we will
analyze the \bestofnalg and \rlhfalg algorithms. We first introduce
the \mlsharp objective as a stylized self-reward function, then
introduce our statistical framework for sharpening.
\iclr{We write $f=\bigoht(g)$ to denote 
    $f = \bigoh(g\cdot{}\max\crl*{1,\mathrm{polylog}(g)})$ and
    $a\approxleq{}b$ as shorthand for $a=\bigoh(b)$.\loose}
\subsection{Maximum-Likelihood Sharpening}\label{sec:autoreg_sharpening}

Our theoretical results focus on the \mlsharp objective given
by
\begin{align}
  \label{eq:ml_sharpening2}
    \rself(y \mid x) := \log \piref(y \mid x),
\end{align}
which we aim to maximize using conditional samples
$y\sim\piref(\cdot\mid{}x)$ from the base model. This is a simple and stylized self-reward function, but we will show
that it enjoys a rich theory. In
particular, we can restate the problem of sharpening with this self-reward through the lens of \emph{amortization}.
\loose
\begin{mainbox}
  \emph{Can we efficiently \textbf{amortize maximum likelihood
      inference (optimization)}
  for a conditional distribution $\piref(y\mid{}x)$ given access to a
  \textbf{sampling oracle} that can sample $y\sim\piref(\cdot \mid{}x)$?\loose}
\end{mainbox}
\dfedit{The tacit assumption in this framing is that the
  maximum-likelihood response constitutes a useful form of hidden
  knowledge.} \Mlsharp connects
the study of self-improvement to a large body of research in
theoretical computer science demonstrating computational reductions between
optimization (inference) and sampling (generation)
\citep{kirkpatrick1983optimization,lovasz2006fast,singh2014entropy,ma2019sampling,talwar2019computational}. %
Our sharpening framework offers a new learning-theoretic perspective by focusing on the problem of amortizing this
type of reduction\akdelete{ \abedit{and is inspired by the classical query complexity framework\citep{nemirovski1983problem,traub1988information,raginsky2011information,agarwal2012information}}}. \msdelete{}

We evaluate the quality of an
approximately sharpened model as follows. Let
\[\Ystar(x)\ldef{}\argmax_{y\in\cY}\log \piref(y\mid{}x);\] we interpret
$\Ystar(x)\subset\cY$ as a set  to accommodate non-unique
maximizers, and will write $\ystar(x)$ to indicate a
unique maximizer when it exists (i.e., when
$\Ystar(x)=\crl{\ystar(x)}$).

                                                     \begin{definition}[Sharpened model]
                                                       \label{def:sharpening}
    We say that a model $\pihat$ is $(\eps,\delta)$-sharpened
    relative to $\piref$ if
    \begin{align}
      \bbP_{x\sim\cdist}\brk*{\pihat\prn*{\Ystar(x)\mid{}x}\geq{}1-\delta} \geq{} 1-\eps.
    \end{align}
  \end{definition}
That is, an $(\eps,\delta)$-sharpened model places at least
$1-\delta$ mass on arg-max responses on all but an $\eps$-fraction
of prompts under $\mu$. For small $\delta$ and $\eps$, we are
guaranteed that $\pihat$ is a high-quality generator: sampling from the model will produce an arg-max
response with high probability for most prompts.

  \paragraph{\Mlsharp for autoregressive models}
  Though our most general results are
  agnostic to the structure of $\cX$, $\cY$, and $\piref$, our primary
  motivation is
the autoregressive setting in which
$\cY=\cV^{H}$ for a \emph{vocabulary space} $\cV$ and sequence
  length $H$, and where $\piref$ has the autoregressive structure
$\piref(y_{1:H}\mid{}x)=\prod_{h=1}^{H}\pirefh(y_h\mid{}y_{1:h-1},x)$
for $y=y_{1:H}\in\cY$.
We observe
that when the response
$y=(y_1,\ldots,y_H)\in\cY=\cV^{H}$ is a sequence of tokens, the
\mlsharp objective \eqref{eq:ml_sharpening} sharpens
toward the \emph{sequence-level} arg-max response:
\begin{align}
  \label{eq:inference_time}
\argmax_{y_{1:H}}\log\piref(y_{1:H}\mid{}x).
\end{align}
\dfedit{Although somewhat stylized, \cref{eq:inference_time}
 is a non-trivial (in general, computationally intractable; see \Cref{sec:hardness})
 solution concept. We view the sequence-level arg-max as a form of hidden
 knowledge that cannot necessarily be uncovered through naive
 sampling or greedy decoding.
  \loose}

  \loose

  \paragraph{Role of $\delta$ for autoregressive models}

  As can be verified through simple examples, beam-search and greedy
  tokenwise decoding do not return an exact (or even approximate)
  solution to \eqref{eq:inference_time} in general. There is one
  notable exception: If the model has already been sharpened to
  $\delta<1/2$ and the arg-max sequence is unique, then greedy
  decoding will succeed.
  \loose
  \begin{proposition}[Greedy decoding succeeds for sharpened policies]
    \label{prop:greedy}
    Let $\pi=\pi_{1:H}$ be an autoregressive model defined over
    response space $\cY=\cV^{H}$. For a given prompt $x\in\cX$, if
    $\Ystar(x)=\crl*{\ystar(x)}$ is a singleton and
    $\pi(\ystar(x)\mid{}x)>1/2$, then the greedy decoding strategy
    that selects
    \iclr{$\yhat_h=\argmax_{y_h\in\cV}\pi_h(y_h\mid{}\yhat_1,\ldots,\yhat_{h-1},x)$}
    \arxiv{\begin{align}
      \yhat_h=\argmax_{y_h\in\cV}\pi_h(y_h\mid{}\yhat_1,\ldots,\yhat_{h-1},x)
    \end{align}}
    guarantees that $\yhat=\ystar(x)$. This result is tight, in the
    sense that there exist $\pi$ with $\pi(\ystar(x)\mid{}x)\leq{}1/2$
    for which greedy decoding fails to recover $\ystar(x)$. \loose
  \end{proposition}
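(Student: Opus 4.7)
The plan is to prove the positive direction by induction on the decoding step $h$, showing that at each step the correct next token $\ystar_h$ has conditional probability strictly greater than $1/2$ under $\pi$, which forces greedy decoding to select it. For the tightness claim, I would give an explicit two-token, binary-vocabulary counterexample in which $\pi(\ystar(x)\mid x)=1/2$ yet greedy produces the wrong sequence.

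For the positive direction, write $\ystar=\ystar_{1:H}$ and induct on $h$. Suppose $\yhat_{1:h-1}=\ystar_{1:h-1}$. The key observation is that the conditional distribution satisfies
\begin{align*}
  \pi_h(\ystar_h \mid \ystar_{1:h-1}, x)
  \;\geq\; \pi\prn*{\ystar_{h:H} \mid \ystar_{1:h-1}, x}
  \;=\; \frac{\pi(\ystar\mid x)}{\pi(\ystar_{1:h-1}\mid x)}
  \;\geq\; \pi(\ystar\mid x) \;>\; \tfrac{1}{2},
\end{align*}
where the first inequality is marginalization over $y_{h+1:H}$, and the last uses $\pi(\ystar_{1:h-1}\mid x)\leq 1$ together with the hypothesis. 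Since $\pi_h(\cdot\mid \ystar_{1:h-1},x)$ is a probability distribution over $\cV$, no token other than $\ystar_h$ can also have conditional mass $> 1/2$, so $\ystar_h$ is the strict (unique) argmax and the greedy rule returns $\yhat_h=\ystar_h$. Iterating from $h=1$ to $h=H$ gives $\yhat=\ystar(x)$. Note that the hypothesis $\Ystar(x)=\crl*{\ystar(x)}$ is used only to make the target sequence well-defined; uniqueness of $\pi$'s mode is automatic from $\pi(\ystar\mid x)>1/2$.

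For tightness, I would take $H=2$, $\cV=\crl{0,1}$, and choose $\piref=\pi$ with
$\pi_1(0\mid x)=\tfrac{1}{2}-\eta$, $\pi_1(1\mid x)=\tfrac{1}{2}+\eta$ for a small $\eta>0$, together with $\pi_2(0\mid 0,x)=1$ and $\pi_2(\cdot\mid 1,x)$ an arbitrary distribution (say, uniform). Then $\ystar(x)=(0,0)$ is the unique sequence-level argmax with $\pi(\ystar(x)\mid x)=\tfrac{1}{2}-\eta\leq \tfrac{1}{2}$, but greedy decoding selects $\yhat_1=1$ at the first step and therefore cannot return $\ystar(x)$. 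Letting $\eta\to 0$ shows the threshold $1/2$ in the proposition cannot be relaxed.

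The only mild subtlety is the marginalization step above, which implicitly uses that an autoregressive policy factorizes so that $\pi_h(\cdot\mid y_{1:h-1},x)$ is a genuine conditional distribution of $\pi$; this is built into the definition of autoregressive model given earlier in the excerpt, so no further work is needed. I do not anticipate any real obstacle.
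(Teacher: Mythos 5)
Your proof of the positive direction is correct and is essentially the same argument as the paper's: both proceed by induction on $h$, and both bound $\pi_h(\ystar_h\mid\ystar_{1:h-1},x)\geq\pi(\ystar\mid x)>1/2$ via the chain rule (the paper writes this as $\pi(\ystar\mid x)\leq\pi(\ystar_{1:h}\mid x)\leq\pi_h(\ystar_h\mid\ystar_{1:h-1},x)$, you write the equivalent ratio form with an explicit marginalization step). Your proof adds the explicit two-token counterexample for the tightness claim, which the paper's proof in fact omits; that construction is correct (for $0<\eta<1/6$ the argmax $(0,0)$ is unique, has mass $1/2-\eta\leq 1/2$, and greedy selects $\yhat_1=1$), and you do not even need the $\eta\to 0$ limit since the statement only asks for some $\pi$ with $\pi(\ystar(x)\mid x)\leq 1/2$.
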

  This means that if we start from an un-sharpened model, it can suffice
  to focus on sharpening to $\delta<1/2$.

\subsection{Sample Complexity Framework}
\label{sec:sample}
As described, sharpening in the sense of \cref{def:sharpening} is a
purely computational problem, which makes it difficult to evaluate the
quality and optimality of
self-improvement algorithms. To address this, we introduce a novel
statistical
framework for sharpening, inspired by the success of oracle complexity in optimization
\citep{nemirovski1983problem,traub1988information,raginsky2011information,agarwal2012information}
and statistical query complexity in computational learning theory \citep{blum1994weakly,kearns1998efficient,feldman2012complete,feldman2017general}.

\begin{definition}[Sample-and-evaluate framework]\label{def:oracle-model}
  In the \textbf{\framework} framework, the algorithm designer does
  not have explicit access to the base model $\piref$. Instead, they
access $\piref$ only through \emph{sample-and-evaluate queries}: The learner is allowed to sample $n$ prompts $x \sim
\cdist$. For each prompt $x$, they can sample $N$
responses $y_1,y_2,\dots y_N \sim \piref(\cdot \mid x)$ and observe
the likelihood $\piref(y_i\mid{}x)$ for each such response. The
efficiency, or \emph{sample complexity}, of the algorithm is measured through the total number of
sample-and-evaluate queries
$m\ldef{}n\cdot{}N$.
\end{definition}
This framework can be seen to capture algorithms like \bestofnalg and
\rlhfalg (implemented with DPO\dfdelete{ or PPO}), which only
access the base model $\piref$ through i) sampling responses via
$y\sim\piref(\cdot\mid{}x)$ \textbf{(generation)}, and ii) evaluating the likelihood
$\piref(y\mid{}x)$ \textbf{(verification)} for these responses. We view the sample complexity
$m=n\cdot{}N$ as a natural statistical abstraction for the
computational complexity of self-improvement (a clear parallel to oracle complexity for optimization algorithms), one which is amenable to
information-theoretic lower bounds.\footnote{Concretely, the sample
  complexity $m=n\cdot{}N$ is a lower bound on the running time
  of any algorithm that operates in the \framework framework.} We will
aim to show that, under appropriate assumptions, \bestofnalg and
\rlhfalg can learn an $(\eps,\delta)$-sharpened model with 
sample complexity
\[
  m = \poly(\eps^{-1},\delta^{-1},\Cprob)
\]
where $\Cprob$ is a potentially problem-dependent constant.

\subsection{Fundamental Limits}
\label{sec:lower}

Before diving into our analysis of \bestofnalg and \rlhfalg in the
\framework framework, let us take a brief detour to give a sense for
how sample complexity guarantees for sharpening should scale. To this end, we will prove a lower bound or fundamental limit on
the sample complexity of any algorithm in the \framework framework.

Intuitively, the performance of any sharpening algorithm based on
sampling should depend on how well the base model $\piref$
covers the arg-max response $\ystar(x)$. To capture this, we define
the following \emph{coverage coefficient}:\footnote{This quantity can
  be interpreted as a special case of the $L_1$-concentrability
  coefficient \citep{farahmand2010error,xie2020q,zanette2021provable,amortila2024scalable}
  studied in the theory of offline reinforcement learning.}\loose
\begin{align}
  \label{eq:cstar}
  \Cstar = \En_{x\sim\cdist}\brk*{\frac{1}{\piref(\Ystar(x)\mid{}x)}}.
\end{align}
More generally, for a model $\pi$, we define
$\Ypi(x)=\argmax_{y\in\cY}\pi(y\mid{}x)$ and 
$  \Cstar(\pi) =
\En_{x\sim\cdist}\brk*{\frac{1}{\pi(\Ypi(x)\mid{}x)}}$. 

Our main lower
bound shows that for worst-case choice of $\Pi$, the coverage coefficient acts as a lower bound on the sample
complexity of any sharpening algorithm.
\begin{theorem}[Lower bound for sharpening]
  \label{thm:lower}
  Fix an integer $d \ge 1$ and parameters $\epsilon \in (0,1)$ and $C
  \ge 1$. There exists a class of models $\Pi$ such that (i) $\log |\Pi|
  \asymp d (1+\log(C \epsilon^{-1}))$, (ii) $\sup_{\pi \in \Pi}
  \Cstar(\pi) \lesssim C$, and (iii) $\Ypi(x)$ is a singleton for all
  $\pi\in\Pi$, $x\in\cX$. Any sharpening algorithm $\pihat$ that
  achieves $\En\brk*{\Pr_{x \sim \cdist}[\pihat(\Ypi[\piref](x)\mid{}x) > 1/2]}  \ge 1
  - \epsilon$ for all $\piref \in \Pi$ must collect a total number of
  samples $m = n\cdot{}N$ at least\loose
  \begin{align}
    m \gtrsim
    \frac{ C\log |\Pi|}{\epsilon^{2}\cdot{}(1+\log(C \epsilon^{-1}))}.
  \end{align}  
\end{theorem}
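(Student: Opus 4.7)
The plan is to prove the lower bound via a product construction over $d$ independent single-prompt subproblems, combined with a Fano / Assouad-style argument that accounts for the coverage coefficient $C$ and the required success rate $1-\epsilon$.

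First, I would construct the hard family. Take $\cX = \{x_1,\ldots,x_d\}$ with $\mu$ uniform, and attach to each prompt $x_i$ a response space containing $K \asymp C/\epsilon$ designated ``candidates'' $\{z_i^{(1)},\ldots,z_i^{(K)}\}$ plus an uninformative ``sink'' response. Models are indexed by $s = (s_1,\ldots,s_d) \in [K]^d$: the base model $\piref^s$ places probability $\approx 1/C$ on $z_i^{(s_i)}$ (the arg-max), distributes an equal probability slightly below $1/C$ across the other candidates, and puts the remaining mass on the sink (which has identical likelihood across all models). This ensures (i) $\Ypi[\piref^s](x_i)$ is the singleton $\{z_i^{(s_i)}\}$, (ii) $\Cstar(\piref^s) = \En_{x\sim\mu}[1/\piref^s(\ystar(x)\mid x)] \lesssim C$, and (iii) $|\Pi| = K^d$ so $\log|\Pi|\asymp d(1+\log(C/\epsilon))$.

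Next, I would reduce the sharpening task to a multi-coordinate hypothesis testing problem. Because $\mu$ is uniform on $d$ prompts and coordinates are independent, the success criterion $\En[\Pr_{x\sim\mu}[\pihat(\ystar(x)\mid x)>1/2]] \ge 1-\epsilon$ implies (by decomposing the outer average) that any $\pihat$ induces guesses $\hat s_i := \argmax_{y}\pihat(y\mid x_i)$ whose expected Hamming error rate against $s$ is at most $\epsilon$. This puts us in the setting of Assouad's lemma applied coordinatewise (or equivalently Fano over $K$-ary alternatives per coordinate). The key structural observation about the construction is that a sample from prompt $x_i$ is informative about $s_i$ only when the arg-max candidate $z_i^{(s_i)}$ is drawn, which happens with probability $\approx 1/C$ per sample; samples that land on the sink are identically distributed across all $s$, and samples that land on non-arg-max candidates have likelihoods that do not reveal $s_i$ by the symmetric construction above.

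Then I would carry out the information-theoretic step. Each sample has $\mu$-marginal probability $1/d$ of being drawn from prompt $x_i$, so the expected number of informative samples for coordinate $i$ is $\Theta(m/(dC))$, and the mutual information between $s_i$ and the transcript is bounded by this quantity. Plugging into Fano (together with Bretagnolle--Huber to convert KL bounds into lower bounds on the per-coordinate error probability that are tight up to an $\epsilon^{-2}$ factor when pushing the success probability from $1/2$ down to $1-\epsilon$) yields, after summing over the $d$ coordinates and simplifying via $\log|\Pi|\asymp d\log(C/\epsilon)$, the claimed lower bound $m \gtrsim C\log|\Pi|/[\epsilon^2(1+\log(C/\epsilon))]$.

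The main obstacle I expect is calibrating the construction so that all three desiderata hold simultaneously: the coverage coefficient is $\Theta(C)$ (requiring the arg-max candidate to have probability $\approx 1/C$), the arg-max remains the unique maximizer (requiring a strictly positive but small gap between candidates), and non-arg-max observations must be uninformative about $s_i$ (requiring likelihoods that do not depend on $s$). A secondary subtlety is extracting the sharp $\epsilon^{-2}$ rather than the naive $\log(1/\epsilon)$: this requires applying the information inequality at the correct granularity, using that forcing the averaged posterior mass on $s_i$ to exceed $1/2$ with the required confidence is strictly stronger than mere identification, and combining this with the product structure via a careful chain-rule bookkeeping of mutual information across coordinates.
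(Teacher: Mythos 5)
Your construction has an internal inconsistency and, more importantly, a structural gap that prevents it from yielding the claimed $\epsilon^{-2}$ dependence.

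On the inconsistency: you posit that the arg-max candidate has probability $\approx 1/C$ and that there are $K \asymp C/\epsilon$ candidates each carrying ``an equal probability slightly below $1/C$.'' Those two choices together put total mass $\approx K/C \approx 1/\epsilon > 1$ on the candidates, so the construction does not define a probability distribution. Either the per-candidate mass is $\approx 1/K = \epsilon/C$ (in which case $\Cstar \asymp C/\epsilon$, violating constraint (ii)), or $K \asymp C$ (in which case $\log|\Pi| \asymp d\log C$, violating constraint (i)). You cannot simultaneously satisfy both side conditions with arg-max probability $\Theta(1/C)$ and $\Theta(C/\epsilon)$ candidates on a uniform prompt distribution.

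On the structural gap: even after fixing the numbers, the uniform prompt distribution caps your argument at $m \gtrsim dC\log(1/\epsilon)$, far below the claimed $dC/\epsilon^2$. With $\mu$ uniform, each prompt $x_i$ receives $\approx m/d$ samples, and the success criterion only requires a per-coordinate failure rate of $\epsilon$; the coupon-collector/Fano bound then gives $m/d \gtrsim C\log(1/\epsilon)$, no more. You flag ``extracting $\epsilon^{-2}$ rather than the naive $\log(1/\epsilon)$'' as a subtlety and propose handling it by chain-rule bookkeeping of mutual information, but no such bookkeeping can recover a $1/\epsilon^2$ factor from a construction where the arg-max is drawn with probability $\Theta(1/C)$ at every prompt. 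The paper instead uses a \emph{non-uniform} prompt distribution $\mu = (1-\Delta)\dirac_{x_0} + \frac{\Delta}{d}\sum_i \dirac_{x_i}$ with $\Delta \asymp \epsilon$: the overwhelming mass sits on a trivial prompt $x_0$, and the $d$ informative prompts together carry only $\Delta$ mass. This buys two $1/\epsilon$ factors: (a) a hard prompt is sampled only with probability $\Delta/d \asymp \epsilon/d$, so the effective per-prompt budget is $\Delta m/d$ rather than $m/d$; and (b) the arg-max at a hard prompt has probability $\asymp 1/M$ with $M \asymp C/\epsilon$ candidates, so seeing it requires $\gtrsim C/\epsilon$ draws. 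Crucially, these two choices are tuned jointly so that $\Cstar = (1-\Delta) + \Delta\cdot\Theta(M) \asymp \epsilon \cdot C/\epsilon = C$ remains bounded, while the sample complexity is $m \gtrsim dM/\Delta \asymp dC/\epsilon^2$. Your reduction to a multi-coordinate hypothesis test (via a Hamming-error argument) is the right high-level idea, and the ``only arg-max draws are informative'' observation mirrors the paper's conditioning on unseen indices; but without the skewed prompt distribution and the $1/M$-scale arg-max probability, you cannot obtain the quadratic rate.
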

This result shows that the complexity of any $(\eps,1/2-\delta)$-sharpening algorithm (for $\delta>0$) in the
\framework framework must depend polynomially on the coverage
coefficient $\Cstar$, as well as the accuracy\arxiv{ parameter} $\eps$. The
lower bound also depends on the expressivity of $\piref$,
as captured by the model class complexity term $\log\abs{\Pi}$. We will show in \abedit{the sequel} that it is possible to match this lower bound. \dfedit{Note
  that this result also implies a lower bound for the general
    sharpening problem (i.e., general $\rself$), since \mlsharp is a
  special case.}

\begin{remark}[Relaxed notions of sharpening and coverage]
The notion of coverage in \cref{eq:cstar} is somewhat stringent, since
it requires that $\piref$ place large mass on $\Ystar(x)$ on average.
\iclr{In
\cref{sec:proof_preliminaries}, we introduce a more general and permissive
notion of \emph{approximate sharpening}
(\cref{def:sharpening_general}) which
leads to weaker coverage requirements, and use this to
give generalized versions of our main results.\loose
}
\arxiv{In
\cref{sec:proof_preliminaries}, we introduce a more general and permissive
notion of approximate sharpening (\cref{def:sharpening_general}),
which allows the model to sharpen toward approximate arg-max responses
(in the sense that
$\log\piref(y\mid{}x)\geq{}(1-\gamma)\max_{y\in\cY}\log\piref(y\mid{}x)$
for an approximation parameter $\gamma>0$). This notion of sharpening
leads to significantly weaker coverage requirements, and we state
generalized versions of all our main results which accommodate this in
the appendix.
}
\end{remark}

\dfedit{We close this section by noting that numerous
  recent works---focusing on inference-time computation---show that
standard language models exhibit favorable coverage \abedit{with
  respect to desirable responses}
\citep{brown2024large,snell2024scaling,wu2024empirical}. \akedit{We
  replicate these findings in our experimental setup
  in~\Cref{app:experiments}.} These works suggest that, despite the
exponentially large response space, the coverage coefficient $\Cstar$
may be small in standard language modeling tasks.}

\section{Analysis of Sharpening Algorithms}
\label{sec:theoretical_analysis}

Equipped with the sample complexity framework from
\cref{sec:theoretical_framework}, we now prove that the \sftalg and
\rlhfalg families of algorithms provably learn a sharpened model for
the maximum likelihood sharpening
objective under natural
statistical assumptions.

\dfedit{Throughout this section, we treat the model class $\Pi$ as a
  fixed, user-specified parameter. Our results---in the tradition of
  statistical learning theory---allow for general classes $\Pi$, and
  are agnostic to the structure beyond standard generalization arguments.
  }

\subsection{Analysis of \bestofnalg}
\label{sec:bestofn_theory}

Recall that when we specialize to the maximum-likelihood sharpening self-reward, the
\bestofnalg algorithm takes the form \iclr{$\pihatbon = \argmax_{\pi \in \Pi}\sum_{i=1}^{n}\log\piref (\ybon_i\mid{}x_i\givebase)$,}
\arxiv{\[
    \pihatbon = \argmax_{\pi \in \Pi}\sum_{i=1}^{n}\log\piref (\ybon_i\mid{}x_i\givebase),
\]}
where
$\ybon_i=\argmax_{j\in\brk{N}}\crl*{\log\piref(y_{i,j}\mid{}x_i\givebase)}$
for $y_{i,1},\ldots,y_{i,N}\sim\piref(\cdot\mid{}x_i)$.

To analyze \bestofnalg, we first make a realizability
  assumption. Let $\pin(x)$ be the distribution of the
random variable
$\ybon_N(x)\sim\argmax\crl*{\log\piref(y_i\mid{}x)\mid{}y_1,\ldots,y_N\sim\piref(x)}$.
\begin{assumption}\label{assumption:bon-realizability}
  The model class $\Pi$ satisfies $\pin\in\Pi$.
\end{assumption}
Our main guarantee for \bestofnalg is as follows.\loose
\iclr{
  \begin{theorem}[Sample complexity of \bestofnalg]
  \label{thm:bestofn}
  Let $\eps,\delta,\deltafail\in(0,1)$ be given, and suppose we
  set $n=c\cdot\frac{\log(\abs{\Pi}\deltaf^{-1})}{\delta\eps}$ and
  $\Nstar=c\cdot\frac{\Cstar \log(2\delta^{-1})}{\eps}$ for an
  appropriate constant $c>0$. Then with probability at least
  $1-\deltafail$, \bestofnalg produces a model $\pihat$ such that that $  \bbP_{x\sim\cdist}\brk*{\pihat(\Ystar(x)\mid{}x)\leq{}1-\delta}
  \leq \eps$, and has total sample complexity\footnote{We focus on
    finite classes for simplicity, following a convention in
    reinforcement learning theory
    \citep{agarwal2019reinforcement,foster2023foundations}, but our
    results extend to infinite classes through standard
     arguments.\loose}
\begin{align}
  \label{eq:bestofn_sample}
  m = O\left(\frac{\Cstar\log(\abs{\Pi}\delfail^{-1})\log(\delta^{-1})}{\delta\eps^2}\right).
\end{align}
\end{theorem}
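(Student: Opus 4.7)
The plan is to decompose the error of $\pihat$ into two pieces: (i) the BoN sampling distribution $\pin$ may not be perfectly concentrated on $\Ystar(x)$, and (ii) the MLE-fit model $\pihat$ may deviate from $\pin$ due to finite-sample error. I would control each piece separately and combine them via a union bound over a ``good'' event at each prompt.

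To control (i), write $p(x) \ldef \piref(\Ystar(x)\mid x)$, so that $1 - \pin(\Ystar(x)\mid x) = (1-p(x))^N \leq e^{-N p(x)}$. Markov's inequality on $1/p(x)$ gives $\Pr_{x\sim\cdist}[p(x) < t] \leq t\Cstar$, and taking $t \asymp \eps/\Cstar$ together with $N \asymp (\Cstar/\eps)\log(1/\delta) = \Nstar$ yields
\[
    \Pr_{x\sim\cdist}\brk*{\pin(\Ystar(x)\mid x) < 1 - \delta/4} \leq \eps/2.
\]

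To control (ii), I would invoke a standard MLE generalization bound for conditional density estimation: since $\pin \in \Pi$ by \cref{assumption:bon-realizability} and $\pihat$ is a log-likelihood maximizer on $n$ i.i.d.\ samples $(x,\ybon)$ with $x\sim\cdist$ and $\ybon\sim\pin(\cdot\mid x)$, with probability at least $1-\deltafail$,
\[
    \En_{x\sim\cdist}\brk*{\Dhels{\pin(\cdot\mid x)}{\pihat(\cdot\mid x)}} \lesssim \frac{\log(\abs{\Pi}\deltaf^{-1})}{n}.
\]
Markov again gives $\Pr_{x\sim\cdist}[\Dhels{\pin(\cdot\mid x)}{\pihat(\cdot\mid x)} > \delta/16] \leq \eps/2$, provided $n \asymp \log(\abs{\Pi}\deltaf^{-1})/(\delta\eps)$.

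The key step is to combine these two bounds pointwise. Write $u \ldef 1 - \pin(\Ystar(x)\mid x)$, $v \ldef 1 - \pihat(\Ystar(x)\mid x)$, and let $D \ldef \sqrt{\Dhels{\pin(\cdot\mid x)}{\pihat(\cdot\mid x)}}$ denote the Hellinger distance. The data-processing inequality for Hellinger distance applied to the binary coarsening $\{\Ystar(x), \Ystar(x)^c\}$ gives $(\sqrt v - \sqrt u)^2 \leq D^2$; taking square roots (in the nontrivial case $v \geq u$) and applying AM--GM to the cross term via $2\sqrt u\, D \leq u + D^2$ yields the sharp pointwise bound
\[
    v \leq 2u + 2 D^2.
\]
On the intersection of the two good events above, which holds with probability $\geq 1-\eps$ by a union bound, this gives $v \leq \delta/2 + \delta/8 \leq \delta$, i.e.\ $\pihat(\Ystar(x)\mid x) \geq 1-\delta$. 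The total sample complexity is $m = n\cdot N$, matching \eqref{eq:bestofn_sample}. The main obstacle is this last step: a naive application of $\abs{\pin(\Ystar) - \pihat(\Ystar)} \leq D$ would cost an extra factor of $1/\delta$ in the required $n$, and avoiding it requires exploiting that $\pin$ is already near-deterministic on $\Ystar(x)$ in order to obtain a bound that is additive in $u$ rather than purely a function of the Hellinger distance.
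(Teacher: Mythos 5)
Your proposal is correct and follows essentially the same route as the paper's proof: the MLE Hellinger bound, Markov's inequality to restrict attention to prompts where $\piref(\Ystar(x)\mid x)\gtrsim\eps/\Cstar$ (so that $\pin$ is nearly deterministic there), and the data-processing inequality for squared Hellinger applied to the binary coarsening $\{\Ystar(x),\Ystar(x)^c\}$ to transfer concentration from $\pin$ to $\pihat$ while paying only a $1/\delta$ factor rather than $1/\delta^2$. The paper organizes the final step a bit differently---it derives the pointwise lower bound $\Dhels{\pihat(\cdot\mid x)}{\pin(\cdot\mid x)}\gtrsim\delta\cdot\indic\{\pihat(\Ystar(x)\mid x)\leq 1-\delta\}$ on the good set of prompts and then applies Markov once to $\En_x[\Dhels{\cdot}{\cdot}]$, whereas you apply Markov to each ingredient separately and combine via the equivalent pointwise inequality $v\leq 2u+2D^2$ plus a union bound---but these are two presentations of the same argument and yield the same sample-complexity bound.
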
}
\arxiv{
  \begin{theorem}[Sample complexity of \bestofnalg]
  \label{thm:bestofn}
  Let $\delfail,\delta\in(0,1)$ be given, and suppose we set
  $N=\Nstar\log(2\delta^{-1})$ for a parameter $\Nstar\in\bbN$. If
  \cref{assumption:bon-realizability} holds, then
  for any $n\in\bbN$, \bestofnalg produces a model $\pihat$ such that with probability at least $1-\delfail$, 
  \begin{align}
    \bbP_{x\sim\cdist}\brk*{\pihat(\Ystar(x)\mid{}x)\leq{}1-\delta}
    \approxleq{} \frac{1}{\delta}\cdot{}\frac{\log(\abs{\Pi}\delfail^{-1})}{n} + \frac{\Cstar}{\Nstar}.
  \end{align}
In particular, given $(\eps,\delta)$, by setting
$n=c\cdot\frac{\log\abs{\Pi}}{\delta\eps}$ and $\Nstar=c\cdot\frac{\Cstar}{\eps}$ for an appropriate constant $c>0$,
we are guaranteed that $  \bbP_{x\sim\cdist}\brk*{\pihat(\Ystar(x)\mid{}x)\leq{}1-\delta}
  \leq \eps$, 
and have total sample complexity
\begin{align}
  \label{eq:bestofn_sample}
  m = O\left(\frac{\Cstar\log(\abs{\Pi}\delfail^{-1})\log(\delta^{-1})}{\delta\eps^2}\right).
\end{align}
\end{theorem}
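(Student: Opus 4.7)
The strategy is to decompose the failure event $\{\pihat(\Ystar(x)\mid x) \leq 1-\delta\}$ into a \emph{generation error} (is $\pin$ itself sharp enough?) and a \emph{learning error} (is $\pihat$ close to $\pin$?), controlled respectively by $\Nstar$ via the coverage coefficient $\Cstar$ and by $n$ via a standard maximum-likelihood bound under the realizability assumption. Concretely, I would union-bound the failure event by $\{\pin(\Ystar(x)\mid x) < 1-\delta/2\}$ together with its complement intersected with $\{\pihat(\Ystar(x)\mid x) \leq 1-\delta\}$, and handle the two pieces independently.

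\textbf{Generation error.} Since the BestofN rule picks the draw of maximal log-likelihood, it automatically selects an argmax response whenever any of the $N$ i.i.d.\ samples from $\piref(\cdot\mid x)$ lands in $\Ystar(x)$; hence $\pin(\Ystar(x)\mid x) \geq 1 - \exp(-N\,\piref(\Ystar(x)\mid x))$. Applying Markov's inequality to the random variable $1/\piref(\Ystar(x)\mid x)$ with threshold $\tau = 1/\Nstar$ gives $\bbP_{x\sim\cdist}[\piref(\Ystar(x)\mid x) < \tau] \leq \Cstar/\Nstar$, and on the complementary event the choice $N = \Nstar\log(2/\delta)$ ensures $\pin(\Ystar(x)\mid x) \geq 1-\delta/2$. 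This produces the $\Cstar/\Nstar$ term in the bound.

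\textbf{Learning error.} Under \cref{assumption:bon-realizability}, $(x_i,\ybon_i)$ are i.i.d.\ from $\cdist\otimes\pin$ with $\pin\in\Pi$, so the textbook conditional-density MLE bound yields, with probability $1-\delfail$, $\En_{x\sim\cdist}[D_H^2(\pin(\cdot\mid x),\pihat(\cdot\mid x))] \lesssim \log(|\Pi|/\delfail)/n$. To convert this average-in-$x$ Hellinger bound into an event-wise guarantee on the binary event $\{y\in\Ystar(x)\}$, I would apply data processing for Hellinger to that partition and invoke the boundary inequality $(\sqrt{1-q}-\sqrt{1-p})^2 \gtrsim \delta$ whenever $p \geq 1-\delta/2$ and $q \leq 1-\delta$, which shows that the bad event forces $D_H^2(\pin(\cdot\mid x),\pihat(\cdot\mid x)) \gtrsim \delta$ pointwise in $x$; Markov then bounds its $\cdist$-probability by $\log(|\Pi|/\delfail)/(n\delta)$. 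Summing the two contributions yields the first inequality of the theorem, and the parameter choice $n\asymp\log(|\Pi|/\delfail)/(\delta\eps)$, $\Nstar\asymp\Cstar/\eps$ gives the total complexity $m = nN = \Otilde(\Cstar\log(|\Pi|/\delfail)\log(\delta^{-1})/(\delta\eps^2))$.

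\textbf{Main obstacle.} The delicate step is securing the linear (rather than quadratic) dependence on $1/\delta$ in the learning-error term. A naive Pinsker-style conversion would require $D_H^2 \lesssim \delta^2$ to guarantee $\pihat(\Ystar(x)\mid x) \geq 1-\delta$, inflating $n$ by an extra factor of $1/\delta$. The boundary-sensitive Hellinger argument above---which exploits that, after restricting to well-covered prompts, both $\pin$ and $\pihat$ place most of their mass on $\Ystar(x)$, so the relevant Hellinger deviation lives on a set of mass $\asymp\delta$---is what recovers the sharper scaling and matches the lower bound of \cref{thm:lower} up to logarithmic factors.
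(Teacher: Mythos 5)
Your proof is correct and follows essentially the same route as the paper's (proof of \cref{thm:bestofn_general}): condition on the MLE Hellinger bound from \cref{lem:mle}, split prompts by whether $\piref(\Ystar(x)\mid x)\geq 1/\Nstar$ (controlled by Markov and $\Cstar$), and on well-covered prompts use data processing plus the boundary estimate $\bigl(\sqrt{1-q}-\sqrt{1-p}\bigr)^2\gtrsim\delta$ for $p\geq 1-\delta/2$, $q\leq 1-\delta$ to convert average Hellinger error into the tail probability at rate $\log(|\Pi|/\delfail)/(n\delta)$. All three ingredients, including the mechanism that yields $1/\delta$ rather than $1/\delta^2$, match the paper's argument.
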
}
This result shows that \bestofnalg, via \cref{eq:bestofn_sample}, is minimax
  optimal in the \framework framework when $\delta$ is constant. In
particular, the sample
complexity bound in \cref{eq:bestofn_sample} matches the lower bound
in \cref{thm:lower} up to polynomial dependence on $\delta$ and
logarithmic factors. Whether the $1/\delta$ factor in
\cref{eq:bestofn_sample} can be removed is an interesting technical question,
but may not be practically consequential because---as discussed in \cref{sec:sample}---the regime $\delta<1/2$ is
most meaningful for autoregressive language modeling.

\begin{remark}[On realizability and coverage]\label{rmk:realizability}
  Realizability assumptions such as
  \cref{assumption:bon-realizability} (which asserts that the class $\Pi$ is powerful
  enough to model the distribution of the \bestofn responses) are
  standard in learning theory
  \citep{agarwal2019reinforcement,\arxiv{lattimore2020bandit,}foster2023foundations},
  though certainly non-trivial (see \cref{sec:hardness} for a natural
  example where they may not hold). The coverage assumption, while
  also standard, when combined with the hypothesis that
  high-likelihood responses are desirable, suggests that $\pibase$ generates high-quality responses with
  reasonable probability. In general, doing so may require leveraging non-trivial \emph{serial} computation at inference time via procedures such as Chain-of-Thought~\citep{wei2022chain}. Although recent work shows that such serial computation \emph{cannot} be amortized~\citep{li2024chain,malach2023auto}, \sftalg instead amortizes the \emph{parallel} computation of \bestofn sampling, and thus has different representational considerations. 
\end{remark}

\paragraph{Benefits of adaptive sampling}
\bestofnalg is optimal in the \framework
framework, but we show in \cref{sec:adaptive} that a variant which selects
the number of responses adaptively based on the prompt $x$ can bypass
this lower bound, improving the $\eps$-dependence in
\cref{eq:bestofn_sample} from $\frac{1}{\eps^2}$ to $\frac{1}{\eps}$.\loose

\iclr{
\paragraph{Empirical validation} In \Cref{app:experiments}, we empirically investigate the benefits of \bestofn on a variety of model-dataset pairs.  Our results are summarized in \Cref{tab:performance} and
  \cref{fig:training_curve_phi,fig:training_curve_mistral}, and broadly show that the benefits incurred through the inference-time sharpening described above can be, to a certain extent, amortized into training time.
}

\subsection{Analysis of \rlhfalg}
\label{sec:rlhf_theory}

We now turn our attention to theoretical guarantees for the \rlhfalg
algorithm family, which uses tools from reinforcement learning to optimize
the self-reward function. When specialized to \mlsharp, the  RL
objective used by \rlhfalg takes the form \iclr{$  \pihat \approx \argmax_{\pi\in\Pi}\crl*{
  \En_{\pi}\brk*{\log\piref(y\mid{}x)}
  -\beta\Dkl{\pi}{\piref}
  }$}
\arxiv{\begin{align}
  \label{eq:ml_rlhf}
  \pihat \approx \argmax_{\pi\in\Pi}\crl*{
  \En_{\pi}\brk*{\log\piref(y\mid{}x)}
  -\beta\Dkl{\pi}{\piref}
  }
\end{align}}
for $\beta>0$. The exact optimizer $\pistarb = \argmax_{\pi\in\Pi}\crl*{
  \En_{\pi}\brk*{\log\piref(y\mid{}x)}
  -\beta\Dkl{\pi}{\piref}
}$ for this objective has the form $
\pistarb(y\mid{}x)\propto\piref^{1+\beta^{-1}}(y\mid{}x)$,  which converges to a sharpened model (per \cref{def:sharpening}) as $\beta\to{}0$.\loose

The key challenge we encounter in this section is the mismatch between the RL reward $\log \pibase(y \mid x)$ and the sharpening desideratum $\pihat(\Ystar(x) \mid x)$. For example, suppose a unique argmax---say, $y^\star(x)$---and second-to-argmax---say, $y'(x)$---are nearly as likely under $\pibase$. Then the RL reward $\E_{\pihat}[\log \pibase(y \mid x)]$ must be optimized to extremely high precision before $\pihat$ can be guaranteed to distinguish the two. To quantify this effect, we introduce a 
\emph{margin condition}. 
\begin{assumption}[Margin]
  \label{ass:hard_margin}
  For a margin parameter $\gammargin>0$, the base model $\piref$ satisfies
  \[
    \max_{y\in\cY}\piref(y\mid{}x) \geq{}(1+\gammargin)\cdot\piref(y'\mid{}x)\quad\forall{}y'\notin\Ystar(x),\quad\forall{}x\in\supp(\mu).
  \]%
  \iclr{\vspace{-5pt}}
\end{assumption}
\sftalg does not suffer from the pathology in the example above, because once $y^\star(x)$ and $y'(x)$ are drawn in a batch of $N$ responses, we have $\ybon_i = y^\star(x_i)$ regardless of margin. However, as we shall show in \Cref{sec:benefits_exploration}, the \rlhfalg algorithm is amenable to online exploration, which may improve dependence on other problem parameters.

\subsubsection{Guarantees for \rlhfalg with Direct Preference Optimization}

The first of our theoretical results for \rlhfalg takes an
offline reinforcement learning approach, whereby we implement
\cref{eq:rlhf} using a reward-based variant of Direct Preference
Optimization (DPO) \citep{rafailov2024direct,gao2024rebel}. Let $\cDpref=\crl*{(x,y,y')}$ be a dataset of $n$ examples sampled via
$x\sim\cdist$, $y,y'\sim\piref(y\mid{}x)$. For a parameter $\beta>0$, we \arxiv{consider the algorithm
that solves}\iclr{solve $\pihat\in \argmin_{\pi\in\Pi}$\loose}
\begin{small}
  \begin{align}
    \label{eq:dpo_ml}
    \arxiv{\pihat\in \argmin_{\pi\in\Pi}}\sum_{(x,y,y')\in\cDpref}\prn*{
    \beta\log\frac{\pi(y\mid{}x)}{\piref(y\mid{}x)}-\beta\log\frac{\pi(y'\mid{}x)}{\piref(y'\mid{}x)}
    - \prn*{\log\piref(y\mid{}x)-\log\piref(y'\mid{}x)}
    }^2.
  \end{align}%
\end{small}%

\paragraph{Assumptions}
Per 
\iftoggle{workshop}{\cite{rafailov2024direct}}{\citet{rafailov2024direct}}, the solution to \cref{eq:dpo_ml} coincides
with that of \cref{eq:ml_sharpening} asymptotically. To provide
finite-sample guarantees, we make a number of statistical
assumptions. First, we make a natural realizability assumption (e.g., \iftoggle{workshop}{\cite{zhu2023principled,xie2024exploratory}}{\citet{zhu2023principled,xie2024exploratory}}).
\begin{assumption}[Realizability]
  \label{ass:realizability_dpo}
  The model class $\Pi$ satisfies $\pistarb\in\Pi$.\footnote{See \Cref{rmk:realizability} for a discussion of this assumption.}
\end{assumption}
Next, we define two concentrability coefficients for a
model $\pi$:
\begin{align}
  \label{eq:conc}
  \cC_{\pi} = \En_{\pi}\brk*{\frac{\pi(y\mid{}x)}{\piref(y\mid{}x)}},
  \mathand
\Cpp{\pi}{\pi'}\ldef{}\En_{\pi}\brk*{\prn*{\frac{\pi(y\mid{}x)}{\pi'(y\mid{}x)}}^{\beta}}.
\end{align}
The following result shows that both coefficients are bounded for the
KL-regularized model $\pistarb$.
\begin{lemma}
  \label{lem:conc_bound}
  The model $\pistarb$ satisfies $\cC_{\pistarb} \leq \Cstar$ and $\Cpp{\piref}{\pistarb} \leq{} \abs*{\cY}$.
\end{lemma}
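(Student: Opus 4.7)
The plan is a short closed-form calculation that exploits the fact that, for the maximum-likelihood self-reward $\rself(y\mid x)=\log\piref(y\mid x)$, the tilted optimizer $\pistarb$ admits an explicit form. Concretely, substituting into $\pistarb(y\mid x)\propto\piref(y\mid x)\exp(\beta^{-1}\rself(y\mid x))$ gives
\[
\pistarb(y\mid x)=\frac{\piref(y\mid x)^{1+1/\beta}}{Z(x)},\qquad Z(x)=\sum_{y\in\cY}\piref(y\mid x)^{1+1/\beta}.
\]
Let $p^{\star}(x)\ldef\max_{y\in\cY}\piref(y\mid x)$ throughout, so that $\piref(\Ystar(x)\mid x)=\lvert\Ystar(x)\rvert\cdot p^{\star}(x)$ since every element of $\Ystar(x)$ carries the same mass $p^{\star}(x)$.

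For the first bound $\cC_{\pistarb}\le\Cstar$, I would compute the density ratio pointwise: for every $y\in\cY$,
\[
\frac{\pistarb(y\mid x)}{\piref(y\mid x)}=\frac{\piref(y\mid x)^{1/\beta}}{Z(x)}\le\frac{p^{\star}(x)^{1/\beta}}{Z(x)}.
\]
Next, lower bound the partition function by restricting the sum to $\Ystar(x)$: $Z(x)\ge\lvert\Ystar(x)\rvert\cdot p^{\star}(x)^{1+1/\beta}$. Dividing yields the clean pointwise inequality $\pistarb(y\mid x)/\piref(y\mid x)\le 1/(\lvert\Ystar(x)\rvert p^{\star}(x))=1/\piref(\Ystar(x)\mid x)$, which holds uniformly in $y$. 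Taking $\En_{x\sim\mu}\En_{y\sim\pistarb(x)}$ then gives $\cC_{\pistarb}\le\En_{x\sim\mu}[1/\piref(\Ystar(x)\mid x)]=\Cstar$, as desired.

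For the second bound $\Cpp{\piref}{\pistarb}\le\lvert\cY\rvert$, I would use the same closed form to write $(\piref(y\mid x)/\pistarb(y\mid x))^{\beta}=Z(x)^{\beta}/\piref(y\mid x)$. Then
\[
\En_{x\sim\mu,\,y\sim\piref(x)}\brk*{\prn*{\tfrac{\piref(y\mid x)}{\pistarb(y\mid x)}}^{\beta}}=\En_{x\sim\mu}\brk*{Z(x)^{\beta}\sum_{y\in\cY}\piref(y\mid x)\cdot\piref(y\mid x)^{-1}\ind\{\piref(y\mid x)>0\}}\le\lvert\cY\rvert\cdot\En_{x\sim\mu}\brk*{Z(x)^{\beta}}.
\]
It remains to observe $Z(x)\le p^{\star}(x)^{1/\beta}\sum_{y}\piref(y\mid x)=p^{\star}(x)^{1/\beta}\le 1$, so $Z(x)^{\beta}\le 1$, giving the claim.

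The main potential snag is only bookkeeping: the ratio $\pistarb/\piref$ is generally $>1$ at the argmax, so one must avoid the naive bound and instead compare numerator and denominator with the same factor $p^{\star}(x)^{1+1/\beta}$ that appears in $Z(x)$. Once that cancellation is set up, both inequalities fall out immediately, and no realizability, margin, or coverage assumption beyond the definitions of $\cC_\pi$, $\Cpp{\cdot}{\cdot}$, and $\Cstar$ is needed.
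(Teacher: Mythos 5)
Your proposal is correct and follows essentially the same route as the paper's proof: write $\pistarb$ in the explicit tilted form, bound the ratio $\pistarb/\piref$ via the partition function, lower-bound $Z(x)$ by the contribution of $\Ystar(x)$ for the first inequality, and use $Z(x)\le 1$ (via $\piref^{1/\beta}\le 1$) for the second. The only cosmetic difference is that you establish the pointwise bound $\pistarb(y\mid x)/\piref(y\mid x)\le 1/\piref(\Ystar(x)\mid x)$ uniformly in $y$ before taking expectations, whereas the paper bounds the expectation directly, but the underlying cancellation is identical.
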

Motivated by this result, we assume the coefficients in \cref{eq:conc}
are bounded for all $\pi\in\Pi$.
\begin{assumption}[Concentrability]
  \label{ass:conc-closs}
  All $\pi\in\Pi$ satisfy
  $\cC_{\pi} \leq \Cconc$
  for a parameter $\Ccon \geq \Cstar$, and
  $\Cpp{\piref}{\pi} \leq \Closs$
  for a parameter $\Closs \geq \abs{\cY}$.
\end{assumption}
By \cref{lem:conc_bound}, this assumption is consistent with
\cref{ass:realizability_dpo} for reasonable bounds on $\Cconc$ and
$\Closs$; note that our sample complexity bounds will only incur logarithmic dependence on $\Closs$.

\paragraph{Main result}
Our sample complexity guarantee for \rlhfalg (via \cref{eq:dpo_ml})
is as follows.\loose
      \begin{theorem}
        \label{thm:dpo}
        Let $\eps,\delta,\deltaf\in(0,1)$ be given. Set
        $\beta\approxleq\gammargin\delta\eps$, and suppose that
        \cref{ass:realizability_dpo,ass:conc-closs,ass:hard_margin} hold with parameters
        $\Cconc$, $\Closs$, and $\gammargin>0$. For an appropriate
        choice for $n$, the DPO algorithm (\cref{eq:dpo_ml}) ensures that with
        probability at least $1-\deltafail$, $                    \bbP_{x\sim\cdist}\brk*{\pihat(\Ystar(x)\mid{}x)\leq{}1-\delta}
        \leq \eps$, and has sample complexity
       \begin{align}
          m = \bigoht\prn*{\frac{\Cconc\log^{3}(\Closs\abs{\Pi}\deltafail^{-1})}{\gammargin^2\delta^2\eps^2}}.
        \end{align}
      \end{theorem}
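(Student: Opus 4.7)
The plan is to establish the guarantee by decomposing the error of $\pihat$ relative to a fully sharpened model into \textbf{(i)} a \emph{statistical error} between the empirical DPO minimizer $\pihat$ and the KL-regularized optimum $\pistarb$, and \textbf{(ii)} an \emph{approximation error} between $\pistarb$ and the sharpened target $\Ystar$. Step (i) is handled via a least-squares analysis that exploits \cref{ass:realizability_dpo} and the concentrability coefficients in \cref{ass:conc-closs}; step (ii) exploits the margin condition \cref{ass:hard_margin} together with the closed-form expression $\pistarb(y\mid x)\propto\piref(y\mid x)^{1+1/\beta}$.

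For (i), observe that \eqref{eq:dpo_ml} is a squared-loss regression problem in which the prediction associated to $\pi$ is the antisymmetrized implicit reward $\beta\log(\pi(y\mid x)/\piref(y\mid x)) - \beta\log(\pi(y'\mid x)/\piref(y'\mid x))$, and the regression target is $\log\piref(y\mid x)-\log\piref(y'\mid x)$. By \cref{ass:realizability_dpo}, the expected loss over $\Pi$ (under $\mu\otimes\piref\otimes\piref$) is minimized by $\pistarb$, so a standard uniform-concentration argument—combined with a uniform loss-range bound expressed in terms of $\log\Closs$ via \cref{ass:conc-closs}—yields that with probability at least $1-\deltaf/2$, the excess squared loss satisfies
\begin{align*}
\En_{x\sim\mu,\ y,y'\sim\piref(\cdot\mid x)}\!\left[\left(\beta\log\tfrac{\pihat(y\mid x)\pistarb(y'\mid x)}{\pistarb(y\mid x)\pihat(y'\mid x)}\right)^{2}\right] \;\lesssim\; \frac{\log^{2}(\Closs\abs{\Pi}\deltaf^{-1})}{n}.
\end{align*}
Using $y\perp y'\mid x$ and antisymmetrizing converts this to an expected variance bound on $\log(\pihat/\pistarb)$ under $\piref(\cdot\mid x)$; a change of measure exploiting $\cC_{\pi}\leq\Cconc$ then lifts this to a bound on $\En_{x}\Dkl{\pihat(\cdot\mid x)}{\pistarb(\cdot\mid x)}$, rescaled by $\Cconc/\beta^{2}$.

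For (ii), the explicit form $\pistarb(y\mid x)\propto\piref(y\mid x)^{1+1/\beta}$ combined with \cref{ass:hard_margin} gives, for every $x\in\supp(\mu)$,
\begin{align*}
\pistarb(\Ystar(x)\mid x) \;\geq\; 1 - \abs{\cY}\,(1+\gammargin)^{-1/\beta}.
\end{align*}
Choosing $\beta$ as a small multiple of $\gammargin\delta\eps/\log(\Closs/(\delta\eps))$ (using $\abs{\cY}\leq\Closs$ and $\log(1+\gammargin)\gtrsim\gammargin$ for small $\gammargin$) forces $\pistarb(\Ystar(x)\mid x)\geq 1-\delta/2$ uniformly in $x$; note that this choice satisfies $\beta\lesssim\gammargin\delta\eps$ up to the logarithmic factor absorbed into $\bigoht$. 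Combining (i) and (ii) via Pinsker's and Markov's inequalities yields
\begin{align*}
\Pr_{x\sim\mu}\!\left[\Dtv{\pihat(\cdot\mid x)}{\pistarb(\cdot\mid x)} > \delta/2\right] \;\leq\; \eps
\end{align*}
whenever $n\gtrsim \Cconc\log^{2}(\Closs\abs{\Pi}\deltaf^{-1})/(\beta^{2}\delta^{2}\eps)$; on the complementary event, $\pihat(\Ystar(x)\mid x)\geq\pistarb(\Ystar(x)\mid x)-\delta/2\geq 1-\delta$. Substituting the choice of $\beta$ produces the advertised rate; the cubic logarithm $\log^{3}(\Closs\abs{\Pi}\deltaf^{-1})$ arises from the generalization bound, the logarithmic loss-range needed for Bernstein-type concentration under unbounded ratios, and the logarithmic factor hidden in $\beta$.

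The principal obstacle is step (i): the DPO squared loss only identifies $\pi$ up to a prompt-dependent log-partition function, and converting an on-policy bound under $\mu\otimes\piref\otimes\piref$ to a genuine divergence bound between $\pihat$ and $\pistarb$ requires careful change-of-measure arguments that invoke \emph{both} concentrability coefficients in \cref{ass:conc-closs}—with $\Closs$ controlling tails of $\piref/\pi$ inside a Bernstein/truncation step, and $\Cconc$ controlling the conversion from $\piref$-variance to a KL bound. The margin step (ii) is elementary once stated, but it is essential for translating a bound on $\Dkl{\pihat}{\pistarb}$ into the sharpening guarantee of \cref{def:sharpening}, and is exactly where the hypothesis $\beta\lesssim\gammargin\delta\eps$ is used.
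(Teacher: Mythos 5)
Your decomposition---bound $\En_x\Dkl{\pihat(\cdot\mid x)}{\pistarb(\cdot\mid x)}$ statistically, show $\pistarb$ is already sharp via the margin, and combine---is a genuinely different route from the paper, which instead bounds the \emph{reward suboptimality} $J(\pistar)-J(\pihat)$ against the idealized sharp comparator $\pistar(y\mid x)\propto\piref(y\mid x)\ind\{y\in\Ystar(x)\}$ and then converts to the sharpening guarantee via $J(\pistar)-J(\pihat)\gtrsim\gammargin\delta\,\Pr_x[\pihat(\Ystar(x)\mid x)\le 1-\delta]$. The sticking point in your route is the claimed rescaling: from the on-policy excess squared loss $\En_{\piref\otimes\piref}\brk{(\Delta^{\rhat}-\Delta^{\rstar})^2}\lesssim\vepsstat^2$, write $g(x,y):=\log\frac{\pihat(y\mid x)}{\pistarb(y\mid x)}$, so that after antisymmetrization you get $\En_x\mathrm{Var}_{y\sim\piref(\cdot\mid x)}[g]\lesssim\vepsstat^2/\beta^2$. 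You assert this lifts to $\En_x\Dkl{\pihat}{\pistarb}\lesssim\Cconc\vepsstat^2/\beta^2$. But the correct change-of-measure argument gives only a square-root improvement: using $\En_{\pistarb}[e^{g}]=1$ (which forces $\En_{y\sim\piref}[g]\le\sqrt{\cC_{\pistarb}\mathrm{Var}_{\piref}[g]}$) together with Cauchy--Schwarz on $\En_{y\sim\pihat}[g-\En_{\piref}g]$, one obtains $\Dkl{\pihat}{\pistarb}\lesssim\sqrt{\Cconc\cdot\mathrm{Var}_{\piref}[g]}$, hence $\En_x\Dkl\lesssim\sqrt{\Cconc}\vepsstat/\beta$. (Equivalently, the paper's regret decomposition gives $\beta\Dkl{\pihat}{\pistarb}=J_\beta(\pistarb)-J_\beta(\pihat)\lesssim\sqrt{\Cconc}\vepsstat$.) Your $\Cconc/\beta^2$ rescaling would need an $L^\infty$ control on density ratios or a fourth-moment bound on $g$, neither of which is available under \cref{ass:conc-closs}.

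This matters quantitatively. Even granting the corrected bound $\En_x\Dkl\lesssim\sqrt{\Cconc}\vepsstat/\beta$ and using the sharper Hellinger step $\Pr_x[\pihat(\Ystar\mid x)\le 1-\delta]\lesssim\tfrac{1}{\delta}\En_x\Dhelshort(\pihat,\pistarb)\le\tfrac{1}{\delta}\En_x\Dkl{\pihat}{\pistarb}$ (rather than Pinsker-then-Markov on TV, which costs $1/\delta^2$), one needs $\sqrt{\Cconc}\vepsstat/\beta\lesssim\delta\eps$. With $\beta\asymp\gammargin\delta\eps$ this yields $n\gtrsim\Cconc\log^2/(\gammargin^2\delta^4\eps^4)$, a factor $\delta^2\eps^2$ \emph{worse} than the theorem's $\Cconc\log^3/(\gammargin^2\delta^2\eps^2)$. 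The slack is precisely the extra $1/\beta$ your route incurs when transferring from $\Dkl{\pihat}{\pistarb}$ to the sharpening event: the paper avoids it by exploiting the optimality of $\pihat$ for the \emph{implicit} reward $\rhat$ (the standard RLHF fact $\pihat\in\argmax_\pi\En_\pi[\rhat]-\beta\Dkl{\pi}{\piref}$) in a change-of-measure regret bound against $\pistar$, which then couples to sharpening directly through the margin---the factor $\beta$ never appears in the denominator. Your step (ii), establishing $\pistarb(\Ystar(x)\mid x)\ge 1-\delta/2$ via $\pistarb\propto\piref^{1+1/\beta}$ and \cref{ass:hard_margin}, is correct (and is exactly what the paper's \xpo analysis does), but the quantitative conversion from estimation error to the final guarantee needs the regret-based decomposition to hit the stated rate.
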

Compared to the guarantee for \bestofnalg, \rlhfalg learns
a sharpened model with 
the same dependence on the accuracy $\eps$\abedit{, but a worse dependence on $\delta$; as we primarily consider $\delta$ constant (cf. \Cref{prop:greedy}), we view this as relatively unimportant. We further remark that \rlhfalg uses $N=2$ responses per
prompt, while \sftalg uses many ($N\approx \Cstar/\eps$) responses but fewer prompts. Other \arxiv{notable }differences include:\loose}
\begin{itemize}
  \item \rlhfalg requires the margin condition in
  \cref{ass:hard_margin}, and has sample complexity scaling with
  $\gammargin^{-1}$. We believe this dependence is natural for
  algorithms based on reinforcement learning, as it relates suboptimality with respect to the reward
  function $\rself(y\mid{}x)=\log\piref(y\mid{}x)$ (i.e.,
  $\En_{x\sim\mu}\brk*{\max_{y\in\cY}\log\piref(y\mid{}x)-\En_{y\sim\pihat(x)}\brk*{\log\piref(y\mid{}x)}}\leq\eps$,
  the objective minimized by reinforcement learning)
  to approximate sharpening error $\bbP_{x\sim\cdist}\brk*{\pihat(\Ystar(x)\mid{}x)\leq{}1-\delta}$. However, it is not clear if the precise dependence we pay is necessary. %
\item \rlhfalg requires a bound on the uniform coverage parameter
  $\Cconc$, which is larger than the parameter $\Cstar$ required by
  \bestofnalg in general. We expect that this assumption can be
  removed by incorporating pessimism in the vein of
  \citep{liu2024provably,huang2024correcting}. Also,
  \rlhfalg requires a bound on the parameter $\Closs$. This grants
  control over the range of the reward function
  $\log\piref(y\mid{}x)$, which can otherwise be unbounded. Since the
  dependence on $\Closs$ is only logarithmic, we view this as fairly
  mild. Overall, the guarantee in \cref{thm:dpo} may be somewhat
  pessimistic; it would be interesting if the result can be improved %
  to match the sample complexity
  of \bestofnalg.
\end{itemize}
 \arxiv{\dfc{Expand into longer remark on tradeoffs of $N$ vs $n$?}}

\subsubsection{Benefits of Exploration}\label{sec:benefits_exploration}
The sample complexity guarantees in \cref{thm:dpo} scale with the
coverage parameter $\Cstar = \En[1/\piref(\Ystar(x)|x)]$, which in
general is unavoidable in the sample-and-evaluate framework via our lower bound,~\cref{thm:lower}. Although $\Cstar$ is a problem-dependent parameter, in the worst case it can be as large as $\abs{\cY}$ (which is exponential in sequence length for autoregressive models). 
Fortunately, unlike
\bestofnalg, the \rlhfalg objective \eqref{eq:rlhf} is amenable to RL algorithms employing
active exploration, leading to improved sample complexity when the class $\Pi$ has additional structure. \loose

Our below guarantees for \rlhfalg replace the assumption of bounded coverage
with boundedness of a structural parameter for the model class
$\Pi$ known as the ``sequential extrapolation coefficient'' (SEC)
\citep{xie2023role,xie2024exploratory}, which we denote by $\SEC(\Pi)$. The formal definition is 
deferred to \cref{sec:proofs_exploration}. Conceptually,
$\SEC(\Pi)$ may thought of as a generalization of the eluder dimension
\citep{russo2013eluder,jin2021bellman}.  
It can always be bounded by the coverability coefficient of the model
class \citep{xie2024exploratory} and can be as large as $\Ccon$ in the worst case, so that bounds based on the SEC reflect improvements that are possible in favorable instances.

Beyond boundedness of the SEC, we require a bound on the range of the log-probabilities of $\piref$.\loose
\begin{assumption}[Bounded log-probabilities]\label{assumption:xpo-bdr}
For all $\pi \in \Pi$, $(x,y) \in \MX\times \MY$, $\abs[\big]{\log \frac{1}{\piref(y|x)}} \leq \Rmax$.\loose
\end{assumption}
\dfedit{We expect that the dependence on $\Rmax$ in our result can be
  replaced with $\log(\Closs)$ (\cref{ass:conc-closs}), but we omit
  this extension to simplify presentation.
  }

  We appeal to (a slight modification of)  \xpo, an iterative  language model alignment algorithm  due to
\cite{xie2024exploratory}. \xpo{} is based on the objective in
\cref{eq:dpo_ml}, but unlike DPO,  incorporates a bonus term to
encourage exploration to leverage \textbf{online} interaction. See \cref{sec:proofs_exploration} for a
detailed overview.

\arxiv{\paragraph{Main result}}

\arxiv{The main guarantee for \rlhfalg with \xpo is as follows.}{}

\begin{theorem}[Informal version of \cref{thm:xpo-sharpening-apx}]\label{thm:xpo-sharpening}
Suppose that \Cref{ass:hard_margin,assumption:xpo-bdr} hold with parameters $\gammargin,\Rmax>0$, and that
\Cref{ass:realizability_dpo} holds with $\beta 
= \gammargin/(2\log(2|\MY|/\delta))$. For any $m\in\bbN$ and $\deltafail\in(0,1)$, \xpo
(\cref{alg:xpo}), when configured appropriately, produces an
$(\eps,\delta)$-sharpened model $\pihat \in \Pi$ with probability at
least $1-\deltafail$, and uses sample complexity\footnote{Technically, \cref{alg:xpo} operates in a slight generalization of the sample-and-evaluate framework (\cref{def:oracle-model}), where the algorithm is allowed to query $\piref(y\mid{} x)$ for arbitrary $x,y$. We expect that our lower bound (\cref{thm:lower}) can be extended to to show that dependence on $\Cstar$ is necessary in the worst case even in this more general framework. \cref{alg:xpo} is fundamentally using additional instance-dependent structure (via the SEC) to avoid dependence on the coverage parameter, $\Cstar$.} \[m = \bigoht\prn*{\frac{\SEC(\Pi)\cdot{}\log(\abs{\Pi}\deltafail^{-1})}{\gammargin^2\delta^2\eps^2}}.\]

\end{theorem}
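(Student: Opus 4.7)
My plan is to reduce sharpening to producing a policy close to the $\beta$-regularized optimum $\pistarb(y\mid x)\propto \piref(y\mid x)^{1+1/\beta}$ in KL divergence, and then invoke the online sample complexity guarantee of \xpo{} to produce such a policy. The argument has three steps.

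\emph{Step 1 ($\pistarb$ is sharpened).} Under \Cref{ass:hard_margin}, for any $y'\notin \Ystar(x)$ and $y^\star\in\Ystar(x)$, $\pistarb(y'\mid x)/\pistarb(y^\star\mid x) = (\piref(y'\mid x)/\piref(y^\star\mid x))^{1+1/\beta} \leq (1+\gammargin)^{-(1+1/\beta)}$, so $\pistarb(\cY\setminus \Ystar(x)\mid x)\leq |\cY|(1+\gammargin)^{-1/\beta}$. Using $\log(1+\gammargin)\geq \gammargin/2$ (for $\gammargin\leq 1$) and the prescribed choice $\beta = \gammargin/(2\log(2|\cY|/\delta))$, this bound is at most $\delta/2$ for every $x\in\supp(\mu)$, so $\pistarb$ is $(0,\delta/2)$-sharpened.

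\emph{Step 2 (KL closeness via \xpo{}).} Set $J(\pi)\ldef\En_\pi[\log\piref(y\mid x)]-\beta\Dkl{\pi}{\piref}$. Since $\pistarb$ is the unconstrained maximizer of $J$, the standard KL-regularized identity $J(\pistarb)-J(\pi) = \beta\En_\mu[\Dkl{\pi(\cdot\mid x)}{\pistarb(\cdot\mid x)}]$ holds. Adapting the \xpo{} analysis of \citet{xie2024exploratory} to the self-reward $\log\piref$ (whose magnitude is bounded by $\Rmax$ via \Cref{assumption:xpo-bdr}) and using realizability (\Cref{ass:realizability_dpo}, which guarantees $\pistarb\in\Pi$), I obtain a regret bound $J(\pistarb)-J(\pihat) \leq \vepsopt$ with probability $\geq 1-\deltaf$, where $\vepsopt = \bigoht(\Rmax\sqrt{\SEC(\Pi)\log(|\Pi|\deltaf^{-1})/m})$. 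Combining with the identity yields $\En_\mu[\Dkl{\pihat(\cdot\mid x)}{\pistarb(\cdot\mid x)}]\leq \vepsopt/\beta$.

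\emph{Step 3 (KL to sharpening).} Markov gives $\Dkl{\pihat(\cdot\mid x)}{\pistarb(\cdot\mid x)}\leq \vepsopt/(\beta\eps)$ on a $(1-\eps)$-event over $x$. The data-processing inequality applied to the indicator $\mathbbm{1}[y\in \Ystar(x)]$ then gives $\dkl{\pihat(\Ystar(x)\mid x)}{\pistarb(\Ystar(x)\mid x)} \leq \vepsopt/(\beta\eps)$. An elementary binary-KL calculation (using $\pistarb(\Ystar(x)\mid x)\geq 1-\delta/2$ from Step 1) shows that this binary KL being smaller than a universal constant times $\delta$ forces $\pihat(\Ystar(x)\mid x)\geq 1-\delta$. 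It therefore suffices to take $\vepsopt\lesssim \beta\eps\delta \asymp \gammargin\eps\delta/\log(|\cY|/\delta)$; inverting the \xpo{} rate yields the claimed $m = \bigoht(\SEC(\Pi)\log(|\Pi|\deltaf^{-1})/(\gammargin^2\eps^2\delta^2))$, with the $\log^2(|\cY|/\delta)$ and $\Rmax^2$ factors absorbed into $\bigoht$.

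The hard part is Step 2: carefully instantiating the \xpo{} analysis for the self-reward $\log\piref$ and verifying that its SEC-based regret bound holds with the required $\Rmax$ and $\beta$ scaling (in particular, tracking how the margin-adapted choice of $\beta$ interacts with the online regression oracle used inside \xpo{}). Step 1 is a short computation, and Step 3 follows a standard chain (Markov $\to$ data-processing $\to$ binary-KL estimate); using the binary-KL inequality rather than Pinsker's is what yields the $1/\delta^2$ rather than $1/\delta^4$ dependence in the final sample complexity.
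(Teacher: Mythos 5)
Your proposal follows essentially the same three-step structure as the paper's proof of \cref{thm:xpo-sharpening-apx}: (1) the margin condition and choice of $\beta$ force $\pistarb(\Ystar(x)\mid x)\geq 1-\delta/2$ pointwise; (2) a reward-based variant of \xpo yields a KL-regularized regret bound, which via the identity $J_\beta(\pistarb)-J_\beta(\pihat)=\beta\Dkl{\pihat}{\pistarb}$ controls the KL to $\pistarb$; (3) small average KL to a $(0,\delta/2)$-sharpened model implies $(\eps,\delta)$-sharpening. Steps 1 and 2 match the paper closely, and you correctly flag that Step 2 requires re-deriving the \xpo regret bound rather than citing it verbatim — the paper devotes \cref{thm:xpo-modified-apx} to exactly this, since the preference-based analysis of \citet{xie2024exploratory} would incur $e^{O(\Rmax)}$, which must be avoided.

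The only substantive divergence is in Step 3: you apply Markov's inequality to the per-prompt KL, push it through the data-processing inequality to a binary KL, and finish with an elementary estimate $d_{\mathsf{KL}}(1-\delta\,\|\,1-\delta/2)\gtrsim\delta$. The paper instead converts the aggregate KL to squared Hellinger (via a Sason-type inequality $\Dhels{\cdot}{\cdot}\lesssim\Dklshort$), lower-bounds the per-prompt squared Hellinger by $\Omega(\delta)\cdot\indic\{\pihat(\Ystar(x)\mid x)\leq 1-\delta\}$ using the data-processing step baked into $\bigl(\sqrt{1-\pihat(\Ystar(x)\mid x)}-\sqrt{1-\pistarb(\Ystar(x)\mid x)}\bigr)^2$, and concludes by linearity of expectation. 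Both arguments are valid Markov-type inversions, both exploit the strong $1-\delta/2$ guarantee from Step 1, and both produce the same $\gamma_{\mathsf{margin}}^{-2}\delta^{-2}\eps^{-2}$ rate; your observation that Pinsker's would degrade this to $\delta^{-4}$ is correct and is precisely why the paper also avoids it. In short, the approach is correct and essentially identical to the paper's, with a cosmetic choice of divergence in the last step.

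One small caveat: as written, your Step 3 folds $\Rmax^2$ and $\log^2(|\cY|/\delta)$ into $\bigoht$. The paper's formal statement (\cref{thm:xpo-sharpening-apx}) keeps those factors explicit; the informal version you are targeting suppresses them, so your treatment is consistent with the stated claim, but a complete proof should track them as the paper does.
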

The takeaway from \cref{thm:xpo-sharpening} is that there is no
dependence on the coverage coefficient for $\piref$. Instead,
the rate depends on the complexity of exploration, as governed by the
sequential extrapolation coefficient $\SEC(\Pi)$. \arxiv{We emphasize that while we
present guarantees for \xpo under the \arxiv{sequential extrapolation
coefficient}\iclr{SEC} for concreteness, we}\iclr{We} expect similar guarantees can
derived for other active exploration algorithms and complexity
measures \citep{jiang2017contextual,foster2021statistical,jin2021bellman,xie2023role}.\loose

\arxiv{
\paragraph{Example: Linearly parameterized models}
As a stylized example of a model class $\Pi$ where active exploration
dramatically improves the sample complexity of sharpening, we consider
the class $\Pi_{\phi,B}$ of linear softmax models. This class
consists of models of the form \iclr{$\pi_\theta(y\mid{}x) \propto \exp(\langle
  \phi(x,y),\theta\rangle)$,}
\arxiv{\begin{align}
\pi_\theta(y\mid{}x) \propto \exp(\langle
  \phi(x,y),\theta\rangle),\label{eq:softmax}
       \end{align}
       }
where $\theta\in\bbR^{d}$ is a parameter vector with $\norm{\theta}_2
\leq B$, and $\phi(x,y)\in\bbR^{d}$ is a known feature map with
$\nrm*{\phi(x,y)}\leq{}1$. The sequential extrapolation coefficient
for this class can be bounded as $\SEC(\Pi)=\wt O(d)$, and the optimal
KL-regularized model $\pistarb$ is a linear softmax model (i.e.,
$\pistarb\in\Pi$) whenever the base model $\piref$ is itself a linear
softmax model. This leads to the following result.

\begin{theorem}\label{thm:xpo-softmax}
  Fix $\epsilon,\delta,\deltafail \in (0,1)$ and $B>0$. Suppose that (i) $\piref = \pi_{\theta^\star}$ is a
  linear softmax model with $\norm{\theta^\star}_2 \leq
  \frac{\gammargin B}{3\log(2|\MY|/\delta)}$; (ii) $\piref$ satisfies
  \cref{ass:hard_margin} with parameter $\gammargin$. \cref{alg:xpo},
  with \arxiv{base model $\piref$, }reward function $r(x,y) :=
  \log \piref(x,y)$, and model class $\Pi_{\phi,B}$, returns an
  $(\epsilon,\delta)$-sharpened model with
  \arxiv{probability}\iclr{prob.}\arxiv{ at least} $1-\rho$, and with sample complexity $m=\poly(\epsilon^{-1},\delta^{-1},\gammargin^{-1}, d, B, \log(|\MY|/\deltafail))$.\loose
\end{theorem}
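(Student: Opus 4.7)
The plan is to verify the premises of \cref{thm:xpo-sharpening} for the class $\Pi_{\phi,B}$ and then invoke the bound $\SEC(\Pi_{\phi,B}) = \tilde O(d)$ that was stated just before the theorem. The work splits into three verification steps plus an infinite-class extension.

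First I would verify realizability (\cref{ass:realizability_dpo}). The exact KL-regularized optimizer with reward $r(x,y) = \log\piref(y\mid x)$ is $\pistarb(y\mid x) \propto \piref(y\mid x)\exp(\beta^{-1}\log\piref(y\mid x)) = \piref(y\mid x)^{1+\beta^{-1}}$. Since $\piref = \pi_{\theta^\star}$ is a linear softmax model, the partition functions cancel in the pairwise log-ratios and one obtains $\pistarb = \pi_{(1+\beta^{-1})\theta^\star}$. With $\beta = \gammargin/(2\log(2|\MY|/\delta))$, a direct calculation gives
\begin{align*}
\nrm{(1+\beta^{-1})\theta^\star}_2
\leq \prn*{1 + \tfrac{2\log(2|\MY|/\delta)}{\gammargin}}\cdot \tfrac{\gammargin B}{3\log(2|\MY|/\delta)}
= \tfrac{\gammargin B}{3\log(2|\MY|/\delta)} + \tfrac{2B}{3} \leq B,
\end{align*}
(using $\gammargin \leq \log(2|\MY|/\delta)$, which can be assumed WLOG) so $\pistarb \in \Pi_{\phi,B}$.

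Second I would verify the bounded-log-probability assumption (\cref{assumption:xpo-bdr}). Using $\nrm{\phi(x,y)}_2 \leq 1$ and the hypothesis on $\nrm{\theta^\star}_2$, we have $|\langle \phi(x,y),\theta^\star\rangle| \leq \nrm{\theta^\star}_2$, so $|\log\piref(y\mid x)| \leq 2\nrm{\theta^\star}_2 + \log|\MY|$, and we may take $\Rmax = O(B + \log|\MY|)$.

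Third, I would invoke \cref{thm:xpo-sharpening} with the class $\Pi_{\phi,B}$ and the stated $\beta$. Since $\Pi_{\phi,B}$ is infinite, I would replace $\log|\Pi|$ in the bound by the metric entropy using a standard $\alpha$-covering of the parameter ball $\{\theta : \nrm{\theta}_2 \leq B\}$, which has log-cardinality $O(d\log(B/\alpha))$ and induces a corresponding $\mathrm{poly}(\alpha, B, \Rmax)$-cover at the level of log-likelihoods and pairwise DPO objectives. Setting $\alpha$ polynomially small in $(\epsilon,\delta,\gammargin,\deltafail^{-1})$ and invoking the stated SEC bound $\SEC(\Pi_{\phi,B}) = \tilde O(d)$ yields the sample complexity
\begin{align*}
m = \tilde O\prn*{\frac{\SEC(\Pi_{\phi,B})\cdot d\log(B\deltafail^{-1})}{\gammargin^2\delta^2\epsilon^2}} = \poly(\epsilon^{-1},\delta^{-1},\gammargin^{-1},d,B,\log(|\MY|/\deltafail)),
\end{align*}
as claimed.

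The main obstacle I anticipate is the infinite-class extension in Step 3: \xpo's per-round objective is a squared regression loss on a weighted difference of log-ratios, and porting the finite-class result to $\Pi_{\phi,B}$ requires uniform control over a function class whose range scales with $\Rmax$. The cleanest route is to verify that the $\alpha$-covering argument only pays an additive $\mathrm{poly}(\alpha, \Rmax, B)$ term inside the concentration bounds underlying \cref{thm:xpo-sharpening}; once this is done, the SEC bound combined with the earlier verifications immediately delivers the polynomial guarantee. The actual SEC computation is folklore for $d$-dimensional linear classes (via the elliptic potential / eluder argument) and, as indicated by the paper, is inherited through the linear parameterization of log-ratios.
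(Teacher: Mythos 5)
Your proposal matches the paper's proof essentially step for step: verifying $\pistarb = \pi_{(1+\beta^{-1})\theta^\star} \in \Pi_{\phi,B}$ by the same norm computation (the paper's \cref{thm:xpo-softmax-apx} does exactly this), bounding $\Rmax$ by $O(B+\log|\MY|)$, using a parameter-space covering of the ball to bound $\cN(\Pi_{\phi,B},\epdisc)$ (the paper's \cref{lemma:softmax-net}), and bounding $\SEC(\Pi_{\phi,B})\lesssim d\log T$ via an elliptic-potential argument applied to the linearly parameterized log-ratios (the paper's \cref{lemma:sec-softmax-bound}), before substituting into \cref{thm:xpo-sharpening-apx}. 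The one thing you flagged as a future obstacle — porting the finite-class XPO guarantee to an infinite class — is in fact already built into the formal statement \cref{thm:xpo-sharpening-apx}, which is phrased in terms of $\cN(\Pi,\epdisc)$ rather than $\log|\Pi|$, so no additional work is needed there beyond what you already anticipate; and your ``WLOG $\gammargin\le\log(2|\MY|/\delta)$'' step, while slightly informal (shrinking $\gammargin$ also tightens the hypothesis on $\|\theta^\star\|_2$), reflects the same implicit constraint present in the paper's own argument.
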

Importantly, \cref{thm:xpo-softmax} has no dependence on the coverage
parameter $\Cstar$, scaling only with the dimension $d$ of the
softmax model class. \loose

For a quantitative comparison,\arxiv{ we note that even for the simple
special case of the linear softmax
model class,} it is straightforward to construct examples of models $\piref$ where
$\Cstar = \EE[1/\piref(\ystar(x)|x)] \asymp |\MY| \asymp \exp(\Omega(d))$, yet
\cref{ass:hard_margin} is satisfied with $\gammargin=\Omega(1)$. For such models, \bestofnalg will incur
$\exp(\Omega(d))$ sample complexity; see \cref{ex:softmax} for details. Hence, \cref{thm:xpo-softmax} represents an \emph{exponential} improvement, obtained by exploiting the structure of the self-reward function in
a way that goes beyond \bestofnalg.\loose

\begin{remark}[Non-triviality]
  \cref{thm:xpo-softmax} is quite stylized in the sense that if the parameter vector $\theta^\star$ of $\piref$ is known, then it
is trivial to directly compute the parameter vector for the sharpened
model $\pistarb$ (which corresponds to rescaling $\theta^{\star}$). \cref{alg:xpo} is interesting and
non-trivial nonetheless because it \emph{does not have explicit knowledge of
  $\theta^\star$}, as it operates in the sample-and-evaluate oracle
model (\cref{def:oracle-model}). \arxiv{Moreover, the guarantee generalizes to
any model class $\Pi$ for which $\SEC(\Pi)$ can be bounded; see \cref{thm:xpo-sharpening-apx} for the formal statement.\loose}
\end{remark}
}

\arxiv{
\section{Experiments}
\label{sec:experiments}

  In this section we explore the sharpening mechanism empirically.
  We consider inference-time experiments that demonstrate that self-improvement through sharpening is possible, as well as training-time experiments that successfully amortize the cost of self-improvement, thereby avoiding computational overhead at inference time.  We first describe the general experimental setup, then turn to the results of our experiments.

\subsection{Experimental Setup}

We experiment with sharpening using the following models, all of which (except for \gptthree) are available on \url{https://huggingface.co}; we provide HuggingFace model identifiers below.
\begin{enumerate}
\item Phi models: We use several models from the Phi family~\citep{abdin2024phi}, specifically \phithreemini (``microsoft/Phi-3-mini-4k-instruct''), \phithreesmall (``microsoft/Phi-3-small-8k-instruct''), \phithreemedium (``microsoft/Phi-3-medium-4k-instruct''), and \phithreefivemini (``microsoft/Phi-3.5-mini-instruct''). 
\item \llamathree (``meta-llama/Llama-3.2-3B-Instruct'')~\citep{dubey2024llama}.
\item \mistral (``mistralai/Mistral-7B-Instruct-v0.3'')~\citep{jiang2023mistral}.
\item \gptthree~\citep{brown2020language}: We access this model via the OpenAI API.
\item \llamagame (``OhCherryFire/llama2-7b-game24-policy-hf''): We use the model of \iftoggle{workshop}{\cite{wan2024alphazero}}{\citet{wan2024alphazero}}, which is a Llama-2 model finetuned on the \gametwentyfour task \citep{yao2024tree}. We use this model only for experiments with \gametwentyfour. 
\end{enumerate}

We consider the following tasks:

\begin{enumerate}
      \item \gsm: We use the above models to generate responses to prompts from the GSM-8k dataset \citep{cobbe2021training} where the goal is to generate a correct answer to an elementary school math question. For inference-time experiments, we take the first 256 examples from the test set in the ``main'' subset.\footnote{\url{https://huggingface.co/datasets/openai/gsm8k}.} %
    \item \mathdataset: We use the above models to generate responses to prompts from the \mathdataset dataset \citep{hendrycks2021measuring}, which consists of more difficult math questions.  For inference-time experiments, we consider ``all'' subsets and take the first 256 examples of the test set where the solution matches the regular expression \verb|(\d*)|.\footnote{\url{https://huggingface.co/datasets/lighteval/MATH}.}  %
    \item \prontoqa: We use the above models to generate responses to prompts from the \prontoqa dataset \citep{saparov2023language}, which consists of chain-of-thought-style reasoning questions with boolean answers.  For inference-time experiments, we take the first 256 examples from the training set.\footnote{\url{https://huggingface.co/datasets/longface/prontoqa-train}.} %
    \item \mmlu: We use the above models to generate responses to prompts from three subsets of the \mmlu dataset \citep{hendrycks2020measuring}, specifically \texttt{college\_biology} (Bio), \texttt{college\_physics} (Phys), and \texttt{college\_chemistry} (Chem), all of which consist of multiple choice questions.\footnote{\url{https://huggingface.co/datasets/cais/mmlu}.} For inference-time experiments, we take the first 256 examples of the test set for each subset. 
    \item \gametwentyfour: We use only the model of \iftoggle{workshop}{\cite{wan2024alphazero}}{\citet{wan2024alphazero}} (i.e., \llamagame), on the \gametwentyfour task \citep{yao2024tree}.  The prompts are four numbers and the goal is to combine the numbers with standard arithmetic operations to reach the number `24.'  For inference-time experiments, we use both the train and test splits of the dataset.\footnote{\url{https://github.com/princeton-nlp/tree-of-thought-llm/tree/master/src/tot/data/24}}  %
    \end{enumerate}

    All of our experiments were run on 40G NVIDIA A100 GPUs, 192G AMD MI300X GPUs, or through the OpenAI API.

    \subsection{Validation of Inference-Time Sharpening}\label{ssec:inference}

    We first validate the sharpening mechanism (i.e., the phenomenon that responses from a model with high self-reward $\rself$ enjoy high performance on downstream tasks) through inference-time experiments, focusing on the maximum likelihood self-reward. For each (model, task) pair, we sample $N$ generations per prompt with temperature 1 and return the best of the $N$ generations according to the \mlsharp self-reward function $\rself(y\mid{}x)=\log\piref(y\mid{}x)$; we compare against greedy decoding as a baseline, whose accuracy is displayed in \Cref{sfig:bon-greedy}.

\paragraph{Implementation details}
For all models and datasets except for \gametwentyfour, we used 1-shot prompting to ensure that models conform to the desired output format and to elicit chain of thought reasoning (for \gametwentyfour we do not provide a demonstration in the prompt). We set the maximum length of decoding to be $512$ tokens. We used 10 seeds for all (model, task) pairs with a maximum value of $N=50$ in \BestofN sampling. We simulated $N$ responses for $N<50$ by subsamplng the 50 generated samples. For \BestofN sampling, we always use temperature $1.0$. Since greedy decoding is a deterministic strategy, there is no need to average over multiple seeds for each (model, task) pair. In all experiments, we collect both the responses and their log-likelihoods under the \emph{reference model} (i.e., the original model from which samples were generated). 

\paragraph{Results}
We display our findings in \Cref{fig:validation}(a) and in \Cref{fig:BoN-inference-granular}; because we only consider a single model for \gametwentyfour, we separate the results for this task into \Cref{fig:game24_inf}. We visualize performance---measured through normalized accuracy improvement over greedy decoding. We also visualize log-likelihoods (under $\piref$) of the selected responses in \cref{fig:BoN-inference-logprobs}. We find that:
\begin{enumerate}
\item Across all (model, task) pairs, inference-time \BestofN
  sharpening (using $\rself(y\mid{}x)=\log\piref(y\mid{}x)$) improves
  over na{\"i}ve sampling with temperature 1.0.
\item For all datasets, \BestofN sharpening improves upon standard \emph{greedy decoding}, for at least one model.
  \item Analogously, for every model, there is at
  least one dataset for which \BestofN sharpening improves over greedy
  decoding.
\end{enumerate}

We further explore the relationship between sequence-level log-probabilities and generation quality in \Cref{fig:distributions}, where we plot the empirical distributions of responses sampled with temperature 1 from the base model for a variety of model-dataset pairs, conditioned on whether or not the response is correct. We find that \emph{the distribution of log probabilities conditioned on correctness stochastically dominates the distribution conditioned on incorrectness} in each (model, task) pair evaluated, which provides more evidence that maximum likelihood sharpening represent a reasonable self-improvement target.

We mention several other observations from the experiments. First, in most cases, performance and log-likelihood saturate at relatively small values of $N$, typically around 10 or 20. This suggests that significant improvements can be obtained with relatively low computational overhead. Second, in some cases, performance can degrade as $N$ increases. We found that this happens for two reasons: (1) the performance of the reference model is poor and so $\rself$ does not provide a good signal (e.g., with \llamathree) and (2) the \BestofN criteria selects for short responses, which have higher log-likelihood but cannot leverage the computational and representational benefits of chain-of-thought, thereby yielding worse performance (e.g., with \gptthree on \gsm).

    \begin{figure}[tp]
    \centering
    \subfigure[]{
    \includegraphics[width=0.45\textwidth]{figs/BoN-math-improvement.pdf}
    }
    \hfill
    \subfigure[]{
    \includegraphics[width=0.45\textwidth]{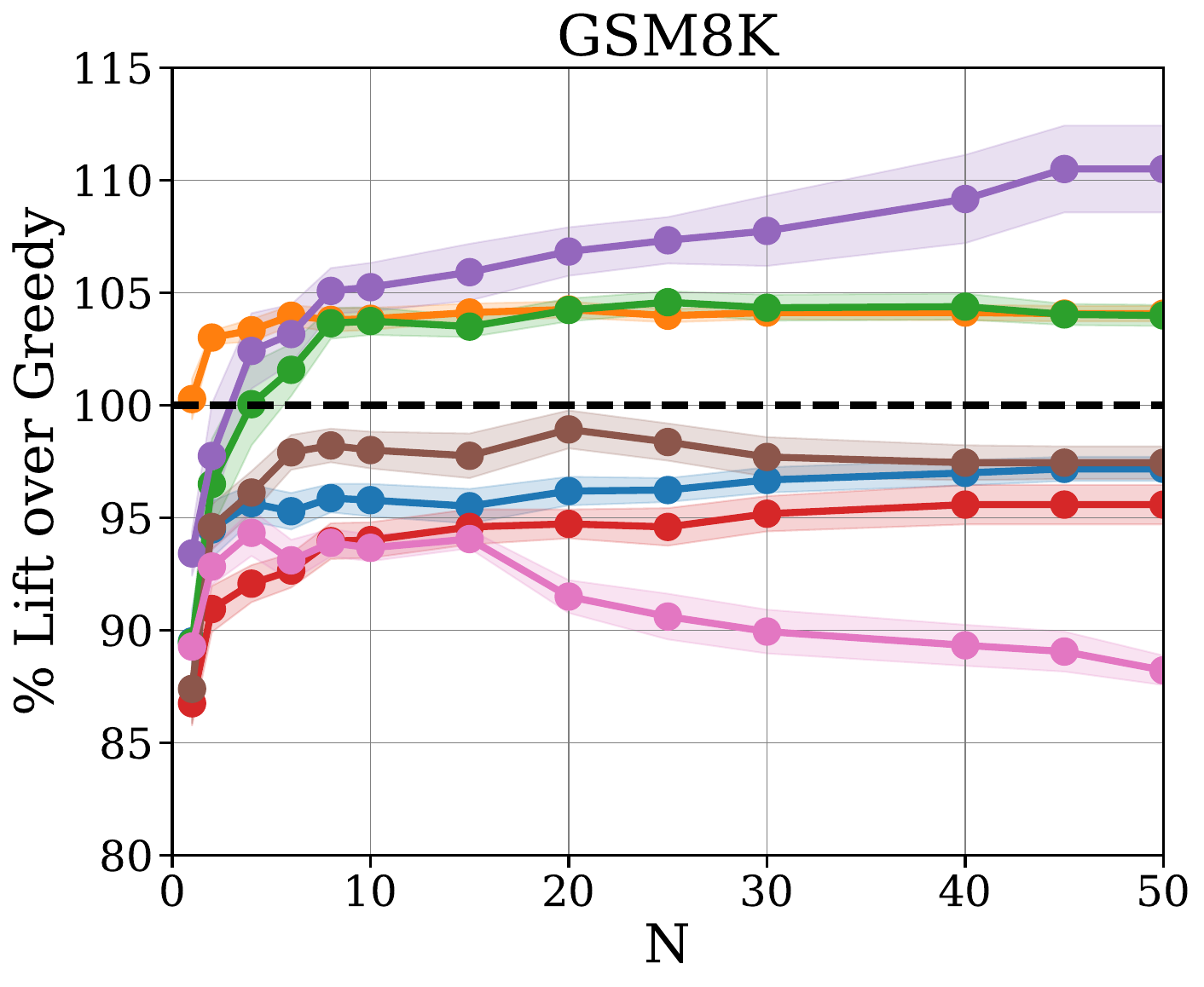}
    }
    \\
    \subfigure[]{
    \includegraphics[width=0.45\textwidth]{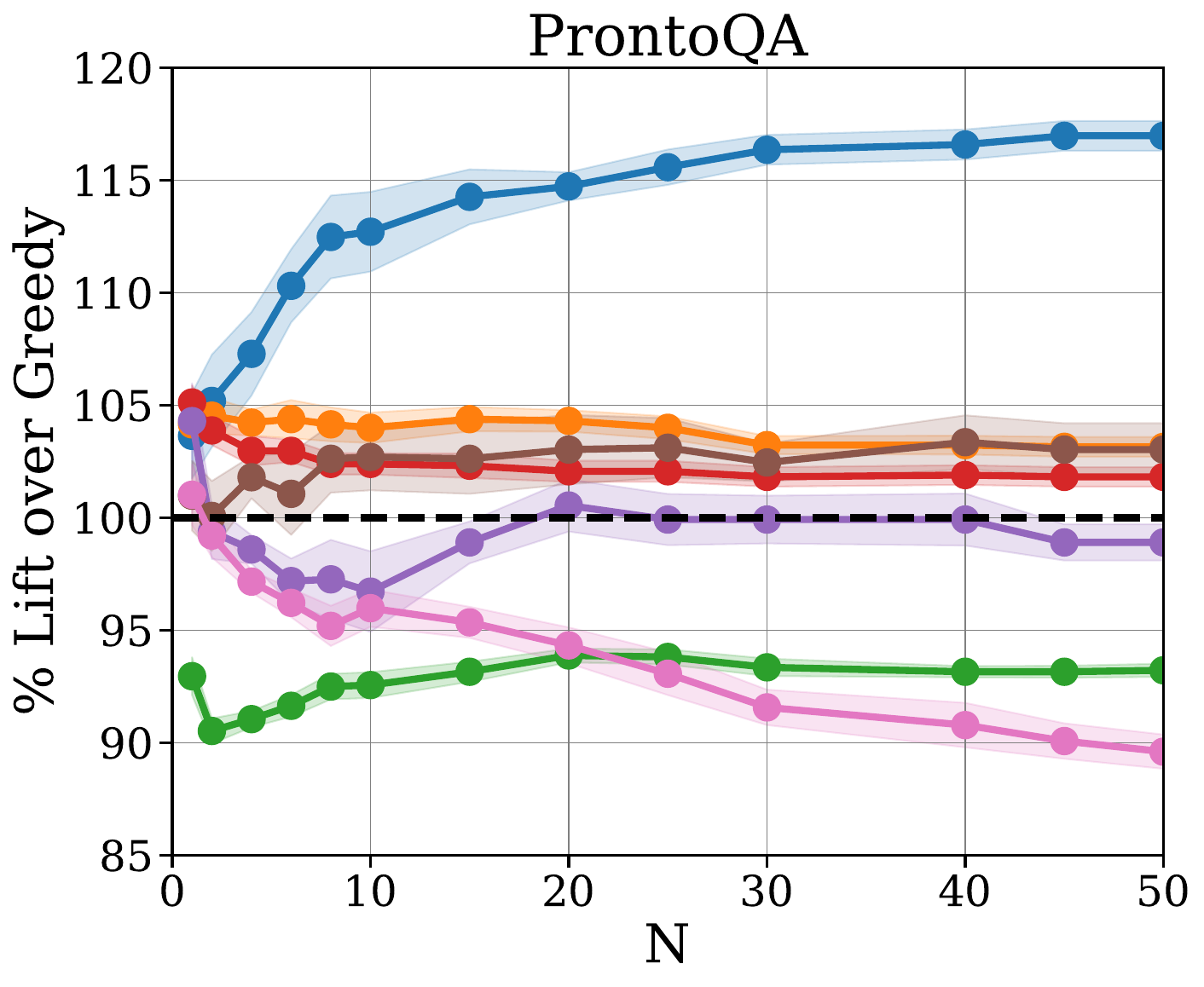}
    }
    \hfill
    \subfigure[]{
    \includegraphics[width=0.45\textwidth]{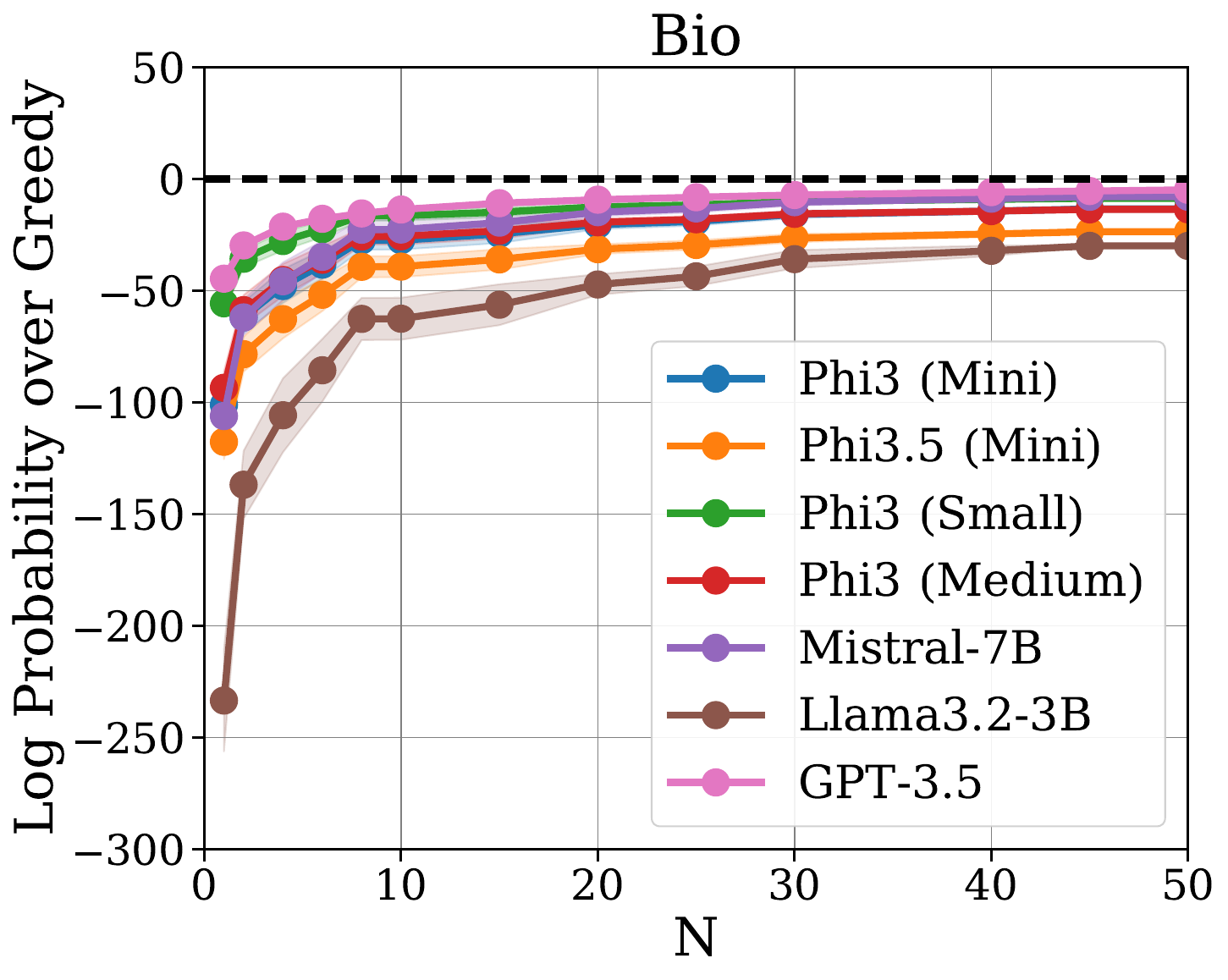}
    }
    \\
    \subfigure[]{
    \includegraphics[width=0.45\textwidth]{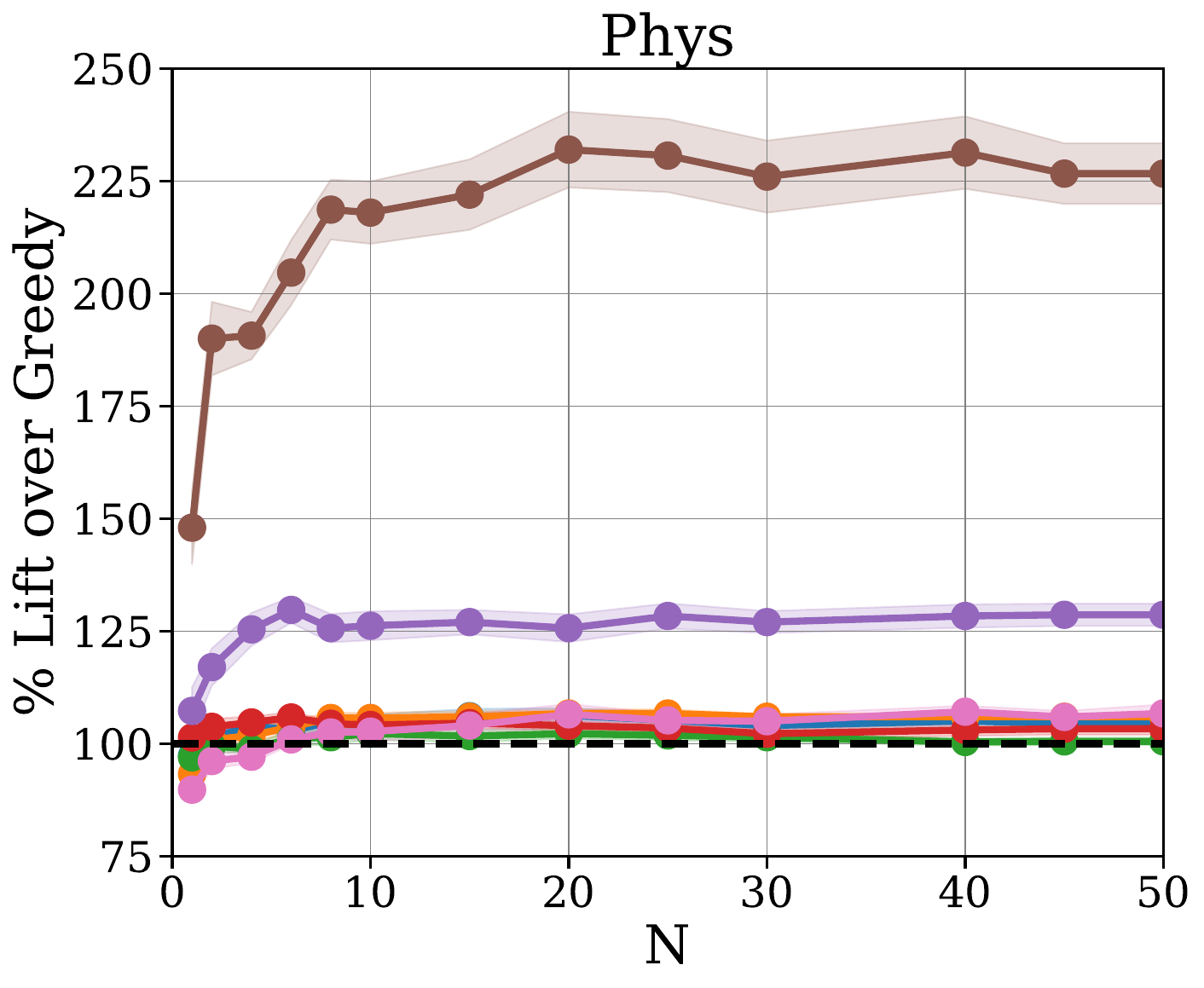}
    }
    \hfill
    \subfigure[]{
    \includegraphics[width=0.45\textwidth]{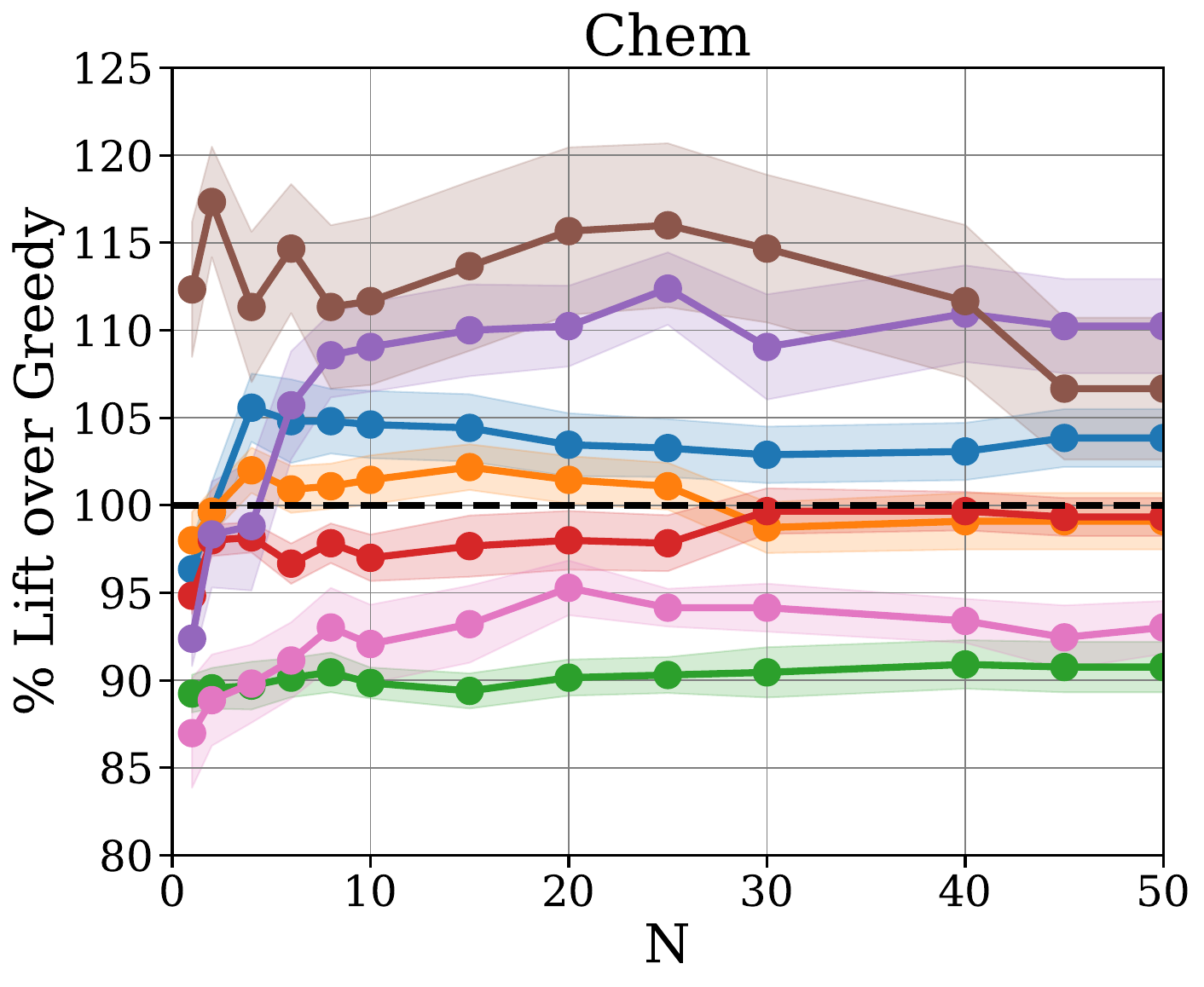}
    }
    \caption{Percent lift in accuracy of inference-time BoN-sharpening over greedy decoding in each task as $N$ is varied.  For many task-model pairs, the accuracy improves as $N$ increases, demonstrating the efficacy of maximum likelihood sharpening.}
    \label{fig:BoN-inference-granular}
\end{figure}

\subsection{Inference-Time Sharpening with other Self-Reward Functions}
Although we focus on $\rself(y \mid{}x) = \log\piref(y \mid{}x)$ throughout the paper, the sharpening framework is significantly more general.\footnote{Although we do not experiment with model-as-judge approaches~\citep[e.g.,][]{huang2022large}, which obtain self-reward by re-prompting, we note that they can be cast in the sharpening framework by simply defining the self reward $\rself(y \mid{} x)$ to be the output of the model when prompted with a verification/scoring prompt, the original prompt $x$ and the candidate response $y$.} As such, we experiment with inference-time sharpening for other choices for $\rself$:\loose
\begin{enumerate}
  \item Length-normalized log-likelihood: $\rself(y \mid{}x) = \frac{1}{|y|} \log\piref(y \mid{}x)$ where $|y|$ is the length, in tokens, of the response.
  \item Majority (self-consistency): All datasets except \gametwentyfour have multiple-choice, boolean, or numerical answers. Although we allow responses to contain chain-of-thought tokens, we can extract the answer from each response and use the most-frequently-occuring answer. This can be seen as a sample-based approximation to the following self-reward function: $\rself(y\mid{}x) = \sum_{y': y'_{\texttt{ans}}=y_{\texttt{ans}}} \piref(y'\mid{}x)$, where $y_{\texttt{ans}}$ are the ``answer'' tokens in the full response $y$. 
\end{enumerate}
Finally, as a skyline we consider the \emph{coverage} criterion~\citep{brown2024large}, where we simply check if any of the sampled responses corresponds to the correct answer. This criterion is a skyline and does not fit into sharpening framework, as it uses knowledge of the ground truth (external) task reward function. 

Results are displayed in \Cref{fig:other_rewards}. For length-normalized log-likelihood (a) and majority (b), we see qualitatively similar behavior to (unnormalized) log-likelihood: inference-time sharpening via these self-reward functions offers improvement over both vanilla (temperature 1.0) sampling and greedy decoding. In both cases, the improvements are generally larger than those obtained with log-likelihood. Finally, examining the coverage criteria, we see that with $N=50$ samples, all of the models almost always produce a correct answer on all tasks, raising the possibility of other self-reward functions that further improve performance.

\begin{figure}[t]
    \centering
    \subfigure[]{
        \includegraphics[width=0.45\textwidth]{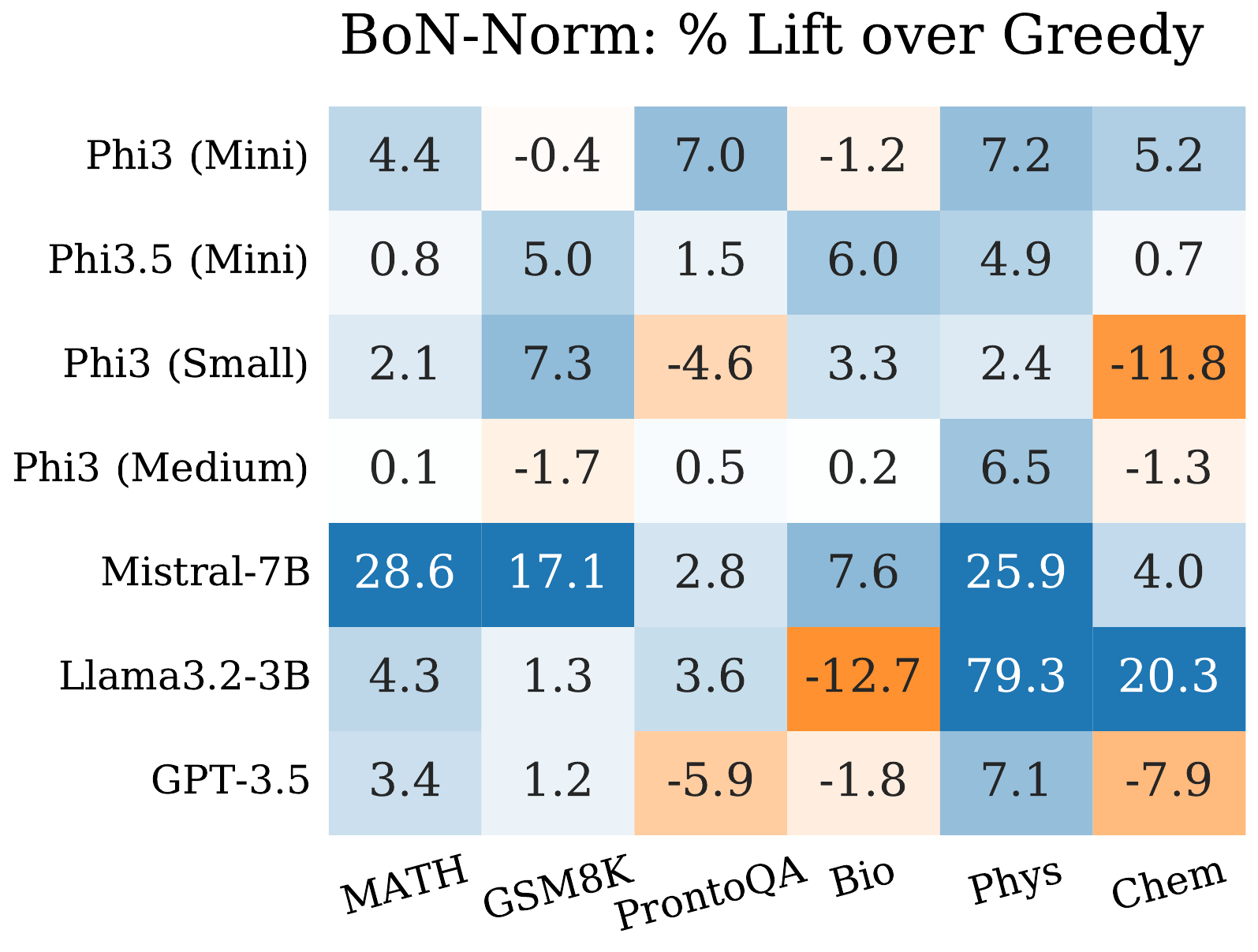}
        \label{sfig:bon-normalized-improvement} 
    }
    \hfill \subfigure[]{
        \includegraphics[width=0.45\textwidth]{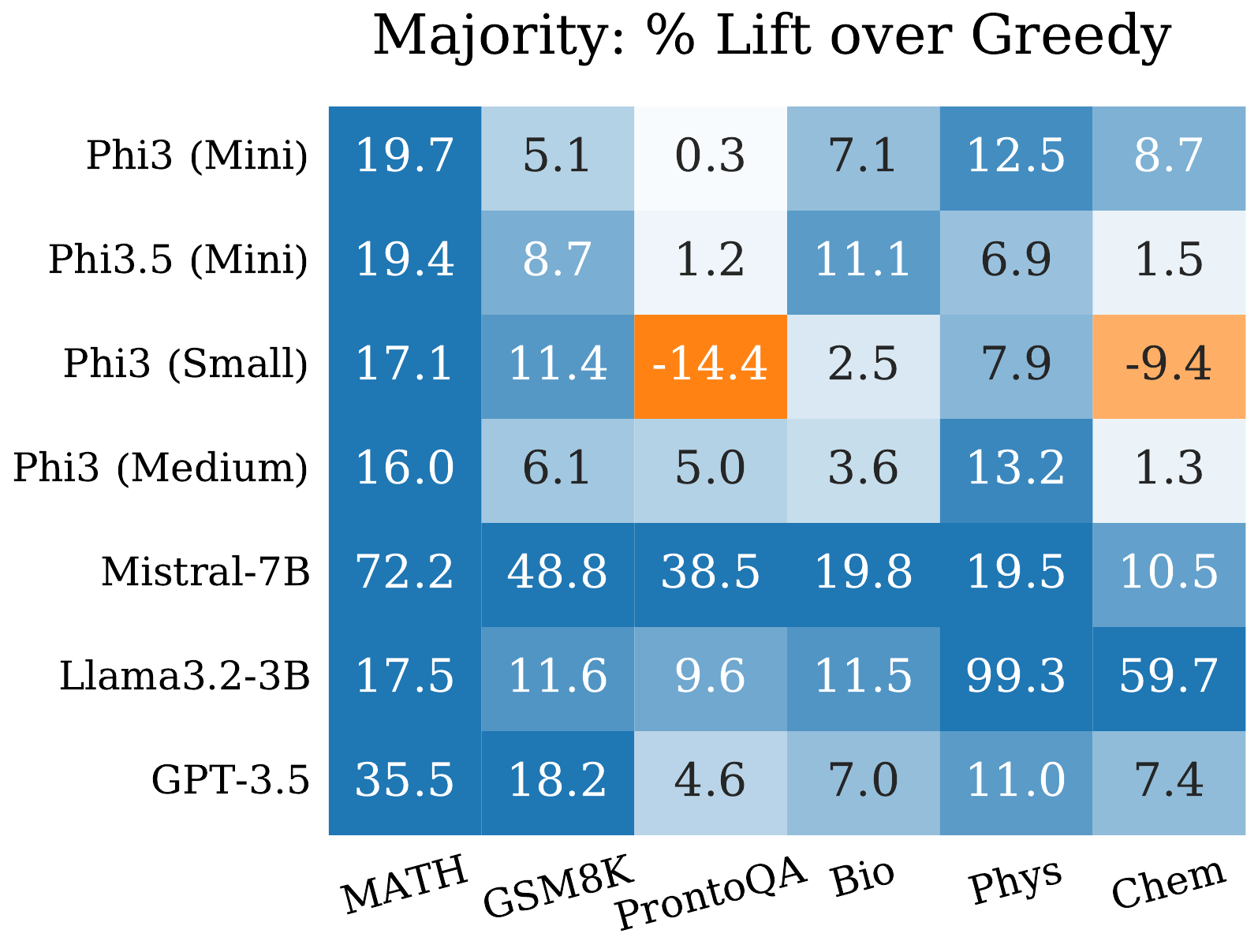}
        \label{sfig:bon-majority-improvement}
    } \\
  
    \subfigure[]{
        \includegraphics[width=0.45\textwidth]{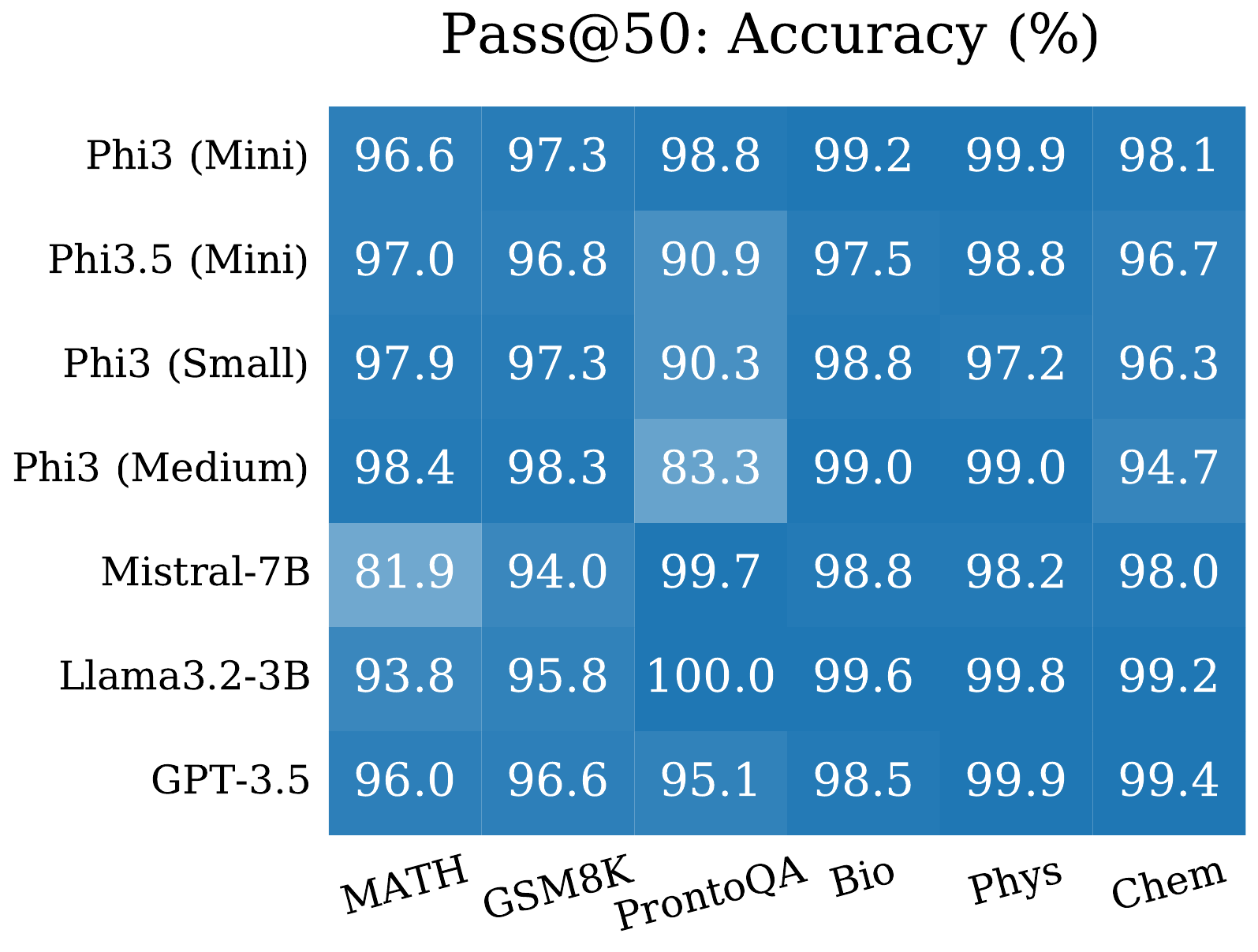}
        \label{sfig:bon-coverage}
    }
    \hfill
    \subfigure[]{
        \includegraphics[width=0.45\textwidth]{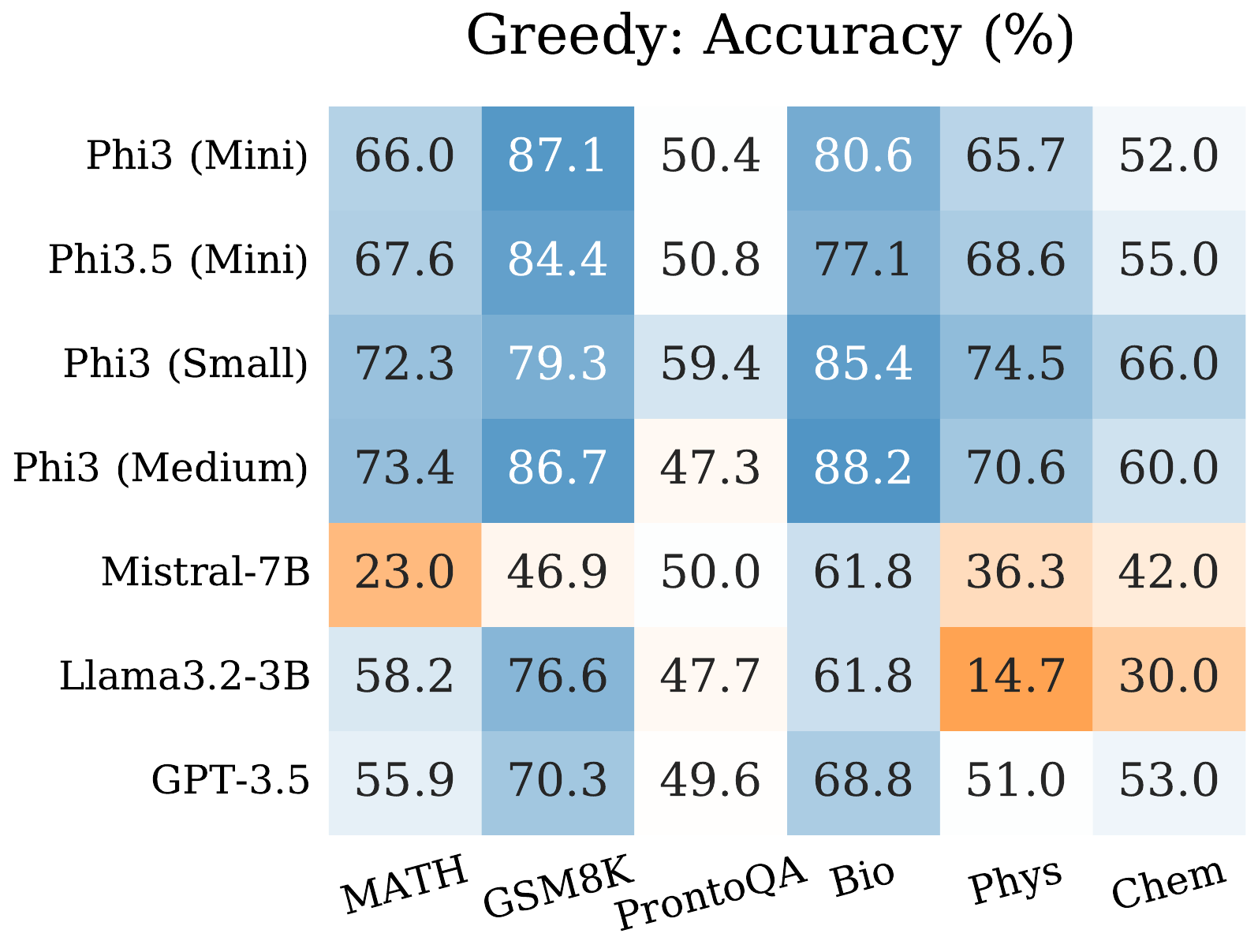}
        \label{sfig:bon-greedy}
    }

    \caption{Performance of alternative self-reward functions for inference-time BoN sharpening:  Percent accuracy improvement over greedy decoding for (a) BoN sharpening with length-normalized log probability and (b) majority voting, with both demonstrating efficacy on a range of model-task pairs.  (c) Coverage of correct answer, a skyline demonstrating that most model-task pairs produce the correct answer in at least one completion out of 50 for most prompts.  (d) Accuracy of greedy decoding baseline on each model-task pair.
    }
    \label{fig:other_rewards}
  \end{figure}

\subsection{Training-Time Sharpening with \bestofnalg}
\begin{table}[t]
    \centering
    \begin{tabular}{cccc}
        \toprule
        \textbf{Model} & \textbf{Dataset} & \textbf{\% Lift over Greedy (Accuracy)} & \textbf{Lift over Greedy (Likelihood)} \\
        \midrule
        \phithreefivemini & \mathdataset & $19.24 \pm 2.41$ & $48.33 \pm 0.17$ \\
        \phithreefivemini & \gsm & $1.82 \pm 0.64$ & $1.49 \pm 0.55$ \\
        \phithreefivemini & \prontoqa & $12.46 \pm 1.08$ & $ 5.64 \pm 0.01$ \\
        \texttt{Mistral-7B} & \mathdataset & $8.88 \pm 5.55$ & $5.71 \pm 3.00$ \\
        \bottomrule
    \end{tabular}
    \caption{Experimental results for \bestofnalg}
    \label{tab:performance}
\end{table}
In addition to inference-time experiments, we also evaluate training-time sharpening, and demonstrate empirically that \bestofnalg effectively amortizes inference-time BoN.  Due to limited computational resources, we restrict our attention to a subset of the model-task pairs considered in \Cref{ssec:inference} that have particularly promising inference-time BoN performance. For each pair, we evaluate the performance of \bestofnalg as a means to amortize the inference-time cost of multiple generations.

For each of the chosen model-dataset pairs (cf. \Cref{tab:performance}), we sample $N=50$ responses with temperature 1 for each prompt in the dataset and select the most likely (according to the relevant reference model).  We then combine these likely responses with the prompts in order to form a training corpus and train with the \bestofnalg objective. We apply Low Rank Adaptation \citep{hu2021lora} to the model, sweeping over LoRA rank, learning rate scheduler, and weight decay in order to return the best optimized model.\footnote{In all experiments involving \phithreefivemini we use a batch size of 4; unfortunately, due to a known numerical issue with LoRA on \mistral involving batch size $>1$, we use a batch of 1 in this case.  Because of this choice, instead of the 30 epochs we use to train our other models, for \mistral, we run only 10 epochs.}  We report the specific hyperparameters chosen in \Cref{tab:hyperparameters}.  On all models, we used a learning rate of $3 \times 10^{-4}$ with linear decay to zero and gradient clamping at 0.1. \loose

\paragraph{Results}  \Cref{tab:performance} reports our results for \bestofnalg. We report best model checkpoint during training for each model-dataset pair, averaged across 3 random seeds, with responses are sampled with temperature 1 from the fine-tuned model.  We report (i) the percent lift in accuracy on the dataset with respect to the greedy generation of the reference model; and (ii) the increase in average sequence level log-likelihood with respect to the same. For all model-dataset pairs, we observe improvement on both metrics, demonstrating that some amortization is possible with \bestofnalg.

Next, in \Cref{fig:training_curve_phi,fig:training_curve_mistral} (\cref{sec:additional_experiments}), we display the evolution of the metrics in \cref{tab:performance} throughout training for each model-dataset pair.  In \cref{fig:training_curve_phi}, we find that while \phithreefivemini is quite well-behaved on \mathdataset and \prontoqa, the training curve for \gsm is quite noisy. The log-probability appears to be a significantly less useful proxy for accuracy on this dataset than for the others; similar phenomena were observed in \citet{block2023butterfly} in a variety of tasks.  In \cref{fig:training_curve_mistral}, we find that for \mistral on \mathdataset, we achieve improvement after training for sufficiently long, but the optimization suffers an substantial initial drop and spends $\sim90\% $ of the gradient steps recovering before improvement is observed; we speculate that this is a function of insufficient hyper-parameter tuning for the optimization itself, rather than a fundamental barrier.

Finally, in \Cref{fig:sft-N} (\cref{sec:additional_experiments}), we investigate the effect of the parameter $N$ on the performance of \bestofnalg for \phithreefivemini on \mathdataset.  In particular, in forming our training set, we choose $N \in \{10, 25, 50 \}$ and repeat the procedure described above, averaging our results over three seeds.  We find that increasing $N$ leads to a modest increase in the sequence-level log-likelihood, in accordance with our theory, and a consequent increase in the accuracy of the fine-tuned model.

}

\section{Conclusion}
\label{sec:conclusion}
We view our
theoretical framework for sharpening as a starting point toward a foundational
understanding of self-improvement that can guide the design and evaluation of algorithms. To this end, \abedit{we raise several} directions
for future research.
\begin{itemize}
\item \emph{Representation learning.} A conceptually appealing feature
  of our framework is that it is agnostic to the
  structure of the model under consideration, but an important
  direction for future work is to study the dynamics of
  self-improvement for specific models/architectures and
  understand the representations that these models learn under
  self-training.
\item \emph{Richer forms of self-reward.} Our theoretical results study the
  dynamics of self-training in a stylized framework where the model
  uses its own log-probabilities as a self-reward. Empirical research on
  self-improvement leverages more sophisticated \abedit{approaches} (e.g., specific prompting
  techniques)
  \citep{huang2022large,wang2022self,bai2022constitutional,pang2023language,yuan2024self}
  and it is important to understand when and how these forms of
  self-improvement are beneficial.
\end{itemize}

\section*{Acknowledgments}
We thank Sivaraman Balakrishnan, Miro Dud\'{i}k, Susan Dumais, John Langford, Qinghua Liu, and Yuda Song for helpful discussions. 

\clearpage

\iclr{\bibliographystyle{iclr2025_conference}}
\bibliography{refs,sharpening}

\clearpage

\appendix

\renewcommand{\contentsname}{Contents of Appendix}
\addtocontents{toc}{\protect\setcounter{tocdepth}{2}}
{
  \hypersetup{hidelinks}
  \tableofcontents
}

\clearpage

\part{Additional Discussion and Results}

\iftoggle{iclr}{
\section{Additional Experiments and Details}
}{\section{Additional Experimental Results}}\label{app:experiments}
\label{sec:additional_experiments}

\arxiv{
  In this section we display omitted figures discussed in \Cref{sec:experiments}.
  }

\iclr{
\begin{figure}[t]
  \centering
  \subfigure[]{
      \includegraphics[width=0.45\textwidth]{figs/BoN-Normalized-Colored-Table.pdf}
      \label{sfig:bon-normalized-improvement} 
  }
  \hfill \subfigure[]{
      \includegraphics[width=0.45\textwidth]{figs/BoN-Majority-Colored-Table.pdf}
      \label{sfig:bon-majority-improvement}
  } \\

  \subfigure[]{
      \includegraphics[width=0.45\textwidth]{figs/BoN-Coverage-Colored-Table.pdf}
      \label{sfig:bon-coverage}
  }
  \hfill
  \subfigure[]{
      \includegraphics[width=0.45\textwidth]{figs/BoN-greedy-Colored-Table.pdf}
      \label{sfig:bon-greedy}
  }

  \caption{Performance of alternative decoding schemes beyond BoN.  Percent accuracy improvement over greedy decoding for self-improvement with length-normalized log probability (a) and majority voting (b), with both demonstrating efficacy on a range of model-task pairs.  (c) Measure of coverage of correct answer, demonstrating that most model-task pairs produce the correct answer most of the time with at least one completion out of 50.  (d) Accuracy of greedy decoding baseline on each model-task pair.
  }
  \label{fig:other_rewards}
\end{figure}

}
\arxiv{
\vfill
  \begin{table}[hp]
    \centering
    \begin{tabular}{ccccc}
        \toprule
        \textbf{Model} & \textbf{Dataset} & \textbf{Weight Decay} & \textbf{LoRA Rank} \\
        \midrule
        \phithreefivemini & \mathdataset & 0.1 & 16\\
        \phithreefivemini & \gsm & 0.5 & 16 &  \\
        \phithreefivemini & \prontoqa & 0.0 & 16\\
        \mistral & \mathdataset & 1.0 & 8 \\
        \bottomrule
    \end{tabular}
    \caption{Hyperparameters for training-time sharpening experiments with \bestofnalg.}
    \label{tab:hyperparameters}
  \end{table}
  \vfill
\begin{figure}[hp]
    \centering
    \includegraphics[width=0.45\textwidth]{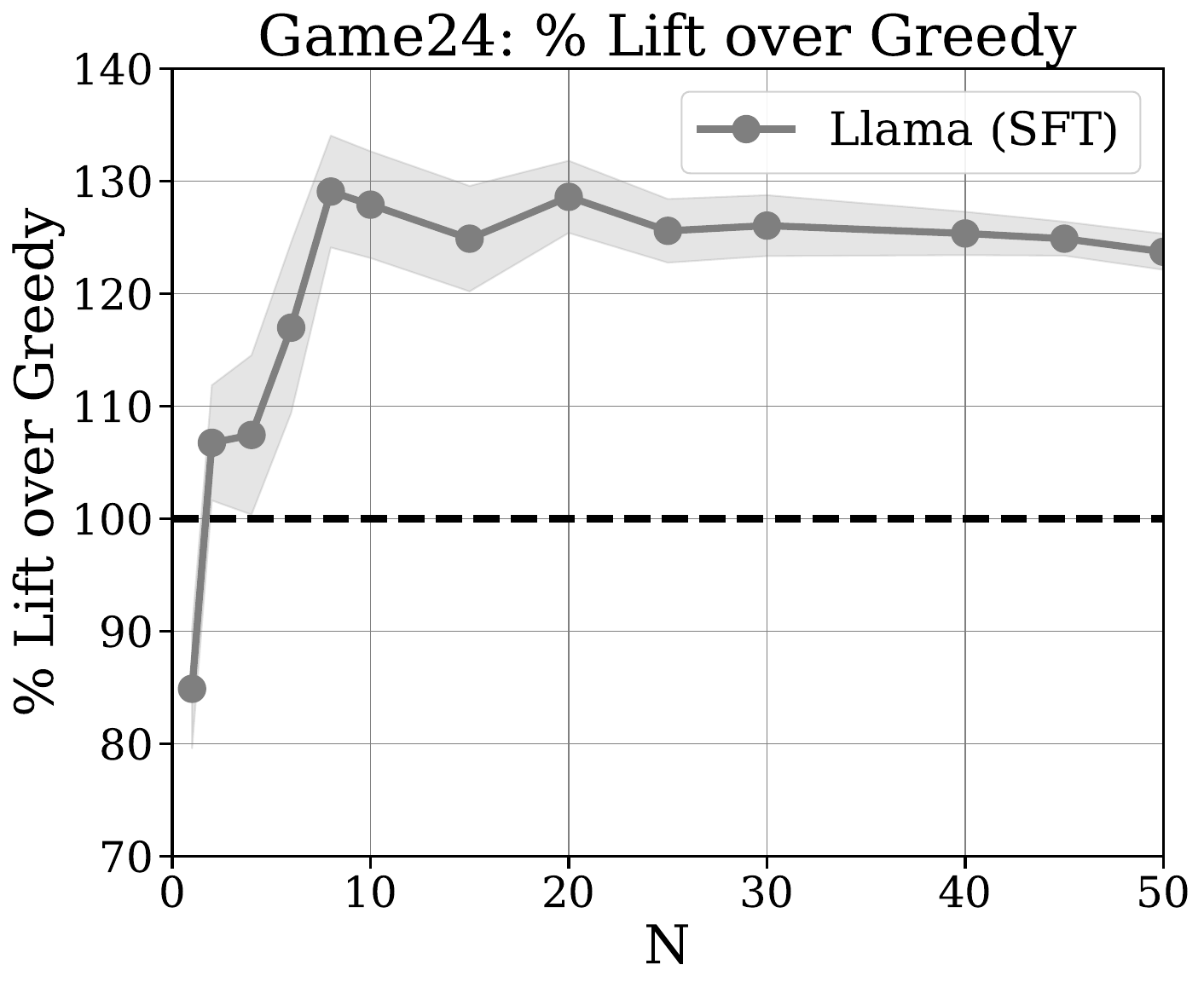}
    \hfill
    \includegraphics[width=0.45\textwidth]{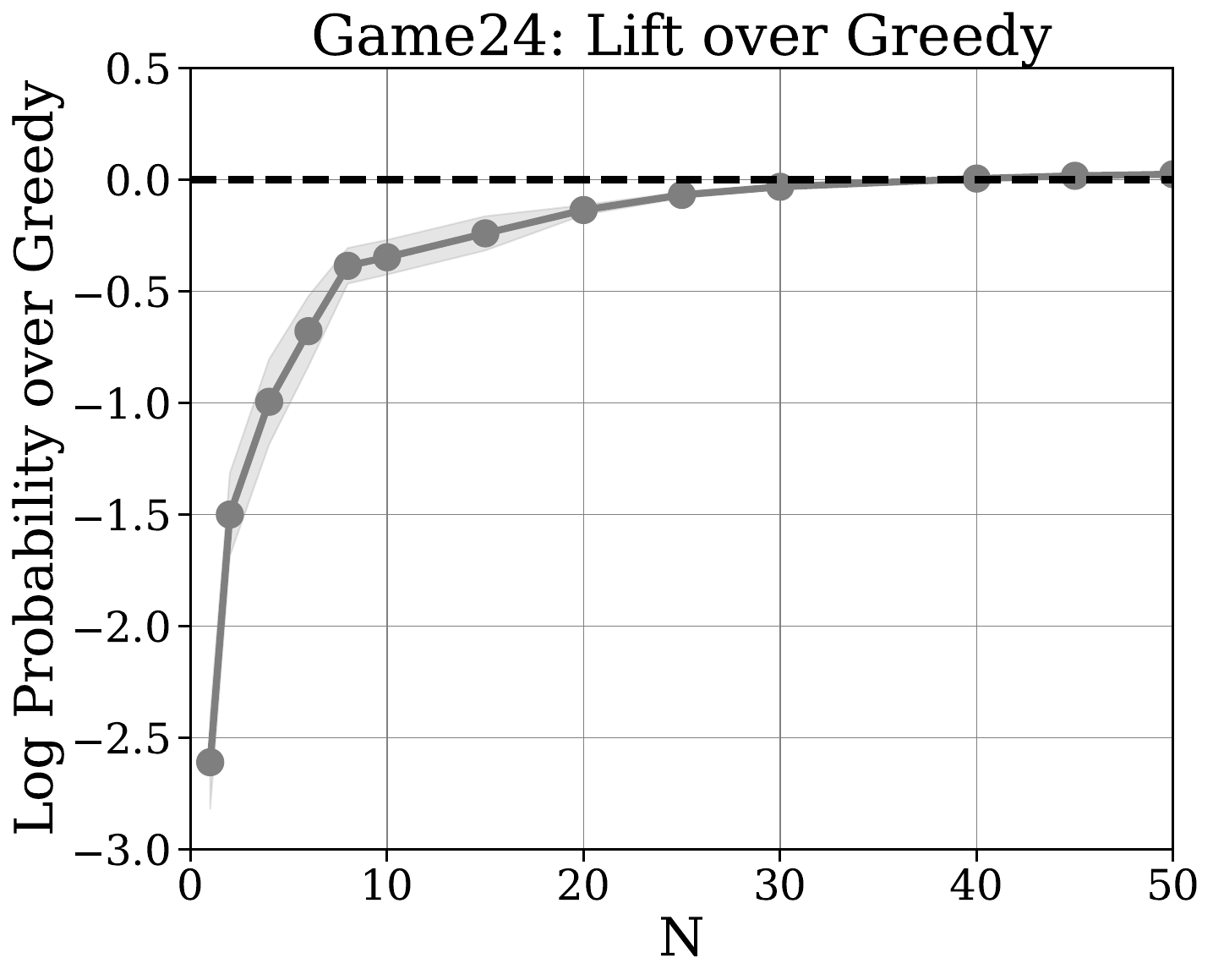}
    \caption{Effect of inference-time BoN-sharpening on \gametwentyfour with finetuned \llamagame model from \citet{wan2024alphazero}.}
    \label{fig:game24_inf}
  \end{figure}
\vfill
}

\begin{figure}[h]
    \centering
    \subfigure[]{
    \includegraphics[width=0.45\textwidth]{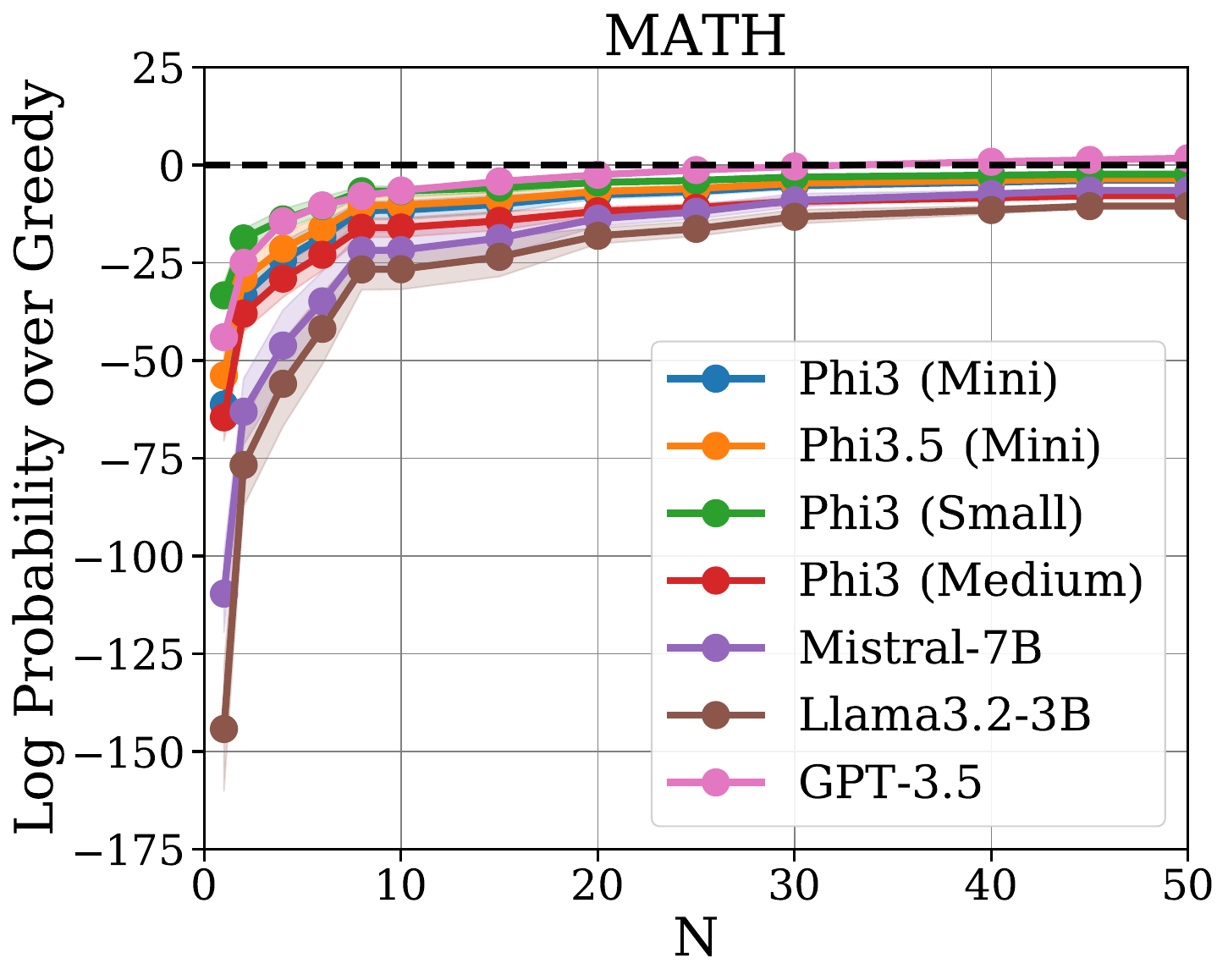}
    }
    \hfill
    \subfigure[]{
    \includegraphics[width=0.45\textwidth]{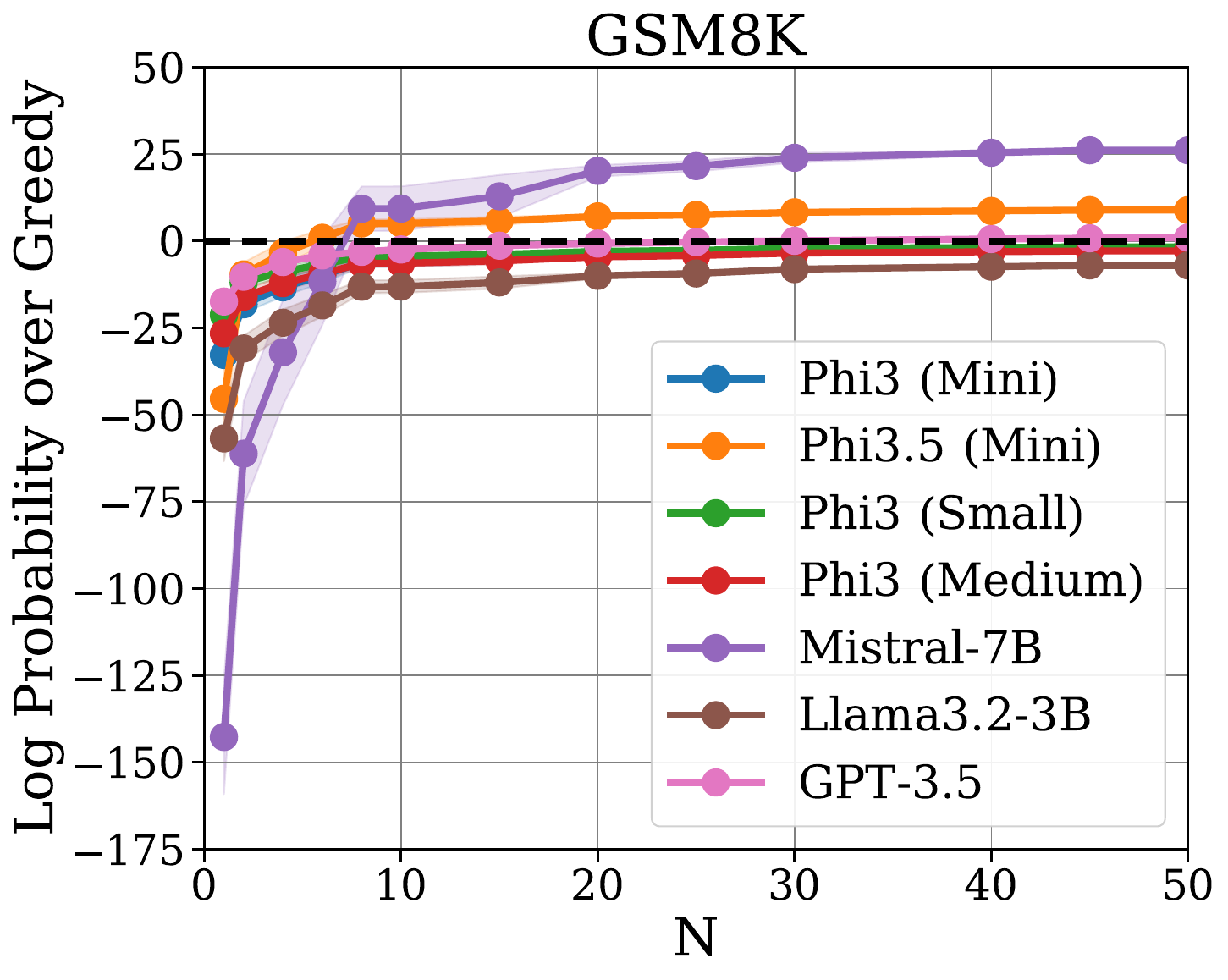}
    }
    \\
    \subfigure[]{
    \includegraphics[width=0.45\textwidth]{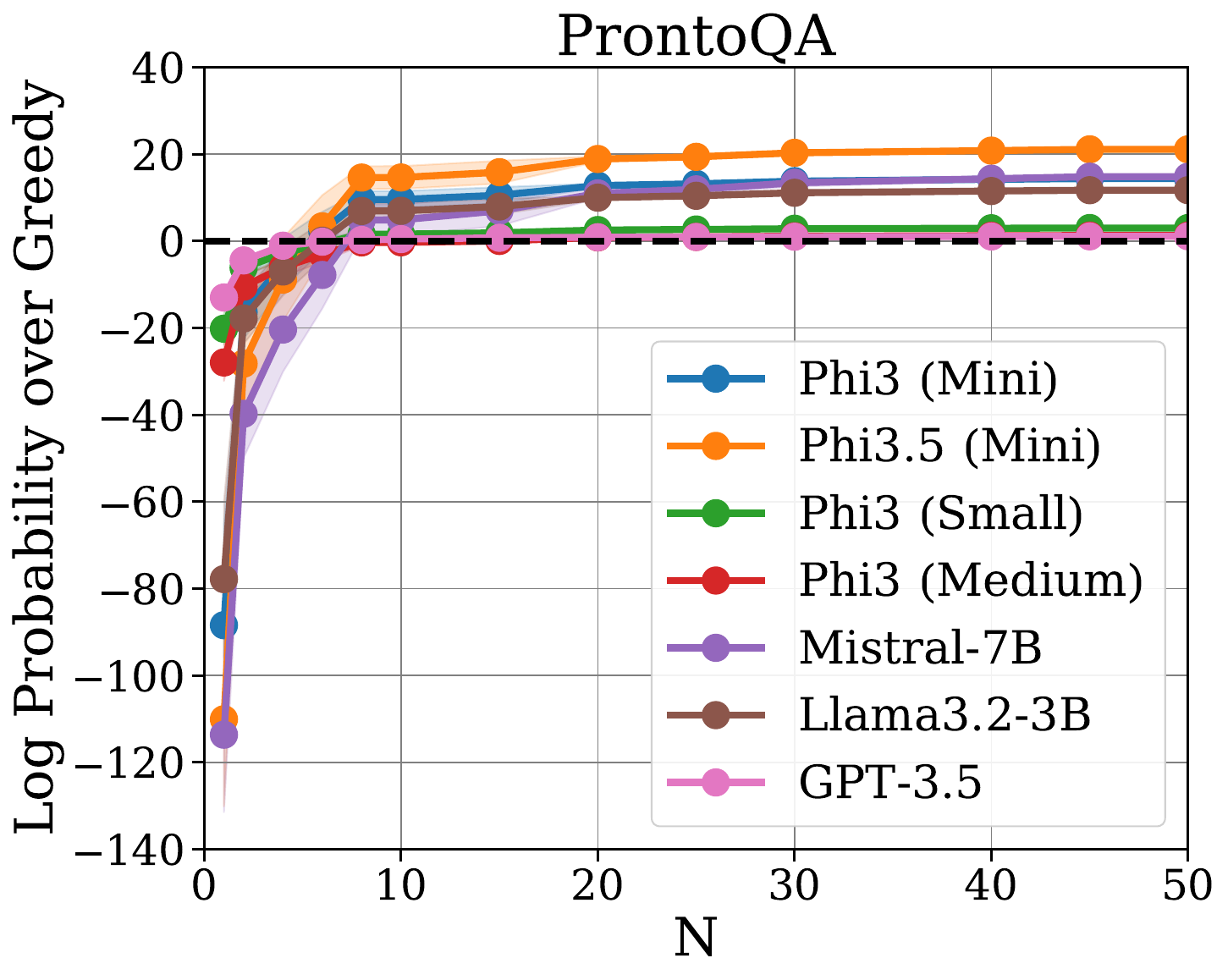}
    }
    \hfill
    \subfigure[]{
    \includegraphics[width=0.45\textwidth]{figs/BoN-mmlu-bio-logprobs.pdf}
    }
    \\
    \subfigure[]{
    \includegraphics[width=0.45\textwidth]{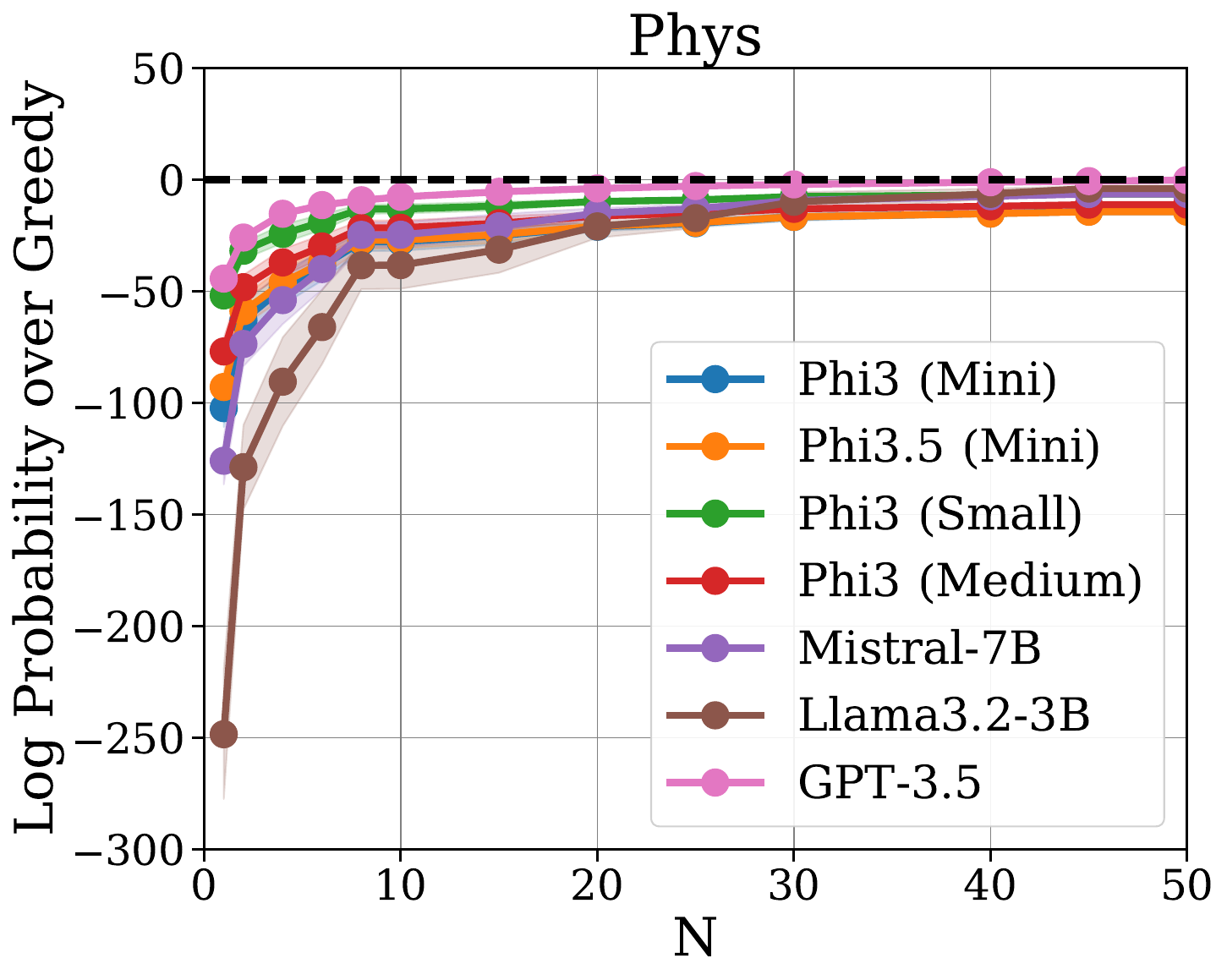}
    }
    \hfill
    \subfigure[]{
    \includegraphics[width=0.45\textwidth]{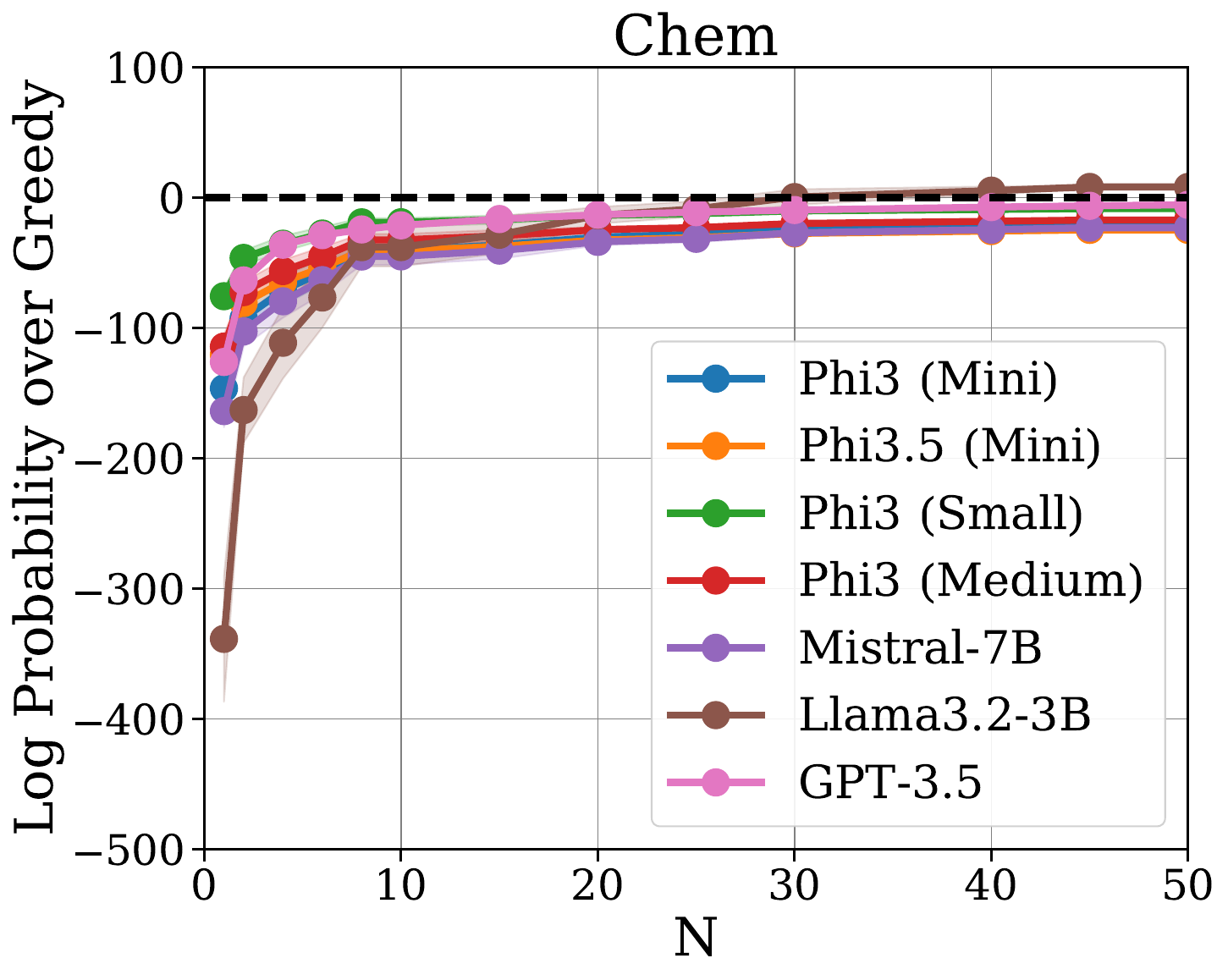}
    }
    
    \caption{Effect of $N$ on average sequence level log-probabilities for inference-time BoN-sharpening on various model-task pairs, compared to greedy decoding baseline. As predicted by theory, the likelihood of sequences sampled with BoN-sharpening increases with $N$.}
    \label{fig:BoN-inference-logprobs}
\end{figure}

\iftoggle{iclr}{
In this section we detail the precise setup required to replicate our empirical results.  All of our experiments were run either on 40G NVIDIA A100 GPUs, 192G AMD MI300X GPUs, or through the OpenAI API.  We considered the following models. All models, except for \gptthree, are available on \url{https://huggingface.co} and we provide HuggingFace model identifiers below.
\begin{enumerate}
\item Phi models: We experiment with several models from the Phi family of models~\citep{abdin2024phi}, specifically \phithreemini (``microsoft/Phi-3-mini-4k-instruct''), \phithreesmall (``microsoft/Phi-3-small-8k-instruct''), \phithreemedium (``microsoft/Phi-3-medium-4k-instruct''), and \phithreefivemini (``microsoft/Phi-3.5-mini-instruct''). 
\item \llamathree (``meta-llama/Llama-3.2-3B-Instruct'')~\citep{dubey2024llama}
\item \mistral (``mistralai/Mistral-7B-Instruct-v0.3'')~\citep{jiang2023mistral}
\item \gptthree~\citep{brown2020language}: We access this model via the OpenAI API.
\item \llamagame (``OhCherryFire/llama2-7b-game24-policy-hf''): We use the model of \iftoggle{workshop}{\cite{wan2024alphazero}}{\citet{wan2024alphazero}}, which is a Llama-2 model finetuned on the \gametwentyfour task \citep{yao2024tree}. We use this model only the \gametwentyfour task. 
\end{enumerate}

We consider the following tasks:

\begin{enumerate}
    \item \mathdataset: We use the above models to generate responses to prompts from the \mathdataset \citep{hendrycks2021measuring}, which consists of more difficult math questions.  We consider ``all'' subsets and take the first 256 examples of the test set where the solution matches the regular expression \verb|(\d*)|.\footnote{\url{https://huggingface.co/datasets/lighteval/MATH}.}  %
    \item \gsm: We use the above models to generate responses to prompts from the GSM-8k dataset \citep{cobbe2021training} where the goal is to generate a correct answer to an elementary school math question. We take the first 256 examples from the test set in the main subset.\footnote{\url{https://huggingface.co/datasets/openai/gsm8k}.} %
    \item \prontoqa: We use the above models to generate responses to prompts from the \prontoqa dataset \citep{saparov2023language}, which consists of chain-of-thought-style reasoning questions with boolean answers.  We take the first 256 examples from the training set.\footnote{\url{https://huggingface.co/datasets/longface/prontoqa-train}.} %
    \item \mmlu: We use the above models to generate responses to prompts from three subsets of the \mmlu dataset \citep{hendrycks2020measuring}, specifically \texttt{college\_biology} (Bio),\texttt{college\_physics} (Phys), and \texttt{college\_chemistry} (Chem) all of which consist of multiple choice questions\footnote{\url{https://huggingface.co/datasets/cais/mmlu}.}. We take the first 256 examples of the test set. 
    \item \gametwentyfour: We use only the model of \iftoggle{workshop}{\cite{wan2024alphazero}}{\citet{wan2024alphazero}} (i.e., \llamagame), on the \gametwentyfour task \citep{yao2024tree}.  The prompts are four numbers and the goal is to combine the numbers with standard arithmetic operations to reach the number `24.'  Here we use both the train and test splits of the dataset.\footnote{\url{https://github.com/princeton-nlp/tree-of-thought-llm/tree/master/src/tot/data/24}}  %
\end{enumerate}

\subsection{Inference-time validation experiments}\label{ssec:inference}
To form the plots in \Cref{fig:validation} and in \Cref{fig:BoN-inference-granular,fig:BoN-inference-logprobs}, for each (model, task) pair, we sampled $N$ generations per prompt with temperature 1 and returned the best of the $N$ generations according to the \mlsharp self-reward function $\rself(y\mid{}x)=\log\piref(y\mid{}x)$; we compare against greedy decoding as a baseline, whose accuracy is displayed in \Cref{sfig:bon-greedy}.  

\paragraph{Implementation details}
For all models and datasets except for \gametwentyfour, we used 1-shot prompting to ensure that models conform to the desired output format and to elicit chain of thought reasoning (for \gametwentyfour we do not provide a demonstration in the prompt). We set the maximum length of decoding to be $512$ tokens. We used 10 seeds for all (model, task) pairs with a maximum value of $N=50$ in \BestofN sampling. We simulated $N$ responses for $N<50$ by subsamplng the 50 generated samples. For \BestofN sampling, we always use temperature $1.0$. Since greedy decoding is a deterministic strategy, we only use 1 seed for each (model, task) pair. In all experiments, we collect both the responses and their log-likelihoods under the \emph{reference model} (i.e., the original model from which samples were generated). 

\paragraph{Results}
Results for most datasets are presented in \Cref{fig:BoN-inference-granular,fig:BoN-inference-logprobs}.  Because we only consider a single model for \gametwentyfour, we separate this task into \Cref{fig:game24_inf} For all datasets, we visualize both performance---measured as normalized improvement in accuracy over greedy decoding---and log-likelihoods---under $\piref$---of the selected responses. 

In all cases, \BestofN sampling (using $\rself(y\mid{}x)=\log\piref(y\mid{}x)$) improves over the na{\"i}ve sampling strategy, wherein we simply sample a single generation with temperature 1.0.  In all datasets, we also see improvements over the standard \emph{greedy decoding} strategy, at least for some models. 
Analogously, for every model, there is at least one dataset for which \BestofN sampling improves over greedy decoding. 

We further explore the relationship between sequence level log probabilities and generation quality in \Cref{fig:distributions}, where we plot the empirical distributions of responses sampled with temperature 1 from the base model for a variety of model-dataset pairs, conditioned on whether or not the response is correct.  It is clear from the figures that the distribution of log probabilities conditioned on correctness stochastically dominates that conditioned on incorrectness in each case, which provides yet more evidence that log likelihoods represent a reasonable sel-improvement target.

We mention several other observations from the experiments. First, in most cases, performance and log-likelihood saturate at relatively small values of $N$, typically around 10 or 20. This suggests that significant improvements can be obtained with relatively low computational overhead. Second, in some cases, performance can degrade as $N$ increases. We found that this happens for two reasons: (1) the performance of the reference model is quite low and so $\rself$ provides a poor signal (e.g., with \llamathree) and (2) the \BestofN criteria selects for short responses, which have higher log-likelihood but cannot leverage the computational/representational benefits of chain-of-thought, and thus yield worse performance (e.g., with \gptthree on \gsm). }
{

}

\iclr{
\begin{figure}[hp]
  \centering
  \includegraphics[width=0.45\textwidth]{figs/BoN-game24-improvement.pdf}
  \hfill
  \includegraphics[width=0.45\textwidth]{figs/BoN-game24-logprobs.pdf}
  \caption{Effect of inference-time BoN-sharpening on \gametwentyfour with the finetuned \llamagame from \citet{wan2024alphazero}.}
  \label{fig:game24_inf}
\end{figure}
}
\iclr{
\subsection{Experiments with other self reward functions}
Although we focus on $\rself(y \mid{}x) = \log\piref(y \mid{}x)$ throughout the paper, the sharpening framework is significantly more general. As such, we also ran experiments with other choices for $\rself$, specifically:
\begin{enumerate}
  \item Length-normalized log-likelihood: $\rself(y \mid{}x) = \log\piref(y \mid{}x) / |y|$ where $|y|$ is the length, in tokens, of the response.
  \item Majority (self-consistency): All datasets except \gametwentyfour have multiple-choice, boolean, or numerical answers. Although we allow responses to contain chain-of-thought tokens, we can extract the answer from each response and use the most-frequently-occuring answer. This can be seen as a sample-based approximation to the following self-reward function: $\rself(y\mid{}x) = \sum_{y': y'_{\texttt{ans}}=y_{\texttt{ans}}} \piref(y'\mid{}x)$, where $y_{\texttt{ans}}$ are the ``answer'' tokens in the full response $y$. 
\end{enumerate}
Finally, as a skyline we consider the \emph{coverage} criterion~\citep{brown2024large}, where we simply check if any of the sampled responses corresponds to the correct answer. This criterion is a skyline and does not fit into the self-improvement framework due to the fact that it uses knowledge of the ground truth (external) task reward function. 

Results are displayed in \Cref{fig:other_rewards}. For length-normalized log-likelihood and majority, we see qualitatively similar behavior to (unnormalized) log-likelihood in the sense that inference-time sharpening via these self-reward functions offers improvements over both vanilla (temperature 1.0) sampling and greedy decoding. In both cases, the improvements are generally much larger than those obtained with log-likelihood. Finally, examining the coverage criteria, we see that with $N=50$ samples, these models almost always produce a correct answer on these tasks, raising the possibility of other self-reward functions that further improve performance. 
}

\begin{figure}
    \centering
    \subfigure[]{
    \includegraphics[width=0.45\textwidth]{figs/Logprob-Distribution-phi35mini-math.pdf}
    }
    \hfill
    \subfigure[]{
    \includegraphics[width=0.45\textwidth]{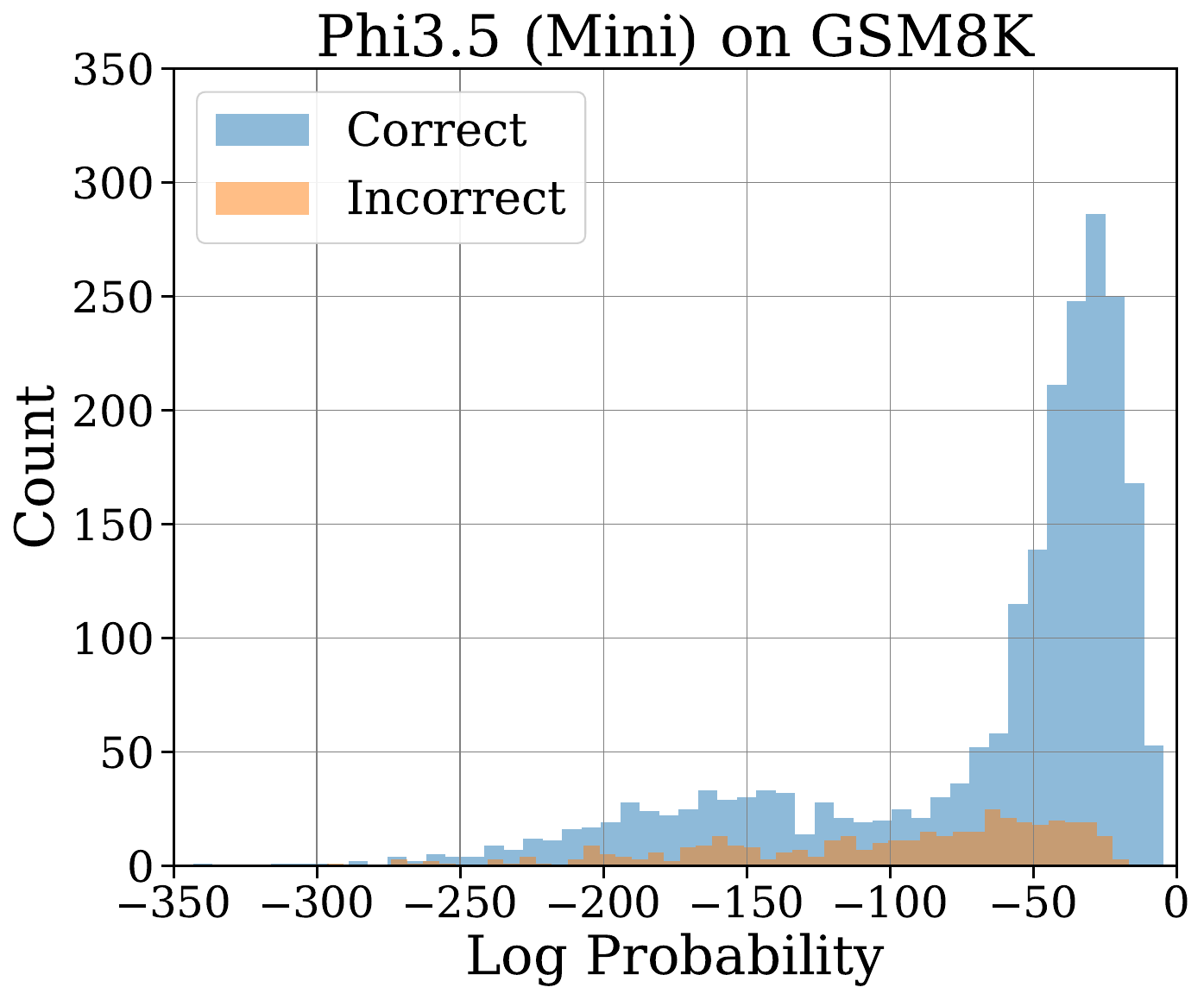}
    }
    \\
    \subfigure[]{
    \includegraphics[width=0.45\textwidth]{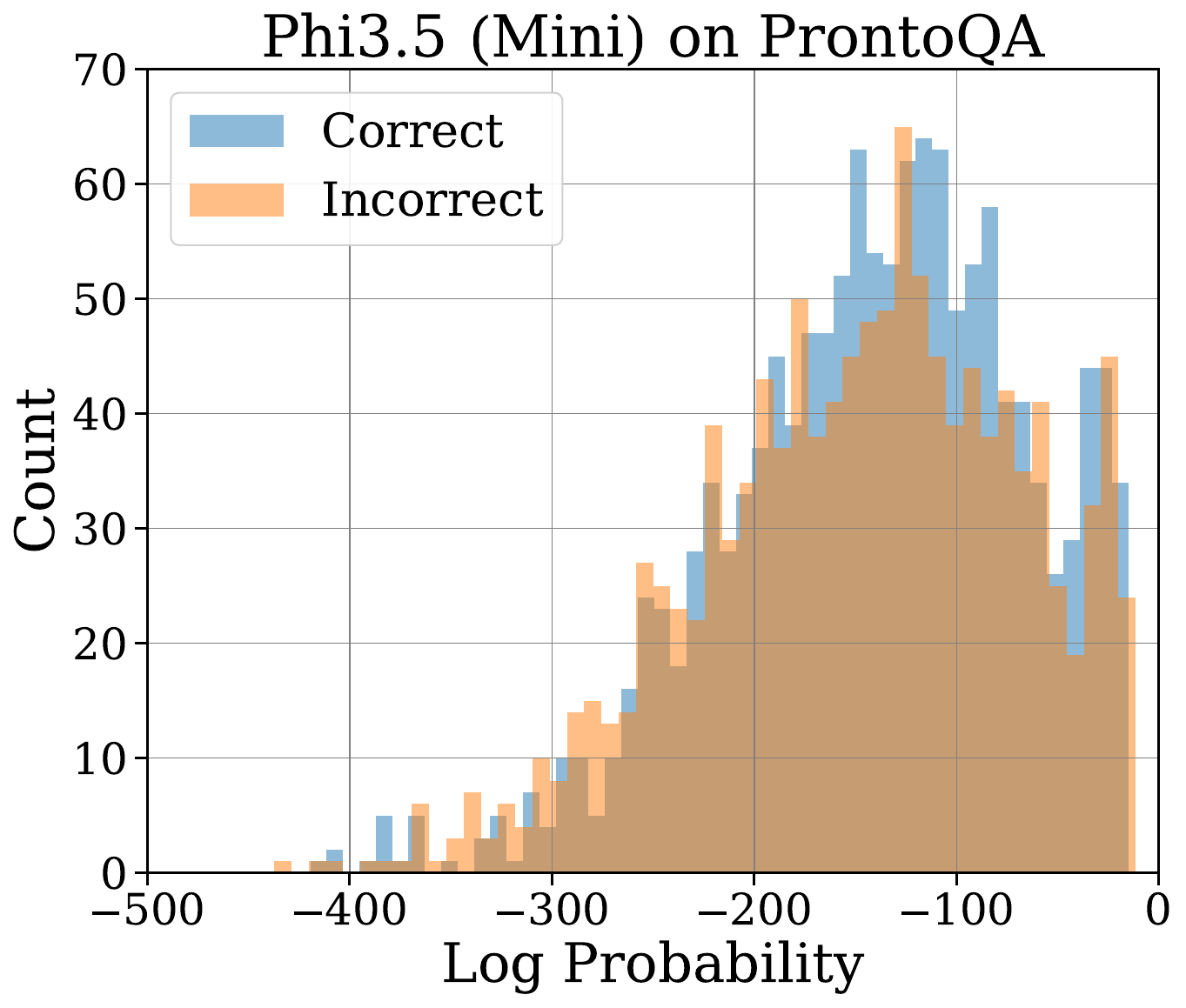}
    }
    \hfill
    \subfigure[]{
    \includegraphics[width=0.45\textwidth]{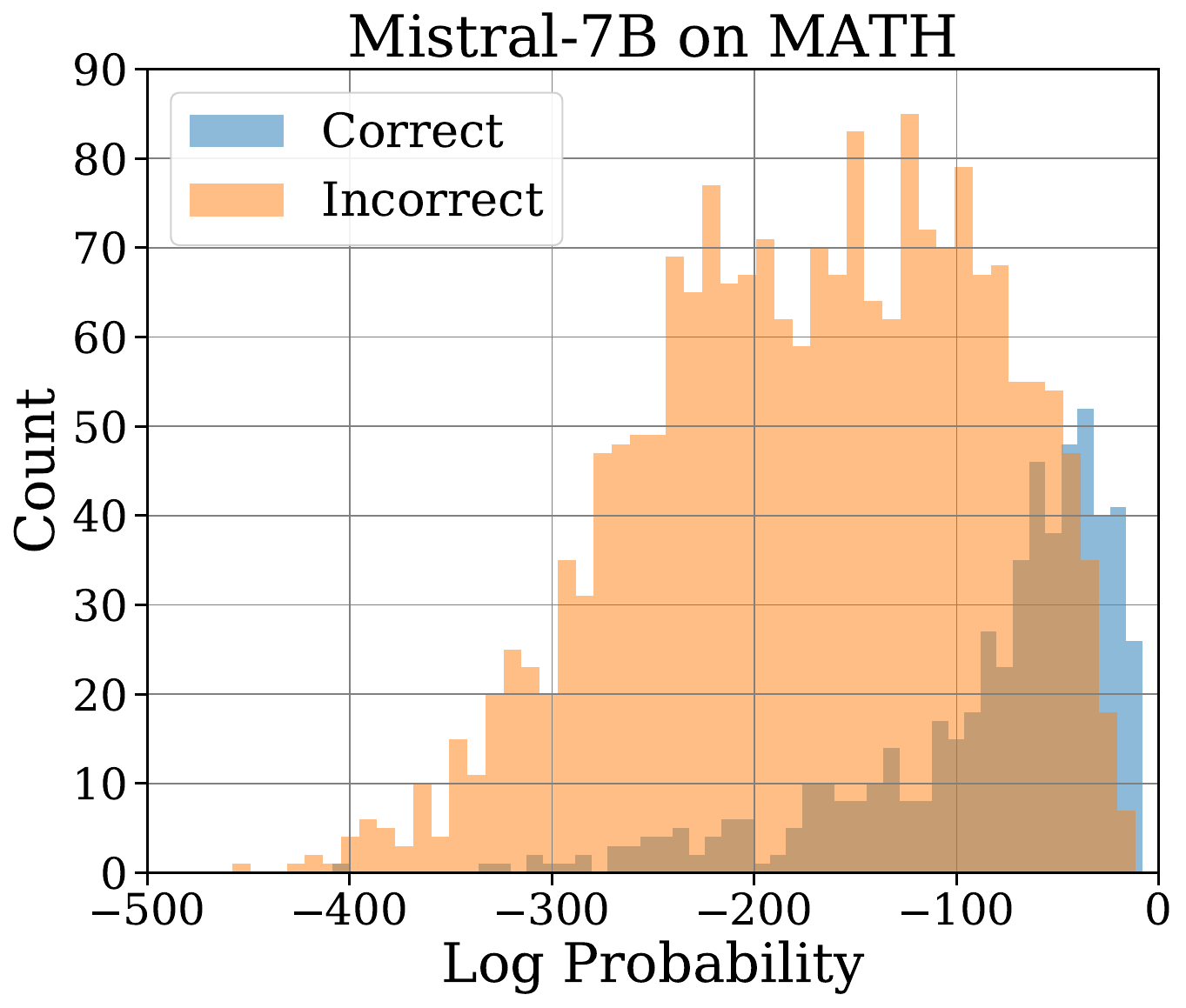}
    }
    \caption{Distribution of sequence-level log-probabilities for responses sampled with temperature 1, conditioned on whether or not the response is correct. We consider four model-dataset pairs: (a) (\phithreefivemini, \mathdataset); (b) (\phithreefivemini, \gsm); (c) (\phithreefivemini, \prontoqa); (d) (\mistral, \mathdataset).  In all cases except perhaps (c), conditioning on correctness of the response leads to a noticeable increase in log-probabilities, further justifying the use of sequence-level log-probabilities as a self-reward for self-improvement.}
    \label{fig:distributions}
\end{figure}

\iclr{
\subsection{Effect of \bestofnalg}
\begin{table}[t]
    \centering
    \begin{tabular}{cccc}
        \toprule
        \textbf{Model} & \textbf{Dataset} & \textbf{\% Lift over Greedy (Accuracy)} & \textbf{Lift over Greedy (Likelihood)} \\
        \midrule
        \phithreefivemini & \mathdataset & $19.24 \pm 2.41$ & $48.33 \pm 0.17$ \\
        \phithreefivemini & \gsm & $1.82 \pm 0.64$ & $1.49 \pm 0.55$ \\
        \phithreefivemini & \prontoqa & $12.46 \pm 1.08$ & $ 5.64 \pm 0.01$ \\
        \texttt{Mistral-7B} & \mathdataset & $8.88 \pm 5.55$ & $5.71 \pm 3.00$ \\
        \bottomrule
    \end{tabular}
    \caption{Experimental results for \bestofnalg}
    \label{tab:performance}
\end{table}
In addition to inference-time experiments demonstrating the validity of the amortization objective considered in our theory, we also demonstrate empirically that amortization can be effected with \bestofnalg.  Due to the realities of limited computational resources, we choose a strict subset of the model-task pairs considered in \Cref{ssec:inference} that have particularly promising inference-time BoN performance and apply \bestofnalg to amortize the inference time cost of multiple generations.

For each of the chosen model-dataset pairs (cf. \Cref{tab:performance}), we sample $N=50$ responses with temperature 1 for each prompt in the dataset and select the most likely (according to the relevant reference model).  We then combine these likely responses with the prompts in order to form a training corpus and train a Low Rank Adaptation \citep{hu2021lora} to the model, sweeping over LoRA rank, learning rate scheduler, and weight decay in order to return the best optimized model.\footnote{In all experiments involving \phithreefivemini we use a batch size of 4; unfortunately, due to a known numerical issue with LoRA on \mistral involving batch size $>1$, we use a batch of 1 in this case.  Because of this choice, instead of the 30 epochs we use to train our other models, for \mistral, we run only 10 epochs.}  We report the specific hyperparamters chosen in \Cref{tab:hyperparameters}.  On all models, we used a learning rate of $3 \times 10^{-4}$ with linear decay to zero and gradient clamping at 0.1.

\paragraph{Results}  In \Cref{tab:performance} we report our results for the best model during training of each model-dataset pair, averaged across 3 random seeds, where responses are sampled with temperature 1 from the fine-tuned model.  We report both the percent lift in accuracy on the dataset with respect to the greedy generation of the reference model and the increase in average sequence level log likelihood with respect to the same.  In all cases, we see improvement on both metrics, demonstrating that some amortization is possible with \bestofnalg.  In \Cref{fig:training_curve_phi,fig:training_curve_mistral}, we display the evolution throughout training of these same metrics for each of the model-dataset pairs.  While \phithreefivemini is quite well-behaved on \mathdataset and \prontoqa, there appears to be a fair amount of noise in the training on \gsm, with the log probability being a significantly less useful proxy for accuracy on this dataset than the others, as has been previously found in \citet{block2023butterfly}.  In the case of \mistral on \mathdataset, while we do see some improvement after sufficient training, the optimization suffers an initial substantial drop and then spends $\sim90\% $ of the gradient steps recovering; we speculate that this is a function of insufficient hyper-parameter tuning of the optimization itself, rather than a fundamental barrier.

Finally, in \Cref{fig:sft-N}, we investigate the effect that the choice of $N$ has on \bestofnalg for \phithreefivemini on \mathdataset.  In particular, in forming our training set, we choose $N \in \{10, 25, 50 \}$ and repeat the procedure described above, averaging our results over three seeds.  We find that increasing $N$ leads to a modest increase in the sequence-level log-likelihood and a consequent increment in the accuracy of the fine-tuned model, in accordance with our theory.
}

\iclr{
\begin{table}[t]
    \centering
    \begin{tabular}{ccccc}
        \toprule
        \textbf{Model} & \textbf{Dataset} & \textbf{Weight Decay} & \textbf{LoRA Rank} \\
        \midrule
        \phithreefivemini & \mathdataset & 0.1 & 16\\
        \phithreefivemini & \gsm & 0.5 & 16 &  \\
        \phithreefivemini & \prontoqa & 0.0 & 16\\
        \mistral & \mathdataset & 1.0 & 8 \\
        \bottomrule
    \end{tabular}
    \caption{Hyperparameters for \bestofnalg}
    \label{tab:hyperparameters}
\end{table}
}

\begin{figure}[ht]
    \centering
    \subfigure[]{
    \includegraphics[width=0.45\textwidth]{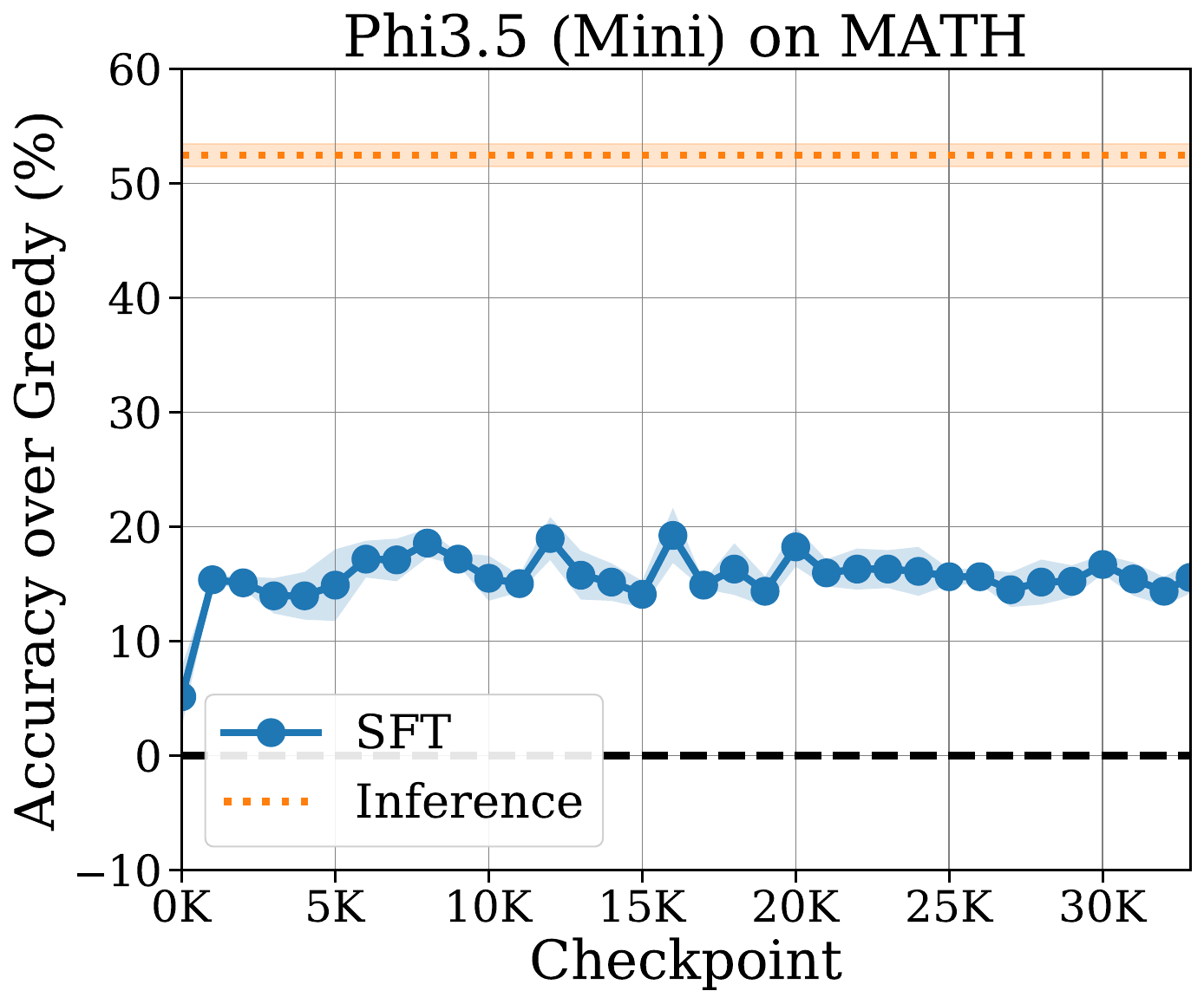}
    } 
    \hfill
    \subfigure[]{
    \includegraphics[width=0.45\textwidth]{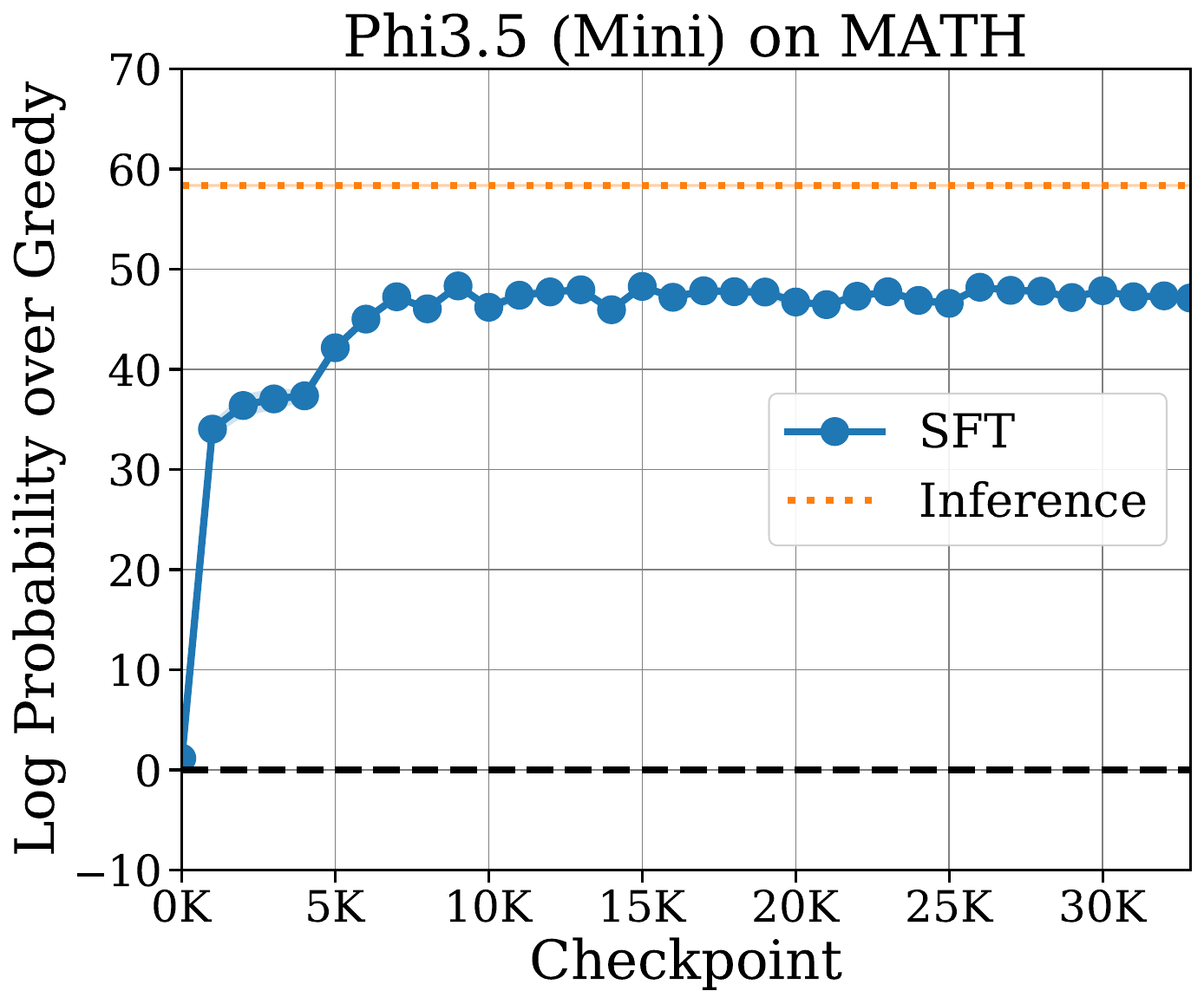}
    } 
    \vspace{-0.15cm}
    \\
    \subfigure[]{
    \includegraphics[width=0.45\textwidth]{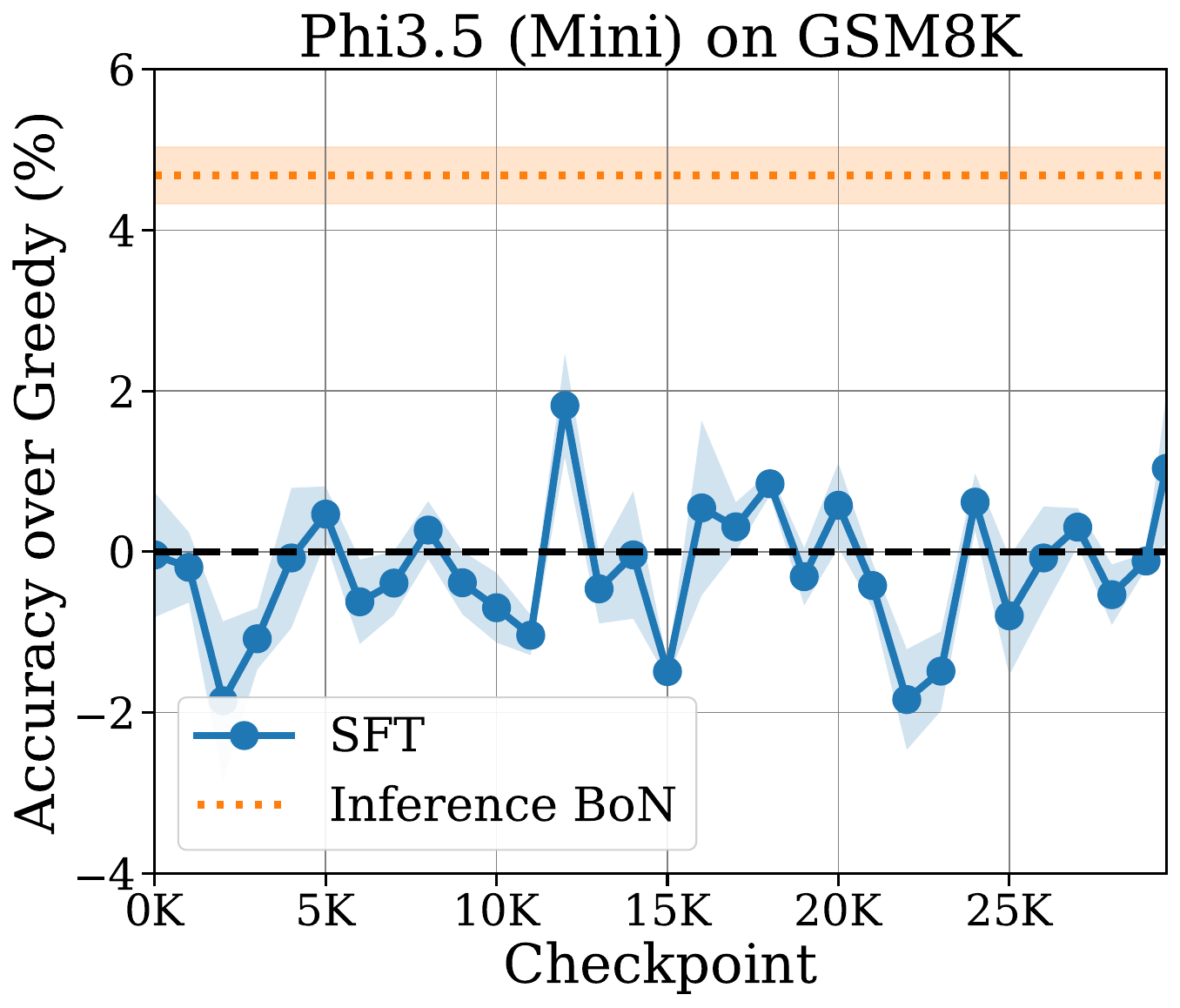}
    } 
    \hfill
    \subfigure[]{
    \includegraphics[width=0.45\textwidth]{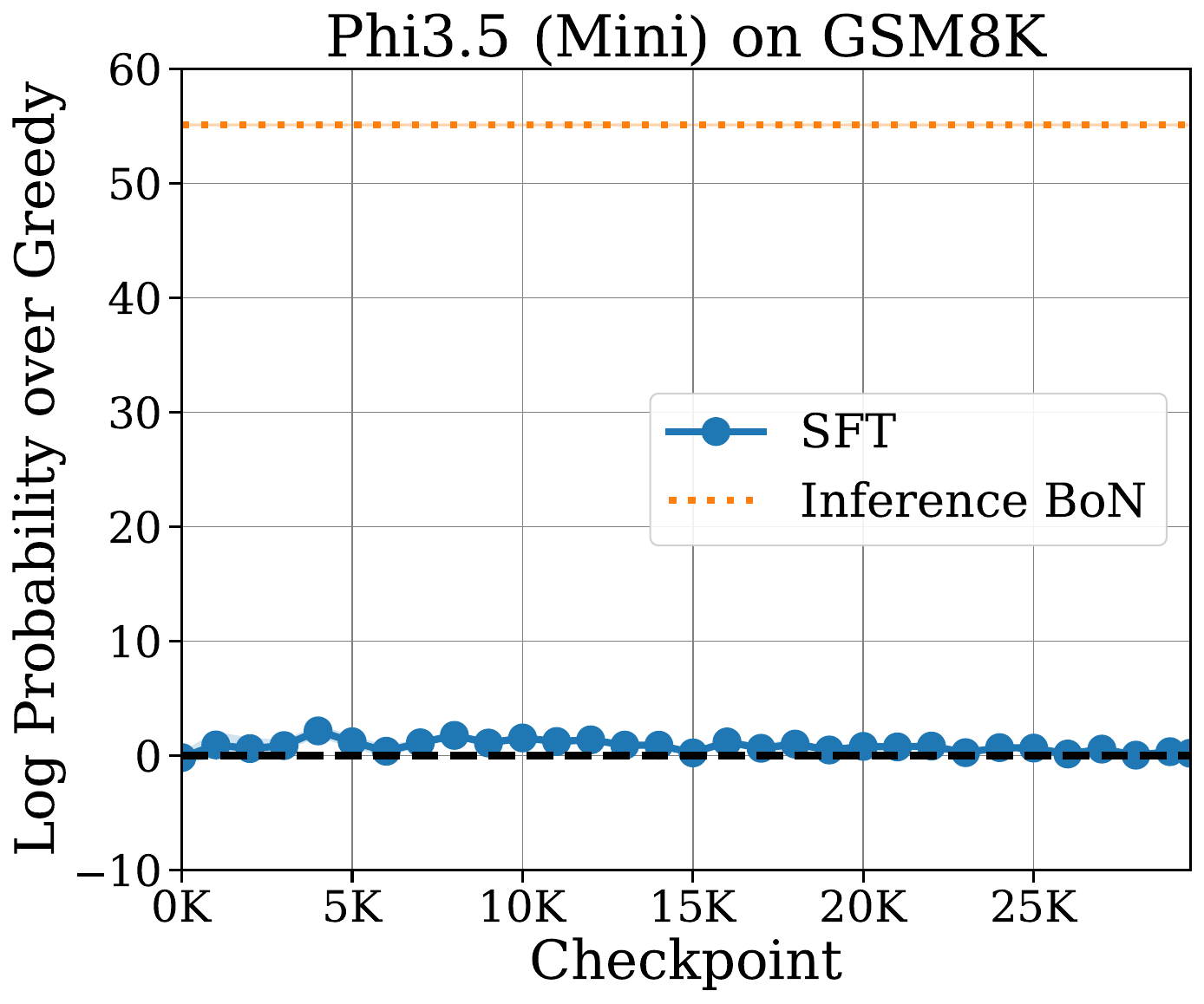}
    } 
    \vspace{-0.15cm}
    \\
    \subfigure[]{
    \includegraphics[width=0.45\textwidth]{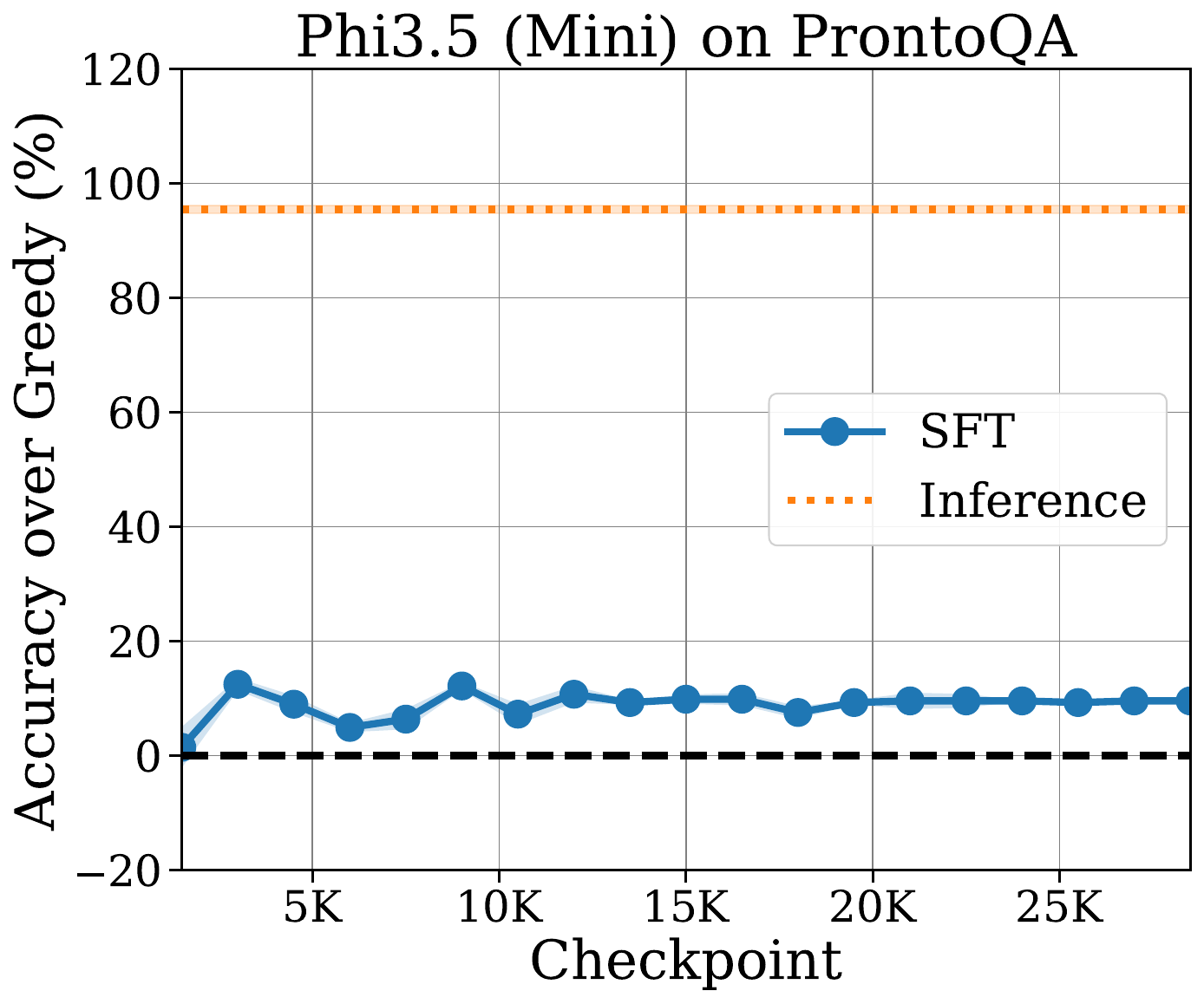}
    } 
    \hfill
    \subfigure[]{
    \includegraphics[width=0.45\textwidth]{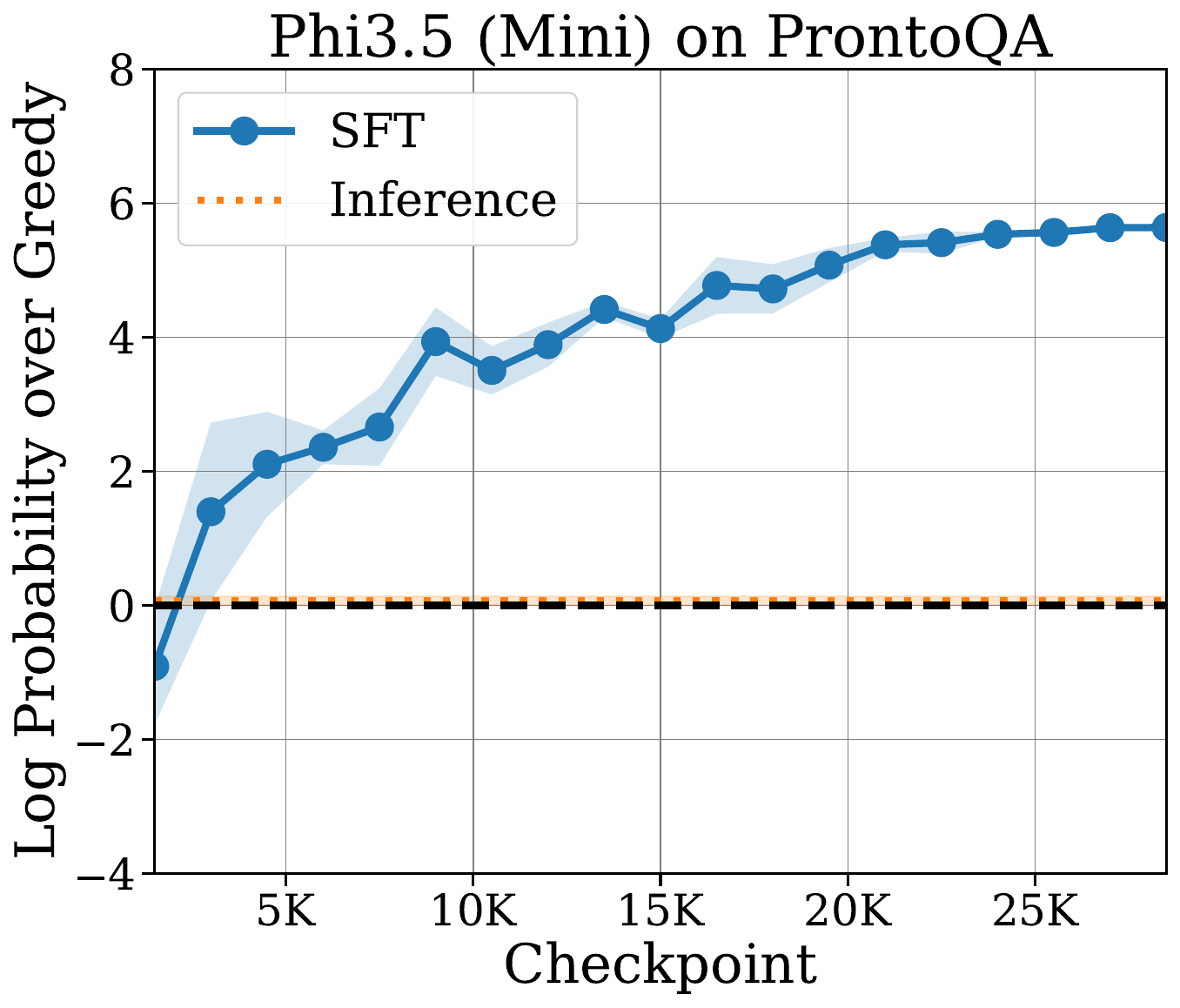}
    } 
    \vspace{-0.15cm}
    \caption{Evolution of \phithreefivemini under \bestofnalg ($N=50$) on different datasets, as measured by (i) \% lift over Greedy in accuracy; and (ii) difference in average sequence-level log-probability of generated responses under the reference model.  The fine-tuned model learns to produce generations with high probability under the reference model, and consequently enjoys an increase in accuracy compared to the base model. However, the model does not fully reach the performance of inference-time BoN sharpening.}
    \label{fig:training_curve_phi}
\end{figure}

\begin{figure}
    \centering
    \subfigure[]{
    \includegraphics[width=0.45\textwidth]{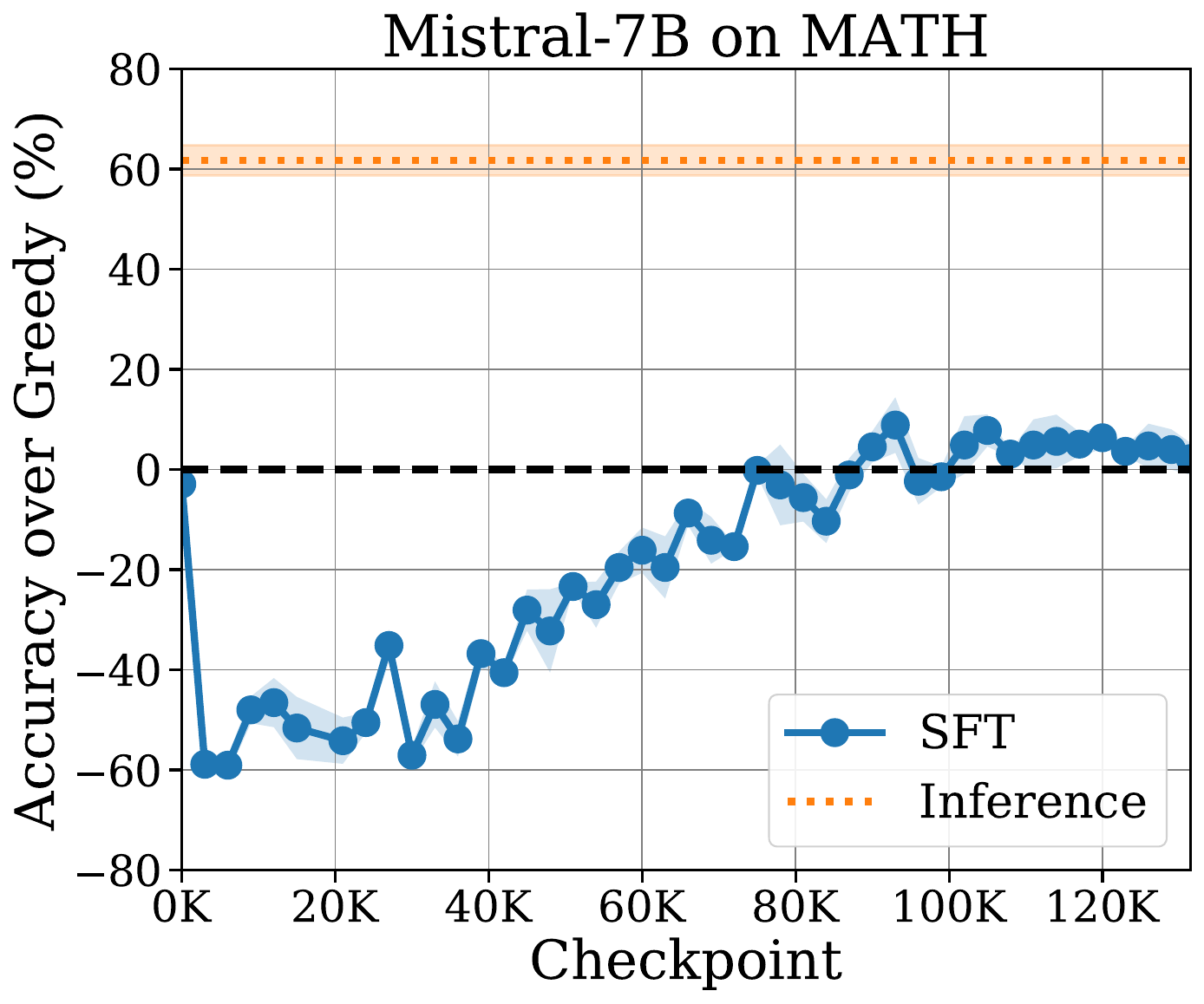}
    } 
    \subfigure[]{
    \includegraphics[width=0.45\textwidth]{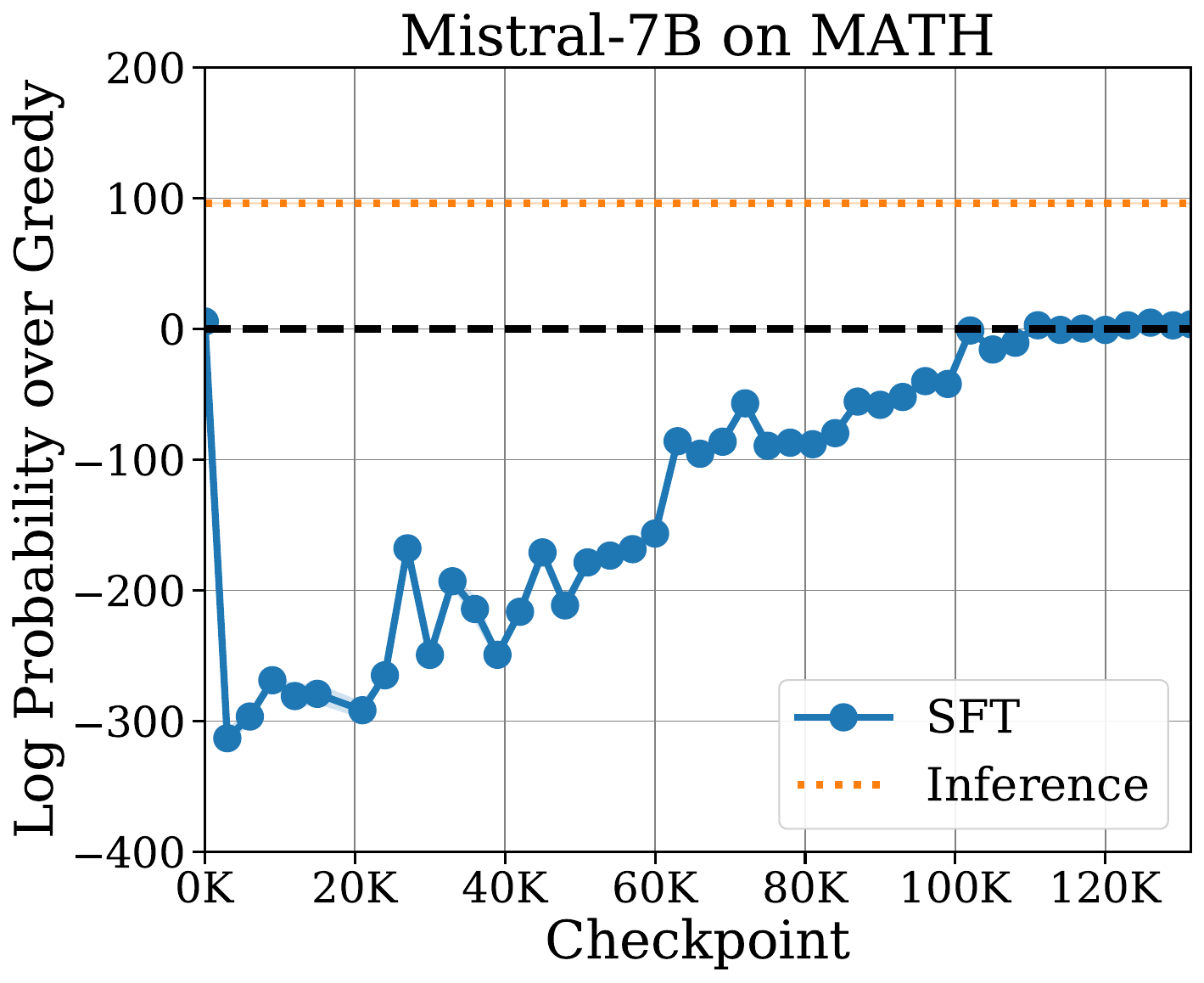}
    } 
    \caption{Evolution of \mistral under \bestofnalg ($N=50$) on \mathdataset, as measured by (i) \% lift over Greedy in accuracy; and (ii) difference in average sequence-level log-probability of generated responses under the reference model.}
    \label{fig:training_curve_mistral}
\end{figure}

\begin{figure}
    \centering
    \subfigure[]{
    \includegraphics[width=0.45\textwidth]{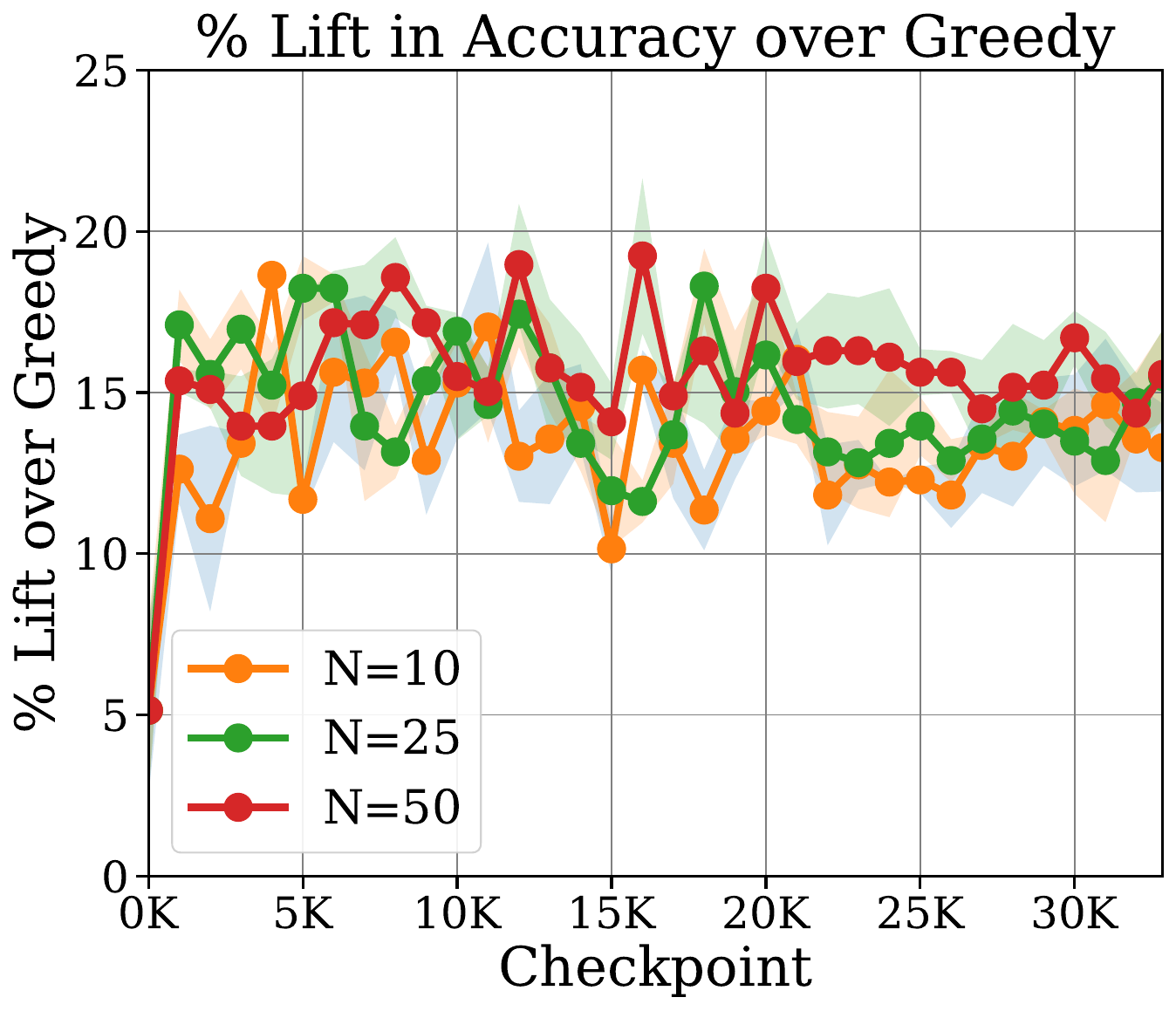}
    }
    \hfill
    \subfigure[]{
    \includegraphics[width=0.45\textwidth]{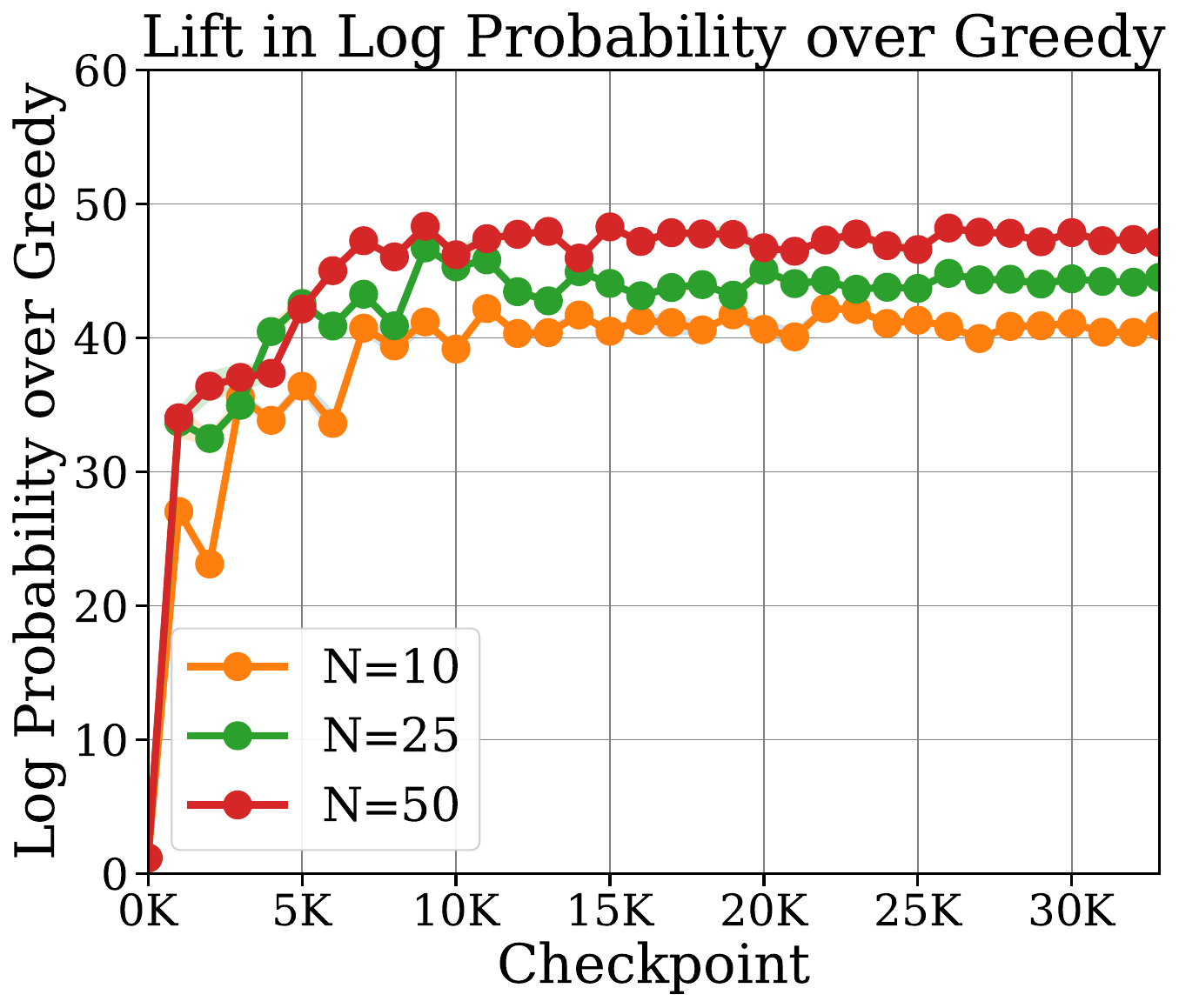}
    }
    \caption{Effect of $N$ on \bestofnalg for \phithreefivemini on \mathdataset.  We report (a) \% lift in accuracy over greedy; and (b) lift in sequence-level log-likelihood (averaged over the dataset).  In both cases, we see that increasing $N$ leads to greater lift, in accordance with theory.}
    \label{fig:sft-N}
\end{figure}

\arxiv{\clearpage}

\section{Detailed Discussion of Related Work}
\label{sec:additional_related}

In this section, we discuss related work in greater detail, including
relevant works not already covered.

\paragraph{Self-improvement and self-training}

Our work is most directly related to a growing body of empirical research that
studies self-improvement/self-training for language models in a
supervision-free setting in which there is no external
feedback~\citep{huang2022large,wang2022self,bai2022constitutional,pang2023language},
and takes a first step toward providing a theoretical understanding
for these methods. There is also a closely related body of research on 
``LLM-as-a-Judge''  techniques, which investigates approaches to
designing self-reward functions $\rself$, often based on specific
prompting techniques \citep{zheng2024judging,yuan2024self,wu2024meta,wang2024self}.

A somewhat complementary line of research develops
algorithms based on self-training and self-play
\citep{zelikman2022star,chen2024self,wu2024self,qu2024recursive}, but
leverages various forms of external feedback (e.g., positive examples
for SFT or explicit reward signal). 
These methods typically outperform feedback-free self-improvement methods~\citep{zelikman2022star}.
However, in many scenarios, obtaining external feedback can be costly or laborious; it may require collecting high-quality labeled/annotated data, rewriting examples in a formal language, etc. 
Thus, these two approaches are not directly comparable.

We also mention that the self-improvement problem we
study is related to a classical line of research on
\emph{self-distillation}
\citep{bucilua2006model,hinton2015distilling,devlin2018bert,pham2021meta,rizve2021defense},
but this specific form of self-training has received limited investigation
in the context of language modeling.

\paragraph{Entropy minimization} Sharpening is also closely related to a line of work on \emph{entropy minimization} or \emph{minimum entropy regularization}, where we seek models that have high predictive accuracy and low entropy/uncertainty. This line of work originated in the semi-supervised learning literature~\citep{grandvalet2004semi} and was popularized as a test-time adaptation method in computer vision~\citep[c.f.,][]{wang2020tent,press2024entropy}. Maximum-likelihood sharpening, especially via RL, is closely related in that \Cref{eq:rlhf} with $\beta \to 0$ and $\rself=\log\piref$ maximizes $\En_\pi[\log\piref(y\mid{}x)]$ rather than $-H(\pi) = \En_\pi[\log \pi(y\mid{}x)]$. (It is important that the latter is optimized continuously with $\piref$ as an initialization, but when this is done it can be seen to sharpen $\piref$, at least heuristically.) Prior work in this direction is largely empirical, focused on computer vision domains with small output spaces $\cY$, and hence studies statistical benefits of entropy minimization. In contrast, we initiate a theoretical study of sharpening, are primarily motivated by applications to language modeling with exponentially large output spaces, and view sharpening primarily as a computational phenomena. However, it would be interesting to understand whether statistical benefits observed in computer vision translate to the language modeling setting.

\paragraph{Alignment and RLHF}
The
specific algorithms for self-improvement/sharpening we study can be
viewed as special cases of standard alignment algorithms, including
classical RLHF methods
\citep{christiano2017deep,bai2022training,ouyang2022training}, direct
alignment \citep{rafailov2024direct}, and (inference-time or
training-time) \bestofn methods
\citep{amini2024variational,sessa2024bond,gui2024bonbon,pace2024west}. \akedit{However,}
the maximum
likelihood sharpening objective \eqref{eq:ml_sharpening} used for our
theoretical results has been relatively unexplored within the
alignment literature.

\paragraph{Inference-time decoding}

Many inference-time decoding
strategies such as greedy/low-temperature decoding, beam-search
\citep{meister2020if}, and chain-of-thought decoding
\citep{wang2024chain} can be viewed as instances of inference-time
sharpening for specific choices of the self-reward function
$\rself$. More sophisticated inference-time search strategies such
tree search and MCTS
\citep{yao2024tree,wan2024alphazero,mudgal2023controlled,zhao2024probabilistic}
are also related, though this line of work frequently makes use of
external reward signals or verification, which is somewhat
complementary to our work.

\paragraph{Theoretical guarantees for self-training}
      On the theoretical side, current understanding of self-training is limited. One line of work, focusing on the
      \emph{self-distillation} objective \citep{hinton2015distilling}
      for binary classification and regression, aims to provide convergence
      guarantees for self-training in stylized setups such as linear
      models
      \citep{mobahi2020self,das2023understanding,das2024retraining,pareek2024understanding},
      with \iftoggle{workshop}{\cite{allen2020towards}}{\citet{allen2020towards}} giving guarantees for feedforward
      neural networks. Perhaps most closely related to our work is
      \iftoggle{workshop}{\cite{frei2022self}}{\citet{frei2022self}}, who show that self-training on a model's
      pseudo-labels can amplify the margin for linear logistic
      regression. However, to the best of our knowledge, our work is the first to study
      self-training in a general framework that subsumes language modeling.

Our results for \rlhfalg are related
to a body of work that provides sample complexity guarantees
for alignment methods
\citep{zhu2023principled,xiong2023gibbs,ye2024theoretical,huang2024correcting,liu2024provably,song2024understanding,xie2024exploratory},
but our results leverage the structure of the
\mlsharp self-reward function $\rself(y\mid{}x)=\log\piref(y\mid{}x)$, and
provide guarantees for the sharpening objective in
\cref{def:sharpening} instead of the usual notion of reward
suboptimality used in reinforcement learning theory.\loose

\dfedit{Lastly, we mention that our results---particularly our
\emph{amortization} perspective on self-improvement---are related to work
that studies representational advantages afforded by additional inference
time \citep{malach2023auto,li2024chain}. These work focus on truly sequential tasks, while our work
focuses on the complementary question of amortizing \emph{parallel}
computation. Thus the representational implications are quite
different.}

      \paragraph{Optimization versus sampling}
      The \mlsharp objective we introduce in \cref{sec:theoretical_framework} connects
      the study of \emph{self-improvement} to a large body of research in
      theoretical computer science on computational tradeoffs (e.g.,
      separations and equivalences) between
      optimization and sampling      \citep{barahona1982computational,kirkpatrick1983optimization,lovasz2006fast,singh2014entropy,ma2019sampling,talwar2019computational,eldan2022spectral}. On the one hand, this
line of research highlights that there exist natural classes of
distributions for which sampling is tractable, yet
maximum likelihood optimization is intractable, and
vice-versa. On the other hand, various works in this line of research also demonstrate \emph{computational reductions} between
optimization and sampling, whereby optimization can be reduced to
sampling and vice-versa. 

Our setting indeed includes natural model classes where one should not expect there to be a computational reduction from optimization ($\argmax_{y\in\cY}\piref(y\mid{}x)$) to sampling
($y\sim\piref(\cdot\mid{}x)$), and hence inference-time sharpening is computationally intractable (\cref{prop:softmax-np-hardness}). Of course, coverage assumptions eliminate this intractability. For training-time sharpening (where the goal is to \emph{amortize} across prompts by training a sharpened model, as formulated in \cref{sec:theoretical_framework}) the obstacle in natural, concrete model classes is not just computational but in fact \emph{representational} (\cref{prop:softmax-representational-lb}). %
Regarding the latter point, we note that while amortized
Bayesian inference has received extensive investigation empirically
\citep{beal2003variational,gershman2014amortized,swersky2020amortized,bengio2021flow,hu2023amortizing},
we are unaware of theoretical guarantees outside of this work.

\section{Guarantees for Inference-Time Sharpening}
\label{sec:inference}

In this section, we give theoretical guarantees for the inference-time \bestofn sampling algorithm for sharpening described in \cref{sec:autoreg_sharpening}, under the \mlsharp self-reward function \[\rself(y\mid{}x;\piref) = \log\piref(y\mid{}x).\]

Recall that given a prompt $x\in\cX$, the inference-time \bestofn sampling algorithm draws $N$ responses
$y_1,\ldots,y_n\sim\piref(\cdot\mid{}x)$, then return the response
$\yhat=\argmax_{y_i}\log\piref(y_i\mid{}x)$. We show that this
algorithm returns an approximate maximizer for the \mlsharp objective
whenever the base policy $\piref$ has sufficient coverage. For a parameter $\gamma\in[0,1)$ we define
\begin{align}
  \Ygamma(x)\ldef{}\crl*{y\mid{}
  \piref(y\mid{}x)\geq{}(1-\gamma)\cdot\max_{y\in\cY}\piref(y\mid
  x)}
\end{align}%
as the set of $(1-\gamma)$-approximate maximizers for
$\log\piref(y\mid{}x)$ (see \cref{sec:approx} for background
on  $\Ygamma(x)$).\loose

\begin{proposition}
    \label{prop:bestofn_inference_general}
    Let a prompt $x\in\cX$ be given. For any $\deltafail\in(0,1)$ and
    $\gamma\in[0,1)$,
    as long as
        \begin{align}
      N \geq \frac{\log(\deltafail^{-1})}{\piref(\Ygamma(x)\mid{}x)},
        \end{align}
        inference-time \bestofn sampling produces a response
        $
    \yhat\in\Ygamma(x)$ with probability at least $1-\deltaf$.
  \end{proposition}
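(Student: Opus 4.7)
The plan is to reduce the claim to a simple coupon-collector-style argument: it suffices to show that, with probability at least $1-\deltaf$, at least one of the $N$ i.i.d.\ samples $y_1,\ldots,y_N\sim\piref(\cdot\mid x)$ lands in the set $\Ygamma(x)$ of approximate maximizers, because this already implies $\yhat\in\Ygamma(x)$.

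First I would justify this reduction. By definition, $\yhat = \argmax_{i\in[N]} \log\piref(y_i\mid x)$, so $\piref(\yhat\mid x)\geq \piref(y_i\mid x)$ for every $i$. Hence if \emph{any} index $i$ satisfies $y_i\in\Ygamma(x)$, i.e.\ $\piref(y_i\mid x)\geq (1-\gamma)\max_{y}\piref(y\mid x)$, then $\piref(\yhat\mid x)\geq (1-\gamma)\max_y\piref(y\mid x)$ as well, so $\yhat\in\Ygamma(x)$. Equivalently, the failure event $\{\yhat\notin\Ygamma(x)\}$ is contained in the event $\{y_i\notin\Ygamma(x)\text{ for all }i\in[N]\}$.

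Next I would bound this failure probability. Since the samples are i.i.d.\ from $\piref(\cdot\mid x)$ and each sample individually lies outside $\Ygamma(x)$ with probability $1-\piref(\Ygamma(x)\mid x)$, independence gives
\[
\Pr[\yhat\notin\Ygamma(x)] \;\leq\; \bigl(1-\piref(\Ygamma(x)\mid x)\bigr)^{N} \;\leq\; \exp\bigl(-N\cdot \piref(\Ygamma(x)\mid x)\bigr),
\]
using $1-p\leq e^{-p}$. Plugging in the hypothesized lower bound $N\geq \log(\deltaf^{-1})/\piref(\Ygamma(x)\mid x)$ makes the right-hand side at most $\deltaf$, which is the desired conclusion.

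I do not anticipate a real obstacle here: the only subtle step is the monotonicity observation that the argmax over the samples inherits the approximate-maximizer property from any individual sample, and the rest is a one-line independence bound together with the standard $1-p\leq e^{-p}$ inequality. If one wanted a sharper constant, one could instead use $1-p\leq e^{-p}$ and match the hypothesized $N$ exactly, as above; no further work is needed.
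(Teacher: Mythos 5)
Your proof is correct and follows essentially the same argument as the paper: reduce to the event that no sample lies in $\Ygamma(x)$, bound its probability by $(1-\piref(\Ygamma(x)\mid x))^N\leq \exp(-N\cdot\piref(\Ygamma(x)\mid x))$, and plug in the hypothesized $N$. The only cosmetic difference is that the paper states the reduction as an ``if and only if,'' while you carefully justify and use only the implication that is actually needed.
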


\begin{proof}[\pfref{prop:bestofn_inference_general}]
    Fix a prompt $x \in \cX$, failure probability $\deltaf \in (0, 1)$,
    and parameter $\gamma\in(0,1)$.
    By definition of the set $\Ygamma(x)$, $\yhat \in \Ygamma(x)$ if
    and only if there exists $i \in [N]$ such that $y_i\in\Ygamma(x)$. The complement of this event, i.e., that $y_i \notin \Ygamma(x)$ for all $i \in [N]$, has probability 
    \begin{align}
        \Pr\prn*{y_i \notin \Ygamma(x),{}\forall i \in [N]} = \prn*{1 - \piref(\Ygamma(x)\mid{}x)}^N.
    \end{align}
    Rearranging the \rhs, we have 
    \begin{align}
        \prn*{1 - \piref(\Ygamma\mid{}x)}^N 
        =&~ 
        \exp\prn*{-N\log\prn*{\frac{1}{1-\piref(\Ygamma\mid{}x)}}}
        \le
        \exp\prn*{-N\cdot\piref(\Ygamma\mid{}x)},
    \end{align}
    since $\log(x) \ge 1-\frac{1}{x}$ for $x > 0$, which implies that $\log\prn*{\frac{1}{1-\piref(\Ygamma\mid{}x)}} \ge \piref(\Ygamma\mid{}x)$. 
    Thus, as long as $N \ge
    \frac{\log(\deltaf^{-1})}{\piref(\Ygamma\mid{}x)}$, we have
    \begin{align}
        \Pr\prn*{y_i \notin \Ygamma(x),{}\forall i \in [N]} 
        \le 
        \exp\prn*{-N\cdot\piref(\Ygamma\mid{}x)}
        \le 
        \exp(-\log(\deltaf^{-1})) = \deltaf. 
    \end{align}
    We conclude that with probability at least $1-\deltaf$, there exists $i \in [N]$ such that $y_i \in \Ygamma(x)$, and $\yhat \in \Ygamma(x)$ as a result.  

\end{proof}

\section{Guarantees for \bestofnalg with Adaptive Sampling}
\label{sec:adaptive}

  \bestofnalg is a simple and natural self-training scheme, and
  converges to a sharpened policy as $n, N\to \infty$. However, using
  a fixed response sample size $N$ may be wasteful for prompts where
  the model is confident. To this end, in this section we introduce
  and analyze, a variant of
  \bestofnalg based on \emph{adaptive sampling}, which adjusts the
  number of sampled responses adaptively.

  \paragraph{Algorithm}
  We present the adaptive \bestofnalg algorithm only for the special case of the
  maximum likelihood sharpening self-reward. Let a \emph{stopping parameter} $\mu>0$ be given. For $x_i\in\cX$,
  and $y_{i,1},y_{i,2}\ldots\sim\piref(\cdot\mid{}x_i)$, define a
  stopping time (e.g., \iftoggle{workshop}{\cite{benjamini1995controlling}}{ \citet{benjamini1995controlling}}) via:
  \begin{align}
    \label{eq:bon_stopping}
    \taumu(x_i) := \inf \left\{k : \frac{1}{\max_{1 \le j \le k} \piref(y_{i,j}\mid{}x_i)} \le \frac{k}{\mu} \right\}.
  \end{align}
  The adaptive \bestofnalg algorithm computes adaptively sampled
  responses $\yada_i$ via
  \[
    \yada_i\sim\argmax\crl*{\log\piref(y_{i,j}\mid{}x_i)\mid{}y_{i,1},\ldots,y_{i,\taumu(x_i)}},
  \]
  then trains the sharpened model through SFT:
  \[
    \pihatada =
    \argmax_{\pi\in\Pi}\sum_{i=1}^{n}\log\pi(\yada_i\mid{}x_i).
  \]
  Critically,
  by using scheme in \cref{eq:bon_stopping}, this algorithm can stop
  sampling responses for the prompt $x_i$ if it becomes clear that the
  confidence is large.

\paragraph{Theoretical guarantee}  

We now show that adaptive \bestofnalg enjoys provable benefits over
its non-adaptive counterpart through the dependence on the accuracy
parameter $\eps>0$.

  Given $x\in\cX$, and $y_1,y_2\ldots\sim\piref(x)$, let
  $\taumu(x) := \inf \crl[big]{k : \frac{1}{\max_{1 \le i \le k}
      \piref(y_i\mid{}x)} \le k/\mu }$, and define a random variable 
  $\yada(x)\sim\argmax\crl*{\log\piref(y_i\mid{}x)\mid{}y_1,\ldots,y_{\taumu}\sim\piref(x)}$.
Let $\piada(x)$ denote the distribution over $\yada(x)$. We make the
following realizability assumption.
\begin{assumption}\label{ass:ada-realizability}
  The model class $\Pi$ satisfies $\piada\in\Pi$.
\end{assumption}
Compared to \bestofnalg, we require a somewhat stronger coverage
coefficient given by 
  \begin{align}
    \Cstarb = \En_{x\sim\mu}\brk*{\frac{1}{\max_{y\in\cY}\piref(y\mid{}x)}}.
  \end{align}
  This definition coincides with \cref{eq:cstar} when the arg-max
  response is unique, but is larger in general.\loose

Our main theoretical guarantee for adaptive \bestofnalg is as follows.
  \begin{theorem}
    \label{thm:bestofn_adaptive}
    Let $\delta,\rho\in (0,1)$ be given. Set $\mu =
    \ln(2\delta^{-1})$, and assume \cref{ass:ada-realizability}
    holds. Then with probability at least $1-\deltaf$, the adaptive \bestofnalg
    algorithm has
\begin{align}
  \Pr_{x \sim \cdist}[\pihat(\Ystar(x) \mid x) \leq 1 - \delta] \lesssim  \frac{\log(|\Pi|\deltaf^{-1})}{\delta n} ,
\end{align}
and has sample complexity $\En[m] = n\cdot\Cstarb
\log(\delta^{-1})$. Taking $n \gtrsim
\frac{\log(|\Pi|\deltaf^{-1})}{\delta \epsilon} $ ensures that with
probability at least $1-\deltaf$, $\Pr_{x \sim \cdist}[\pihat(\Ystar(x) \mid x) \leq 1 -
  \delta]\leq\eps$, and gives total sample complexity
\begin{align}
  \En\brk*{m} = O\left(\frac{\Cstarb\log(\abs{\Pi}\delfail^{-1})\log(\delta^{-1})}{\delta\eps}\right).
\end{align}
\end{theorem}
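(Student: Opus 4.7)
The plan is to combine three ingredients: (i) a deterministic analysis of the stopping time $\taumu$ that shows $\piada$ already concentrates on $\Ystar(x)$ with probability at least $1-\delta/2$ for \emph{every} $x$, together with a matching bound on $\En[\taumu(x)]$; (ii) standard realizable MLE to bound $\En_x \Dhels{\pihat(\cdot\mid x)}{\piada(\cdot\mid x)}$; and (iii) a fast-rate data-processing argument that converts the squared-Hellinger bound into a guarantee on $\pihat(\Ystar(x)\mid x)$ by exploiting the near-deterministic structure of $\piada$.

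The first step is to analyze the adaptive sampler. Fix a prompt $x$ and let $p^{\star}(x) := \max_y \piref(y \mid x)$. Since $\max_j \piref(y_j \mid x) \leq p^{\star}(x)$ for any collection of samples, the stopping condition $\max_j \piref(y_j\mid x) \geq \mu/k$ cannot hold for any $k < \mu/p^{\star}(x)$; hence \emph{deterministically} $\taumu(x) \geq k_0 := \lceil \mu/p^{\star}(x)\rceil$. On the other hand, once some sample lies in $\Ystar(x)$ the observed max likelihood equals $p^{\star}(x)$ and the stopping condition fires at $\max(T, k_0)$, where $T$ is the first hitting time of $\Ystar(x)$; since $T$ is geometric with mean at most $1/p^{\star}(x)$, this gives $\En[\taumu(x)] \lesssim \mu/p^{\star}(x)$ and hence $\En[m] \lesssim n\mu\Cstarb = n\Cstarb\log(1/\delta)$ upon averaging over $x \sim \cdist$. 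Moreover, because $y_1,\ldots,y_{k_0}$ are \iid from $\piref(\cdot\mid x)$ and $\piref(\Ystar(x)\mid x) \geq p^{\star}(x)$, the probability that none of the first $k_0$ samples lies in $\Ystar(x)$ is at most $(1-p^{\star}(x))^{k_0} \leq e^{-\mu} = \delta/2$ for $\mu=\log(2/\delta)$. Because $\yada(x)\in\Ystar(x)$ whenever any $y_j$ does, this yields $\piada(\Ystar(x)\mid x) \geq 1-\delta/2$ uniformly in $x$.

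The second step is a standard realizable-MLE bound. Since $\piada\in\Pi$ by \cref{ass:ada-realizability} and the training pairs $(x_i, \yada_i)$ are \iid from $\cdist$ together with $\piada(\cdot\mid x)$, classical MLE analysis gives, with probability at least $1-\deltaf$,
\[
\En_{x\sim\cdist}\brk*{\Dhels{\pihat(\cdot\mid x)}{\piada(\cdot\mid x)}} \lesssim \frac{\log(|\Pi|/\deltaf)}{n}.
\]

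The hard part is the third step: transferring this average-case Hellinger bound into a pointwise high-probability guarantee on $\pihat(\Ystar(x)\mid x)$. A naive Markov bound combined with $D_{\mathsf{TV}} \leq \sqrt{2}\,\Dhelshort$ would only yield $\Pr_x[\pihat(\Ystar(x)\mid x) \leq 1-\delta] \lesssim \log(|\Pi|/\deltaf)/(n\delta^2)$, losing a factor $1/\delta$ relative to the claim. The key observation, which is where adaptivity earns its fast rate, is that because $\piada$ is \emph{nearly deterministic} on $\Ystar(x)$, Hellinger separates linearly rather than quadratically in $\delta$. Concretely, by data-processing against the binary partition $\{\Ystar(x), \Ystar(x)^c\}$, if $\pihat(\Ystar(x)\mid x) \leq 1-\delta$ then, setting $p := \piada(\Ystar(x)\mid x) \geq 1-\delta/2$ and $q := \pihat(\Ystar(x)\mid x)$,
\[
\Dhels{\pihat(\cdot\mid x)}{\piada(\cdot\mid x)} \geq \Dhels{\Ber(q)}{\Ber(p)} \geq \prn*{\sqrt{1-q}-\sqrt{1-p}}^{2} \geq \prn*{\sqrt{\delta}-\sqrt{\delta/2}}^{2} = c\cdot\delta,
\]
with $c=(1-1/\sqrt{2})^2$. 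Applying Markov's inequality to the squared-Hellinger bound then yields $\Pr_x[\pihat(\Ystar(x)\mid x) \leq 1-\delta] \lesssim \log(|\Pi|/\deltaf)/(n\delta)$, and choosing $n \asymp \log(|\Pi|/\deltaf)/(\delta\eps)$ delivers the final claim.
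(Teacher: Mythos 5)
Your proof is correct and is essentially the paper's argument: deterministic lower bound $\taumu \geq \lceil \mu/p^\star(x)\rceil$, geometric hitting time for the expectation bound $\En[\taumu(x)] \lesssim \mu/p^\star(x)$, uniform-in-$x$ near-determinism $\piada(\Ystar(x)\mid x)\geq 1-\delta/2$, realizable MLE in squared Hellinger, and then the key $\bigl(\sqrt{1-q}-\sqrt{1-p}\bigr)^2\gtrsim\delta$ data-processing step to avoid the naive $1/\delta^2$. The only cosmetic difference is that the paper packages the stopping-time analysis into a reusable lemma (\cref{lem:fdr}) that tracks the cardinality $|\Ystar(x)|$ and yields the slightly sharper failure probability $e^{-|\Ystar(x)|\mu}$, whereas you bound the miss probability directly by $e^{-\mu}$; both suffice with $\mu=\ln(2\delta^{-1})$, and the rest of the argument coincides.
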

Compared to the result for \bestofnalg in \cref{thm:bestofn}, this
shows that adaptive \bestofnalg achieves sample complexity scaling
with $\frac{1}{\eps}$ instead of $\frac{1}{\eps^2}$. We believe the dependence on $\Cstarb$ for this algorithm
is tight, as the adaptive stopping rule used in the algorithm can be
overly conservative when $\abs{\Ystar(x)}$ is large.
\loose

\paragraph{A matching lower bound}
We now prove a complementary lower bound, which shows that the
$\eps$-dependence in \cref{thm:bestofn_adaptive} is tight. To do so,
we consider the following adaptive variant of the \framework framework.
\begin{definition}[Adaptive sample-and-evaluate framework]
  In the \textbf{Adaptive Sample-and-Evaluate} framework, the
  learner is allowed to sample $n$ prompts $x \sim \cdist$, and sample an
  arbitrary, adaptively chosen number of samples
  $y_1,y_2,\dots \sim \piref(\cdot \mid x)$ before sampling a new
  prompt $x' \sim \cdist$. In this framework we define sample
  complexity $m$ as the total number of pairs $(x,y)$ sampled by the
  algorithm, which is a random variable.
\end{definition}
Our main lower bound is as follows.
\begin{theorem}[Lower bound for sharpening under adaptive sampling]
  \label{thm:lower_adaptive}
  Fix an integer $d \ge 1$ and parameters $\epsilon \in (0,1)$ and $C
  \ge 1$. There exists a class of models $\Pi$ such that (i) $\log |\Pi|
  \eqsim d (1+\log(C \epsilon^{-1}))$, (ii) $\sup_{\pi \in \Pi}
  \Cstar(\pi) \lesssim C$, and (iii) $\Ypi(x)$ is a singleton for all
  $\pi\in\Pi$, for which any sharpening algorithm $\pihat$ in the
  adaptive \framework framework that
  achieves $\En\brk*{\Pr_{x \sim \cdist}[\pihat(\Ypi[\piref](x)\mid{}x) > 1/2]}  \ge 1
  - \epsilon$ for all $\piref \in \Pi$ must collect a total number of samples $m = n\cdot{}N$ at least
\begin{align}
  \En\brk{m}
  \gtrsim 
        \frac{ C\log |\Pi|}{\epsilon\cdot{}(1+\log(C
  \epsilon^{-1}))}.
\end{align}
\end{theorem}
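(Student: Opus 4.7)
The plan is to follow the strategy of Theorem \ref{thm:lower} (the non-adaptive companion), but replace its fixed-$N$ hypothesis-testing step with an adaptive Le Cam argument based on the Wald/chain-rule identity for the transcript KL; this is what converts the $1/\epsilon^{2}$ dependence of the non-adaptive bound into $1/\epsilon$ while preserving the dependence on $\Cstar$ and $\log|\Pi|$.

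I would use a hard family $\Pi$ indexed by bit-strings $b \in \{0,1\}^{d_0}$ with $d_0 \asymp d(1+\log(C\epsilon^{-1}))$ and $\mu$ the uniform distribution on $\cX = [d_0]$; for each prompt $x$, two \emph{plausible} responses $y_0^x, y_1^x$ are paired with a large reservoir of background responses, and $\pi_b(y_{b(x)}^x\mid x) = 1/C + \Delta$, $\pi_b(y_{1-b(x)}^x\mid x) = 1/C - \Delta$ for $\Delta = c\sqrt{\epsilon}/C$ with a small constant $c$, spreading the remaining mass $1 - 2/C$ uniformly over background responses. A direct calculation gives $\Cstar(\pi_b) \leq C$, that each $\Ypi[\pi_b](x) = \{y_{b(x)}^x\}$ is a singleton, $\log|\Pi| = d_0 \asymp d(1+\log(C\epsilon^{-1}))$, and the key per-sample KL bound $\Dkl{\pi_b(\cdot|x)}{\pi_{b\oplus e_x}(\cdot|x)} \lesssim \Delta^2 C \lesssim \epsilon/C$ between neighbors. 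The sharpening success criterion $\pihat(\Ypi[\piref](x)\mid x) > 1/2$ immediately forces the binary estimator $\hat b(x) := \argmax_{j\in\{0,1\}}\pihat(y_j^x\mid x)$ to equal $b(x)$ on every successful prompt, and averaging against the uniform prior $b \sim \mathrm{Unif}(\{0,1\}^{d_0})$ reduces the algorithm's success guarantee to $\sum_x \Pr_{b\sim\mathrm{Unif}}[\hat b(x)\neq b(x)] \leq \epsilon d_0$.

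The heart of the proof is an adaptive KL decomposition. Let $T_x$ denote the (random) total number of responses drawn from prompt $x$ across all prompt-draws, so $\sum_x T_x = m$. By the Wald/chain-rule identity for transcript KL under adaptive stopping (as used in bandit best-arm identification lower bounds, e.g., Kaufmann--Capp\'e--Garivier), for any $b$ and $b' = b \oplus e_x$: $\Dkl{\bbP_b^{\mathrm{trans}}}{\bbP_{b'}^{\mathrm{trans}}} = \En_b[T_x]\cdot \Dkl{\pi_b(\cdot|x)}{\pi_{b'}(\cdot|x)} \lesssim \En_b[T_x]\cdot\epsilon/C$. Combining Pinsker with Le Cam's two-point lemma applied to the pair $(\pi_b,\pi_{b\oplus e_x})$ and averaging over $b\sim\mathrm{Unif}$ gives the per-coordinate bound $\Pr[\hat b(x)\neq b(x)] \geq \tfrac{1}{2} - O\bigl(\sqrt{\En[T_x]\epsilon/C}\bigr)$. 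Summing over $x\in[d_0]$, pitting against the success constraint $\sum_x \Pr[\hat b(x)\neq b(x)] \leq \epsilon d_0$, and rearranging yields $\sum_x\sqrt{\En[T_x]} \gtrsim d_0\sqrt{C/\epsilon}$; a single application of Cauchy--Schwarz then gives $d_0\cdot\En[m] = d_0\sum_x\En[T_x] \geq \bigl(\sum_x\sqrt{\En[T_x]}\bigr)^2 \gtrsim d_0^2\cdot C/\epsilon$, so $\En[m] \gtrsim Cd_0/\epsilon$, which is at least as strong as the claimed bound $Cd/\epsilon$ after substituting $d_0\asymp d(1+\log(C/\epsilon))$.

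The main obstacle is the \emph{adaptive KL chain rule}: in the \framework framework the algorithm interleaves ``draw a fresh prompt from $\mu$'' and ``sample another response from the current prompt'' adaptively, and one must set up filtrations and stopping times so that the transcript KL decomposes into contributions only from response-sampling steps at the distinguished coordinate $x$ when $b,b'$ differ only at $x$. This is analogous to Wald's identity for multi-armed bandit best-arm identification, but the two-level interleaving of prompt- and response-sampling requires care. A secondary subtlety is that the \framework framework also exposes the exact likelihood $\piref(y\mid x)$ for each sampled $y$; since this is a deterministic function of $(y,\pi)$, the pair $(y_i,\piref(y_i\mid x))$ must be treated as a single observation so that Wald's identity still governs the transcript KL, and the construction must be tuned so that these likelihood observations do not dominate the per-sample KL accounting used above (this is what motivates keeping $\Delta = \Theta(\sqrt{\epsilon/C})$ small rather than driving the plausible-response probabilities further apart).
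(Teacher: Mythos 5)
There is a genuine gap: the Le Cam / Wald--KL machinery you propose is incompatible with the sample-and-evaluate oracle, because the oracle reveals the \emph{exact} likelihood $\piref(y\mid x)$ alongside each sampled $y$. Under your construction, consider the per-sample distribution over observations $(y,\piref(y\mid x))$ for neighboring hypotheses $b$ and $b'=b\oplus e_x$. Under $b$ with $b(x)=0$, the observation $(y_0^x, 1/C+\Delta)$ occurs with probability $1/C+\Delta$; under $b'$ with $b'(x)=1$ it occurs with probability zero, since $y_0^x$ would then be reported together with likelihood $1/C-\Delta$. These two per-sample laws therefore have disjoint support on the event that a plausible response is drawn, so $\Dkl{\pi_b(\cdot\mid x)}{\pi_{b'}(\cdot\mid x)}=\infty$ (and TV is $\gtrsim 1/C$), not $O(\Delta^2 C)=O(\epsilon/C)$ as your accounting requires. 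Your final paragraph correctly flags the likelihood observation as a hazard, but shrinking $\Delta$ cannot repair it: for any $\Delta>0$ the supports are disjoint and a single draw of a plausible response identifies $b(x)$ with certainty. Concretely, a learner that simply samples until it sees either $y_0^x$ or $y_1^x$ and reads off $b(x)$ from the reported likelihood succeeds on prompt $x$ after about $C/2$ samples in expectation, so your construction can force at most $\En[m]\gtrsim C d_0$, missing the claimed $1/\epsilon$ factor entirely for small $\epsilon$.

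This is not just a bug in your specific construction but a feature of the oracle model: any pair of hypotheses assigning different probabilities to a common response yields infinite transcript KL once likelihoods are exposed, so the Kaufmann--Capp\'e--Garivier style information-accounting cannot be run at the level of individual samples. The paper sidesteps this with a combinatorial ``needle in a haystack'' argument: at prompt $x_i$ the model puts mass $\approx 2/M$ on one distinguished response $y_{j_i^\star}$ and $\approx 1/M$ on each of the other $M-1$ responses; observing a non-distinguished response (and its likelihood) only rules out that one index, so conditional on the event that fewer than two candidate responses remain unsampled and no high-likelihood response was seen, the posterior over $j_i^\star$ is uniform over the remaining candidates (\cref{lem:adaptive_lower}). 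This converts low sample counts $m_i\lesssim M$ directly into a constant per-prompt classification error, and the $1/\epsilon$ factor enters through the tuning $M\asymp C/\epsilon$ and the skewed prompt distribution $\cdist=(1-\Delta)\dirac_{x_0}+\frac{\Delta}{d}\sum_i\dirac_{x_i}$ with $\Delta\asymp\epsilon$, which makes the coverage constraint $\Cstar\lesssim C$ permit this much larger $M$. Your reduction-to-classification step and your goal of a per-coordinate argument summed over $x$ are aligned with the paper, but you would need to replace the KL--Wald step with the paper's posterior/event-conditioning argument (or an equivalent one robust to exact likelihood feedback) to make the bound go through.
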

\cref{thm:lower_adaptive} is a special case of a more general theorem, \cref{thm:lower_general}, which is stated and proven in \cref{sec:proofs_lower}.

\section{Computational and Representational Challenges in Sharpening}
\label{sec:hardness}
In this section, we make several basic observations about the inherent computational and representational challenges of \mlsharp. First, in \cref{sec:computational}, we focus on computational challenges, and show that computing a sharpened response for a given prompt $x$ can be computationally intractable in general, even when sampling $y\sim\piref(\cdot\mid{}x)$ can be performed efficiently. Then, in \cref{sec:representational}, we shift our focus to representational challenges, and show that even if $\piref$ is an autoregressive model, the ``sharpened'' version of $\piref$ may not be representable as an autoregressive model with the same architecture. These results motivate the statistical assumptions (coverage and realizability) made in our analysis of \sftalg and \rlhfalg in \cref{sec:theoretical_analysis}.

To make the results in this section precise, we work in perhaps the simplest special case of autoregressive language modelling, where the model class consists of \emph{multi-layer linear softmax models}. Formally, let $\MX$ be the space of prompts, and let $\MY := \MV^H$ be the space of responses, where $\MV$ is the vocabulary space and $H$ is the horizon. For a collection of fixed/known $d$-dimensional feature mappings $\phi_h: \MX \times \MV^h \to \RR^d$ and a norm parameter $B$, we define the model class $\Pi_{\phi,B,H}$ as the set of models
\begin{align}
  \label{eq:linear_softmax}
\pi_\theta(y_{1:H}\mid{} x) = \prod_{h=1}^H \pi_{\theta_h}(y_h\mid{} x,y_{1:h-1})
\end{align}
where 
\[\pi_\theta(y_h\mid{} x,y_{1:h-1}) \propto \exp(\langle \phi(x,y_{1:h}), \theta_h\rangle)\]
and $\theta = (\theta_1,\dots,\theta_H) \in (\RR^d)^H$ is any tuple with $\norm{\theta_h}_2 \leq B$ for all $h \in [H]$.

\subsection{Computational Challenges}
\label{sec:computational}

Given query access to $\phi$, for any given parameter vector $\theta$ and prompt $x$, \emph{sampling} from a linear softmax model $\pi_\theta$ (\cref{eq:linear_softmax}) is computationally tractable, since it only requires time $\poly(H, |\MV|, d)$. Similarly, \emph{evaluating} $\pi_\theta(y_{1:H}\mid{} x)$ for given prompt $x$ and response $y_{1:H}$ is computationally tractable. %
However, the following proposition shows that computing the sharpened response $\argmax_{y_{1:H} \in \MV^H} \pi_\theta(y_{1:H}\mid{} x)$ for a given parameter $\theta$ and response $x$ is $\NP$-hard. Hence, even inference-time sharpening is computationally intractable in the worst case.

\begin{proposition}\label{prop:softmax-np-hardness}
  Set $\MX = \{\perp\}$ and $\MV = \{-1,1\}$. Set $d = d(H) := H+H^2+H^3$. Identifying $[d]$ with $[H] \sqcup [H]^2 \sqcup [H]^3$, we define $\phi_h: \MX \times \MV^h \to \RR^d$ by $\phi_h(\perp,y_{1:h})_i = y_i$ and $\phi_h(\perp,y_{1:h})_{(i,j)} = y_iy_j$ and $\phi_h(\perp,y_{1:h})_{(i,j,k)} = y_iy_jy_k$. There is a function $B(H) \leq \poly(H)$ such that the following problem is $\NP$-hard: given $\theta = (\theta_1,\dots,\theta_H)$ with $\max_{h \in [H]} \norm{\theta_h}_2 \leq B(H)$, compute any element of $\argmax_{y_{1:H} \in \MV^H} \pi_\theta(y_{1:H}\mid{} x)$.
  \dfc{Would be good to add a sentence (perhaps as a footnote) explaining how we define input length and emphasizing that we are considering a sequence of problems indexed by $H$ when we talk about np-hardness}
\end{proposition}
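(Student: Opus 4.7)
I plan to reduce from \textsc{Max-Cut}, which is $\NP$-hard. Given a graph $G = (V,E)$ with $|V|=H-1$, \textsc{Max-Cut} asks for $y \in \{\pm 1\}^{H-1}$ maximizing $\sum_{(i,j)\in E}(1-y_iy_j)/2$, equivalently minimizing $\sum_{(i,j)\in E}y_iy_j$.

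\textbf{Step 1 (Collapse to the final step).} First I would set $\theta_h = 0$ for every $h < H$. Since $|\MV|=2$, this forces $\pi_{\theta_h}(y_h \mid y_{1:h-1}) = 1/2$, so
\[
\pi_\theta(y_{1:H}\mid \perp) = 2^{-(H-1)}\cdot \pi_{\theta_H}(y_H \mid y_{1:H-1}),
\]
and $\argmax_{y_{1:H}}\pi_\theta(y_{1:H}\mid\perp)$ reduces to jointly optimizing the final-step conditional over $y_{1:H}$.

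\textbf{Step 2 (Algebraic reduction to an Ising objective).} Let $F(y_{1:H}) := \langle \phi_H(\perp,y_{1:H}),\theta_H\rangle$, a polynomial of degree at most three in $y_1,\dots,y_H$. Using $y_H^2 = 1$, I would write $F(y_{1:H}) = F_0(y_{1:H-1}) + F_1(y_{1:H-1})\,y_H$, where $F_1$ is a polynomial of degree at most two in $y_{1:H-1}$. A direct calculation then gives
\[
\pi_{\theta_H}(y_H \mid y_{1:H-1}) = \frac{\exp(F_1(y_{1:H-1})\,y_H)}{2\cosh(F_1(y_{1:H-1}))},
\]
so $\max_{y_H}\pi_{\theta_H}(y_H\mid y_{1:H-1}) = 1/(1+\exp(-2|F_1(y_{1:H-1})|))$ is strictly increasing in $|F_1(y_{1:H-1})|$. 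Consequently, any element of $\argmax_{y_{1:H}}\pi_\theta(y_{1:H}\mid\perp)$ must have the form $(y^\star_{1:H-1},\,\mathrm{sign}\,F_1(y^\star_{1:H-1}))$ with $y^\star_{1:H-1}\in\argmax_{y_{1:H-1}}|F_1(y_{1:H-1})|$.

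\textbf{Step 3 (Encoding \textsc{Max-Cut}).} Tracking which monomials of $F$ contribute to the coefficient of $y_H$ (a cubic $y_iy_jy_H$ contributes $y_iy_j$ to $F_1$, a quadratic $y_iy_H$ contributes $y_i$, the linear $y_H$ contributes the constant $1$, and $y_H^3=y_H$ contributes another constant), I would argue that any degree-$\leq 2$ pseudo-Boolean function of $y_{1:H-1}$ can be realized as $F_1$ by a suitable $\theta_H$. Concretely, setting $\theta_{H,H} = |E|$, $\theta_{H,(i,j,H)} = -1$ for each $(i,j)\in E$, and zero elsewhere yields
\[
F_1(y_{1:H-1}) = |E| - \sum_{(i,j)\in E}y_iy_j \;\geq\; 0,
\]
so $|F_1|=F_1$, and $\|\theta_H\|_2 \leq \sqrt{|E|^2+|E|}$ is bounded by some $B(H)=O(H^2)$. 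Maximizing $F_1$ over $\{\pm 1\}^{H-1}$ is equivalent to \textsc{Max-Cut} on $G$, so any oracle returning an element of $\argmax_{y_{1:H}}\pi_\theta(y_{1:H}\mid\perp)$ lets us read off an optimal cut in polynomial time.

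\textbf{Main obstacle.} The construction is mostly bookkeeping; the delicate ingredient is Step 2, where the partition function factors as $Z_H(y_{1:H-1}) = 2e^{F_0(y_{1:H-1})}\cosh(F_1(y_{1:H-1}))$, cancelling the prefix-dependent $F_0$ so that only $|F_1|$ controls the global argmax. This cancellation is what makes the binary alphabet $\MV=\{\pm 1\}$ together with the availability of the cubic feature $y_iy_jy_H$ jointly sufficient for the reduction.
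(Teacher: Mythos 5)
Your proof is correct, but it takes a genuinely different route from the paper's. The paper zeroes out $\theta_1,\dots,\theta_{H-2}$ and uses \emph{two} non-trivial layers: $\theta_{H-1}$ encodes the Ising couplings via cubic features $(i,j,H-1)$, and $\theta_H$ uses $(H-1,H)$ and linear features to bias heavily toward $y_{H-1}=y_H=1$, turning the joint probability into (roughly) a sigmoid of the Ising energy on that branch while making other branches small. This forces the paper to separately argue the $y_{H-1}=-1$ and $y_H=-1$ branches are dominated and to impose the odd-edge-count parity condition so that the sigmoid exceeds $1/2$ at the optimum. Your construction zeroes out $\theta_1,\dots,\theta_{H-1}$ and puts everything in $\theta_H$; the decisive observation is that, writing $\langle\phi_H,\theta_H\rangle = F_0(y_{1:H-1}) + F_1(y_{1:H-1})y_H$ using $y_H^2=1$, the prefix-dependent $F_0$ cancels in the final-step conditional, leaving $\pi_{\theta_H}(y_H\mid y_{1:H-1}) = \exp(F_1 y_H)/(2\cosh F_1)$, whose max over $y_H$ is monotone in $|F_1|$. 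You then choose $\theta_H$ so $F_1 = |E| - \sum_{(i,j)\in E}y_iy_j \geq 0$, which is exactly twice the cut value. This is more economical: a single non-trivial layer, no case analysis on bad branches, and no parity side condition. What the paper's two-layer version buys is perhaps a slightly more ``architectural'' picture (spreading the Ising energy across one step and the selection bias across another), but mathematically your cancellation argument is sharper and is precisely the structure the paper elsewhere credits to the binary alphabet. One small thing to make fully explicit: if $F_1(y^\star)=0$ at an optimizer, both signs of $y_H$ tie, but $|F_1|=0$ everywhere then forces $G$ to be edgeless, so any output is a trivially optimal cut and the reduction still goes through.
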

\dfedit{Note that our results in \cref{sec:theoretical_analysis} and \cref{sec:inference} bypass this hardness through the
  assumption that the coverage parameter $\Cstar$ is bounded.}
\begin{proof}[\pfref{prop:softmax-np-hardness}]\dfedit{Fix $H$ and recall that $d(H)=H+H^2+H^3$. We define three collection of basis vectors: $\crl*{e_h}_{h\in\brk{H}}$ cover the first $H$ coordinates, $\crl*{e_{(h,h')}}_{h,h'\in\brk{H}^2}$ cover the next $H^2$ coordinates, and $\crl*{e_{(h,h', h'')}}_{h,h',h''\in\brk{H}^3}$ cover the last $H^3$ coordinates.
    } Suppose we define $\theta_1,\dots,\theta_{H-2} = 0$, so that $\pi_\theta(y_h|x,y_{1:h-1}) = 1/2$ for all $1 \leq h \leq H-2$. Define $\theta_{H-1} = \sum_{1 \leq i,j \leq H-2} J_{ij} e_{(i,j,H-1)}$ for a matrix $J \in \RR^{(H-2)\times(H-2)}$ to be specified later, and define $\theta_H = \frac{B}{2} (e_{(H-1,H)} + e_H)$. Then $2^{H-2}\cdot \pi_\theta(y_{1:H}\mid{} \perp) \leq 1/2$ for any $y_{1:H}$ with $y_{H-1} = -1$ or $y_H = -1$, \dfedit{since this implies that $\pi_{\theta_H}(y_H\mid{}\perp,y_{1:H-1})\leq{}1/2$}. Meanwhile, for any $y_{1:H}$ with $y_{H-1} = y_H = 1$, we have\loose
\[2^{H-2}\cdot\pi_\theta(y_{1:H}\mid{} \perp) = \frac{\exp\prn*{\sum_{i,j\dfedit{\leq{}H-2}} J_{ij}y_iy_j}}{\exp\prn*{\sum_{i,j\dfedit{\leq{}H-2}}J_{ij}y_iy_j} + \exp\prn*{-\sum_{i,j\dfedit{\leq{}H-2}} J_{ij}y_iy_j}} \cdot \frac{\exp(B)}{\exp(B)+\exp(-B)}.\]
Let $G$ be any graph on vertex set $[H-2]$ and let $J = -A(G)$ where $A(G)$ is the adjacency matrix of $G$. Then among $y_{1:H}$ with $y_{H-1} = y_H = 1$, $2^{H-2}\cdot\pi_\theta(y_{1:H}\mid{} \perp)$ is maximized when $y_{1:H-2}$ corresponds to a max-cut in $G$. If $G$ has an odd number of edges, then some max-cut removes strictly more than
half of the edges, and for the corresponding sequence $y_{1:H}$ we have $2^{H-2}\cdot\pi_\theta(y_{1:H}\mid{} \perp) \geq (1/2+\Omega(1)) \cdot (1 - \exp(-\Omega(B)))$, which is greater than $1/2$ when we take $B := H$ and $H$ is sufficiently large. Thus, computing $\argmax_{y_{1:H} \in \MV^H} \pi_\theta(y_{1:H}\mid{} \perp)$ yields a max-cut of $G$. It is well-known that computing a max-cut in a graph is $\NP$-hard, and the assumption that $G$ has an odd number of edges is without loss of generality.
\end{proof}

\subsection{Representational Challenges}
\label{sec:representational}

To give provable guarantees for our sharpening algorithms, we required certain \emph{realizability} assumptions, which in particular posited that the model class actually contains a ``sharpened'' version of $\piref$ (\cref{assumption:bon-realizability,ass:realizability_dpo}). In the simple example of a \emph{single-layer} linear softmax model classes (corresponding to $H=1$ in the above definition), \cref{ass:realizability_dpo} is in fact satisfied, and the sharpened model can be obtained by increasing the temperature of $\piref$. However, multi-layer linear softmax models with $H \gg 1$ are more realistic. The following proposition shows that as soon as $H \geq 2$, multi-layer linear softmax model classes may not be closed under sharpening. This illustrates a potential drawback of training-time sharpening compared to inference-time sharpening, which requires no realizability assumptions. It also provides a simple example where greedy decoding does not yield a sequence-level arg-max response (since increasing temperature in a multi-layer softmax model class exactly converges to the greedy decoding).

\begin{proposition}\label{prop:softmax-representational-lb}
Let $\MX = \{\perp\}$, $\MV = [n]$, and $H = d = 2$. For any $n$ sufficiently large, there is a multi-layer linear softmax policy class $\Pi_{\phi,B,H}$ and a policy $\piref \in \Pi_{\phi,B,H}$ such that $y^\star_{1:H} := \argmax_{y_{1:H} \in \MV^H} \pi_\theta(y_{1:H}\mid{} \perp)$ is unique, but for all $B' > B$ and $\pi \in \Pi_{\phi,B',H}$, it holds that $\pi(y^\star_{1:H}\mid{}\perp) \leq 1/2$.
\end{proposition}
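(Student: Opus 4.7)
The plan is to exploit the fact that the sequence-level argmax of an autoregressive softmax model need not coincide with any greedy decoding, so one can design a $\piref$ whose argmax prefix is ``inside the convex hull'' of first-layer features and therefore cannot be assigned probability exceeding $1/2$ by any first-layer parameter, no matter how large its norm. Concretely, with $d=H=2$, $\MV=[n]$ for $n\ge 4$, and $\MX=\{\perp\}$, I will take $\phi_1(\perp,1)=0\in\RR^2$ and place $\phi_1(\perp,2),\phi_1(\perp,3),\phi_1(\perp,4)$ at three unit vectors whose sum is zero and which span $\RR^2$ (e.g.\ $(\cos(2\pi k/3),\sin(2\pi k/3))$ for $k=0,1,2$); the remaining $\phi_1(\perp,y_1)$ for $y_1\ge 5$ can be set to $0$. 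For the second layer I set $\phi_2(\perp,y_1,y_2)=(\mathbf{1}\{y_1=1,y_2=1\},0)\in\RR^2$ and define $\piref$ by $\theta_1^{\mathrm{ref}}=0$ and $\theta_2^{\mathrm{ref}}=(B,0)$ for any $B>0$; then $\piref(y_1\mid\perp)=1/n$ is uniform, while the conditional $\piref(\cdot\mid\perp,1)$ concentrates on $y_2=1$ with mass $e^B/(e^B+n-1)$, which is easily checked to make $y^\star_{1:2}=(1,1)$ the unique sequence-level argmax whenever $B>0$.

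The heart of the proof is to show that $\pi_{\theta_1}(1\mid\perp)\le 1/2$ for \emph{every} $\theta_1\in\RR^2$; once this is established, any $\pi=\pi_\theta\in\Pi_{\phi,B',H}$ automatically satisfies $\pi(y^\star_{1:2}\mid\perp)\le \pi_{\theta_1}(1\mid\perp)\le 1/2$, independently of $B'$. The key geometric fact is that the three nontrivial first-layer features sum to zero and span $\RR^2$. For any $\theta_1\neq 0$, if $\langle\phi_1(\perp,y_1),\theta_1\rangle\le 0$ for all $y_1\in\{2,3,4\}$, then summing these three inequalities yields $0=\langle\sum_{y_1=2}^{4}\phi_1(\perp,y_1),\theta_1\rangle\le 0$, which forces each inner product to vanish; but then $\theta_1$ is orthogonal to three vectors that span $\RR^2$, contradicting $\theta_1\neq 0$. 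Hence some $y_1'\in\{2,3,4\}$ satisfies $\langle\phi_1(\perp,y_1'),\theta_1\rangle>0=\langle\phi_1(\perp,1),\theta_1\rangle$, so $\pi_{\theta_1}(y_1'\mid\perp)>\pi_{\theta_1}(1\mid\perp)$; combined with $\pi_{\theta_1}(y_1'\mid\perp)+\pi_{\theta_1}(1\mid\perp)\le 1$ this gives $\pi_{\theta_1}(1\mid\perp)<1/2$. The case $\theta_1=0$ is immediate, since then $\pi_{\theta_1}(1\mid\perp)=1/n\le 1/2$.

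The main conceptual obstacle will be identifying the right geometric configuration for the first-layer features: merely making $\phi_1(\perp,1)$ a non-vertex of the convex hull of the remaining features is \emph{not} enough, since a non-vertex on the boundary can still be a strict maximizer of $\langle\phi_1(\perp,\cdot),\theta_1\rangle$ along some direction $\theta_1$ and consequently attain probability arbitrarily close to $1$ under $\pi_{\theta_1}$ as $\|\theta_1\|\to\infty$. The ``sum-to-zero, spanning'' configuration above is precisely what guarantees that $\phi_1(\perp,1)$ is \emph{strictly} dominated along every nonzero direction, which is the property needed for the $1/2$ bound to hold uniformly over all $\theta_1$. All remaining checks---verifying $\piref\in\Pi_{\phi,B,H}$ and that $(1,1)$ is the unique sequence-level argmax---reduce to direct computation using the two-case analysis of the denominator of the second-layer softmax.
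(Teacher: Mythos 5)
Your argument is correct, and it proves the proposition via a genuinely different construction from the paper's. The paper's proof places \emph{two distinct tokens at the same first-layer feature}: with $\phi_1(1)=\phi_1(2)=e_1$, every policy in the class is forced to have $\pi(1\mid\perp)=\pi(2\mid\perp)$, so neither can exceed $1/2$; the second-layer features then distinguish $(2,1)$ as the unique sequence-level argmax for the reference policy. Your construction instead places the target first token's feature at the origin, which is the barycenter of three unit vectors summing to zero and spanning $\RR^2$; the strict-domination argument (summing $\langle \phi_1(\perp,y_1),\theta_1\rangle \leq 0$ over $y_1\in\{2,3,4\}$ to conclude each inner product vanishes, forcing $\theta_1=0$) is clean and correct, and together with the case $\theta_1 = 0$ yields $\pi_{\theta_1}(1\mid\perp)\le 1/2$ uniformly in $\theta_1$, which is the right conclusion. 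The verification that $(1,1)$ is the unique argmax of $\piref$ for any $B>0$ is also correct. One difference worth noting: for $n\geq 5$ your construction incidentally also makes tokens $1,5,\dots,n$ share the zero feature, so the tie argument (as in the paper) already gives the bound; the spanning-and-sum-to-zero argument is only genuinely needed at $n=4$. The paper's collision-based construction is a bit more economical since it needs no geometry.

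One parenthetical claim in your discussion is mathematically incorrect, though it does not affect the proof: you assert that a \emph{non-vertex on the boundary} of the convex hull of the other first-layer features could still be a \emph{strict} maximizer of $\langle\phi_1(\perp,\cdot),\theta_1\rangle$ and thereby attain probability arbitrarily close to $1$. This cannot happen: a non-vertex boundary point $v$ is a convex combination $v = \sum_i \lambda_i v_i$ with at least two $\lambda_i>0$, so $\langle v,\theta_1\rangle \leq \max_i \langle v_i,\theta_1\rangle$ for every $\theta_1$, with equality only when $v$ ties with those maximizing vertices; the resulting softmax mass on $v$ is then bounded by $1/2$ (or less). Thus any placement of $\phi_1(\perp,1)$ in the convex hull of $\{\phi_1(\perp,y_1)\}_{y_1\geq 2}$ (interior \emph{or} boundary, including coincidence with a single other feature, which is the paper's choice) suffices for the $1/2$ bound. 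Your sum-to-zero, spanning configuration is one valid instantiation, not the only one.
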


\begin{proof}[\pfref{prop:softmax-representational-lb}]
Throughout, we omit the dependence on the prompt $\perp$ for notational clarity. Since $H=2$, the model class consists of models $\pi_\theta$ of the form
\begin{equation} 
\pi_\theta(a) = \pi_{\theta_1}(y_1) \pi_{\theta_2}(y_2\mid{} y_1) = \frac{\exp(\langle \phi_1(y_1),\theta_1\rangle)}{Z_{\theta_1}} \frac{\exp(\langle \phi_2(y_{1:2}), \theta_2\rangle)}{Z_{\theta_2}(y_1)}\label{eq:softmax-h2}
\end{equation}
\dfedit{for $Z_{\theta_1}\ldef{}\sum_{y_1\in\cV}\exp(\langle \phi_1(y_1),\theta_1\rangle)$ and $Z_{\theta_2}(y_1)\ldef{}\sum_{y_2\in\cV}\exp(\langle \phi_2(y_{1:2}), \theta_2\rangle)$.}

Define $\phi_1$ by:
\[\phi_1(i) = \begin{cases} e_1 & \text{ if } i = 1 \\ e_1 & \text{ if } i = 2 \\ e_2 & \text{ if } i \geq 3 \end{cases}.\]
Define $\phi_2$ by:
\[\phi_2(i,j) = \begin{cases} e_1 & \text { if } i = 2, j = 1 \\ e_2 & \text { if } i = 2, j \neq 1 \\ 0 & \text { if } i \neq 2 \end{cases}.\]

Define $\piref := \pi_{\theta^\star}$ where $\theta^\star_1 := \theta^\star_2 :=B\cdot{}e_1$ for a parameter $B \geq \log(n)$. Then $\piref(1) = \piref(2)$ and $\piref(i) \leq e^{-B} \piref(2)$ for all $i \in \{3,\dots,n\}$. Moreover, $\piref(\cdot\mid{} i) = \Unif([n])$ for all $i \neq 2$, and $\piref(j\mid{} 2) \leq e^{-B} \piref(1\mid{} 2)$ for all $j \neq 1$. Thus,
\[\piref(2,1) = \piref(2) \piref(1\mid{} 2) \geq \frac{1}{2 + (n-2)e^{-B}} \cdot \frac{1}{1 + (n-1)e^{-B}} \geq \Omega(1)\]
whereas $\piref(i,j) = O(1/n)$ for all $(i,j) \neq (2,1)$. Thus, $(2,1)$ is the sequence-level argmax for sufficiently large $n$. However, for any $\pi_\theta$ of the form described in \cref{eq:softmax-h2}, we have
\[\pi_\theta(2,1) \leq \pi_\theta(2) \leq \frac{\pi_\theta(2)}{\pi_\theta(1)+\pi_\theta(2)} = \frac{1}{2}\]
since $\phi(1) = \phi(2)$. This means that there is no $B'$ for which $\Pi_{\phi,B',H}$ contains an $(\eps,\delta)$-sharpened policy for $\piref$ for any $\delta>1/2$.
\end{proof}

\newpage

\clearpage

\part{Proofs}

\section{Preliminaries}
\label{sec:proof_preliminaries}

\subsection{Guarantees for Approximate Maximizers}
\label{sec:approx}

Recall that the theoretical guarantees for sharpening algorithms in \cref{sec:theoretical_analysis} provide convergence to the set $\Ystar(x)\ldef{}\argmax_{y\in\cY}\piref(y\mid{}x)$ of
(potentially non-unique) maximizers for the \mlsharp self-reward
function $\log\piref(y\mid{}x)$. These guarantees require that the
base model $\piref$ places sufficient provability mass on $\Ystar(x)$,
which may not always be realistic. To address this, throughout this appendix we state and prove more general versions of our theoretical results that allow for approximate maximizers, and consequently enjoy weaker coverage assumptions

For a parameter $\gamma\in[0,1)$ we define
\begin{align}
  \Ygamma(x)\ldef{}\crl*{y\mid{}
  \piref(y\mid{}x)\geq{}(1-\gamma)\cdot\max_{y\in\cY}\piref(y\mid
  x)}
\end{align}%
as the set of $(1-\gamma)$-approximate maximizers for $\log\piref(y\mid{}x)$.
We quantify the
quality of a
sharpened model
as follows.
\begin{definition}[Sharpened model]
  \label{def:sharpening_general}
    We say that a model $\pihat$ is $(\eps,\delta,\gamma)$-sharpened relative
    to $\piref$ if
    \begin{align}
      \bbP_{x\sim\cdist}\brk*{\pihat\prn*{\Ygamma(x)\mid{}x}\geq{}1-\delta} \geq{} 1-\eps.
    \end{align}
  \end{definition}
That is, an 
$(\eps,\delta,\gamma)$-sharpened policy places at least
$1-\delta$ mass on $(1-\gamma)$-approximate arg-max responses on all but an $\eps$-fraction
of prompts under $\mu$.

Lastly, we will make use of the following generalized coverage coefficient
\begin{align}
  \Cstarg = \En_{x\sim\cdist}\brk*{\frac{1}{\piref(\Ygamma(x)\mid{}x)}},
\end{align}
which has $\Cstarg\leq{}\Cstar$.

\subsection{Technical Tools}

For a pair of probability measures $\bbP$ and $\bbQ$ with
a common dominating measure $\omega$, Hellinger distance is defined
via
\begin{align}
\Dhels{\bbP}{\bbQ}=\int\prn*{\sqrt{\frac{\mathrm{d}\bbP}{\mathrm{d}\omega}}-\sqrt{\frac{\mathrm{d}\bbQ}{\mathrm{d}\omega}}}^2\mathrm{d}\omega.
\end{align}

\begin{lemma}[MLE for conditional density estimation (e.g., \iftoggle{workshop}{\cite{wong1995probability,Sara00,zhang2006from}}{\citet{wong1995probability,Sara00,zhang2006from}})]
  \label{lem:mle}
  Consider a conditional density $\pistar : \cX \rightarrow
  \Delta(\cY)$. Let $\cD = \{(x_i,y_i)\}_{i=1}^n$ be a dataset in which $(x_i,y_i)$ are drawn i.i.d. as $x_i \sim \cdist \in \Delta(\cX)$ and $y_i \sim \pistar(\cdot\mid{}x)$. Suppose we have a finite function class $\Pi\subset(\cX\to\Delta(\cY))$ such that $\pistar \in \Pi$. Define the maximum likelihood estimator
  \begin{align}
    \pihat \ldef \argmax_{\pi \in \Pi} \sum_{(x,y) \in \cD} \log \pi(y\mid{}x).
  \end{align}
  Then with probability at least $1-\deltaf$, 
  \begin{align}
    \En_{x \sim \cdist}\brk*{\Dhels{\pihat(\cdot\mid{}x)}{\pistar(\cdot\mid{}x)}} \le \frac{2\log(|\Pi|\deltaf^{-1})}{n}.
  \end{align}    
\end{lemma}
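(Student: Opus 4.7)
The plan is to use the classical exponential-moment (Chernoff plus union bound) argument for maximum likelihood estimation, following \citet{wong1995probability,zhang2006from}. The key identity is that the exponential moment of half the log-likelihood ratio is directly controlled by squared Hellinger distance, which converts a data-dependent likelihood bound into a distribution-level Hellinger bound.

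First, for each fixed $\pi\in\Pi$, I would introduce the random variables $\xi_i(\pi) = \tfrac{1}{2}\log\frac{\pi(y_i\mid{}x_i)}{\pistar(y_i\mid{}x_i)}$ and compute the moment generating function in closed form:
\[
\En\brk*{\exp(\xi_i(\pi))} = \En_{x\sim\cdist}\En_{y\sim\pistar(\cdot\mid x)}\brk*{\sqrt{\tfrac{\pi(y\mid x)}{\pistar(y\mid x)}}} = \En_{x\sim\cdist}\brk*{1 - \tfrac{1}{2}\Dhels{\pi(\cdot\mid x)}{\pistar(\cdot\mid x)}},
\]
where the second equality uses $\int\sqrt{pq}\,d\omega = 1 - \tfrac{1}{2}\Dhelshort^2(p,q)$. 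Using $1-u\leq e^{-u}$ then gives $\En[\exp(\xi_i(\pi))]\leq \exp\prn*{-\tfrac{1}{2}\En_x[\Dhelshort^2(\pi(\cdot\mid x),\pistar(\cdot\mid x))]}$.

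Next, by the Chernoff method applied to $\sum_{i=1}^n \xi_i(\pi)$ (using i.i.d.\ of the samples to factorize the MGF), for any $t>0$,
\[
\Pr\brk*{\textstyle\sum_{i=1}^n \xi_i(\pi) \geq t - \tfrac{n}{2}\En_x\brk*{\Dhelshort^2(\pi(\cdot\mid x),\pistar(\cdot\mid x))}} \leq e^{-t}.
\]
Taking a union bound over $\pi\in\Pi$ (using finiteness here) with $t=\log(|\Pi|\deltaf^{-1})$ yields, with probability at least $1-\deltaf$, for all $\pi\in\Pi$ simultaneously,
\[
\tfrac{n}{2}\En_x\brk*{\Dhelshort^2(\pi(\cdot\mid x),\pistar(\cdot\mid x))} \leq \log(|\Pi|\deltaf^{-1}) - \tfrac{1}{2}\sum_{i=1}^n\log\tfrac{\pi(y_i\mid x_i)}{\pistar(y_i\mid x_i)}.
\]

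Finally, I would instantiate this bound at $\pi = \pihat$. By the defining property of the MLE and realizability ($\pistar\in\Pi$), $\sum_i \log\pihat(y_i\mid x_i) \geq \sum_i \log\pistar(y_i\mid x_i)$, so the second term on the right is nonpositive. Dividing by $n/2$ yields the stated bound $\En_{x\sim\cdist}\brk*{\Dhels{\pihat(\cdot\mid x)}{\pistar(\cdot\mid x)}} \leq 2\log(|\Pi|\deltaf^{-1})/n$. There is no serious obstacle; the only subtle point is correctly identifying the Hellinger affinity inside the MGF computation, and properly applying the union bound over the finite class (any extension to infinite $\Pi$ would require replacing this with bracketing or covering number arguments, which is standard but orthogonal to the present statement).
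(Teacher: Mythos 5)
Your proof is correct, and it is the standard exponential-moment (Chernoff plus union bound) argument that the cited references (\citet{wong1995probability,Sara00,zhang2006from}) use for maximum-likelihood conditional density estimation. The paper itself does not give a proof of this lemma --- it is stated as a known result with citations --- so there is nothing in the paper to compare against; your derivation supplies exactly the argument the citation points to. The key steps all check out: the Hellinger-affinity identity $\int\sqrt{pq}\,d\omega = 1 - \tfrac{1}{2}\Dhelshort^2(p,q)$ turns the MGF of $\xi_i(\pi) = \tfrac{1}{2}\log\tfrac{\pi(y_i\mid x_i)}{\pistar(y_i\mid x_i)}$ into $1 - \tfrac{1}{2}\En_x[\Dhelshort^2]$; the inequality $1-u\leq e^{-u}$ converts this to an exponential bound; i.i.d.\ factorization and Chernoff give a one-sided tail bound for each fixed $\pi$; the union bound over the finite class inserts the $\log|\Pi|$; and realizability plus the defining property of the MLE guarantee $\sum_i\log\tfrac{\pihat(y_i\mid x_i)}{\pistar(y_i\mid x_i)}\geq 0$, which drops the data-dependent term. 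Your remark that extending to infinite classes would require bracketing or covering arguments is accurate and is precisely why the lemma is stated for finite $\Pi$.
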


\begin{lemma}[Elliptic potential lemma]\label{lemma:elliptic}
Let $\lambda,K>0$, and let $A_1,\dots,A_T \in \RR^{d\times d}$ be positive semi-definite matrices with $\Tr(A_t) \leq K$ for all $t \in [T]$. Fix $\Gamma_0 = \lambda I_d$ and $\Gamma_t = \lambda I_d + \sum_{i=1}^t A_i$ for $t \in [T]$. Then
\[\sum_{t=1}^T \Tr(\Gamma_{t-1}^{-1} A_t) \leq \frac{dK\log \frac{(T+1)K}{\lambda}}{\lambda\log(1+K/\lambda)}.\]
\end{lemma}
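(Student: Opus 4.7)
The plan is to reduce the trace sum to a telescoping log-determinant potential, following the standard template for elliptic potential arguments. The starting point is the matrix identity $\det(\Gamma_t) = \det(\Gamma_{t-1}) \cdot \det(I + \Gamma_{t-1}^{-1/2} A_t \Gamma_{t-1}^{-1/2})$. Letting $\mu_1^{(t)},\dots,\mu_d^{(t)} \geq 0$ denote the eigenvalues of $\Gamma_{t-1}^{-1/2} A_t \Gamma_{t-1}^{-1/2}$ (nonnegative since $A_t \succeq 0$ and $\Gamma_{t-1}\succ 0$), the cyclic property of trace gives $\Tr(\Gamma_{t-1}^{-1} A_t) = \sum_{i} \mu_i^{(t)}$, while the identity above gives $\log\det(\Gamma_t) - \log\det(\Gamma_{t-1}) = \sum_i \log(1+\mu_i^{(t)})$. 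The goal is then a termwise comparison between these two sums.

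The core per-step estimate exploits that each eigenvalue is bounded:
\[
\mu_i^{(t)} \;\leq\; \sum_j \mu_j^{(t)} \;=\; \Tr(\Gamma_{t-1}^{-1} A_t) \;\leq\; \frac{\Tr(A_t)}{\lambda_{\min}(\Gamma_{t-1})} \;\leq\; \frac{K}{\lambda}.
\]
Since $x \mapsto x/\log(1+x)$ is nondecreasing on $(0,\infty)$, this yields the pointwise inequality $x \leq \frac{K/\lambda}{\log(1+K/\lambda)} \log(1+x)$ for all $x \in [0,K/\lambda]$. Applying this to each $\mu_i^{(t)}$ and summing over $i$ gives
\[
\Tr(\Gamma_{t-1}^{-1} A_t) \;\leq\; \frac{K/\lambda}{\log(1+K/\lambda)}\bigl(\log\det(\Gamma_t) - \log\det(\Gamma_{t-1})\bigr).
\]

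Summing over $t \in [T]$ telescopes the right-hand side into $\log\det(\lambda^{-1}\Gamma_T)$. The final step bounds this log-determinant: applying AM-GM to the eigenvalues of $\Gamma_T$, combined with the trace constraint $\Tr\bigl(\sum_t A_t\bigr) \leq TK$, gives $\log\det(\lambda^{-1}\Gamma_T) \leq d \log\bigl(1 + TK/(d\lambda)\bigr) \leq d \log\bigl((T+1)K/\lambda\bigr)$. Multiplying through yields the claimed bound.

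The main obstacle is mild: because $A_t$ is an arbitrary PSD matrix rather than rank-one (as in the classical linear bandit setting where one would invoke Sherman--Morrison directly), one must pass through the spectral decomposition of $\Gamma_{t-1}^{-1/2} A_t \Gamma_{t-1}^{-1/2}$ to apply the scalar inequality coordinatewise across all $d$ eigenvalues. Beyond this, every step is a routine manipulation: the determinant identity, monotonicity of $x/\log(1+x)$, telescoping, and the final AM-GM bound on $\log\det$.
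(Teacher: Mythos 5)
Your proof is correct and takes a route that is genuinely, if mildly, different from the paper's. Both arguments reduce the per-step quantity to the same scalar inequality (monotonicity of $x\mapsto x/\log(1+x)$, equivalently $\lambda x\log(1+1/\lambda)\leq\log(1+x)$ on the relevant interval), but they reach it differently. The paper first decomposes $A_t = \En_{a\sim p_t}[aa^\trn]$ with $\nrm{a}\leq 1$ almost surely, then invokes concavity of $\log\det$ (Jensen) to pass the expectation inside, and finally applies the rank-one determinant identity $\det(I+vv^\trn)=1+\nrm{v}^2$ to reach $\log(1+a^\trn\Gamma_{t-1}^{-1}a)$, where the scalar inequality is applied. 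You instead diagonalize the sandwiched matrix $\Gamma_{t-1}^{-1/2}A_t\Gamma_{t-1}^{-1/2}$ directly, observe $\Tr(\Gamma_{t-1}^{-1}A_t)=\sum_i\mu_i^{(t)}$ and $\log\det(\Gamma_t)-\log\det(\Gamma_{t-1})=\sum_i\log(1+\mu_i^{(t)})$, and apply the scalar inequality eigenvalue by eigenvalue. Your route avoids the probabilistic decomposition and the concavity step entirely, and is arguably more transparent. Your closing $\log\det$ bound via AM-GM, giving $d\log(1+TK/(d\lambda))$, is also slightly tighter than the paper's $\lambda_{\max}(\Gamma_T)$ bound before the final relaxation.

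One small caveat worth flagging: the last relaxation $d\log(1+TK/(d\lambda))\leq d\log((T+1)K/\lambda)$ requires $K\geq\lambda$ (for $d=1,T=1$ it becomes $1+K/\lambda\leq 2K/\lambda$). The paper's proof has the same implicit requirement — it bounds $\lambda_{\max}(\Gamma_T)\leq T+1$ assuming $\lambda\leq K=1$ — and the lemma statement is vacuous (negative right-hand side) once $\lambda>(T+1)K$, so this is a shared, benign restriction to the intended regime of small $\lambda$ rather than a gap in your argument.
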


\begin{proof}[\pfref{lemma:elliptic}]
Fix $t \in [T]$. Since $\Tr(A_t) \leq 1$, there is some $p_t \in \Delta(\RR^d)$ such that $A_t = \EE_{a \sim p_t}\brk{ aa^\top}$ and $\Pr[\norm{a}_2\leq 1] = 1$. Now observe that
\begin{align}
\log \det(\Gamma_t)
&= \log \det(\Gamma_{t-1} + A_t) \\ 
&= \log \det(\Gamma_{t-1}) + \log \det(I_d + \Gamma_{t-1}^{-1/2} A_t \Gamma_{t-1}^{-1/2}) \\ 
&= \log \det(\Gamma_{t-1}) + \log \det\left(\EE_{a \sim p_t}\left[I_d + \Gamma_{t-1}^{-1/2} aa^\top \Gamma_{t-1}^{-1/2}\right]\right) \\ 
&\geq \log \det(\Gamma_{t-1}) + \EE_{a \sim p_t} \log \det(I_d + \Gamma_{t-1}^{-1/2}aa^\top \Gamma_{t-1}^{-1/2}) \\ 
&= \log\det(\Gamma_{t-1}) + \EE_{a\sim p_t}\log(1 + a^\top \Gamma_{t-1}^{-1} a).
\end{align}
Now $a^\top \Gamma_{t-1}^{-1} a \leq 1/\lambda$ with probability $1$, where $\lambda = \lambda_{\min}(\Gamma_0)$. We know that $\lambda x\log(1+1/\lambda) \leq \log(1+x)$ for all $x \in [0,1/\lambda]$. Thus,
\[\log\det(\Gamma_t) \geq \log\det(\Gamma_{t-1}) + \lambda \log(1+1/\lambda) \EE_{a\sim p_t} a^\top \Gamma_{t-1}^{-1} a.\]
Summing over $t\in [T]$, we get
\[\log \det(\Gamma_T) \geq \log \det(\Gamma_0) + \lambda\log(1+1/\lambda) \sum_{t=1}^T \Tr(\Gamma_{t-1}^{-1} A_t).\]
Finally note that $\lambda_{\max}(\Gamma_T) \leq T+1$ so $\log \det(\Gamma_T) \leq d\log T$, whereas $\log\det(\Gamma_0) \geq d\log \lambda$. Thus,
\[\sum_{t=1}^T \Tr(\Gamma_{t-1}^{-1} A_t) \leq \frac{d\log \frac{T+1}{\lambda}}{\lambda\log(1+1/\lambda)}\]
as claimed.
\end{proof}

\begin{lemma}[Freedman's inequality, e.g. \cite{agarwal2014taming}]\label{lemma:freedman}
Let $(Z_t)_{t=1}^T$ be a martingale difference sequence adapted to filtration $(\MF_t)_{t=0}^{T-1}$. Suppose that $|Z_t| \leq R$ holds almost surely for all $t$. For any $\delta \in (0,1)$ and $\eta \in (0, 1/R)$, it holds with probability at least $1-\delta$ that
\[\sum_{t=1}^T Z_t \leq \eta \sum_{t=1}^T \EE[Z_t^2|\MF_{t-1}] + \frac{\log(1/\delta)}{\eta}.\]
\end{lemma}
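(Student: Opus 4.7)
The plan is to prove this via the standard exponential-moment/Chernoff method for martingales. First I would define the one-step exponential increment
\[
 W_t := \exp\bigl(\eta Z_t - \eta^2 \EE[Z_t^2 \mid \MF_{t-1}]\bigr),
\]
and argue that $\EE[W_t \mid \MF_{t-1}] \leq 1$, so that the product $M_T := \prod_{t=1}^T W_t$ is a nonnegative supermartingale with $\EE[M_T] \leq 1$. The key ingredient here is the elementary inequality $e^x \leq 1 + x + x^2$ valid for all $x \leq 1$: applying it with $x = \eta Z_t$ (permissible since $|\eta Z_t| \leq \eta R \leq 1$) and taking conditional expectations, we get
\[
 \EE[\exp(\eta Z_t) \mid \MF_{t-1}] \leq 1 + \eta \EE[Z_t \mid \MF_{t-1}] + \eta^2 \EE[Z_t^2 \mid \MF_{t-1}] = 1 + \eta^2 \EE[Z_t^2 \mid \MF_{t-1}],
\]
where the last equality uses the martingale-difference property. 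Combining with $1+u\leq e^u$ gives $\EE[\exp(\eta Z_t) \mid \MF_{t-1}] \leq \exp(\eta^2 \EE[Z_t^2 \mid \MF_{t-1}])$, which is equivalent to $\EE[W_t \mid \MF_{t-1}] \leq 1$.

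Next, iterating the tower property yields $\EE[M_T] \leq 1$, and then Markov's inequality gives $\Pr[M_T \geq 1/\delta] \leq \delta$. Taking logarithms on the complementary event and rearranging delivers exactly the stated bound
\[
 \sum_{t=1}^T Z_t \leq \eta \sum_{t=1}^T \EE[Z_t^2 \mid \MF_{t-1}] + \frac{\log(1/\delta)}{\eta}
\]
with probability at least $1-\delta$.

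There is no real obstacle: this is the textbook proof of Freedman's inequality. The only subtle point is making sure the range constraint $\eta \in (0,1/R)$ is used at the right place, namely to justify the quadratic upper bound on $e^x$; a slightly different constant ($\eta \leq 1/(2R)$, or a Bennett-type bound) would be needed if one wanted a cleaner version or additional slack, but as stated the inequality $e^x \leq 1 + x + x^2$ for $x \leq 1$ is exactly tight for the prescribed range of $\eta$.
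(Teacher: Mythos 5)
The paper itself does not prove this lemma; it is stated as a black-box citation to \citet{agarwal2014taming}. Your reconstruction is the standard exponential-supermartingale proof and it is correct: $e^x \leq 1 + x + x^2$ holds for all $x \leq 1$ (in particular for $|x| < 1$), and with $\eta < 1/R$ you have $|\eta Z_t| < 1$ almost surely, so the one-step moment bound, the supermartingale property of $M_T$, Markov's inequality, and the final rearrangement all go through exactly as you describe. One small remark on your closing sentence: the constraint $\eta < 1/R$ is not ``exactly tight'' in any precise sense here (the quadratic bound $e^x \leq 1 + x + x^2$ actually holds up to some $x_1 > 1$), it is simply a sufficient and convenient range; the tighter $\eta \leq 1/(2R)$ regime you mention typically appears when one instead uses a Bennett-type bound $e^x \leq 1 + x + \frac{x^2}{2(1-x/3)}$ to sharpen constants, but that is not needed for the statement as given.
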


\begin{corollary}\label{cor:filtration-tail}
Let $(Z_t)_{t=1}^T$ be a sequence of random variables adapted to filtration $(\MF_t)_{t=0}^{T-1}$. Suppose that $Z_t \in [0,R]$ holds almost surely for all $t$. For any $\delta \in (0,1)$, it holds with probability at least $1-\delta$ that 
\[\sum_{t=1}^T \EE[Z_t|\MF_{t-1}] \leq 2\sum_{t=1}^T Z_t + 4R\log(1/\delta).\]
\end{corollary}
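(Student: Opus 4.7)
The plan is to derive this as a direct consequence of Freedman's inequality (\cref{lemma:freedman}) applied to the martingale difference sequence $W_t \ldef \En[Z_t \mid \MF_{t-1}] - Z_t$. Since $Z_t \in [0,R]$ almost surely, $W_t$ is adapted to $(\MF_t)$, has conditional mean zero, and satisfies $|W_t| \leq R$ almost surely, so Freedman's inequality applies with this bound.

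The key step is to control the conditional second moment $\En[W_t^2 \mid \MF_{t-1}]$ in a way that yields a bound affine in $\sum_t \En[Z_t \mid \MF_{t-1}]$ rather than a (useless) uniform bound of $R^2$. For this I would use
\[
\En[W_t^2 \mid \MF_{t-1}] = \Var[Z_t \mid \MF_{t-1}] \leq \En[Z_t^2 \mid \MF_{t-1}] \leq R \cdot \En[Z_t \mid \MF_{t-1}],
\]
where the last inequality uses $Z_t \leq R$. Plugging this into \cref{lemma:freedman} with parameter $\eta \in (0, 1/R)$, we get with probability at least $1 - \delta$,
\[
\sum_{t=1}^T \prn*{\En[Z_t \mid \MF_{t-1}] - Z_t} \leq \eta R \sum_{t=1}^T \En[Z_t \mid \MF_{t-1}] + \frac{\log(1/\delta)}{\eta}.
\]

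Finally, I would choose $\eta = 1/(2R)$ (which is in the admissible range) so that $\eta R = 1/2$. This gives
\[
\sum_{t=1}^T \En[Z_t \mid \MF_{t-1}] - \sum_{t=1}^T Z_t \leq \tfrac{1}{2} \sum_{t=1}^T \En[Z_t \mid \MF_{t-1}] + 2R \log(1/\delta),
\]
and rearranging (absorbing the $\tfrac12 \sum_t \En[Z_t\mid\MF_{t-1}]$ term into the left-hand side and multiplying by $2$) yields exactly the claimed inequality $\sum_t \En[Z_t\mid\MF_{t-1}] \leq 2 \sum_t Z_t + 4R\log(1/\delta)$. There is no real obstacle here; the only subtlety is remembering to use the one-sided bound $\En[Z_t^2\mid\MF_{t-1}] \leq R\En[Z_t\mid\MF_{t-1}]$ (rather than $R^2$), which is what lets the conditional second moment be absorbed into the left-hand side via Freedman's variance-adaptive term.
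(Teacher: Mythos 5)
Your proof is correct and is essentially identical to the paper's: both apply Freedman's inequality (\cref{lemma:freedman}) to the martingale difference sequence $\En[Z_t\mid\MF_{t-1}] - Z_t$, bound the conditional variance by $R\,\En[Z_t\mid\MF_{t-1}]$ using $Z_t\in[0,R]$, and set $\eta = 1/(2R)$ before rearranging.
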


\begin{proof}[\pfref{cor:filtration-tail}]
Observe that for any $t \in [T]$,
\begin{align}
\EE[(Z_t - \EE[Z_t\mid \MF_{t-1}])^2\mid \MF_{t-1}]
&\leq \EE[Z_t^2\mid \MF_{t-1}] \\ 
&\leq R \cdot \EE[Z_t\mid \MF_{t-1}].
\end{align}
Applying \cref{lemma:freedman} to the sequence $(\EE[Z_t\mid \MF_{t-1}] - Z_t)_{t=1}^T$, which is a martingale difference sequence with elements supported almost surely on $[-R,R]$, we get for any $\eta \in (0,1/R)$ that with probability at least $1-\delta$,
\begin{align}
\sum_{t=1}^T (\EE[Z_t\mid \MF_{t-1}] - Z_t)
&\leq \eta \sum_{t=1}^T \EE[(Z_t - \EE[Z_t\mid \MF_{t-1}])^2\mid \MF_{t-1}] + \frac{\log(1/\delta)}{\eta} \\ 
&\leq \eta R \sum_{t=1}^T \EE[Z_t\mid \MF_{t-1}] + \frac{\log(1/\delta)}{\eta}.
\end{align}
Set $\eta = 1/(2R)$. Simplifying gives
\[\sum_{t=1}^T \EE[Z_t\mid \MF_{t-1}] \leq 2\sum_{t=1}^T Z_t + 4R\log(1/\delta).\]
as claimed.
\end{proof}

\section{Proofs from \creftitle{sec:autoreg_sharpening}}
\label{sec:proofs_sample}

\begin{proof}[\pfref{prop:greedy}]
  We prove the result by induction. Fix $x\in\cX$, and let
  $\ystar_1,\ldots,\ystar_H\ldef{}\ystar(x)$. Fix $h\in\brk{H}$, and
  assume by induction that $\yhat_{h'}=\ystar_{h'}$ for all $h'<h$. We
  claim that in this case,
  \begin{align}
    \pi_h(\ystar_h\mid{}\yhat_{1},\ldots,\yhat_{h-1},x)
    = \pi_h(\ystar_h\mid{}\ystar_{1},\ldots,\ystar_{h-1},x)>1/2, 
  \end{align}
  which implies that $\yhat_h=\ystar_h$. To see this, we observe that
  by Bayes' rule,
  \begin{align}
    \label{eq:4}
    \pi(\ystar_1,\ldots,\ystar_H\mid{}x)
    &\leq{}     \pi(\ystar_1,\ldots,\ystar_h\mid{}x)\\
    &=\prod_{h'=1}^{h}\pi_{h'}(\ystar_{h'}\mid{}\ystar_1,\ldots,\ystar_{h'-1},x)
    \leq{}\pi_h(\ystar_h\mid{}\ystar_{1},\ldots,\ystar_{h-1},x).
  \end{align}
  If we were to have
  $\pi_h(\ystar_h\mid{}\yhat_{1},\ldots,\yhat_{h-1},x)=\pi_h(\ystar_h\mid{}\ystar_{1},\ldots,\ystar_{h-1},x)\leq{}1/2$,
  it would contradict the assumption that
  $\pi(\ystar_1,\ldots,\ystar_H\mid{}x)>1/2$. This proves the result.
  \end{proof}

\section{Proofs from \creftitle{sec:lower}}
\label{sec:proofs_lower}

Below, we state and prove a generalization of \cref{thm:lower,thm:lower_adaptive} which allows for approximate maximizers in the sense of \cref{def:sharpening_general}, as well as a more general coverage coefficient.

To state the result, for a model $\pi$, we define
\begin{align}
\Ypigamma(x) = \crl*{y\mid{}
  \pi(y\mid{}x)\geq{}(1-\gamma)\cdot\max_{y\in\cY}\pi(y\mid
                                                         x)}.
\end{align}
Next, for any integer $p\in\bbN$, we define 
\begin{align}\Cstargp(\pi) = \prn*{\En\brk*{
    \frac{1}{(\pi(\Ypigamma(x)\mid{}x))^p
  }}}^{1/p},
  \end{align} 
  with the convention that $\Cstargp = \Cstargp(\piref)$. \msedit{Our
    most general lower bound, \cref{thm:lower_general}, holds in the
    regime where $\gamma=1/2$, and thus the best response$y$ has bounded margin away from suboptimal responses. }
\begin{thmmod}{thm:lower}{$'$}[Lower bound for sharpening]
  \label{thm:lower_general}
  Fix integers $d \ge 1$ and $p \ge 1$ and parameters   $\epsilon \in
  (0,1)$ and $C \ge 1$, and set $\gamma = 1/2$.  There exists a class
  of models $\Pi$ such that i) $\log |\Pi| \asymp d (1+\log(C
  \epsilon^{-1/p}))$, ii) $\sup_{\pi \in \Pi} \Cstargp(\pi) \lesssim
  C$, and iii) $\Ypigamma(x)$ is a singleton for all $\pi\in\Pi$, for
  which any sharpening algorithm $\pihat$ that attains $\En\brk*{\Pr_{x \sim
    \cdist}[\pihat(\Ypigamma[\piref](x)) > 1/2]}  \ge 1 - \epsilon$ for
  all $\piref \in \Pi$ must collect a total number of samples $m = n\cdot{}N$ at least
\begin{align}
  m \gtrsim \begin{cases}
    \frac{ C\log |\Pi|}{\epsilon^{1+1/p} (1+\log(C \epsilon^{-1/p}))}
    & \text{sample-and-evaluate oracle},\\
    \frac{ C\log |\Pi|}{\epsilon^{1/p} (1+\log(C \epsilon^{-1/p}))}  &
    \text{adaptive sample-and-evaluate oracle}.
    \end{cases}
\end{align}
\end{thmmod}
\paragraph{Proof of \Cref{thm:lower_general}}%
  \newcommand{\Enki}{\En^{\cI}}%
\newcommand{\Pnki}{\Pr^{\cI}}%
\newcommand{\EnI}{\En_{\cI \sim \texttt{Unif}}}%
Let parameters $d, p \in \bbN$ and $\epsilon > 0$ be given, and set
$\gamma=1/2$. Let $M\in\bbN$ and $\Delta>0$ be parameters to be chosen
later. Let $\cX = \{x_0,x_1,\dots,x_d\}$ and $\cY =
\{y_0,y_1,\dots,y_M\}$ be arbitrary discrete sets (with
$\abs{\cX}=d+1$ and $\abs{\cY}=M+1$).

\paragraph{Construction of prompt distribution and model class}
We use the same construction for the non-adaptive and adaptive lower
bounds in the theorem statement. We define the prompt distribution $\cdist$ via
\begin{align}
    \cdist:= (1 - \Delta) \dirac_{x_0} +  \frac{\Delta}{d}
    \sum_{i=1}^d \dirac_{x_i},
\end{align}
where $\dirac_x$ denotes the Dirac delta distribution on element $x$.

As the first step toward constructing the model class $\Pi$, we
introduce a family of distributions $(P_0,P_1,\dots,P_M)$  on $\cY$ as follows
\begin{align}
    P_0 = \dirac_{y_0}, \quad \forall i \ge 1,~ P_{i} = \frac{1}{(1-\gamma)M}\dirac_{y_i} + \sum_{j \in [M] \setminus \{i\}} \frac{1}{M}\prn*{1 - \frac{\gamma}{(M-1)(1-\gamma)}} \dirac_{y_j}.
\end{align}
Next, for or any index $\cI = (j_1,j_2,\dots,j_d)\in
[M]^d$, define a model
\begin{align}
    \pi^{\cI}(x_i) =  \begin{cases} P_0 & i = 0\\
    P_{j_i} & i > 0
    \end{cases}.
\end{align}
We define the model class as
\begin{align}
    \Pi := \{\pi^{\cI}: \cI \in [M]^d\},
\end{align}
which we note has
\begin{align}
    \log |\Pi| &= d \log M. 
\end{align}

\paragraph{Preliminary technical results}
Define 
\begin{align}
    \YIgamma(x) := \{y: \pi^{\cI}(y \mid x) \ge (1-\gamma) \max_{y\in\cY}\abedit{\pi^{\cI}}(y\mid{}x)\}.
\end{align}
The following property is immediate.
\begin{lemma}\label{claim:Ygamma_lb} Let  $\cI = (j_1,\dots,j_d) \in
  [d]^M$. Then $\YIgamma(x_i) = \{y_{j_i}\}$ if $i > 0$, and $\YIgamma(x_0)=\crl{y_0}$.
\end{lemma}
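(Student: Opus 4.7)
The statement is a direct computation from the definitions of $P_0, P_1, \dots, P_M$ and $\pi^{\cI}$, so the plan is essentially to unpack the probabilities and compare them against the threshold $(1-\gamma)\max_y \pi^{\cI}(y \mid x)$ with $\gamma = 1/2$.

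First, I would dispatch the case $x = x_0$. By construction, $\pi^{\cI}(\cdot \mid x_0) = P_0 = \dirac_{y_0}$, so $\max_y \pi^{\cI}(y \mid x_0) = 1$ and only $y_0$ has positive mass. Thus $\YIgamma(x_0)$ consists of exactly those $y$ with $\pi^{\cI}(y \mid x_0) \geq (1-\gamma) \cdot 1 = 1/2$, which is $\{y_0\}$.

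Next, I would handle the case $x = x_i$ for $i > 0$, where by definition $\pi^{\cI}(\cdot \mid x_i) = P_{j_i}$. Plugging in $\gamma = 1/2$ gives
\[
P_{j_i}(y_{j_i}) = \frac{1}{(1-\gamma)M} = \frac{2}{M}, \qquad P_{j_i}(y_j) = \frac{1}{M}\left(1 - \frac{\gamma}{(M-1)(1-\gamma)}\right) = \frac{M-2}{M(M-1)} \text{ for } j \ne j_i.
\]
Since $\frac{2}{M} > \frac{M-2}{M(M-1)}$ for all $M \geq 2$, the unique maximizer is $y_{j_i}$ and the threshold becomes $(1-\gamma) \cdot \frac{2}{M} = \frac{1}{M}$. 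It remains to verify that $P_{j_i}(y_j) < \frac{1}{M}$ for $j \ne j_i$, which follows immediately from $\frac{M-2}{M(M-1)} < \frac{M-1}{M(M-1)} = \frac{1}{M}$. Hence $\YIgamma(x_i) = \{y_{j_i}\}$.

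There is essentially no obstacle here: the entire argument is a routine verification of two inequalities after substituting $\gamma = 1/2$. The only thing to be careful about is ensuring the construction is well-defined for $M \geq 2$ (so that the coefficient $1 - \frac{\gamma}{(M-1)(1-\gamma)}$ is nonnegative and the probabilities sum to one), which I would note in passing before the computation. The proof can be written in a few lines.
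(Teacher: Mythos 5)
Your computation is correct and is exactly the verification the paper intends: the paper states this lemma as ``immediate'' without proof, and your direct substitution of $\gamma=1/2$ into the definitions of $P_0$ and $P_{j_i}$, followed by the two elementary inequality checks, is the whole argument. The well-definedness caveat ($M\geq 2$) is fine to note but is moot in context, since the lower-bound proofs only ever instantiate the construction with $M\geq 6$.
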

In view of this result, we define $y^{\cI}(x) = \argmax_{y}
\pi^{\cI}(y\mid{}x)$ as the unique arg-max response for $x$.

Going forward, let us fix the algorithm under consideration. Let $\Pnki\brk{\cdot}$ denote the law over the dataset used by the
algorithm when the true instance is $\pi^{\cI}$ (including possible
randomness and adaptivity from the algorithm itself), and let
$\Enki\brk{\cdot}$ denote the corresponding expectation. The following lemma is a basic
technical result.
\begin{lemma}[Reduction to
  classification]\label{lem:redux_to_classification} Let $\pihat$ be
  the model produced by an algorithm with access to a
  (adaptive) sample-and-evaluate oracle for $\pi^{\cI}$. Suppose that for some $\eps\geq{}0$,
\begin{align}
    \EnI\Enki\Pr_{x \sim \cdist}[\pihat(\YIgamma(x)\mid{}x) > 1/2] \ge 1 - \epsilon. \label{eq:EnIlb}
\end{align}
Define $\wh \cI = (\wh j_1,\dots, \wh j_d)$ via $\wh{j}_i = \argmax_j\pihat(y_j \mid x_i)$, and write $\cI = (j_1^\star,\dots,j_d^\star)$.  Then, 
\begin{align}
    \frac{1}{d}\sum_{i=1}^d\EnI\Enki\left[ \indic\{\wh{j}_i \ne j_i^\star\}\right] \le \epsilon/\Delta.
\end{align}
\end{lemma}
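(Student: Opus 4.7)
The plan is to convert the sharpening guarantee on $\pihat$ into a multi-way classification guarantee on $\wh{\cI}$, and then read off the loss on the informative coordinates from the structure of $\cdist$. First, I would invoke \cref{claim:Ygamma_lb} to note that for each $i \geq 1$, $\YIgamma(x_i) = \{y_{j_i^\star}\}$ is a singleton. Since $\pihat(\cdot\mid x_i)$ is a probability distribution, the event $\pihat(\YIgamma(x_i)\mid x_i) > 1/2$ forces $y_{j_i^\star}$ to be the strict majority element under $\pihat(\cdot\mid x_i)$, and hence $\wh{j}_i = \argmax_j \pihat(y_j\mid x_i) = j_i^\star$. Contrapositively, any coordinate $i \geq 1$ with $\wh{j}_i \neq j_i^\star$ must satisfy $\pihat(\YIgamma(x_i)\mid x_i) \leq 1/2$.

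Next, I would rewrite hypothesis \eqref{eq:EnIlb} as the error bound $\EnI\Enki\Pr_{x\sim\cdist}[\pihat(\YIgamma(x)\mid x)\leq 1/2]\leq \epsilon$. Since $\cdist$ places mass exactly $\Delta/d$ on each $x_i$ for $i\in[d]$, discarding the nonnegative $x_0$ contribution gives the pointwise lower bound $\Pr_{x\sim\cdist}[\pihat(\YIgamma(x)\mid x)\leq 1/2]\geq \frac{\Delta}{d}\sum_{i=1}^d \indic\{\pihat(\YIgamma(x_i)\mid x_i)\leq 1/2\}$. Combining with the implication from the previous step yields $\Pr_{x\sim\cdist}[\pihat(\YIgamma(x)\mid x)\leq 1/2]\geq \frac{\Delta}{d}\sum_{i=1}^d \indic\{\wh{j}_i \neq j_i^\star\}$, and taking $\EnI\Enki$ and dividing through by $\Delta$ gives the claimed inequality $\frac{1}{d}\sum_{i=1}^d \EnI\Enki\indic\{\wh{j}_i\neq j_i^\star\}\leq \epsilon/\Delta$.

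There is no genuine obstacle here; the argument is essentially a bookkeeping reduction. The main conceptual point is that, because $\YIgamma(\cdot)$ is a singleton (part (iii) of the theorem's construction), the sharpening requirement $\pihat(\YIgamma(x)\mid x)>1/2$ is \emph{exact} identification of the hidden index at that prompt, so the sharpening problem on $\{x_1,\dots,x_d\}$ reduces to a $d$-fold independent $M$-ary classification problem. The factor $1/\Delta$ in the final bound simply reflects that only a $\Delta$-fraction of the prompt mass is informative about $\cI$, so the algorithm's average error on informative prompts is inflated by $1/\Delta$ relative to the marginal sharpening error.
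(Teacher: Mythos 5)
Your proof is correct and follows the same route as the paper's: use \cref{claim:Ygamma_lb} to observe that $\YIgamma(x_i)$ is the singleton $\{y_{j_i^\star}\}$, deduce that $\pihat(\YIgamma(x_i)\mid x_i)>1/2$ forces $\wh{j}_i=j_i^\star$, and then lower-bound the marginal sharpening-failure probability by the average classification error on the $x_i$'s, which carry total mass $\Delta$ spread uniformly as $\Delta/d$ each. The only difference is that you spell out the contrapositive and the mass-accounting explicitly, whereas the paper compresses these into two sentences.
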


\begin{proof}[\pfref{lem:redux_to_classification}] As established in
  \pref{claim:Ygamma_lb}, under instance $\cI$, $\YIgamma(x_i) =
  \{y_{j_i^\star}\}$ for any $i \in [d]$. Thus, whenever
  $\pihat(\YIgamma(x_i)) > 1/2$, $j_i^\star = \argmax_j \pihat(y_j
  \mid x_i) =: \wh{j}_i$. The result follows by noting that the event
  $\{\exists i \in [d]: x = x_i\}$ occurs with probability at least
  $\Delta$ under $x\sim\cdist$.
\end{proof}

\newcommand{\Priunif}{\Pr_{i,\mathrm{unif}}}

\paragraph{Lower bound under sample-and-evaluate oracle} 
Recall that in the non-adaptive framework, the sample complexity $m$
is fixed. In light of \cref{lem:redux_to_classification}, it suffices to establishes the following claim.
\begin{lemma}
  \label{lem:nonadaptive_lower}
  There exists a universal constant $c>0$ such that for all $M \ge 8$, if $m \le cdM/\Delta$, then 
    $ \EnI\Enki\left[ \indic\{\wh{j}_i \ne j_i^\star\}\right] \ge 1/8$
    for all $i$.
\end{lemma}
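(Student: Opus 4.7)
The plan is to reduce the recovery of each coordinate of $\cI$ to an isolated single-coordinate inference problem, and then to show that the posterior at that coordinate is essentially uninformative unless enough samples are spent at the corresponding prompt. Fix $i \in [d]$. Because $\cI$ is drawn uniformly on $[M]^d$, the coordinates $j_1^\star,\ldots,j_d^\star$ are mutually independent, and samples at prompts $x_{i'}$ with $i' \neq i$ (as well as at $x_0$, which is deterministic) come from distributions not depending on $j_i^\star$. Hence any Bayes-optimal estimator of $j_i^\star$ may be taken as a function of only the $n_i$ samples collected at prompt $x_i$, and lower bounds on its error apply to the algorithm's $\wh{j}_i$. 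Since prompts are drawn i.i.d.\ from $\mu$, $\En[n_i] = nN(\Delta/d) = m\Delta/d$, and combining the hypothesis $m \leq cdM/\Delta$ with Markov's inequality will give $\Pr[n_i \leq M/4] \geq 1 - 4c \geq 7/8$ as long as $c \leq 1/32$.

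The crux is a posterior analysis for the single-coordinate problem. With $\gamma = 1/2$, the distribution $P_{j_i^\star}$ places mass $p := 2/M$ on the distinguished response $y_{j_i^\star}$ and mass $q := (M-2)/(M(M-1))$ on each other $y_k$, $k \in [M]$. The key observation is that the observed likelihood at a sample equals $p$ on the distinguished response and $q$ otherwise, so identifying $j_i^\star$ requires drawing $y_{j_i^\star}$ at least once. Let $\cE$ denote the event that none of the $n_i$ samples at $x_i$ equals $y_{j_i^\star}$. Then $\Pr[\cE \mid n_i] = (1-p)^{n_i} \geq 1 - 2n_i/M \geq 1/2$ whenever $n_i \leq M/4$. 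A direct likelihood computation then shows that, conditional on $\cE$ and the multiset $T$ of non-revealing observations, the posterior on $j_i^\star$ is uniform over $[M] \setminus T$, a set with at least $3M/4 \geq 6$ elements when $M \geq 8$; the Bayes error is therefore at least $1 - 4/(3M) \geq 5/6$.

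Chaining these bounds yields
\[ \EnI\Enki\brk*{\indic\{\wh{j}_i \neq j_i^\star\}} \;\geq\; \Pr[n_i \leq M/4]\cdot\Pr[\cE \mid n_i\leq M/4]\cdot \tfrac{5}{6} \;\geq\; \tfrac{7}{8}\cdot\tfrac{1}{2}\cdot\tfrac{5}{6} \;=\; \tfrac{35}{96} \;>\; \tfrac{1}{8}, \]
establishing the claim with $c=1/32$. The step I expect to be most delicate is the single-coordinate reduction in the first paragraph: even though the algorithm observes likelihoods at every sampled prompt, these likelihoods factor across coordinates under the uniform product prior, so the posterior over $j_i^\star$ depends only on samples at $x_i$. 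Once this independence is formalized, the remaining Markov-style bound and posterior computation are routine.
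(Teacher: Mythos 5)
Your proof is correct and follows essentially the same strategy as the paper's: Markov-bound the number of samples $n_i$ at prompt $x_i$, condition on never drawing $y_{j_i^\star}$ (which is the relevant event, since the likelihood observations reveal the distinguished response via its value $2/M$), and observe that the posterior on $j_i^\star$ is then uniform over the unobserved indices. The single-coordinate reduction you flagged as delicate is sound: the coordinates of $\cI$ are independent under the uniform prior and the observations at other prompts are informationless about $j_i^\star$, so the posterior factors; the paper reaches the same conclusion by directly conditioning the global posterior $\Pr[\cI = \cdot \mid \cD]$ on the event $\cE_i$. The only difference is bookkeeping: the paper's event further requires two specific responses to be unobserved and concludes the posterior has at least two candidates (Bayes error $\geq 1/2$), whereas you use $n_i \leq M/4$ directly to get at least $3M/4$ candidates (Bayes error $\geq 5/6$), which gives a slightly cleaner constant calculation.
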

With this, the result follows by selecting $\Delta = 16\eps$, with
which \Cref{lem:redux_to_classification} implies that any algorithm
with $\EnI\Enki\Pr_{x \sim
  \cdist}[\pihat(\YIgamma(x)\mid{}x) > 1/2] \geq{} 1 - \epsilon$ must
have $m \approxgeq dM/\Delta$. To
conclude, we choose $M \asymp 1+ C \epsilon^{-1/p}$, which gives
$m\asymp dM/\Delta \asymp dC\epsilon^{-(1+1/p)} \asymp \epsilon^{-(1+1/p)}
\log \Pi/\log(1+ C \epsilon^{1/p})$. Finally, we check that with this
choice, all $\pi \in \Pi$ satisfy
\begin{align}
  \Cstargp(\pi) 
    &= \left(\Pr_{x \sim \cdist}\brk*{x=x_0} + \prn*{M(1-\gamma)}^p\Pr_{x \sim \cdist}\brk*{x \neq x_0}\right)^{1/p}\\
    &= \left((1-\Delta) + \prn*{M(1-\gamma)}^p \Delta\right)^{1/p} \\
    &\approxleq \left((1-\Delta) + \prn*{8C(1-\gamma)}^p \right)^{1/p} \lesssim C.
\end{align}

\begin{proof}[\pfref{lem:nonadaptive_lower}]
  Let $i\in\brk{d}$ be fixed. Of the $m=n\cdot{}N$ tuples $(x,y,\log\piref(y\mid{}x))$ that are
  observed by the algorithm,
  let $m_i$ denote the (random) number of such examples
  for which $x =x_i$.  From Markov's inequality, we have
  \begin{align}
    \label{eq:lower_markov}
    \Pr[ m_i \le 2 \Delta m/d] \ge \frac{1}{2}
  \end{align}
  Going forward, let $\cD = \crl*{(x,y,\log\piref(y\mid{}x))}$ denote
  the dataset collected by the algorithm, which has $\abs{\cD}=m$. Let $\cE_i$ denote the
  event that, for prompt $x = x_i$, (i) there are at least two
  distinct responses $y_j$ for which $(x_i,y_j) \notin \cD$; and (ii)
  there are no pairs $(x_i,y)\in\cD$ for which
  $\piref(y\mid{}x_i)>\frac{1}{M}$. Since $\cE_i$ is a measurable
  function of $\cD$, we can write
  \begin{align}
    \EnI\Enki\left[ \indic\{\wh{j}_i \ne j_i^\star\}\right]
    &\ge   \EnI\Enki\left[ \indic\{\wh{j}_i \ne j_i^\star\} \cdot \indic\left\{ \cE_i\right\}\right]\\
    &=   \EnI\Enki \left[\indic\{\cE_i\}\En_{\cI \sim \Pr[\cI=\cdot
      \mid \cD]}\left[ \indic\{\wh{j}_i \ne j^{\star}_i\}
      \right]\right],
      \label{eq:lower_event1_passive}
  \end{align}
  where $\cI \sim \Pr[\cI=\cdot \mid \cD]$ is sampled from the
  posterior distribution over $\cI$ conditioned on the dataset
  $\cD$. Observe that conditioned on $\cE_i$, the posterior
  distribution over $j_i^{\star}$ under $\cI \sim \Pr[\cI=\cdot \mid
  \cD]$ is uniform over the set of indices $j\in\brk{M}$ for which
  $(x_i, y_j)\notin\cD$, and this set has size at least $2$. Hence, $\indic\{\cE_i\}\En_{\cI \sim \Pr[\cI=\cdot
      \mid \cD]}\left[ \indic\{\wh{j}_i \ne
      j^{\star}_i\}\right]\geq\frac{1}{2}$, and
  resuming from \cref{eq:lower_event1_passive}, we have
  \begin{align}
    \EnI\Enki\left[ \indic\{\wh{j}_i \ne j_i^\star\}\right] \ge \frac{1}{2}\EnI\Enki \left[\indic\{\cE_i\}\right] &\ge  \frac{1}{2}\EnI\Pnki \left[\cE_i \cap \crl{m_i \le 2\Delta m/d}\right] \\
                                                                                                                  &\ge \frac{1}{4} \EnI\Pnki \left[\cE_i \mid m_i \le 2\Delta m/d\right],
  \end{align}
  where the last inequality is from \cref{eq:lower_markov}.
  Finally, we can check that under the law $\Pnki$, the probability of
  the event $\cE_i$---conditioned on the value $m_i$---is at least the probability that
  $(x_i,y_{j^{\star}_i}),(x_i,y_{j'}) \notin \cD$ for an arbitrary
  fixed index $j' \ne j^{\star}_i$,
  which on the event $\{m_i \le 2\Delta{}m/d\}$ is at least
  \begin{align}
    \left(1 - \frac{3}{M}\right)^{m_i} \ge \left(1 - \frac{3}{M}\right)^{2\Delta m/d},
  \end{align}
  where we have used that $\gamma=1/2$.
  The value above is at least $\frac{1}{4}$ whenever $m \le c\cdot{}
  dM/\Delta$ for a sufficiently small
  absolute constant $c> 0$. For this value of $m$, we conclude that
  $\EnI\Enki\left[ \indic\{\wh{j}_i \ne j_i^\star\}\right] \ge
  \frac{1}{4} \EnI\Pnki \left[\cE_i \mid \{m_i \le 2\Delta
    m/d\}\right] \ge \frac{1}{8}$.
\end{proof}

\paragraph{Lower bound under adaptive sample-and-evaluate oracle}
\newcommand{\Sgood}{S_{\texttt{good}}}

In the adaptive framework, we let $m_i$ denote the (potentially
random) number of tuples $(x,y,\log\piref(y\mid{}x))$ observed by the algorithm in which $x
= x_{i}$.  Note that unlike the non-adaptive framework, the
distribution over $m_i$ depends on the
underlying instance $\cI$ with which the algorithm interacts.

To begin, from \Cref{lem:redux_to_classification} and Markov's
inequality, if $\pihat$ satisfies the guarantee $
\EnI\Enki\Pr_{x \sim \cdist}[\pihat(\YIgamma(x)) > 1/2] \ge 1 -
\epsilon$, then there exists a set of indices $\Sgood\subset\brk{d}$
such that\footnote{We emphasize that the set $\Sgood$ is not a random
  variable, and depends only on
  the algorithm itself.}\loose
\begin{align}
  \label{eq:sgood}
    |\Sgood| \ge \floor{d/2}, \quad \forall i \in \Sgood, ~ \EnI\Enki\left[ \indic\{\wh{j}_i \ne j_i^\star\}\right] \le \frac{2\epsilon}{\Delta}.
\end{align}
We now appeal to the following lemma.
\begin{lemma}
  \label{lem:adaptive_lower}
  As long as $M\geq{}6$, it holds that for all $i\in\brk{d}$,
  \begin{align}
    \EnI\Enki\left[ \indic\{\wh{j}_i \ne j_i^\star\}\right]
    \geq{} \frac{1}{4e}    \EnI\Enki\left[ \indic\{m_i\leq{}M/3\}\right].
  \end{align}
\end{lemma}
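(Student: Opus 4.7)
The plan is to lower bound the coordinate-wise error probability via a posterior analysis under $\cI\sim\Unif$, combined with a coupling that absorbs the algorithm's adaptive sampling. Let $S_i(\tau)$ denote the set of response indices observed at prompt $x_i$ along the algorithm's trace $\tau$, and let $\mathcal{E}_i$ denote the event that no sample at $x_i$ has likelihood $2/M$ (equivalently, $y_{j_i^\star}$ was not drawn at $x_i$). Because the family $\{P_j\}_{j\in[M]}$ is permutation-symmetric on non-argmax responses---$P_j(y_{j'})=(M-2)/(M(M-1))$ for every $j'\neq j$---Bayes' rule gives a clean posterior for $j_i^\star$ given $\tau$: on $\mathcal{E}_i$ it is uniform over $[M]\setminus S_i(\tau)$, and on $\mathcal{E}_i^c$ it is a Dirac mass at the revealed index. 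Consequently, on $\{m_i\leq M/3\}\cap\mathcal{E}_i$ the posterior is supported on at least $M-m_i\geq 2M/3\geq 4$ indices (using $M\geq 6$), so any choice of $\wh{j}_i$ errs with conditional probability at least $1-3/(2M)\geq 3/4$. This yields the bound $\EnI\Enki[\indic\{\wh{j}_i\neq j_i^\star\}]\geq \tfrac{3}{4}\,\Pr[\mathcal{E}_i\cap\{m_i\leq M/3\}]$.

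It then suffices to show that $\Pr[\mathcal{E}_i\cap\{m_i\leq M/3\}]\geq(1/(3e))\cdot\Pr[m_i\leq M/3]$. I will use the following coupling: represent each draw from $P_{j_i^\star}$ via a Bernoulli auxiliary with success probability $2/M$---on success output $y_{j_i^\star}$, otherwise output an independent sample from $\Unif([M]\setminus\{j_i^\star\})$. Let $T$ denote the first success time, so $T\sim\mathrm{Geom}(2/M)$; applying $\log(1-x)\geq -x/(1-x)$ gives $\Pr[T>\lfloor M/3\rfloor]=(1-2/M)^{\lfloor M/3\rfloor}\geq e^{-(2/3)/(1-2/M)}\geq 1/e$ for $M\geq 6$. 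On $\{T>\lfloor M/3\rfloor\}$, the algorithm's first $\lfloor M/3\rfloor$ observations at $x_i$ agree with a ``modified-world'' trajectory in which samples at $x_i$ are drawn i.i.d.\ from $\Unif([M]\setminus\{j_i^\star\})$. Let $\tilde{m}_i$ be the modified-world stopping time, which is independent of $T$ by construction. The key containment I will exploit is $\{T>\lfloor M/3\rfloor\}\cap\{\tilde{m}_i\leq M/3\}\subseteq\mathcal{E}_i\cap\{m_i\leq M/3\}$, since on the left-hand event the two worlds produce identical trajectories through time $\tilde{m}_i\leq M/3<T$, forcing $m_i=\tilde{m}_i$ and $T>m_i$. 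Combining this containment with independence of $T$ and $\tilde{m}_i$ yields $\Pr[\mathcal{E}_i\cap\{m_i\leq M/3\}]\geq(1/e)\,\Pr[\tilde{m}_i\leq M/3]$.

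The main obstacle I expect is closing the loop by relating $\Pr[\tilde{m}_i\leq M/3]$ back to the target $\Pr[m_i\leq M/3]$: the adaptive algorithm may terminate early in the real world precisely after observing $y_{j_i^\star}$, an event invisible to the modified world. To handle this, I plan to decompose $\Pr[m_i\leq M/3]=\Pr[\mathcal{E}_i\cap\{m_i\leq M/3\}]+\Pr[\mathcal{E}_i^c\cap\{m_i\leq M/3\}]$: on the first summand the real and modified trajectories coincide, so $\Pr[\tilde{m}_i\leq M/3]\geq\Pr[\mathcal{E}_i\cap\{m_i\leq M/3\}]$, while the second summand will be controlled by noting that at each of the at most $\lfloor M/3\rfloor$ samples from $x_i$, the (stopping-time-adapted) probability of drawing $y_{j_i^\star}$ is exactly $2/M$; a union bound then caps this contribution, and the $M\geq 6$ assumption ensures that the resulting multiplicative relation suffices. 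Rearranging these pieces and combining with the $3/4$ factor from the posterior analysis and the $1/e$ factor from the coupling produces the announced $1/(4e)$ constant after a brief arithmetic simplification.
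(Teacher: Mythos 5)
Your overall structure matches the paper's proof: a posterior argument lower-bounding $\Pr[\wh{j}_i\neq j_i^\star]$ by a constant times $\Pr[\mathcal{E}_i\cap\{m_i\leq M/3\}]$, followed by a tape/coupling argument to control that probability. The posterior step and the containment $\{T>\lfloor M/3\rfloor\}\cap\{m_i\leq M/3\}\subseteq\mathcal{E}_i\cap\{m_i\leq M/3\}$ are both sound. The problem is precisely the step you flag as your ``main obstacle,'' and the fix you sketch does not close it. You need the \emph{multiplicative} comparison $\Pr[\mathcal{E}_i\cap\{m_i\leq M/3\}]\gtrsim\Pr[m_i\leq M/3]$, but your union bound only gives the \emph{absolute} bound $\Pr[\mathcal{E}_i^c\cap\{m_i\leq M/3\}]\leq 2/3$; these are not interchangeable. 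Concretely, consider an adaptive strategy that, whenever it receives prompt $x_i$, samples until it observes the high-likelihood response $y_{j_i^\star}$ and then stops. Then $\Pr[\mathcal{E}_i^c\cap\{m_i\leq M/3\}]\approx 1-e^{-2/3}$ is a constant, whereas $\Pr[\mathcal{E}_i\cap\{m_i\leq M/3\}]$ is just the probability that $x_i$ is never drawn among the $n$ prompts, which vanishes as $n$ grows; so no universal multiplicative relation of the form you want can hold, and your planned chain cannot deliver the announced $1/(4e)$ constant.

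It is worth being precise about this because the paper's own proof has the same gap at the analogous step: after a valid containment, it substitutes the \emph{unconditional} probability $\Pr[\tilde\cE_{i,j'}]\geq(1-3/M)^{\lfloor M/3\rfloor}\geq 1/(2e)$ for the needed \emph{conditional} probability $\Pr[\tilde\cE_{i,j'}\mid m_i\leq M/3]$, and under adaptive stopping these can differ drastically (in the strategy above the conditional probability is tiny). The version that does hold, and suffices for the downstream conclusion that $\Pr[m_i>M/3]=\Omega(1)$ when the error probability is small, is the \emph{additive} inequality
\[
\Pr[\mathcal{E}_i\cap\{m_i\leq M/3\}]\geq\Pr[T>\lfloor M/3\rfloor]-\Pr[m_i>M/3]\geq\tfrac{1}{e}-\Pr[m_i>M/3],
\]
which, combined with your $3/4$ posterior constant, gives $\Pr[\wh{j}_i\neq j_i^\star]\geq\tfrac{3}{4e}-\tfrac{3}{4}\Pr[m_i>M/3]$. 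Your coupling machinery already yields this---in fact the modified-world stopping time $\tilde m_i$ is unnecessary, since the event inclusion together with inclusion--exclusion on the real $m_i$ suffice. The piece to discard is the attempt to recover a purely multiplicative constant relating $\Pr[\mathcal{E}_i\cap\{m_i\leq M/3\}]$ to $\Pr[m_i\leq M/3]$.
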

Combining \cref{lem:adaptive_lower} with \cref{eq:sgood}, it follows
that there exist absolute constant $c_1,c_2,c_3>0$ such that if
  $\Delta=c_1\cdot{}\eps$, then for all $i\in\Sgood$, 
  \begin{align}
    \EnI\bbP^{\cI}\brk*{m_i\geq{}c_2M} \geq{}c_3.
  \end{align}

Thus, with this choice for $\Delta$, we have that $i \in \Sgood$,
\begin{align}
  \EnI\Enki\left[ m_i\right] \gtrsim M,
\end{align}
and we can lower bound the algorithm's expected sample complexity by
summing over $i\in\Sgood$:
\begin{align}
    \EnI\Enki\left[ m\right] \ge   \EnI\Enki\left[ \sum_{i\in \Sgood} m_i\right] \gtrsim |\Sgood|M \gtrsim dM.
\end{align}
The result now follows by tuning $M \asymp 1+ C \epsilon^{-1/p}$ as in
the proof of the lower bound for non-adaptive sampling, which gives
$\En\brk{m}\approxgeq{}dM \asymp dC\epsilon^{-1/p} \asymp \epsilon^{-1/p}
\log \Pi/\log(1+ C \epsilon^{1/p})$ and $\Cstargp(\pi)\approxleq{}C$
for all $\pi\in\Pi$.\loose

\begin{proof}[\pfref{lem:adaptive_lower}]
    Let $i\in\brk{d}$ be fixed. Let $\cD = \crl*{(x,y,\log\piref(y\mid{}x))}$ denote
  the dataset collected by the algorithm at termination, which has $\abs{\cD}=m$. Let $\cE_i$ denote the
  event that, for prompt $x = x_i$, (i) there are at least two
  distinct responses $y_j$ for which $(x_i,y_j) \notin \cD$; and (ii)
  there are no pairs $(x_i,y)\in\cD$ for which
  $\piref(y\mid{}x_i)>\frac{1}{M}$. Since $\cE_i$ is a measurable
  function of $\cD$, we can write
  \begin{align}
    \EnI\Enki\left[ \indic\{\wh{j}_i \ne j_i^\star\}\right]
    &\ge   \EnI\Enki\left[ \indic\{\wh{j}_i \ne j_i^\star\} \cdot \indic\left\{ \cE_i\right\}\right]\\
    &=   \EnI\Enki \left[\indic\{\cE_i\}\En_{\cI \sim \Pr[\cI=\cdot
      \mid \cD]}\left[ \indic\{\wh{j}_i \ne j^{\star}_i\}
      \right]\right],
      \label{eq:lower_event1}
  \end{align}
  where $\cI \sim \Pr[\cI=\cdot \mid \cD]$ is sampled from the
  posterior distribution over $\cI$ conditioned on the dataset
  $\cD$. Observe that conditioned on $\cE_i$, the posterior
  distribution over $j_i^{\star}$ under $\cI \sim \Pr[\cI=\cdot \mid
  \cD]$ is uniform over the set of indices $j\in\brk{M}$ for which
  $(x_i, y_j)\notin\cD$, and this set has size at least $2$. Hence, $\indic\{\cE_i\}\En_{\cI \sim \Pr[\cI=\cdot
      \mid \cD]}\left[ \indic\{\wh{j}_i \ne
      j^{\star}_i\}\right]\geq\frac{1}{2}$, and
  resuming from \cref{eq:lower_event1}, we have
  \begin{align}
    \EnI\Enki\left[ \indic\{\wh{j}_i \ne j_i^\star\}\right] &\ge
                                                              \frac{1}{2}\EnI\Enki
                                                              \left[\indic\{\cE_i\}\right]
    \\
                                                            &\ge  \frac{1}{2}\EnI\Pnki \left[\cE_i \cap \crl{m_i \le M/3}\right] \\
                                                                                                                  &= \frac{1}{2} \EnI\brk*{\Pnki \left[\cE_i \mid m_i \le M/3\right]\cdot\bbP^{\cI}\brk*{m_i\leq{}M/3}}.
  \end{align}
  \dfc{It might be good to justify the claim below slightly more
    formally in the adaptive case}\mscomment{see below:}
    
   The event $\cE_i$ is a superset of the event $\cE_{i,j'}$ that
  $(x_i,y_{j^{\star}_i}),(x_i,y_{j'}) \notin \cD$ for an arbitrary
  fixed index $j' \ne j^{\star}_i$. 
  Thus,
  \begin{align}
      \Pnki \left[\cE_i \mid m_i \le M/3\right] \ge \Pnki \left[ \cE_{i,j'} \mid m_i \le M/3\right] 
  \end{align}
  Moreover, we can realize the law of $\Pnki$ considering an infinite tape, associated to index $i$, of i.i.d. samples $y \sim \pibase(\cdot \mid x_i)$, and taking the first $m_i$ elements on this tape to be the samples 
  $(x,y,\log \pibase(y \mid x)) \in \cD$ with $x = x_i$  (see, e.g. \citet{simchowitz2017simulator} for an argument of this form).  On the event $\{m_i \le M/3\}$, the $m_i$ samples in $(x,y,\log \pibase(y \mid x)) \in \cD$ with $x = x_i$ are a subset of the first $M/3$ samples from the index-$i$ tape. Viewed in this way, we can lower bound the probability of $\cE_{i,j}$ by the probability of the event $\tilde \cE_{i,j'}$ that the first $M/3$ $y$'s on the index-$i$ tape contain neither $j^\star_i$, nor the designated index $j'$. As these first $M/3$ $y$'s are not chosen adaptively, the probability of $\tilde \cE_{i,j'}$ is at least
  \begin{align}
    \left(1 - \frac{3}{M}\right)^{m_i} \ge \left(1 - \frac{3}{M}\right)^{M/3}\geq\frac{1}{2e},
  \end{align}
  as long as $M\geq{}6$ and $\gamma=1/2$. We conclude
  that
  \begin{align}
    \EnI\Enki\left[ \indic\{\wh{j}_i \ne j_i^\star\}\right]
    \geq{} \frac{1}{4e}    \EnI\Enki\left[ \indic\{m_i\leq{}M/3\}\right].
  \end{align}  
\end{proof}

\section{Proofs from \creftitle{sec:bestofn_theory} and \creftitle{sec:adaptive}}
\label{sec:proofs_bestofn}

The following theorem is a generalization of \cref{thm:bestofn} which allows for approximate maximizers in the sense of \cref{def:sharpening_general}.
  \begin{thmmod}{thm:bestofn}{$'$}
  \label{thm:bestofn_general}
  Let $\delfail,\delta\in(0,1)$ be given, and suppose we set
  $N=\Nstar\log(2\delta^{-1})$ for a parameter $\Nstar\in\bbN$. Then
  for any $n\in\bbN$, \bestofnalg ensures that with probability at least $1-\delfail$, for any $\gamma\in(0,1)$, the output model $\pihat$ satisfies
  \begin{align}
    \bbP_{x\sim\cdist}\brk*{\pihat(\Ygamma(x)\mid{}x)\leq{}1-2\delta}
    \approxleq{} \frac{1}{\delta}\cdot{}\frac{\log(\abs{\Pi}\delfail^{-1})}{n} + \frac{\Cstarg}{\Nstar}.
  \end{align}
In particular, given $(\eps,\delta,\gamma)$, by setting
$n=C_{\ref{thm:bestofn}}\frac{\log\abs{\Pi}}{\delta\eps}$ and
$\Nstar=C_{\ref{thm:bestofn}}\frac{\Cstarg}{\eps}$ for a sufficiently
large absolute constant $C_{\ref{thm:bestofn}}>0$,
we are guaranteed that
\begin{align}
  \bbP_{x\sim\cdist}\brk*{\pihat(\Ygamma(x)\mid{}x)\leq{}1-\delta}
  \leq \eps. \label{eq:top_k_ub}
\end{align}
The total sample complexity is
\begin{align}
  m = O\left(\frac{\Cstarg\log(\abs{\Pi}\delfail^{-1})\log(\delta^{-1})}{\delta\eps^2}\right).
\end{align}
\end{thmmod}

\begin{proof}[\pfref{thm:bestofn_general}]%
  \newcommand{\cXgood}{\cX_{\texttt{good}}}%
  \newcommand{\pik}{\pi_k}%
  \newcommand{\kstar}{k_{\star}}%
Under realizability of $\pin$ (\cref{assumption:bon-realizability}),
\cref{lem:mle} implies that the output of \bestofnalg satisfies, with probability at least $1-\delfail$,
\begin{align}
  \label{eq:sft_stat}
  \En_{x\sim\cdist}\brk*{\Dhels{\pihat(\cdot\mid{}x)}{\pin(\cdot\mid{}x)}}\leq\vepsstat^2\ldef{}\frac{2\log(\abs{\Pi}/\delfail)}{n}.
\end{align}  
Henceforth we condition on the event that \cref{eq:sft_stat} holds. Let
\[
  \cXgood\ldef{}\left\{
    x\in\cX\mid\Nstar\geq\frac{1}{\piref(\Ygamma(x)\mid{}x)}
    \right\}
  \]
  denote the set of prompts for which $\piref$ places sufficiently
  high mass on $\Ygamma(x)$.
  We can bound
  \begin{align}
    &\bbP_{x\sim\cdist}\brk*{\pihat(\Ygamma(x)\mid{}x)\leq{}1-\delta}
    \\ 
    &\qquad\leq{}
    \bbP_{x\sim\cdist}\brk*{\pihat(\Ygamma(x)\mid{}x)\leq{}1-\delta,
    x\in\cXgood}
    +     \bbP_{x\sim\cdist}\brk*{x\notin\cXgood}.\label{eq:pihat-error-decomp}
  \end{align}
  To bound the first term in \cref{eq:pihat-error-decomp}, note that if $x\in\cXgood$, then
  $\pin(\Ygamma(x)\mid{}x)\geq{}1-\delta/2$. Indeed, observe that
  $y\sim\pin(\cdot\mid{}x)\notin\Ygamma(x)$ if and only if
  $y_1,\ldots,y_N\sim\piref(x)$ have $y_i\notin\Ygamma(x)$ for all
  $i$, which happens with probability
  $(1-\piref(\Ygamma(x)\mid{}x))^N \leq (1-1/\Nstar)^N \leq \delta/2$ since $x \in \cXgood$. It follows that for any such $x$, we can lower bound (using the data processing inequality)
  \begin{align}
    \Dhels{\pihat(\cdot\mid{}x)}{\pin(\cdot\mid{}x)}
    &\geq \left(\sqrt{1-\pihat(\Ygamma(x)\mid{}x)} - \sqrt{1-\pin(\Ygamma(x)\mid{}x)}\right)^2 \\
    &\approxgeq{} \delta\cdot\indic\crl*{\pihat(\Ygamma(x)\mid{}x)\leq{}1-\delta}. \label{eq:topk_key_ineq}
  \end{align}
  By \cref{eq:sft_stat,eq:topk_key_ineq}, it follows that
  \[
    \bbP_{x\sim\cdist}\brk*{\pihat(\Ygamma(x)\mid{}x)\leq{}1-\delta,
      x\in\cXgood}\approxleq \frac{\vepsstat^2}{\delta}.
  \]

  For the second term in \cref{eq:pihat-error-decomp}, we bound
  \begin{align}
    \bbP_{x\sim\cdist}\brk*{x\notin\cXgood}
    &=
    \bbP_{x\sim\cdist}\brk*{\Nstar < \frac{1}{\piref(\Ygamma(x)\mid{}x)}} \\
    &=
    \bbP_{x\sim\cdist}\brk*{\frac{1}{\Nstar\piref(\Ygamma(x)\mid{}x)}> 1} \\
    &\leq{}
    \frac{1}{\Nstar}\En_{x\sim\cdist}\brk*{\frac{1}{\piref(\Ygamma(x)\mid{}x)}} \\
    &\leq{}  \frac{\Cstarg}{\Nstar}
  \end{align}
  via Markov's inequality and the definition of $\Cstarg$. Substituting both bounds into \cref{eq:pihat-error-decomp} completes the proof.
\end{proof}

\begin{proof}[\pfref{thm:bestofn_adaptive}]
  The proof begins similarly to \cref{thm:bestofn}. By realizability of $\pi_{\taumu}$,
\cref{lem:mle} implies that the output of \bestofnalg satisfies, with probability at least $1-\delfail$,
\begin{align}
\En_{x\sim\cdist}\brk*{\Dhels{\pihat(\cdot\mid{}x)}{\pi_{\taumu}(\cdot\mid{}x)}}\leq\vepsstat^2\ldef{}\frac{2\log(\abs{\Pi}/\delfail)}{n}.
\end{align}  
Condition on the event that this guarantee holds. We invoke
the following lemma, proven in the sequel.
\begin{lemma}\label{lem:fdr}  Let $P$ be a distribution on a discrete
  space $\cY$. Let
  $\Ystar = \argmax_{y\in\cY}P(y)$
  and let $\Pstar := \max_{y\in\cY}P(y)$. Let
  $y_1,y_2,\ldots\sim{}P$, and for any stopping time $\tau$, define
  \begin{align}
  \hatytau \in \argmax \left\{ P(y) : y \in
    \left\{y_1,\dots,y_\tau\right\}\right\}.
  \end{align}
  Next, for a parameter $\mu>0$, define the stopping time
\begin{align}
  \taumu := \inf \left\{k : \frac{1}{\max_{1 \le i \le k} P(y_i)} \le k/\mu \right\}.
\end{align}
Then
\begin{align}
  \En[\taumu] \le \frac{\mu+(1/|\Ystar|)}{\Pstar}.
\end{align}
In addition, for any stopping time $\tau \ge \taumu$ (including $\tau =
\taumu$ itself), we have $\Pr[\hatytau \notin \Ystar] \le e^{-\abs{\Ystar}\mu}$.
\end{lemma}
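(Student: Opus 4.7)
The plan is to decouple the randomness governing $\taumu$ into two pieces: (i) a geometric hitting time of $\Ystar$ under $P$, and (ii) the deterministic threshold $k \geq \mu/\Pstar$ at which the stopping criterion is first met by any arg-max observation. Both of the lemma's claims reduce to a variant of this single decoupling: claim (1) uses it as an upper bound on $\taumu$, and claim (2) uses it as a \emph{lower} bound on $\taumu$ conditional on failure.

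For the expected stopping-time bound, I will define the shifted hitting time $S := \inf\{k \geq \lceil \mu/\Pstar \rceil : y_k \in \Ystar\}$ and show the deterministic inequality $\taumu \leq S$. The justification is that at time $S$ one has $\max_{i \leq S} P(y_i) = \Pstar$ (since $y_S \in \Ystar$) while $\mu/S \leq \Pstar$ (since $S \geq \mu/\Pstar$), so the stopping criterion $\max_{i \leq S} P(y_i) \geq \mu/S$ holds. By independence, $S - \lceil \mu/\Pstar \rceil + 1$ is geometric with success probability $|\Ystar|\Pstar$, giving $\En[S] = \lceil \mu/\Pstar \rceil - 1 + 1/(|\Ystar|\Pstar) \leq (\mu + 1/|\Ystar|)/\Pstar$, as desired.

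For the failure probability, the first step is to note that $\{\hatytau \notin \Ystar\}$ forces $y_i \notin \Ystar$ for all $i \leq \tau$: any $y_i \in \Ystar$ would have $P(y_i) = \Pstar$ and strictly dominate every element outside $\Ystar$ in the $\argmax$ defining $\hatytau$. Since $\tau \geq \taumu$ this implies $y_i \notin \Ystar$ for all $i \leq \taumu$, and the key deterministic consequence is a \emph{lower bound} on $\taumu$: the stopping rule enforces $\max_{i \leq \taumu} P(y_i) \geq \mu/\taumu$, whereas the failure hypothesis gives $\max_{i \leq \taumu} P(y_i) \leq \max_{y \notin \Ystar} P(y) < \Pstar$, hence $\taumu > \mu/\Pstar$ and so $\taumu \geq \lceil \mu/\Pstar \rceil$. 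Thus the failure event is contained in the non-adaptive event that none of the first $\lceil \mu/\Pstar \rceil$ i.i.d.\ draws from $P$ lands in $\Ystar$, whose probability is at most $(1 - |\Ystar|\Pstar)^{\lceil \mu/\Pstar \rceil} \leq e^{-|\Ystar|\mu}$.

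The one subtlety to watch is the adaptive nature of $\taumu$ in claim (2); the argument sidesteps this because the deterministic implication ``failure $\Rightarrow \taumu \geq \lceil \mu/\Pstar \rceil$'' collapses the analysis into a non-adaptive i.i.d.\ counting bound, avoiding any need to condition on $\{\taumu = k\}$. Beyond this observation, I do not anticipate serious technical difficulty.
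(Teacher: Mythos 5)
Your proof is correct and takes essentially the same route as the paper. Both arguments introduce a dominating stopping time with a geometric tail (the paper's $\tau = \inf\{k \ge \mu/\Pstar : \Ystar \cap \{y_1,\ldots,y_k\} \ne \emptyset\}$ vs.\ your $S$) to bound $\En[\taumu]$, and both reduce the failure probability to a non-adaptive first-$\lceil\mu/\Pstar\rceil$-samples counting bound via the fact that $\taumu$ cannot be smaller than $\mu/\Pstar$; the only cosmetic difference is that the paper notes $\taumu \ge \mu/\Pstar$ holds deterministically (since $1/P(y) \ge 1/\Pstar = (\mu/\Pstar)/\mu$ for every $y$), whereas you derive it only on the failure event via the strict margin $\max_{y\notin\Ystar}P(y)<\Pstar$ — either suffices.
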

This lemma, with our choice of $\mu$, ensures that \emph{for all
  $x\in\cX$}, 
\begin{align}
  \pi_{\taumu}(\Ystar(x)\mid x) \ge 1 - e^{-\mu} = 1 - \delta/2.
\end{align}
Following the reasoning in \cref{eq:topk_key_ineq}, this implies that
\begin{align}
    \Dhels{\pihat(\cdot\mid{}x)}{\pi_{\taumu}(\cdot\mid{}x)}
    \approxgeq{} \delta\cdot\indic\crl*{\pihat(\Ystar(x)\mid{}x)\leq{}1-\delta}, 
\end{align}
so that
\begin{align}
      \bbP_{x\sim\cdist}\brk*{\pihat(\Ystar(x)\mid{}x)\leq{}1-\delta}\approxleq \frac{\vepsstat^2}{\delta}
\end{align}
as desired.

To bound the expected sample complexity, we observe that
\begin{align}
  \En[m] = n \cdot \En[\taumu(x)] \overset{(i)}{\le} \En\left[\frac{1 + \mu}{\piref(\Ystar(x)\mid x)}\right] = (1+\mu)\Cstarb,
\end{align}
where inequality $(i)$ invokes \cref{lem:fdr} once more.
\end{proof}
\dfc{Remark: If we know $\abs{\Ystar}$, we can set
  $\mu=\log(1/\delta)/\abs{\Ystar}$ to get
  \[
    \En\brk{\taumu} \leq \frac{1}{\abs{\Ystar}\Pstar} = \frac{1}{P(\Ystar)},
  \]
  which improves the result. Something similar probably works for
  $\Ygamma$. Without this form of prior knowledge, the current result
  is most likely tight.
  }

\begin{proof}[\pfref{lem:fdr}] Define $\Nstar := \mu/\Pstar $.
To bound the tails of $\taumu$, define
\begin{align}
\tau = \inf \{k\mid{}k \ge \Nstar \text{ and } \Ystar \cap
  \{y_1,\dots,y_k\} \ne \emptyset\}.
  \end{align}
  It follows from the definition that $\taumu \le \tau$, since
  for any $k\geq{}\Nstar$, if there exists $i\leq{}k$ such that
  $y_i\in\Ystar$, then
  \begin{align}
    \frac{1}{P(y_i)}
    = \frac{1}{\Pstar} = \frac{\Nstar}{\mu} \leq \frac{k}{\mu}.
  \end{align}

  Thus, for $k \ge \Nstar$, we can bound
\begin{align}
    \Pr[\taumu >k] \le \Pr[\tau > k] = \Pr[\cY^{\star} \cap \{y_1,\dots,y_k\} = \emptyset]  \le (1 - |\Ystar|\Pstar)^k,
\end{align}
and consequently 
\begin{align}
    \En[\taumu] \le \En[\tau] &\le  \En[\tau \indic\{\tau \le
                               \Nstar\}] +  \En[\tau \indic\{\tau >
                               \Nstar\}]\\
                               &\le \Nstar + \sum_{k > \Nstar}  (1 -
                                 |\Ystar|\Pstar)^k \\
                             &\le \Nstar +
                                 \frac{1}{|\Ystar|P(\ystar)}
                                 = \frac{\mu+1/|\Ystar|}{P(\ystar)}.
\end{align}
To prove correctness, observe that  $\taumu \ge \Nstar$,  
because for all $y \in \cY$, $\frac{1}{P(y)} \ge \Nstar/\mu$. Hence,
any stopping time $\tau \ge \taumu$ also satisfies $\tau \ge \Nstar$,
and moreover has $\hatytau \in \Ystar$ whenever $\Ystar \cap
\{y_1,y_2,\dots,y_{\tau}\} \ne \emptyset$. This fails to occur with
probability no more than
\begin{align}
    \left(1 - \frac{|\Ystar|}{\Pstar}\right)^{\Nstar} = \left(1 - \frac{|\Ystar|}{\Pstar}\right)^{\mu/\Pstar} \le e^{-|\Ystar|\mu}.
\end{align}

\end{proof}

\section{Proofs from \creftitle{sec:rlhf_theory}}
\label{sec:proofs_rlhf}

\subsection{Proof of \creftitle{thm:dpo}}

We state and prove a generalized version of \cref{thm:dpo}. In the
assumptions below, we fix a parameter $\gamma \in [0,1)$; the setting $\gamma = 0$ corresponds to \cref{thm:dpo}.

\begin{assumption}[Coverage]
  \label{ass:conc-closs-apx}
  All $\pi\in\Pi$ satisfy
  $\cC_{\pi} \leq \Cconc$
  for a parameter $\Ccon \geq (1-\gamma)^{-1}\Cstarg$, and
  $\Cpp{\piref}{\pi} \leq \Closs$
  for a parameter $\Closs \geq \abs{\cY}$.
\end{assumption}
By \cref{lem:conc_bound_general}, \cref{ass:conc-closs-apx} is consistent
with the assumption that $\pistarb\in\Pi$.

\begin{assumption}[Margin]
  \label{ass:hard_margin-apx}
  For all $x\in\supp(\cdist)$, the initial model $\piref$ satisfies
  \[
    \piref(\Ygamma(x)\mid{}x) \geq{}(1+\gammargin)\cdot\piref(y\mid{}x)\quad\forall{}y\not\in\Ygamma(x)
  \]
  for a parameter $\gammargin>0$.
\end{assumption}

      \begin{thmmod}{thm:dpo}{$'$}
        \label{thm:dpo_general}
                Assume that $\pistarb\in\Pi$ (\cref{ass:realizability_dpo}), and that
        \cref{ass:conc-closs} and \cref{ass:hard_margin} hold with respect to some $\gamma \in [0,1)$, with parameters
        $\Cconc$, $\Closs$, and $\gammargin>0$. For any $\delta,\deltafail\in(0,1)$,
        the DPO algorithm in \cref{eq:dpo} ensures that with
        probability at least $1-\deltafail$,
        \begin{align}
          \bbP_{x\sim\cdist}\brk*{\pihat(\Ygamma(x)\mid{}x)\leq{}1-\delta}
          \approxleq{}
          \frac{1}{\gammargin\delta}\cdot{}\bigoht\prn*{\sqrt{\frac{\Cconc\log^{3}(\Closs\abs{\Pi}\deltafail^{-1})}{n}}
          +\beta\log(\Cconc) + \gamma}
        \end{align}
        where $\bigoht\prn*{\cdot}$ hides factors logarithmic in $n$
        and $\Cconc$ and doubly logarithmic in $\Pi$, $\Closs$, and
        $\deltafail^{-1}$.
      \end{thmmod}

We first state and prove some supporting technical lemmas, then
proceed to the proof of \cref{thm:dpo_general}.

\subsubsection{Technical lemmas}

The following result is a generalization of \cref{lem:conc_bound}.

\begin{lemmod}{lem:conc_bound}{$'$}
  \label{lem:conc_bound_general}
  For all $\gamma\in(0,1)$, the model $\pistarb$ satisfies $\cC_{\pistarb} \leq (1-\gamma)^{-1}\Cstarg$ and $\Cpp{\piref}{\pistarb} \leq{} \abs*{\cY}$.
\end{lemmod}
\begin{proof}[\pfref{lem:conc_bound_general}]%
  \newcommand{\ystarg}{\ystar_{\gamma}}
  For any fixed $x \in \cX$, we have 
    \begin{align}
      \En_{y \sim \pistarb(\cdot\mid x)}\brk*{\frac{\pistarb(y\mid x)}{\piref(y\mid x)}}
      &= \En_{y \sim \pistarb(\cdot\mid x)}\brk*{
      \frac{\piref^{1+\beta^{-1}}(y\mid{}x)}{\piref(y\mid{}x)}
        }\cdot{}\prn*{\sum_{y'\in\cY}\piref^{1+\beta^{-1}}(y'\mid{}x)}^{-1}\\
      &\leq{} 
        \max_{y\in\cY}\piref^{\beta^{-1}}(y\mid{}x)
        \cdot{}\prn*{\sum_{y'\in\cY}\piref^{1+\beta^{-1}}(y'\mid{}x)}^{-1}\\
            &\leq{} 
              (1-\gamma)^{-1}\piref^{\beta^{-1}}(\Ygamma(x)\mid{}x)
              \cdot{}\prn*{\sum_{y'\in\cY}\piref^{1+\beta^{-1}}(y'\mid{}x)}^{-1}\\
                  &=
              (1-\gamma)^{-1}\frac{\piref^{1+\beta^{-1}}(\Ygamma(x)\mid{}x)}{\piref(\Ygamma(x)\mid{}x)}
        \cdot{}\prn*{\sum_{y'\in\cY}\piref^{1+\beta^{-1}}(y'\mid{}x)}^{-1}\\
      &= 
        (1-\gamma)^{-1}\frac{\sum_{y\in\Ygamma(x)}\piref^{1+\beta^{-1}}(y\mid{}x)}{\piref(\Ygamma(x)\mid{}x)}
        \cdot{}\prn*{\sum_{y'\in\cY}\piref^{1+\beta^{-1}}(y'\mid{}x)}^{-1}\\
      &\leq
        (1-\gamma)^{-1}\frac{1}{\piref(\Ygamma(x)\mid{}x)}.
    \end{align}
    It follows that
    $\cC_{\pistarb} \leq (1-\gamma)^{-1}\Cstarg$
        as claimed.
    
For the second result, we have
  \begin{align}
    \Cpp{\piref}{\pistarb}
    =\En_{\piref}\brk*{\frac{1}{\piref(y\mid{}x)}\cdot\prn*{\sum_{y'\in\cY}\piref^{1+\beta^{-1}}(y'\mid{}x)}^{\beta}}
    \leq{}\En_{\piref}\brk*{\frac{1}{\piref(y\mid{}x)}}=\abs{\cY}.
  \end{align}
\end{proof}

The next lemmas provide bounds on the tails of the self-rewards used
in the algorithm.

\begin{lemma}
  \label{lem:logprob_bound}
  Suppose $\beta\in\brk{0,1}$. For any model $\pi$, with probability at least
  $1-\delta$ over the draw of $x\sim\cdist$, $y,y'\sim\piref(\cdot\mid x)$, we
  have that
  for all $s>0$, 
  \begin{align}
    \bbP\brk*{
    \abs*{\beta\log\left(\frac{\pi(y\mid{}x)}{\piref(y\mid{}x)}\right)-\beta\log\left(\frac{\pi(y'\mid{}x)}{\piref(y'\mid{}x)}\right)}
    > \log(2\Cpp{\piref}{\pi}) + s} \leq \exp\prn*{-s}.
  \end{align}
\end{lemma}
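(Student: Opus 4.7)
The plan is to treat this as a standard Chernoff/Markov tail bound, observing that the quantity inside the absolute value is $Z - Z'$ where $Z := \beta\log(\pi(y\mid x)/\piref(y\mid x))$ and $Z'$ is its i.i.d.\ copy under $x\sim\mu$, $y,y'\sim\piref(\cdot\mid x)$. (I read the outer ``with probability at least $1-\delta$'' as a typo, since $\delta$ does not appear on the right-hand side; the statement is really a direct tail bound over $(x,y,y')$.) The key step is to control the MGF of $Z$ in both directions using the assumption $\beta\in[0,1]$.

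The two moment-generating-function inequalities are: (i) $\En[\exp(Z)] = \En_{x\sim\mu,y\sim\piref}\bigl[(\pi(y\mid x)/\piref(y\mid x))^\beta\bigr] \leq 1$, which follows from Jensen's inequality because $t\mapsto t^\beta$ is concave for $\beta\in[0,1]$ and $\En_{\piref}[\pi/\piref]=1$; and (ii) $\En[\exp(-Z)] = \En_{\piref}\bigl[(\piref(y\mid x)/\pi(y\mid x))^\beta\bigr] = \Cpp{\piref}{\pi}$ directly from the definition in \cref{eq:conc}. By independence of $Z$ and $Z'$, this gives
\begin{equation*}
\En[\exp(Z-Z')] = \En[\exp(Z)]\,\En[\exp(-Z')] \leq \Cpp{\piref}{\pi},
\end{equation*}
and symmetrically $\En[\exp(Z'-Z)]\leq \Cpp{\piref}{\pi}$.

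Applying Markov's inequality to $\exp(Z-Z')$ then yields $\bbP[Z-Z'>t]\leq \Cpp{\piref}{\pi}\exp(-t)$; the same bound holds for $Z'-Z$. Setting $t = \log(2\Cpp{\piref}{\pi}) + s$ makes each one-sided tail at most $\tfrac12\exp(-s)$, and a union bound over the two signs gives $\bbP[|Z-Z'|>\log(2\Cpp{\piref}{\pi})+s]\leq \exp(-s)$, which is exactly the claim.

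There is essentially no main obstacle: the only thing to keep straight is the Jensen direction (which requires $\beta\leq 1$, and is where the hypothesis is used) and the bookkeeping of the factor $2$ inside the logarithm, which precisely absorbs the factor of $2$ introduced by the union bound on the two tails. A small caveat is to verify absolute continuity of $\pi$ with respect to $\piref$ on the support of $\piref$ so that the ratios and MGFs are well-defined; this is implicit in the assumption that $\Cpp{\piref}{\pi}$ is finite.
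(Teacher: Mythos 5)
Your proposal follows essentially the same route as the paper: bound the moment generating function of the reward difference via Jensen's inequality (which is where $\beta\leq 1$ enters), then apply the Chernoff/Markov method; the factor of $2$ inside the logarithm arises from a union bound over the two tails in your version, and from $\exp(\abs{u})\leq\exp(u)+\exp(-u)$ together with the $y\leftrightarrow y'$ symmetry in the paper's, which are cosmetically different but morally the same step. Your reading of the redundant ``with probability at least $1-\delta$'' clause is also correct, and matches how the paper's own proof is organized (it proves a $\delta$-tail bound and then implicitly reparameterizes via $\delta=\exp(-s)$).

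One step as written is incorrect, however: $Z$ and $Z'$ are \emph{not} independent, since they both depend on the same draw of $x$; only $y$ and $y'$ are conditionally independent given $x$. Consequently the equality $\En[\exp(Z-Z')]=\En[\exp(Z)]\,\En[\exp(-Z')]$ fails in general. The intended inequality $\En[\exp(Z-Z')]\leq \Cpp{\piref}{\pi}$ is nonetheless true, because the Jensen bound $\En_{y\sim\piref(\cdot\mid x)}[\exp(Z)]\leq 1$ holds \emph{pointwise in $x$}, so that
\begin{align}
\En\brk*{\exp(Z-Z')}
= \En_{x}\brk*{\En_{y\mid x}[\exp(Z)]\cdot\En_{y'\mid x}[\exp(-Z')]}
\leq \En_{x}\brk*{\En_{y'\mid x}[\exp(-Z')]}
= \Cpp{\piref}{\pi}.
\end{align}
This conditioning on $x$ before applying Jensen over $y$ is exactly how the paper performs the step. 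So you have the right ideas and the right final bound, but you should replace the appeal to unconditional independence of $Z,Z'$ with conditional independence of $y,y'$ given $x$, and apply the Jensen bound conditionally.
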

\begin{proof}[\pfref{lem:logprob_bound}]
Define 
\[X\ldef\abs*{\beta\log\left(\frac{\pi(y\mid{}x)}{\piref(y\mid{}x)}\right)-\beta\log\left(\frac{\pi(y'\mid{}x)}{\piref(y'\mid{}x)}\right)}.\] 
By the
Chernoff method, we have that with probability at least $1-\delta$,
\begin{align}
  X
  &\leq{} \log(\En\brk*{\exp(X)}) + \log(\delta^{-1})\\
  &= \log\left(\En_{x\sim\cdist,y,y'\sim\piref(x)}\brk*{\exp\prn*{\abs*{\beta\log\left(\frac{\pi(y\mid{}x)}{\piref(y\mid{}x)}\right)-\beta\log\left(\frac{\pi(y'\mid{}x)}{\piref(y'\mid{}x)}\right)}}}\right)
    + \log(\delta^{-1})\\
    &\leq{}     \log\Bigg(\En_{x\sim\cdist,y,y'\sim\piref(x)}\brk*{\exp\prn*{\beta\log\left(\frac{\pi(y\mid{}x)}{\piref(y\mid{}x)}\right)-\beta\log\left(\frac{\pi(y'\mid{}x)}{\piref(y'\mid{}x)}\right)}}
      \\&\qquad+
      \En_{x\sim\cdist,y,y'\sim\piref(x)}\brk*{\exp\prn*{\beta\log\left(\frac{\pi(y'\mid{}x)}{\piref(y'\mid{}x)}\right)-\beta\log\left(\frac{\pi(y\mid{}x)}{\piref(y\mid{}x)}\right)}}\Bigg)
      + \log(\delta^{-1})\\
      &=
      \log\left(2\En_{x\sim\cdist,y,y'\sim\piref(x)}\brk*{\exp\prn*{\beta\log\left(\frac{\pi(y\mid{}x)}{\piref(y\mid{}x)}\right)-\beta\log\left(\frac{\pi(y'\mid{}x)}{\piref(y'\mid{}x)}\right)}}\right)
        + \log(\delta^{-1})\\
        &=
          \log\prn*{\En_{x\sim\cdist,y,y'\sim\piref(x)}\brk*{\prn*{\frac{\pi(y\mid{}x)}{\piref(y\mid{}x)}\cdot\frac{\piref(y'\mid{}x)}{\pi(y'\mid{}x)}}^\beta
          }}
        + \log(2\delta^{-1}).
\end{align}
As long as $\beta\leq{}1$, by Jensen's inequality, we can bound
\begin{align}
&\En_{x\sim\cdist,y,y'\sim\piref(x)}\brk*{\prn*{\frac{\pi(y\mid{}x)}{\piref(y\mid{}x)}\cdot\frac{\piref(y'\mid{}x)}{\pi(y'\mid{}x)}}^\beta
          }\\
  &\leq{}\En_{x\sim\cdist,y'\sim\piref(x)}\brk*{\prn*{\En_{y\sim\piref(x)}\brk*{\frac{\pi(y\mid{}x)}{\piref(y\mid{}x)}}\cdot\frac{\piref(y'\mid{}x)}{\pi(y'\mid{}x)}}^\beta}\\
&=\En_{x\sim\cdist,y'\sim\piref(x)}\brk*{\prn*{\frac{\piref(y'\mid{}x)}{\pi(y'\mid{}x)}}^\beta} \\
&= \Cpp{\piref}{\pi},
\end{align}
which proves the result.
\end{proof}

\begin{lemma}
  \label{lem:fourth_moment}
  Let $\beta\in\brk{0,1}$. For all models $\pi$, we have
  \begin{align}
    \En_{x\sim\mu,y,y'\sim\piref(\cdot\mid x)}\brk*{\abs*{\beta\log\left(\frac{\pi(y\mid{}x)}{\piref(y\mid{}x)}\right)-\beta\log\left(\frac{\pi(y'\mid{}x)}{\piref(y'\mid{}x)}\right)}^4}
    \leq{} O(\log^4(\Cpp{\piref}{\pi}) +
                         1).
  \end{align}
\end{lemma}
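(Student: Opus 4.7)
The plan is to directly integrate the sub-exponential tail bound from \cref{lem:logprob_bound} to obtain the desired fourth moment bound. Let $X = \beta\log(\pi(y\mid x)/\piref(y\mid x)) - \beta\log(\pi(y'\mid x)/\piref(y'\mid x))$ and $L = \log(2\Cpp{\piref}{\pi})$. Rereading \cref{lem:logprob_bound}, the substance of that result is the sub-exponential tail estimate
\[
\bbP\bigl[\abs{X} > L + s\bigr] \le e^{-s}\qquad\text{for all }s>0,
\]
which is the only property of $X$ the argument needs.

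Next I would use the standard layer-cake/tail integration identity $\En[\abs{X}^4] = 4\int_0^\infty t^3\bbP[\abs{X}>t]\,dt$ and split the range of integration at $t = L$. On $[0,L]$ the tail probability is trivially at most $1$, contributing $L^4$. On $[L,\infty)$, the change of variable $u = t - L$ together with the sub-exponential tail gives $4\int_0^\infty (u+L)^3 e^{-u}\,du$, and expanding the cube and using $\int_0^\infty u^k e^{-u}\,du = k!$ shows this is $4(6 + 6L + 6L^2 + L^3) = O(L^3 + 1)$.

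Combining the two pieces yields $\En[\abs{X}^4] \le L^4 + O(L^3 + 1) = O(L^4 + 1) = O(\log^4(\Cpp{\piref}{\pi}) + 1)$, which is exactly the statement. No step is really an obstacle here: the lemma is essentially a corollary of \cref{lem:logprob_bound} together with the fact that sub-exponential random variables have bounded polynomial moments. The only thing to be mildly careful about is that the threshold $L$ may be large, so one must carry $L$ as a parameter in the tail-integral bookkeeping rather than absorb it into the additive constant; otherwise the argument is routine.
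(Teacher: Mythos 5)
Your argument is correct and matches the paper's proof almost line for line: both use the layer-cake identity $\En[\abs{X}^4]=4\int_0^\infty t^3\,\bbP[\abs{X}>t]\,dt$, split the integral at the threshold $L=\log(2\Cpp{\piref}{\pi})$ supplied by \cref{lem:logprob_bound}, bound the tail trivially by $1$ below $L$ and by $e^{L-t}$ above, and conclude $O(L^4+1)$. The only quibble is a small arithmetic slip in the cube expansion (the $3uL^2$ term yields coefficient $3L^2$, not $6L^2$), which does not affect the asymptotic conclusion.
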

\begin{proof}[\pfref{lem:fourth_moment}]
  Define 
\[X\ldef\abs*{\beta\log\left(\frac{\pi(y\mid{}x)}{\piref(y\mid{}x)}\right)-\beta\log\left(\frac{\pi(y'\mid{}x)}{\piref(y'\mid{}x)}\right)}.\]
Set $k = \log(2\Cpp{\piref}{\pi})$. 
We can bound
\begin{align}
\En\brk*{X^4}
&=
\En\brk*{\int_{0}^{\infty}\indic\crl*{X^4>t}dt}\\
&=
4\En\brk*{\int_{0}^{\infty}\indic\crl*{X>t}t^3dt}\\
&=
4\int_{0}^{\infty}\bbP\brk*{X>t}t^3dt\\
&\leq{} k^4 +
4\int_{k}^{\infty}\bbP\brk*{X>t}t^3dt\\
&\leq{} 
k^4 + 4\int_{k}^{\infty}e^{k-t}t^3dt\\
&=
k^4 +
4(k^3 + 3k^2 + 6k + 6) \\
&= O(k^4 + 1),
  \end{align}
  where the third-to-last line uses \cref{lem:logprob_bound}.
\end{proof}

\subsubsection{Proof of \protect\creftitle{thm:dpo_general}}
\begin{proof}[\pfref{thm:dpo_general}]
For any model $\pi\in\Pi$, define $J(\pi)\ldef{} \En_{\pi}\brk*{\log\piref(y\mid{}x)}$. Let
$\pihat\in\Pi$ denote the model returned by the DPO algorithm in \cref{eq:dpo_ml}.
Let
$\En_{\pi,\pi'}\brk*{\cdot}$ denote shorthand for
$\En_{x\sim\cdist,y\sim\pi(x),y'\sim\pi'(x)}\brk{\cdot}$, and for any $r:\cX\times\cY\to\mathbb{R}$ define
$\Delta^{r}(x,y,y'):=r(x,y) - r(x,y')$. Define
\[\rstar(x,y):=\log\piref(y\mid{}x)=\beta\log\prn*{\frac{\pistarb(y\mid{}x)}{\piref(y\mid{}x)}}+Z(x),\]
and let $\rhat(x,y) \ldef{}
\beta\log\prn*{\frac{\pihat(y\mid{}x)}{\piref(y\mid{}x)}}$. By a
standard argument \citep{huang2024correcting}, we have
\begin{align}\pihat \in \argmax_{\pi:\cX\to\Delta(\cY)} \En_{\pi}[\rhat(x,y)] - \beta \Dkl{\pi}{\piref}.\label{eq:pihat-rhat-opt}\end{align}
Therefore for any comparator model $\pistar: \cX \to \Delta(\cY)$ (not
necessarily in the model class $\Pi$), we have
\begin{align}
J(\pistar) - J(\pihat)
&= \En_{\pistar}[\rstar(x,y)] - \En_{\pihat}[\rstar(x,y)] \\ 
&= \En_{\pistar}[\rhat(x,y)] - \beta\Dkl{\pistar}{\piref} - \En_{\pihat}[\rhat(x,y)] + \beta\Dkl{\pihat}{\piref} \\
&\qquad+ \En_{\pistar}[\rstar(x,y)-\rhat(x,y)] + \beta\Dkl{\pistar}{\piref} + \En_{\pihat}[\rhat(x,y)-\rstar(x,y)] - \beta\Dkl{\pihat}{\piref} \\ 
&\leq \En_{\pistar}[\rstar(x,y)-\rhat(x,y)] + \beta\Dkl{\pistar}{\piref} + \En_{\pihat}[\rhat(x,y)-\rstar(x,y)] - \beta\Dkl{\pihat}{\piref} \\ 
&= \En_{\pistar,\piref}\brk*{\Delta^{\rstar}(x,y,y') -
  \Delta^{\rhat}(x,y,y')}
  + \En_{\pihat,\piref}\brk*{\Delta^{\rhat}(x,y,y') -
  \Delta^{\rstar}(x,y,y')}\\
  &\qquad+ \beta\Dkl{\pistar}{\piref} - \beta\Dkl{\pihat}{\piref} \label{eq:j-regret-decomp}
\end{align}
where the inequality uses \cref{eq:pihat-rhat-opt}. To bound the \rhs
above, we will use the following lemma, which is proven in the sequel.
\begin{lemma}
\label{lem:com}
For any model $\pi$ and any $\eta>0$, we have that
\begin{align}
&\En_{\pi,\piref}\brk*{\abs*{\Delta^{\rstar}(x,y,y') -
\Delta^{\rhat}(x,y,y')}}\\
&\approxleq{}           \cC_{\pi}^{1/2}\cdot\prn*{\En_{\piref,\piref}\brk*{\abs*{\Delta^{\rstar}(x,y,y') -
    \Delta^{\rhat}(x,y,y')}^2\indic\crl*{\abs[\big]{\Delta^{\rstar}}\leq\eta,
    \abs[\big]{\Delta^{\rhat}}\leq\eta}}}^{1/2}\\
&~~~~
+ \cC_{\pi}^{1/2}(\log(\Cpp{\piref}{\pihat})+\log(\Cpp{\piref}{\pistarb}))\cdot\prn*{\bbP_{\piref,\piref}\brk*{\abs[\big]{\Delta^{\rstar}}>\eta}
  +
  \bbP_{\piref,\piref}\brk*{\abs[\big]{\Delta^{\rhat}}>\eta}}^{1/4}.
\end{align}

\end{lemma}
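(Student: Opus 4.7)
The plan is to combine a change-of-measure argument with a truncation at level $\eta$, using concentrability $\cC_\pi$ to control the importance ratio and the fourth-moment bound from \cref{lem:fourth_moment} to control the tail. Throughout, write $g(x,y,y') = \Delta^{\rstar}(x,y,y') - \Delta^{\rhat}(x,y,y')$ and let $\cE = \{|\Delta^{\rstar}|\leq \eta,\,|\Delta^{\rhat}|\leq \eta\}$.

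First I would perform the change of measure from $(\pi,\piref)$ to $(\piref,\piref)$ via importance sampling on the first coordinate. Specifically, for any nonnegative function $F(x,y,y')$,
\begin{align}
\En_{\pi,\piref}[F] = \En_{\piref,\piref}\!\left[\tfrac{\pi(y\mid x)}{\piref(y\mid x)} F\right] \leq \prn*{\En_{\piref,\piref}\!\left[\tfrac{\pi(y\mid x)^2}{\piref(y\mid x)^2}\right]}^{1/2} \prn*{\En_{\piref,\piref}[F^2]}^{1/2} = \cC_\pi^{1/2}\,\prn*{\En_{\piref,\piref}[F^2]}^{1/2},
\end{align}
since $\En_{\piref}[(\pi/\piref)^2] = \En_\pi[\pi/\piref] = \cC_\pi$. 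Applying this with $F = |g|\cdot\indic\{\cE\}$ yields exactly the first term in the claimed bound (after passing inside the square root we only shrink the integrand).

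Next I would handle the complementary tail event $\cE^c$. Applying the same change-of-measure inequality with $F = |g|\cdot \indic\{\cE^c\}$ and then Cauchy--Schwarz once more gives
\begin{align}
\En_{\pi,\piref}[|g|\,\indic\{\cE^c\}] \leq \cC_\pi^{1/2}\prn*{\En_{\piref,\piref}[g^4]}^{1/4}\prn*{\bbP_{\piref,\piref}[\cE^c]}^{1/4}.
\end{align}
Since $\rstar$ and $\rhat$ take the same form as in \cref{lem:fourth_moment} (the $Z(x)$ term in $\rstar$ cancels in the difference $\Delta^{\rstar}$, so $\Delta^{\rstar}$ equals the log-ratio expression for $\pistarb$), I would use $|g|^4 \lesssim |\Delta^{\rstar}|^4 + |\Delta^{\rhat}|^4$ together with \cref{lem:fourth_moment} applied to both $\pistarb$ and $\pihat$ to conclude
\begin{align}
\prn*{\En_{\piref,\piref}[g^4]}^{1/4} \lesssim \log(\Cpp{\piref}{\pihat}) + \log(\Cpp{\piref}{\pistarb}) + 1.
\end{align}
A union bound gives $\bbP_{\piref,\piref}[\cE^c] \leq \bbP_{\piref,\piref}[|\Delta^{\rstar}|>\eta] + \bbP_{\piref,\piref}[|\Delta^{\rhat}|>\eta]$, and summing the bounded and tail contributions yields the claim.

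The only mildly delicate step is the bookkeeping for the tail: one must apply Cauchy--Schwarz twice (once to move from $(\pi,\piref)$ to $(\piref,\piref)$, picking up $\cC_\pi^{1/2}$, and once to separate the fourth moment from the tail probability), which is the source of the $1/4$-power and of the multiplicative factor $\log(\Cpp{\piref}{\pihat}) + \log(\Cpp{\piref}{\pistarb})$. Nothing in this argument requires $\pihat$ or $\pistarb$ to lie in $\Pi$; it is a purely analytic consequence of the log-likelihood tail and fourth-moment estimates already established.
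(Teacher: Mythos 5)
Your proof is correct and follows essentially the same route as the paper's: decompose the expectation across the truncation event $\cE$ and its complement, apply Cauchy--Schwarz with the likelihood ratio $\pi/\piref$ to change measure from $(\pi,\piref)$ to $(\piref,\piref)$ (using $\En_{\piref}[(\pi/\piref)^2]=\cC_\pi$), and on the tail event apply Cauchy--Schwarz a second time to separate the fourth moment from the tail probability, then invoke \cref{lem:fourth_moment} applied to both $\pihat$ and $\pistarb$ (noting, as you correctly observe, that the $Z(x)$ term cancels in $\Delta^{\rstar}$). The only cosmetic difference is that the paper presents the two Cauchy--Schwarz steps for the tail term as a single chained display rather than isolating the change-of-measure inequality first, but the argument is the same; your extra $+1$ in the fourth-moment bound is in fact the more careful form of the paper's constant.
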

Using \cref{lem:com} to bound the first two terms of \cref{eq:j-regret-decomp}, and using the fact that all $\pi\in\Pi$ have
$\cC_{\pi}\leq\Cconc$ and $\Cpp{\piref}{\pi}\leq\Closs$, we
have that
\begin{align}
&J(\pistar) - J(\pihat) \\
&\approxleq{} (\cC_{\pistar}+\Cconc)^{1/2}\cdot\prn*{\En_{\piref,\piref}\brk*{\abs*{\Delta^{\rstar}(x,y,y') -
    \Delta^{\rhat}(x,y,y')}^2\indic\crl*{\abs[\big]{\Delta^{\rstar}}\leq\eta,
\abs[\big]{\Delta^{\rhat}}\leq\eta}}}^{1/2}\\
&+(\cC_{\pistar}+\Cconc)^{1/2}\log(\Closs)\cdot\prn*{\bbP_{\piref,\piref}\brk*{\abs[\big]{\Delta^{\rstar}}>\eta}
+
\bbP_{\piref,\piref}\brk*{\abs[\big]{\Delta^{\rhat}}>\eta}}^{1/4}
+
\beta\Dkl{\pistar}{\piref}.
\label{eq:dpo1}
\end{align}
Let us overload notation and write $\Delta^{\pi}(x,y,y') =
\beta\log\prn*{\frac{\pi(y\mid{}x)}{\piref(y\mid{}x)}}-\beta\log\prn*{\frac{\pi(y'\mid{}x)}{\piref(y'\mid{}x)}}$,
so that $\Delta^{\pihat}=\Delta^{\rhat}$ and
$\Delta^{\pistarb}=\Delta^{\rstar}$. Since $\pistarb\in\Pi$, the
definition of $\pihat$ in \cref{eq:dpo} implies that
\begin{align}
  \sum_{(x,y,y')\in\cDpref}\prn*{\Delta^{\pihat}(x,y,y')-\Delta^{\pistarb}(x,y,y')
  }^2
  &\leq{}
  \min_{\pi\in\Pi}\sum_{(x,y,y')\in\cDpref}\prn*{\Delta^{\pi}(x,y,y')-\Delta^{\pistarb}(x,y,y')}^2\\
  &\leq{}
  \sum_{(x,y,y')\in\cDpref}\prn*{\Delta^{\pistarb}(x,y,y')-\Delta^{\pistarb}(x,y,y')}^2 \\
  &= 0. \label{eq:emp_loss}
\end{align}
\newcommand{\Brange}{B_{n,\deltafail}}
Define $\Brange := \log(2n\Closs\abs{\Pi}\deltafail^{-1})$. It is immediate that
\begin{align}
    \sum_{(x,y,y')\in\cDpref}\prn*{\Delta^{\pihat}(x,y,y')-\Delta^{\pistarb}(x,y,y')
  }^2 \indic\crl*{\abs[\big]{\Delta^{\pihat}}\leq\Brange,
  \abs[\big]{\Delta^{\pistarb}}\leq\Brange}
  \leq{} 0.
\end{align}
From here, Bernstein's inequality and a union bound implies that with probability at least $1-\deltafail$,
\begin{align}
&\En_{\piref,\piref}\brk*{\abs*{\Delta^{\pihat}(x,y,y') -
\Delta^{\pistarb}(x,y,y')}^2\indic\crl*{\abs[\big]{\Delta^{\pihat}}\leq\Brange,
\abs[\big]{\Delta^{\pistarb}}\leq\Brange}} \\
&\approxleq{} \frac{\Brange^2\log(\abs{\Pi}\deltafail^{-1})}{n}\rdef\vepsstat^2.
\end{align}
In particular, if we combine this with \cref{eq:dpo1} and set
$\eta=\Brange$, then \cref{lem:logprob_bound} implies that
\begin{align}
J(\pistar) - J(\pihat)
&\approxleq{} (\cC_{\pistar}+\Cconc)^{1/2}\cdot\vepsstat
+(\cC_{\pistar}+\Cconc)^{1/2}\log(\Closs)\cdot\deltafail^{1/4}
+
\beta\Dkl{\pistar}{\piref}.
\end{align}
Note that the above bound holds for any
$\pistar:\cX\to\Delta(\cY)$. We define $\pistar$ by
\[
  \pistar(y\mid
  x):= \frac{\piref(y\mid x) \indic[y \in
    \Ygamma(x)]}{\piref(\Ygamma(x)\mid x)},
\]
which can be seen to satisfy $\cC_{\pistar} \leq{} \Cstarg \leq
\Cconc$ and $\Dkl{\pistar}{\piref} \leq \log(\cC_{\pistar})
\leq \log(\Cconc)$. With this choice, we can further bound the
expression above by
\begin{align}
J(\pistar) - J(\pihat)
&\approxleq{} (\Cconc)^{1/2}\cdot\vepsstat
+(\Cconc)^{1/2}\log(\Closs)\cdot\deltafail^{1/4}
+
\beta\log(\Cconc)\label{eq:dpo2}
\end{align}
Given a desired failure probability $\deltafail$, applying the
bound above with
$\deltafail'\ldef{}\deltafail\wedge{}(\vepsstat/\log(\Closs))^4$
then gives
\begin{align}
J(\pistar) - J(\pihat)
&\approxleq{} (\Cconc)^{1/2}\cdot\vepsstat + \beta\log(\Cconc).
\end{align}
Finally, we observe that for our choice of $\pistar$,
under the margin condition with parameter $\gamma$, we
have
\begin{align}
J(\pistar) - J(\pihat)
&=\En_{x\sim\cdist}\En_{y,y'\sim\pistar,\pihat}\brk*{\log\prn*{\frac{\piref(y\mid{}x)}{\piref(y'\mid{}x)}}}\\
&\approxgeq{}\gammargin\cdot\En_{x\sim\cdist}\En_{y'\sim\pihat}\brk*{\indic\crl{y'\not \in \Ygamma(x)}} - \gamma\\
&\approxgeq{}\gammargin\delta\cdot\En_{x\sim\cdist}\brk*{\indic\crl{\pihat(\Ygamma(x)\mid{}x)\leq{}1-\delta}} - \gamma
\end{align}
where the first inequality uses \cref{ass:hard_margin-apx} together with the fact that $y \in \Ygamma(x)$ with probability $1$ over $x\sim \mu$ and $y \sim \pistar(\cdot\mid x)$. This proves the result.

\end{proof}

               \begin{proof}[\pfref{lem:com}]
          For any $\eta>0$, we can bound
          \begin{align}
            \En_{\pi,\piref}\brk*{\abs*{\Delta^{\rstar}(x,y,y') -
            \Delta^{\rhat}(x,y,y')}}
            &\leq{}             \En_{\pi,\piref}\brk*{\abs*{\Delta^{\rstar}(x,y,y') -
            \Delta^{\rhat}(x,y,y')}\indic\crl*{\abs[\big]{\Delta^{\rstar}}\leq\eta,
              \abs[\big]{\Delta^{\rhat}}\leq\eta}}\\
            &~~~~+  \En_{\pi,\piref}\brk*{\abs*{\Delta^{\rstar}(x,y,y') -
            \Delta^{\rhat}(x,y,y')}\indic\crl*{\abs[\big]{\Delta^{\rstar}}>\eta
                                \vee \abs[\big]{\Delta^{\rhat}}>\eta}}.
          \end{align}
          For the second term above, we can use Cauchy-Schwarz to bound
          \begin{align}
            &\En_{\pi,\piref}\brk*{\abs*{\Delta^{\rstar}(x,y,y') -
            \Delta^{\rhat}(x,y,y')}\indic\crl*{\abs[\big]{\Delta^{\rstar}}>\eta
              \vee \abs[\big]{\Delta^{\rhat}}>\eta}}\\
            &\leq{}\cC_{\pi}^{1/2}\cdot\prn*{\En_{\piref,\piref}\brk*{\abs*{\Delta^{\rstar}(x,y,y') -
            \Delta^{\rhat}(x,y,y')}^2\indic\crl*{\abs[\big]{\Delta^{\rstar}}>\eta
              \vee \abs[\big]{\Delta^{\rhat}}>\eta}}}^{1/2}\\
&\approxleq{}
              \cC_{\pi}^{1/2}\cdot\prn*{\bbP_{\piref,\piref}\brk*{\abs[\big]{\Delta^{\rstar}}>\eta}
              +
              \bbP_{\piref,\piref}\brk*{\abs[\big]{\Delta^{\rhat}}>\eta}}^{1/4}\\
            &\qquad\qquad\cdot{}\prn*{
              \En_{\piref,\piref}\brk*{\abs*{\Delta^{\rstar}(x,y,y')}^{4}}
              +               \En_{\piref,\piref}\brk*{\abs*{\Delta^{\rhat}(x,y,y')}^{4}}
                                                        }^{1/4}\\
            &\approxleq{}
              \cC_{\pi}^{1/2}\cdot\prn*{\bbP_{\piref,\piref}\brk*{\abs[\big]{\Delta^{\rstar}}>\eta}
              +
              \bbP_{\piref,\piref}\brk*{\abs[\big]{\Delta^{\rhat}}>\eta}}^{1/4}
              \cdot{}(\log(\Cpp{\piref}{\pihat})+\log(\Cpp{\piref}{\pistarb})),
          \end{align}
          where the last inequality follows from \cref{lem:fourth_moment}.

            Meanwhile, for the first term, for any $\lambda>0$ we can
            bound
            \begin{align}
              &\En_{\pi,\piref}\brk*{\abs*{\Delta^{\rstar}(x,y,y') -
                \Delta^{\rhat}(x,y,y')}\indic\crl*{\abs[\big]{\Delta^{\rstar}}\leq\eta,
                \abs[\big]{\Delta^{\rhat}}\leq\eta}}\\
              &\leq{}\cC_{\pi}^{1/2}\prn*{\En_{\piref,\piref}\brk*{\abs*{\Delta^{\rstar}(x,y,y') -
                \Delta^{\rhat}(x,y,y')}^2\indic\crl*{\abs[\big]{\Delta^{\rstar}}\leq\eta,
                \abs[\big]{\Delta^{\rhat}}\leq\eta}}}^{1/2}.
            \end{align}

        \end{proof}        

\subsection{Proof of \creftitle{thm:xpo-sharpening} and \creftitle{thm:xpo-softmax}}
\label{sec:proofs_exploration}

In this section we prove \cref{thm:xpo-sharpening} as well as
\cref{thm:xpo-softmax}, the application to linear softmax models. For
the formal theorem statements, see \cref{thm:xpo-sharpening-apx} and
\cref{thm:xpo-softmax-apx} respectively. The section is organized as
follows.
\begin{itemize}
\item \cref{sec:xpo_background} gives necessary background on
  KL-regularized policy optimization, as well as the Sequential
  Extrapolation Coefficient.
\item \cref{sec:xpo_reward} presents a generic guarantee for \xpo
  under a general choice of reward function.
\item \cref{sec:xpo_sharpening} instantiates the result above with the
  self-reward function $r(x,y) := \log \piref(y \mid x)$ to prove
  \cref{thm:xpo-sharpening}.
\item Finally, \cref{sec:xpo_linear} applies the preceding results to
  prove \cref{thm:xpo-softmax}.
\end{itemize}

\subsubsection{Background}
\label{sec:xpo_background}
To begin, we give background on KL-regularized policy optimization and the Sequential Extrapolation Coefficient. %

\paragraph{KL-regularized policy optimization} Let $\beta>0$ be given,
and let $r: \MX \times \MY \to [-\Rmax,\Rmax]$ be an unknown reward
function on prompt/action pairs. Define a value function $J_\beta$
over model class $\Pi$ by:
\begin{align}
  J_\beta(\pi) := \En_\pi[r(x,y)] - \beta \cdot
  \Dkl{\BP^\pi}{\BP^{\piref}}.
\end{align}
We refer to this as a \emph{KL-regularized policy optimization}
objective (we use the term ``policy'' following the reinforcement learning
literature; for our setting, policies correspond to models). Given query access to $r$, the goal is to find $\pihat \in\Pi$ such that 
\begin{align}
  J_\beta(\pistarb) - J_\beta(\pihat) \leq \epsilon
\end{align}
where $\pistarb(y\mid x) \propto \piref(y\mid x) \exp(\beta^{-1}
r(x,y))$ is the model that maximizes $J_\beta$ over all models $\pi: \MX \to \Delta(\MY)$.

We make use of the following assumptions, as in \iftoggle{workshop}{\cite{xie2024exploratory}}{\citet{xie2024exploratory}}.

\begin{assumption}[Realizability]\label{assumption:xpo-realizability}
It holds that $\pistarb \in \Pi$.
\end{assumption}

\begin{assumption}[Bounded density ratios]\label{assumption:xpo-bdr-apx}
For all $\pi \in \Pi$, $(x,y) \in \MX\times \MY$, $\abs[\big]{\beta \log \frac{\pi(y\mid x)}{\piref(y\mid x)}} \leq \Vmax$.
\end{assumption}
\dfc{Under realizability, we have $\Rmax\leq\Vmax$, so maybe we should
  just state the guarantee in terms of $\Vmax$ only?}\dhruv{i personally think it's a bit more opaque to have no explicit bound on reward range and derive it implicitly from realizability}

Finally, we require two definitions.

\begin{definition}[Sequential Extrapolation Coefficient for RLHF, \citep{xie2024exploratory}]\label{def:sec}
For a model class $\Pi$, reward function $r$, reference model $\piref$, and parameters $T \in \NN$ and $\beta,\lambda > 0$, the Sequential Extrapolation Coefficient is defined as
\begin{align}
&\SEC(\Pi,r,T,\beta,\lambda;\piref)  \\ 
&:= \sup_{\pi^{(1)},\dots,\pi^{(T)} \in \Pi}\left\{\sum_{t=1}^T\frac{\EE^{(t)}\left[\beta\log\frac{\pi^{(t)}(y\mid x)}{\piref(y\mid x)} - r(x,y) - \beta\log\frac{\pi^{(t)}(y'\mid x)}{\piref(y'\mid x)} + r(x,y')\right]^2}{\lambda \lor \sum_{i=1}^{t-1} \EE^{(i)}\left[\left(\beta\log\frac{\pi^{(t)}(y\mid x)}{\piref(y\mid x)} - r(x,y) - \beta\log\frac{\pi^{(t)}(y'\mid x)}{\piref(y'\mid x)} + r(x,y')\right)^2\right]}\right\}
\end{align}
where $\EE^{(t)}$ denotes expectation over $x \sim \mu$, $y \sim \pi^{(t)}(\cdot\mid x)$, and $y' \sim \piref(\cdot\mid x)$.
\end{definition}

\begin{definition}
Let $\epsilon>0$. We say that $\Psi \subseteq \Pi$ is a $\epsilon$-net for model class $\Pi$ if for every $\pi \in \Pi$ there exists $\pi' \in \Psi$ such that
\[\max_{x \in \MX} \max_{y \in \MY} \left| \log \frac{\pi(y\mid x)}{\pi'(y\mid x)}\right| \leq \epsilon.\]
We write $\cN(\Pi,\epsilon)$ to denote the size of the smallest $\epsilon$-net for $\Pi$.
\end{definition}

\subsubsection{Guarantees for KL-regularized policy optimization with \xpo}
\label{sec:xpo_reward}

\begin{algorithm}[t]
\caption{Reward-based variant of Exploratory Preference Optimization \citep{xie2024exploratory}}\label{alg:xpo}
\begin{algorithmic}
\State \textbf{input:} Base model $\piref:\MX\to\Delta(\MY)$, reward function $r:\MX\times\MY\to\RR$, number of iterations $T \in \NN$, KL regularization coefficient $\beta>0$, optimism coefficient $\alpha>0$.
\State Initialize: $\pi^{(1)} \gets \piref$, $\MD^{(0)} \gets \emptyset$.
\For{iteration $t=1,\dots,T$}
\State \textbf{Generate sample:} $(x^{(t)},y^{(t)}, \ty^{(t)})$ via $x^{(t)} \sim \mu$, $y^{(t)} \sim \pi^{(t)}(\cdot\mid x^{(t)})$, $\ty^{(t)} \sim \piref(\cdot\mid x^{(t)})$.
\State \textbf{Update dataset:} $\MD^{(t)} \gets \MD^{(t-1)} \cup \{(x^{(t)},y^{(t)},\ty^{(t)})\}$.
\State \textbf{Model optimization with global optimism:} 
\begin{align}
\pi^{(t+1)} &\gets \argmin_{\pi\in\Pi} \Bigg\{\alpha\sum_{(x,y,y') \in \MD^{(t)}} \log(\pi(y'\mid x)) \\
&- \sum_{(x,y,y') \in \MD^{(t)}} \left(\beta \log\frac{\pi(y\mid x)}{\piref(y\mid x)} - \beta \log\frac{\pi(y'\mid x)}{\piref(y'\mid x)} - (r(x,y) - r(x,y'))\right)^2\Bigg\}.
\end{align}
\EndFor
\State \textbf{return:} $\pihat \gets \argmax_{t \in [T+1]} J_\beta(\pi^{(t)})$. \Comment{Can estimate $J_\beta(\pi^{(t)})$ using validation data.}
\end{algorithmic}
\end{algorithm}

In this section, we give self-contained guarantees for the \xpo
algorithm (\cref{alg:xpo}). \xpo was introduced in
\cite{xie2024exploratory} for KL-regularized policy optimization in
the related setting where the learner only has indirect access to the
reward function $r$ through \emph{preference data} (specifically,
pairs of actions labeled via a Bradley-Terry model). Standard offline
algorithms for this problem, such as $\algofont{DPO}$, require bounds
on concentrability of the model class (see
e.g. \cref{eq:conc}). \iftoggle{workshop}{\cite{xie2024exploratory}}{\citet{xie2024exploratory}} show that the \xpo algorithm avoids this dependence, and instead requires bounded Sequential Extrapolation Coefficient. 

\cref{alg:xpo} is a variant of the \xpo algorithm which is adapted
to reward-based feedback (as opposed to preference-based feedback),
and \cref{thm:xpo-modified-apx} shows that this algorithm enjoys
guarantees similar to those of \cite{xie2024exploratory} for this
setting. Note that this is not an immediate corollary of the results in \iftoggle{workshop}{\cite{xie2024exploratory}}{\citet{xie2024exploratory}}, since the sample complexity in the preference-based setting scales with $e^{O(\Rmax)}$, and for our application to sharpening it is important to avoid this dependence. However, our algorithm and analysis only diverge from \iftoggle{workshop}{\cite{xie2024exploratory}}{\citet{xie2024exploratory}} in a few places.

\begin{theorem}[Variant of \iftoggle{workshop}{Theorem~3.1 in \cite{xie2024exploratory}}{\citet[Theorem~3.1]{xie2024exploratory}}]\label{thm:xpo-modified-apx}
Suppose that
\Cref{assumption:xpo-realizability,assumption:xpo-bdr-apx} hold. For
any $T \in \NN$, $\epdisc,\deltaf \in (0,1)$, by setting $\alpha := \frac{\beta}{\Rmax+\Vmax} \sqrt{\frac{\log(2\cN(\Pi,\epdisc)T/\deltaf)}{\SEC(\Pi) T}}$, \cref{alg:xpo} produces a model $\pihat \in \Pi$ such that with probability at least $1-\deltaf$,
\begin{align}
\beta\Dkl{\pihat}{\pistarb} = J_\beta(\pistarb) - J_\beta(\pihat) &\lesssim 
(\Rmax+\Vmax) \sqrt{\frac{\SEC(\Pi)\log(2\cN(\Pi,\epdisc)T/\deltaf)}{T}} \\ 
&+ \beta\epdisc \sqrt{\SEC(\Pi)T}
\end{align}
where $\SEC(\Pi) := \SEC(\Pi, r, T, \beta, \Vmax^2; \piref)$.
\end{theorem}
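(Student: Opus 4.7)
The plan is to mirror the strategy of \citet{xie2024exploratory} but take advantage of the fact that in the reward-based setting, the optimal KL-regularized model $\pistarb$ exactly reproduces the reward-differences at all sample points. Concretely, under \cref{assumption:xpo-realizability}, the softmax form $\pistarb(y\mid x)\propto\piref(y\mid x)\exp(\beta^{-1}r(x,y))$ gives $\beta\log\tfrac{\pistarb(y\mid x)}{\piref(y\mid x)} = r(x,y)-\beta\log Z(x)$, and so the implied reward-difference $\Delta^{\pistarb}(x,y,y') := \beta\log\tfrac{\pistarb(y\mid x)}{\piref(y\mid x)}-\beta\log\tfrac{\pistarb(y'\mid x)}{\piref(y'\mid x)}$ equals $r(x,y)-r(x,y')=:\Delta^{r}(x,y,y')$ identically. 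This is the key reason that the reward-based variant (as opposed to the Bradley–Terry/preference variant in \citet{xie2024exploratory}) can avoid exponential dependence on $\Rmax$: the squared loss in \cref{alg:xpo} vanishes exactly at the realizable model $\pistarb$.

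First, I would invoke the optimality of $\pi^{(t+1)}$: since $\pistarb\in\Pi$, plugging $\pistarb$ into the objective yields
\begin{align*}
  \sum_{i\le t}\bigl(\Delta^{\pi^{(t+1)}}(x^{(i)},y^{(i)},\ty^{(i)})-\Delta^{r}(x^{(i)},y^{(i)},\ty^{(i)})\bigr)^2 \le \alpha\sum_{i\le t}\log\!\tfrac{\pistarb(\ty^{(i)}\mid x^{(i)})}{\pi^{(t+1)}(\ty^{(i)}\mid x^{(i)})},
\end{align*}
where the \rhs{} is bounded by $\alpha\cdot t\cdot O(\Vmax)$ in expectation, using \cref{assumption:xpo-bdr-apx}. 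I would then apply \cref{lemma:freedman} (Freedman) together with the density-ratio bound to transfer this empirical statement to a population bound $\sum_{i\le t}\En^{(i)}[(\Delta^{\pi^{(t+1)}}-\Delta^{r})^2]\lesssim \alpha t\Vmax + \Vmax^2\log(\tfrac{T|\Psi|}{\deltaf})$, where $\En^{(i)}$ is expectation under $x\sim\mu$, $y\sim\pi^{(i)}$, $\ty\sim\piref$, and $\Psi$ is an $\epdisc$-cover of $\Pi$ with $|\Psi|\le\cN(\Pi,\epdisc)$. This cover introduces an additive $\beta\epdisc$-slack per sample (since an $\epdisc$ perturbation of $\log\pi$ perturbs $\Delta^{\pi}$ by $2\beta\epdisc$), which is the source of the $\beta\epdisc\sqrt{\SEC\,T}$ term in the final bound.

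Next I would convert per-iterate squared-error bounds into a cumulative regret bound via the $\SEC$. Using the exact identity $\Delta^{\pi}-\Delta^{r}=\beta\log\tfrac{\pi(y\mid x)}{\pistarb(y\mid x)}-\beta\log\tfrac{\pi(\ty\mid x)}{\pistarb(\ty\mid x)}$ together with $\ty\sim\piref$, one has
\begin{align*}
  \En^{(t)}[\Delta^{\pi^{(t)}}-\Delta^{r}]=\beta\Dkl{\pi^{(t)}}{\pistarb}+\beta\,\En_{x\sim\mu,\ty\sim\piref}\!\bigl[\log\tfrac{\pistarb(\ty\mid x)}{\pi^{(t)}(\ty\mid x)}\bigr]=J_\beta(\pistarb)-J_\beta(\pi^{(t)})+\text{reference-side term},
\end{align*}
and the reference-side term is controlled again by the optimism inequality. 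By Cauchy–Schwarz and \cref{def:sec},
\begin{align*}
  \sum_{t=1}^{T}\En^{(t)}[\Delta^{\pi^{(t)}}-\Delta^{r}]\le \sqrt{\SEC(\Pi)}\,\sqrt{\,T\lambda+\textstyle\sum_t\sum_{i<t}\En^{(i)}[(\Delta^{\pi^{(t)}}-\Delta^{r})^2]}\lesssim\sqrt{\SEC(\Pi)}\cdot\sqrt{T\Vmax^2+\alpha T^2\Vmax}.
\end{align*}
Finally, I would average over $t\in[T+1]$ (since $\pihat$ is the best iterate) and tune $\alpha\asymp\tfrac{\beta}{(\Rmax+\Vmax)}\sqrt{\log(\cN(\Pi,\epdisc)T/\deltaf)/(\SEC(\Pi)T)}$ to balance the optimism-regret trade-off, producing the stated rate.

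The main technical obstacle, and the place where the argument most subtly departs from \citet{xie2024exploratory}, will be the concentration step: the random variables $\log\tfrac{\pistarb(\ty\mid x)}{\pi^{(t+1)}(\ty\mid x)}$ and $(\Delta^{\pi}-\Delta^{r})^2$ must be controlled uniformly over $t$ and over the cover $\Psi$, while keeping the range of $\Delta^\pi-\Delta^r$ bounded by $\Rmax+\Vmax$ rather than by $e^{O(\Rmax)}$ (as in the logistic-loss analysis). The fact that $\Delta^{\pistarb}=\Delta^r$ exactly, so the squared loss has \emph{zero} population minimum (not merely $\log 2$), is what makes Freedman's inequality suffice, and is the crux of avoiding the $e^{O(\Rmax)}$ factor.
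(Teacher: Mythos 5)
Your proposal captures the essential argument and follows the same overall structure as the paper's proof: (i) realizability gives $\Delta^{\pistarb}=\Delta^{r}$ exactly, so the squared loss vanishes at $\pistarb$ and the optimality of $\pi^{(t+1)}$ yields an empirical loss bound scaling with the optimism term; (ii) concentration over an $\epdisc$-cover transfers this to population; (iii) the SEC converts per-iterate squared errors into cumulative regret; (iv) $\alpha$ is tuned. Your identification of why the reward-based variant avoids the $e^{O(\Rmax)}$ factor — that the squared loss has \emph{zero} population minimum at $\pistarb$, keeping all ranges $\bigoh(\Rmax+\Vmax)$ — is exactly the crux the paper exploits.

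Two places where the paper's execution differs from your plan, and where you would need to be careful. First, the paper does not re-derive the regret decomposition: it directly invokes ``Equation~(37) of \citet{xie2024exploratory}'', which produces a term $\frac{\alpha}{\beta}(\Rmax+\Vmax)^{2}\cdot\SEC(\Pi)$ (linear in $\SEC$) via an AM--GM step rather than your $\sqrt{\SEC(\Pi)}\sqrt{T\lambda+\cdots}$ via Cauchy--Schwarz. The two routes are essentially equivalent after tuning $\alpha$, but the paper's form packages the reference-side term and the squared error together into a single population quantity $L^{(t)}(\pi^{(t)})$, which simplifies the concentration step. Your plan instead controls the reference-side term separately; you correctly note this can be done via the optimism inequality, but you would need to be careful that the bound on $\hat L^{(t)}$ controls it since the squared-error part of $L^{(t)}$ is nonnegative. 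Second, the concentration step is not a single application of Freedman: the summand in $\hat L^{(t)}$ decomposes into $X_i = Y_i + Z_i$ with $Y_i\in[-\Vmax,\Vmax]$ (a bounded martingale, handled by Azuma--Hoeffding) and $Z_i\in[0,\beta/\alpha]$ (a nonnegative term, handled by the Freedman-type \cref{cor:filtration-tail}), and these two pieces have different ranges and require different inequalities; lumping them into one Freedman application would lose the favorable $\log$-dependence from the nonnegative part. Other than these refinements, your route is sound and yields the same rate.
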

\begin{proof}[\pfref{thm:xpo-modified-apx}]
For compactness, we abbreviate $\SEC(\Pi) := \SEC(\Pi,r,T,\beta,\Vmax^2;\piref)$. From Equation~(37) of \cite{xie2024exploratory}, we have
\begin{align}
&\frac{1}{T}\sum_{t=1}^T J_\beta(\pistarb) - J_\beta(\pi^{(t)}) \\ 
&\approxleq \frac{\alpha}{\beta} (\Rmax+\Vmax)^2 \cdot \SEC(\Pi) + \frac{\beta}{\alpha T} + \frac{\Vmax}{T} + \frac{1}{T}\sum_{t=2}^T \E_{(x,y) \sim \piref}[\beta\log \pi^{(t)}(y\mid x) - \beta\log\pistarb(y\mid x)] \\ 
&+ \frac{\beta}{\alpha (\Rmax+\Vmax)^2 T} \sum_{t=2}^T \E_{\substack{x \sim \mu \\ y,y' \sim \pibar^{(t)}\mid x}}\left[\left(\beta\log\frac{\pi^{(t)}(y\mid x)}{\piref(y\mid x)} - r(x,y) - \beta \log \frac{\pi^{(t)}(y'\mid x)}{\piref(y'\mid x)} + r(x,y')\right)^2\right]
\end{align} 
where $\pibar^{(t)} := \frac{1}{t-1} \sum_{i < t} \pi^{(i)} \otimes \piref$ denotes the model that, given $x \in \MX$, samples $i \sim \Unif([t-1])$ and then samples $y \sim \pi^{(i)}(\cdot\mid x)$ and $y' \sim \piref(\cdot\mid x)$. For any $2 \leq t \leq T$, define $L^{(t)}:\Pi \to [0,\infty)$ by
\begin{align}
L^{(t)}(\pi) 
&:= \E_{(x,y) \sim \piref}[\beta\log \pi(y\mid x) - \beta\log\pistarb(y\mid x)] \\ 
&+ \frac{\beta}{\alpha (\Vmax+\Rmax)^2} \E_{\substack{x \sim \mu \\ y,y' \sim \pibar^{(t)}\mid x}}\left[\left(\beta\log\frac{\pi(y\mid x)}{\piref(y\mid x)} - r(x,y) - \beta \log \frac{\pi(y'\mid x)}{\piref(y'\mid x)} + r(x,y')\right)^2\right].
\end{align}
Similarly, define
\begin{align}
\wh L^{(t)}(\pi) 
&:= \sum_{(x,y,y') \in \MD^{(t)}}[\beta\log \pi(y'\mid x) - \beta\log\pistarb(y'\mid x)] \\ 
&+ \frac{\beta}{\alpha (\Vmax+\Rmax)^2} \sum_{(x,y,y') \in \MD^{(t)}}\left[\left(\beta\log\frac{\pi(y\mid x)}{\piref(y\mid x)} - r(x,y) - \beta \log \frac{\pi(y'\mid x)}{\piref(y'\mid x)} + r(x,y')\right)^2\right]
\end{align}
where $\MD^{(t)}$ is the dataset defined in iteration $t$ of \cref{alg:xpo}. By \cref{assumption:xpo-realizability} we have $\pistarb \in \Pi$, so $\inf_{\pi\in\Pi} \wh L^{(t)}(\pi) \leq 0$. Moreover by definition, $\pi^{(t)} \in \argmin_{\pi\in\Pi} \wh L^{(t)}$. 

Let $\Psi$ be an $\epdisc$-net over $\Pi$, of size $\cN(\Pi,\epdisc)$. Fix any $\pi \in \Psi$ and $2 \leq t \leq T$, and define increments $X_i := \wh L^{(i)}(\pi) - \wh L^{(i-1)}(\pi)$ for $2 \leq i \leq t$, with the notation $\wh L^{(1)}(\pi) := 0$ so that $\wh L^{(t)}(\pi) = \sum_{i=2}^t X_i$. Let $\MF_i$ be the filtration induced by $\MD^{(i)}$ and define $\gamma_i := \EE[X_i\mid \MF_{i-1}]$. Observe that $(t-1)L^{(t)}(\pi) = \sum_{i=2}^t \gamma_i$. For any $i$, note that we can write $X_i = Y_i + Z_i$ where $Y_i \in [-\Vmax,\Vmax]$ and $Z_i \in [0,\beta/\alpha]$. By \cref{cor:filtration-tail}, it holds with probability at least $1-\deltaf/(2|\Pi|T)$
\[\sum_{i=2}^t \EE[Z_i\mid \MF_{i-1}] \lesssim \frac{\beta}{\alpha} \log(2|\Psi|T/\deltaf) + \sum_{i=2}^t Z_i.\]
By Azuma-Hoeffding, it holds with probability at least $1-\deltaf/(2|\Pi|T)$ that
\[\sum_{i=2}^t \EE[Y_i\mid \MF_{i-1}] \lesssim \Vmax\sqrt{T\log(2|\Psi|T/\deltaf)} + \sum_{i=2}^t Y_i.\]
Hence, with probability at least $1-\deltaf/(|\Psi|T)$ we have
\[(t-1)L^{(t)}(\pi) \lesssim \frac{\beta}{\alpha} \log(2|\Psi|T/\deltaf) + \Vmax\sqrt{T\log(2|\Psi|T/\deltaf)} + \wh L^{(t)}(\pi).\]
With probability at least $1-\deltaf$ this bound holds for all $\pi \in\Psi$ and $2 \leq t \leq T$. Henceforth condition on this event. Fix any $\pi \in \Pi$ and $2 \leq t \leq T$. Since $\Psi$ is an $\epsilon$-net for $\Pi$, we see by definition of $L^{(t)}$ that there is some $\pi' \in \Psi$ such that 
\[|L^{(t)}(\pi) - L^{(t)}(\pi')| \lesssim \beta \epdisc + \frac{\beta}{\alpha(\Vmax+\Rmax)^2} \cdot \beta\epdisc (\Vmax+\Rmax) \leq \beta\epdisc\left(1 + \frac{\beta}{\alpha(\Vmax+\Rmax)}\right)\]
and similarly
\[|\wh L^{(t)}(\pi) - \wh L^{(t)}(\pi')| \lesssim (t-1)\beta\epdisc\left(1 + \frac{\beta}{\alpha(\Vmax+\Rmax)}\right).\]

It follows that, for all $2 \leq t \leq T$, since $\wh L^{(t)}(\pi^{(t)}) \leq 0$, we get
\[(t-1)L^{(t)}(\pi^{(t)}) \lesssim \frac{\beta}{\alpha} \log(2|\Psi|T/\deltaf) + \Vmax\sqrt{T\log(2|\Psi|T/\deltaf)} + \beta\epdisc T\left(1 + \frac{\beta}{\alpha(\Vmax+\Rmax)}\right).\]
Hence,
\begin{align}
&\frac{1}{T}\sum_{t=1}^T J_\beta(\pistarb) - J_\beta(\pi^{(t)}) \\ 
&\approxleq \frac{\alpha}{\beta} (\Rmax+\Vmax)^2 \cdot \SEC(\Pi) + \frac{\beta}{\alpha T} + \frac{\Vmax}{T} + \frac{1}{T}\sum_{t=2}^T L^{(t)}(\pi^{(t)}) \\ 
&\approxleq (\Rmax+\Vmax) \sqrt{\frac{\SEC(\Pi)\log(2|\Psi|T/\deltaf)}{T}} + \beta\epdisc \sqrt{\SEC(\Pi)T}
\end{align}
by taking
\[\alpha := \frac{\beta}{\Rmax+\Vmax} \sqrt{\frac{\log(2|\Psi|T/\deltaf)}{\SEC(\Pi) T}}.\]
Since the output $\pihat$ of \cref{alg:xpo} satisfies $\pihat \in \argmax_{t \in [T]} J_\beta(\pi^{(t)})$, the claimed bound on $J_\beta(\pistarb) - J_\beta(\pihat)$ is immediate. Finally, observe that by definition of $\pistarb$,
\begin{align}
J_\beta(\pistarb) - J_\beta(\pihat)
&= \E_{(x,y) \sim \pistarb}\left[r(x,y) - \beta \log \frac{\pistarb(y\mid x)}{\piref(y\mid x)}\right] - \E_{(x,y) \sim \pihat}\left[r(x,y) - \beta \log \frac{\pihat(y\mid x)}{\piref(y\mid x)}\right] \\
&= \E_{(x,y) \sim \pistarb}\left[r(x,y) - \beta \log \frac{\pistarb(y\mid x)}{\piref(y\mid x)}\right] - \E_{(x,y) \sim \pihat}\left[r(x,y) - \beta \log \frac{\pistarb(y\mid x)}{\piref(y\mid x)}\right] \\
&\qquad+ \E_{(x,y) \sim \pihat}\left[\beta \log \frac{\pihat(y\mid x)}{\pistarb(y\mid x)}\right] \\
&= \beta \log \E_{(x,y) \sim \piref} [\exp(r(x,y))] - \beta \log \E_{(x,y) \sim \piref} [\exp(r(x,y))] + \beta\Dkl{\pihat}{\pistarb} \\ 
&= \beta\Dkl{\pihat}{\pistarb}.
\end{align}
This completes the proof.
\end{proof}

\subsubsection{Applying \xpo to \mlsharp}
\label{sec:xpo_sharpening}

We now prove \cref{thm:xpo-sharpening-apx}, the formal statement of
\cref{thm:xpo-sharpening}, which applies \xpo to \mlsharp. This result
is a straightforward corollary of \cref{thm:xpo-modified-apx} with the reward function $\rself(x,y) := \log \piref(y \mid x)$, together with the observation that low KL-regularized regret implies sharpness (under \cref{ass:hard_margin}).

  \begin{theorem}[Sharpening via active exploration]
    \label{thm:xpo-sharpening-apx}
There are absolute constants $c_{\mathrm{\ref{thm:xpo-sharpening-apx}}},
C_{\mathrm{\ref{thm:xpo-sharpening-apx}}}>0$ so that the following
holds. Let $\epsilon,\delta,\gammargin,\deltafail,\beta \in (0,1)$ and
$T \in \NN$ be given. For base model $\piref$, define reward function $r(x,y) := \log \piref(y\mid x)$. Let $\Rmax \geq 1+\max_{x,y}\log\frac{1}{\piref(y\mid x)}$. Suppose that $\piref$ satisfies \cref{ass:hard_margin} with parameter $\gammargin$, that $\beta^{-1} \geq 2\gammargin^{-1}\log(2|\MY|/\delta)$, and that there is $\epdisc \in (0,1)$ so that 
\[T \geq C_{\mathrm{\ref{thm:xpo-sharpening-apx}}} \frac{\Rmax^2 \SEC(\Pi) \log(2\cN(\Pi,\epdisc)T/\deltafail)}{\epsilon^2\delta^2 \beta^2}\]
and 
\[\epdisc \leq c_{\mathrm{\ref{thm:xpo-sharpening-apx}}} \frac{\epsilon\delta}{\sqrt{\SEC(\Pi) T}}\]
where $\SEC(\Pi) := \SEC(\Pi,r,T,\beta,\Rmax^2;\piref)$. Also suppose that $\pistarb \in \Pi$ where $\pistarb(y\mid x) \propto \piref^{1+\beta^{-1}} (y\mid x)$.

Then applying \cref{alg:xpo} with base model $\piref$, reward function $r$, iteration count $T$, regularization $\beta$, and optimism parameter $\alpha := \frac{\beta}{\Rmax} \sqrt{\frac{\log(2\cN(\Pi,\epdisc)T/\delta)}{\SEC(\Pi) T}}$ yields a model $\pihat \in \Pi$ such that with probability at least $1-\deltafail$,
\[\BP_{x \sim \mu}[\pihat(\Ystar(x)\mid x) < 1-\delta] \leq
  \epsilon.\]
The total sample complexity is
\[
  m = \bigoht\prn*{
    \frac{\Rmax^2 \SEC(\Pi) \log(\cN(\Pi,\epdisc)/\deltafail)\log^2(\abs{\cY}\delta^{-1})}{ \gammargin^2\epsilon^2\delta^2}
    }.
  \]
  \end{theorem}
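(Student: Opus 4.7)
The plan is to combine the generic convergence guarantee for \xpo (\cref{thm:xpo-modified-apx}), applied with reward $r(x,y) := \log\piref(y\mid x)$, with a pointwise reduction that translates the resulting KL bound into the sharpening guarantee. The argument has three steps; the conversion step is the main obstacle.

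First, I instantiate \cref{thm:xpo-modified-apx}. Realizability (\cref{assumption:xpo-realizability}) is assumed directly via $\pistarb \in \Pi$, the reward is bounded by $\Rmax$ by hypothesis, and \cref{assumption:xpo-bdr} gives the density-ratio bound $\Vmax = O(\Rmax)$ required by \cref{assumption:xpo-bdr-apx}. Plugging the hypotheses on $T$ and $\epdisc$ into the resulting bound yields
\[
\beta\,\Dkl{\pihat}{\pistarb} \leq c_0\,\beta\epsilon\delta
\]
with probability at least $1-\deltafail$, where $c_0 > 0$ is an absolute constant that can be made arbitrarily small by enlarging the constant $C_{\mathrm{\ref{thm:xpo-sharpening-apx}}}$ and shrinking $c_{\mathrm{\ref{thm:xpo-sharpening-apx}}}$.

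Second, I use the margin assumption together with the choice of $\beta$ to show that $\pistarb$ is pointwise $\delta/2$-sharp. Since $\pistarb(y\mid x)\propto\piref^{1+\beta^{-1}}(y\mid x)$, for any $y\notin\Ystar(x)$,
\[
\frac{\pistarb(\Ystar(x)\mid x)}{\pistarb(y\mid x)} \geq (1+\gammargin)^{1+\beta^{-1}} \geq \exp(\beta^{-1}\gammargin\log 2) \geq \frac{2|\MY|}{\delta},
\]
using \cref{ass:hard_margin}, $\log(1+\gammargin)\geq\gammargin\log 2$ for $\gammargin\in[0,1]$, and $\beta^{-1}\geq 2\gammargin^{-1}\log(2|\MY|/\delta)$. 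Summing over $y\notin\Ystar(x)$ and normalizing gives $\pistarb(\Ystar(x)\mid x)\geq 1-\delta/2$ for every $x\in\supp(\mu)$.

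Third, I convert the KL bound to the sharpening guarantee. By data processing, for each $x$,
\[
\Dkl{\pihat(\cdot\mid x)}{\pistarb(\cdot\mid x)} \geq \dkl{\Ber(p_x)}{\Ber(q_x)},
\]
where $p_x := \pihat(\Ystar(x)\mid x)$ and $q_x := \pistarb(\Ystar(x)\mid x)\geq 1-\delta/2$. The crux is the following tight lower bound: whenever $p_x<1-\delta$, $\dkl{\Ber(p_x)}{\Ber(q_x)}\geq c_1\delta$ for a universal constant $c_1>0$. This dependence is \emph{linear} in $\delta$, not quadratic; its tightness is precisely what drives the final sample complexity to scale as $\delta^{-2}$ (as in the theorem statement) rather than $\delta^{-4}$, and establishing this sharp rate is the main technical obstacle. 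The claim follows by observing that $\dkl{\Ber(p)}{\Ber(q)}$ is monotone decreasing in $p$ on $\{p\leq q\}$ and monotone increasing in $q$ on $\{q\geq p\}$, so it suffices to check the worst case $p=1-\delta$, $q=1-\delta/2$; a direct calculation using $\log x\geq 1-1/x$ then gives $\dkl{\Ber(1-\delta)}{\Ber(1-\delta/2)}\geq \delta(\log 2 - 1/2)$. Markov's inequality then yields
\[
\bbP_{x\sim\mu}\brk*{\pihat(\Ystar(x)\mid x) < 1-\delta} \leq \frac{\Dkl{\pihat}{\pistarb}}{c_1\delta} \leq \frac{c_0\epsilon}{c_1},
\]
which is at most $\epsilon$ for $c_0\leq c_1$. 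The sample complexity is $m = O(T)$; substituting the hypothesis on $T$ and $\beta^{-1}\asymp \gammargin^{-1}\log(|\MY|/\delta)$ produces the stated $\bigoht(\cdot)$ bound.
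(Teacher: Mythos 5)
Your proof is correct, and the overall architecture matches the paper's: instantiate \cref{thm:xpo-modified-apx} with the maximum-likelihood reward, show that $\pistarb$ is pointwise $\delta/2$-sharp via the margin condition and the scaling of $\beta$, then convert the resulting divergence bound into a sharpening guarantee by a Markov-type argument. The only genuine difference is the conversion step. The paper passes through Hellinger: it uses the $f$-divergence inequality $\Dhels{\cdot}{\cdot}\le \Dklshort$ to turn the KL bound into a Hellinger bound, and then the pointwise inequality $\Dhels{\pihat(\cdot\mid x)}{\pistarb(\cdot\mid x)}\ge \bigl(\sqrt{1-p_x}-\sqrt{1-q_x}\bigr)^2 \gtrsim \delta\cdot\indic\{p_x\le 1-\delta\}$. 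You instead stay in KL throughout, applying the binary data-processing inequality to reduce to $\dkl{\Ber(p_x)}{\Ber(q_x)}$ and then computing an exact lower bound via monotonicity and $\log x\ge 1-1/x$. Both routes yield the crucial $\Theta(\delta)$ (rather than $\Theta(\delta^2)$, as Pinsker alone would give) pointwise lower bound, which is indeed what buys the $\delta^{-2}$ rather than $\delta^{-4}$ sample complexity; you are right to flag that this sharpness is the load-bearing step. Your KL route is marginally more self-contained since it avoids citing the Hellinger-to-KL $f$-divergence inequality, but the two arguments are interchangeable and have the same proof complexity.
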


\begin{proof}[\pfref{thm:xpo-sharpening-apx}]
By definition of $r$, we have $|r(x,y)| \leq \Rmax$ for all $x,y$. By assumption, \cref{assumption:xpo-realizability} is satisfied, and by definition of $\Rmax$, \cref{assumption:xpo-bdr} is satisfied with parameter $\Vmax := \beta \Rmax \leq \Rmax$. It follows from \cref{thm:xpo-modified-apx} that with probability at least $1-\deltafail$, the output $\pihat$ of \cref{alg:xpo} satisfies
\begin{align} 
\beta\Dkl{\pihat}{\pistarb} &\lesssim (\Rmax+\Vmax)\sqrt{\frac{\SEC(\Pi) \log(2\cN(\Pi,\epdisc)T/\deltafail)}{T}} \\
&\qquad+ \beta\epdisc \sqrt{\SEC(\Pi) T}.
\end{align}
By choice of $T$ and $\epdisc$, so long as $C_{\mathrm{\ref{thm:xpo-sharpening-apx}}}>0$ is chosen to be a sufficiently large constant and $c_{\mathrm{\ref{thm:xpo-sharpening-apx}}}>0$ is chosen to be a sufficiently small constant, we have $\beta\Dkl{\pihat}{\pistarb} \leq \frac{1}{12}\beta\epsilon\delta$, so by e.g. Equation~(16) of \cite{sason2016f}, $\Dhels{\pihat}{\pistarb} \leq \epsilon\delta/(12)$.

For any $x \in \MX$ and $y' \in \MY \setminus \Ystar(x)$, by \cref{ass:hard_margin} and definition of $\pistarb$ we have
\begin{align}
\frac{1}{\pistarb(y'\mid x)} \geq \frac{\max_{y \in \MY} \pistarb(y\mid x)}{\pistarb(y'\mid x)} &= \left(\frac{\max_{y \in \MY} \piref(y\mid x)}{\piref(y'\mid x)}\right)^{1+\beta^{-1}} \\
&\geq (1+\gammargin)^{1+\beta^{-1}} \geq e^{\gammargin/(2\beta)} \geq \frac{2|\MY|}{\delta}
\end{align}
where the final inequality is by the assumption on $\beta$ in the
theorem statement. Therefore
\[\pistarb(\Ystar(x)\mid x) \geq 1 - \sum_{y' \in \MY \setminus \Ystar(x)} \pistarb(y'\mid x) \geq 1-\frac{\delta}{2}.\]
Now for any $x$, we can lower bound
\begin{align}
\Dhels{\pihat(\cdot\mid x)}{\pistarb(\cdot\mid x)}
&\geq \left(\sqrt{1 - \pihat(\Ystar(x)\mid x)} - \sqrt{1 - \pistarb(\Ystar(x)\mid x)}\right)^2 \\ 
&\geq \frac{\delta}{12} \cdot \indic\{\pihat(\ystar(x)\mid x) \leq 1-\delta\}.
\end{align}

Hence,
\begin{align}
\BP_{x\sim\mu}[\pihat(\Ystar(x)\mid x)<1-\delta]
&\leq \frac{12}{\delta} \EE_{x \sim \mu} \Dhels{\pihat(\cdot\mid x)}{\pistarb(\cdot\mid x)} \\ 
&= \frac{12}{\delta} \Dhels{\pihat}{\pistarb} \\ 
&\leq \epsilon.
\end{align}
as claimed.
\end{proof}

\subsubsection{Application: linear softmax models}
\label{sec:xpo_linear}

In this section we apply \cref{thm:xpo-sharpening} to the class of linear softmax models, proving \cref{thm:xpo-softmax}. This demonstrates that \cref{alg:xpo} can achieve an exponential improvement in sample complexity compared to \bestofnalg.

\begin{definition}[Linear softmax model]
Let $d \in \NN$ be given, and let $\phi: \MX \times \MY \to \RR^d$ be
a feature map with $\norm{\phi(x,y)}_2 \leq 1$ for all $x,y$. Let
$\piz: \MX \to \Delta(\MY)$ be the uniform model $\piz(y\mid x) :=
\frac{1}{|\MY|}$, and let $B \geq 1$.\footnote{We use the notation
  $\piz$ to highlight the fact that $\piz=\pi_\theta$ for $\theta=0$.}
  We consider the linear softmax model class $\Pi_{\phi,B} := \{\pi_\theta: \theta \in \RR^d, \norm{\theta}_2 \leq B\}$ where $\pi_\theta:\MX \to \Delta(\MY)$ is defined by
\[\pi_\theta(y\mid x) \propto \piz(y\mid x) \exp(\langle \phi(x,y), \theta\rangle).\]
\end{definition}

\begin{theorem}[Restatement of \cref{thm:xpo-softmax}]\label{thm:xpo-softmax-apx}
Let $\epsilon,\delta,\gammargin,\deltafail \in (0,1)$ be
given. Suppose that $\piref = \pi_{\theta^\star} \in \Pi_{\phi,B}$ for
some $\theta^\star \in \RR^d$ with $\norm{\theta^\star}_2 \leq
\frac{\gammargin B}{3\log(2|\MY|/\delta)}$. Also, suppose that
$\piref$ satisfies \cref{ass:hard_margin} with parameter
$\gammargin$. Then \cref{alg:xpo} with base model $\piref$, reward
function $r(x,y) := \log \piref(x,y)$, regularization parameter $\beta
:= \gammargin/(2\log(2|\MY|/\delta))$, and optimism parameter
$\alpha(T) \propto
\frac{\beta}{B+\log(|\MY|)}\sqrt{\frac{d\log(BdT/(\epsilon\delta))+\log(T/\deltafail)}{dT\log(T)}}$
returns an $(\epsilon,\delta)$-sharpened model with probability at
least $1-\rho$, and has sample complexity 
\[m = \poly(\epsilon^{-1},\delta^{-1},\gammargin^{-1}, d, B, \log(|\MY|/\deltafail)).\]
\end{theorem}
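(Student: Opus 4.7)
The plan is to invoke \cref{thm:xpo-sharpening-apx} with the linear softmax class $\Pi_{\phi,B}$, which requires verifying four hypotheses: realizability of $\pistarb$, a bound on the reward range $\Rmax$, a bound on the covering number $\cN(\Pi_{\phi,B},\epdisc)$, and a bound on the sequential extrapolation coefficient $\SEC(\Pi_{\phi,B},r,T,\beta,\Rmax^2;\piref)$. Realizability is direct: since $\pistarb(y\mid x) \propto \piref^{1+\beta^{-1}}(y\mid x)$, we have $\pistarb = \pi_{\theta^{\pistarb}}$ for $\theta^{\pistarb} := (1+\beta^{-1})\theta^\star$, and the assumed bound $\|\theta^\star\|_2 \leq \gammargin B/(3\log(2|\MY|/\delta))$ combined with $\beta = \gammargin/(2\log(2|\MY|/\delta))$ yields $\|\theta^{\pistarb}\|_2 \leq B$. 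The log-probability range is $\Rmax = O(B+\log|\MY|)$ since $|\langle\phi(x,y),\theta\rangle| \leq B$ and $\piz$ is uniform, and a standard Euclidean $\epdisc$-net on the parameter ball, combined with a Lipschitz-in-$\theta$ bound on $\log \pi_\theta$, yields $\log\cN(\Pi_{\phi,B},\epdisc) = O(d\log(B/\epdisc))$.

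The crux of the argument is bounding $\SEC(\Pi_{\phi,B}) = \wt O(d)$, which I carry out via a standard eluder-dimension / elliptic-potential computation enabled by the linear structure. The key simplification is that, thanks to $\pistarb \in \Pi_{\phi,B}$, the ``RLHF residual''
\[
g_t(x,y,y') := \beta\log\frac{\pi^{(t)}(y\mid x)}{\piref(y\mid x)} - r(x,y) - \beta\log\frac{\pi^{(t)}(y'\mid x)}{\piref(y'\mid x)} + r(x,y')
\]
collapses to the purely linear form $\langle \psi(x,y,y'),\, w^{(t)}\rangle$, where $\psi := \phi(x,y)-\phi(x,y')$ with $\|\psi\|_2 \leq 2$ and $w^{(t)} := \beta(\theta^{(t)} - \theta^{\pistarb})$ with $\|w^{(t)}\|_2 \leq 2\beta B$: all $x$-only contributions (log-partition functions and the uniform base $\piz$) cancel in the double difference.

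Writing $v_t := \EE^{(t)}[\psi]$ and $A_i := \EE^{(i)}[\psi\psi^\top]$ (with $\Tr(A_i) \leq 4$), the SEC reduces to $\sum_t (v_t^\top w^{(t)})^2 / (\Rmax^2 \lor w^{(t)\top} \Sigma_{t-1} w^{(t)})$ with $\Sigma_{t-1} := \sum_{i<t} A_i$. Setting $\Gamma_{t-1} := \lambda_0 I + \Sigma_{t-1}$ with $\lambda_0 := \Rmax^2/(4\beta^2 B^2)$, the bound $\lambda_0 \|w^{(t)}\|_2^2 \leq \Rmax^2$ implies that the denominator dominates $\tfrac{1}{2} w^{(t)\top}\Gamma_{t-1}w^{(t)}$. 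Cauchy--Schwarz in the $\Gamma_{t-1}$ norm, together with the Jensen inequality $v_t v_t^\top \preceq A_t$, reduces the SEC to $\sum_t \Tr(\Gamma_{t-1}^{-1} A_t) = \wt O(d)$ by the elliptic potential lemma (\cref{lemma:elliptic}), with only polylogarithmic dependence on $B$, $\beta$, $\Rmax$, and $T$.

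Substituting $\SEC = \wt O(d)$, $\log\cN = O(d\log(B/\epdisc))$, and $\Rmax = O(B+\log|\MY|)$ into \cref{thm:xpo-sharpening-apx}, and choosing $T$ and $\epdisc$ as in its hypothesis, the total sample complexity is $\poly(\epsilon^{-1},\delta^{-1},\gammargin^{-1},d,B,\log(|\MY|/\deltafail))$ as claimed. The main obstacle is obtaining the $\wt O(d)$ SEC bound without picking up a factor of $e^{O(\Rmax)}$: this is exactly what the linearization of $g_t$ afforded by the softmax parameterization --- together with the realizability of $\pistarb$ --- delivers, and is why the argument yields polynomial rather than exponential dependence on $B$.
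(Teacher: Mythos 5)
Your proof is correct, and its high-level architecture matches the paper's: verify realizability of $\pistarb$ in $\Pi_{\phi,B}$, bound the covering number, bound the reward range $\Rmax$, bound the SEC, and then invoke \cref{thm:xpo-sharpening-apx}. Your realizability and covering-number arguments are identical to the paper's (\cref{lemma:softmax-net}).

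Where you genuinely diverge is in the SEC bound. The paper proves a \emph{general} lemma (\cref{lemma:sec-softmax-bound}) for arbitrary bounded rewards: it augments the feature map to $\phit(x,y) := [\phi(x,y),\, r(x,y)/\Rmax] \in \RR^{d+1}$ and $\thetat^{(t)} := [\beta(\theta^{(t)}-\theta^\star),\, -\Rmax]$, which makes the SEC residual a linear form in $d{+}1$ dimensions, and then runs the elliptic potential argument there. You instead exploit the specific structure $r(x,y) = \log\piref(y\mid x)$ together with the identity $\theta^{\pistarb} = (1+\beta^{-1})\theta^\star$: the log-partition and uniform-base terms cancel in the double difference, collapsing the residual exactly to $\langle \phi(x,y)-\phi(x,y'),\, \beta(\theta^{(t)}-\theta^{\pistarb})\rangle$ --- a purely linear form in the original $d$ dimensions, with no lift. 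Both routes then hand off the same data to \cref{lemma:elliptic} and produce $\SEC = \wt O(d)$. Your specialization is slightly cleaner for this problem instance and highlights why realizability of $\pistarb$ matters for the linear structure, while the paper's lemma buys generality (any bounded reward, any $\pistar\in\Pi_{\phi,B}$) at the cost of a bookkeeping dimension. Both are sound; the constants and $\log T$ factors match up to universal constants, and the remainder of your substitution into \cref{thm:xpo-sharpening-apx} is the same as the paper's.
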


Before proving the result, we unpack the
conditions. \cref{thm:xpo-softmax-apx} requires the base model
$\piref$ to lie in the model class and also satisfy the margin
condition (\cref{ass:hard_margin}). For any constant
$\epsilon,\delta>0$, the sharpening algorithm then succeeds with
sample complexity $\poly(d, \gammargin^{-1}, B, \log(|\MY|))$. These
conditions are non-vacuous; in fact, there are fairly natural examples
for which
non-exploratory algorithm such as \bestofnalg require sample
complexity $\exp(\Omega(d))$, whereas all of the above parameters are
$\poly(d)$. The following is one such example.

\begin{example}[Separation between \rlhfalg and \sftalg]\label{ex:softmax}
Set $\MX = \{x\}$ and let $\MY\subset\bbR^{d}$ be a $1/4$-packing of
the unit sphere in $\RR^d$ of cardinality $\exp(\Theta(d))$. Define
$\phi: \MX \times \MY \to \RR^d$ by $\phi(x,y) := y$, and let $B =
Cd\log d$ for an absolute constant $C>0$. Fix any $y^\star \in \MY$ and define $\piref := \pi_{\theta^\star} \in \Pi_{\phi,B}$ by $\theta^\star := y^\star$. Then for any $y \neq y^\star$, we have $\langle y, y^\star\rangle \leq 1 - \Omega(1)$, so 
\[\frac{\piref(y^\star\mid x)}{\piref(y\mid x)} = \exp(\langle y^\star - y, y^\star\rangle) = \exp(\Omega(1)) = 1+\Omega(1).\]
Thus, $\piref$ satisfies \cref{ass:hard_margin} with $\gammargin =
\Omega(1)$. Moreover, $\norm{\theta^\star}_2 = 1 \leq \frac{\gammargin
  B}{3\log(2|\MY|/\delta)}$ for any $\delta = 1/\poly(d)$, so long as
$C$ is a sufficiently large constant. It follows from
\cref{thm:xpo-softmax} that \cref{alg:xpo} computes an
$(\epsilon,\delta)$-sharpened model with sample complexity
$\poly(\epsilon^{-1},\delta^{-1},d)$. However, since
$\piref(y^\star\mid x) \leq \piref(y\mid x) \cdot \exp(2)$ for all $y
\in \MY$, it is clear that 
\[\Cstar = \EE\brk*{\frac{1}{\piref(\Ystar(x)\mid x)}} = \frac{1}{\piref(y^\star\mid x)} = \Omega(|\MY|) = \exp(\Omega(d)).\]
Thus, the sample complexity guarantee for \bestofnalg in
\cref{thm:bestofn} will incur \emph{exponential} dependence on $d$ in
the sample complexity. It is straightforward to check that this
dependence is real for \bestofnalg, and not just an artifact of the analysis, since the model that \bestofnalg is trying to learn (via MLE) will itself not be sharp in this example, unless $\exp(\Omega(d))$ samples are drawn per prompt.
\end{example}

We now proceed to the proof of \cref{thm:xpo-softmax-apx}, which requires the following bounds on the covering number and the Sequential Extrapolation Coefficient of $\Pi_{\phi,B}$.

\begin{lemma}\label{lemma:softmax-net}
Let $\epdisc>0$. Then $\Pi_{\phi,B}$ has an $\epdisc$-net of size $(6B/\epdisc)^d$.
\end{lemma}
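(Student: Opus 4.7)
The plan is to reduce the covering problem for $\Pi_{\phi,B}$ to the standard covering problem for the Euclidean ball $\{\theta \in \RR^d : \|\theta\|_2 \leq B\}$. The key observation is that the ``log-ratio'' distance between two linear softmax models $\pi_\theta, \pi_{\theta'}$ is controlled, up to a constant, by $\|\theta-\theta'\|_2$.

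More concretely, for any $x,y$ we can write
\begin{align}
\log\frac{\pi_\theta(y\mid x)}{\pi_{\theta'}(y\mid x)}
= \langle \phi(x,y), \theta-\theta'\rangle - \bigl(\log Z_\theta(x) - \log Z_{\theta'}(x)\bigr),
\end{align}
where $Z_\theta(x) = \sum_{y'}\piz(y'\mid x)\exp(\langle \phi(x,y'),\theta\rangle)$. The first term is bounded by $\|\phi(x,y)\|_2\cdot\|\theta-\theta'\|_2 \leq \|\theta-\theta'\|_2$ since $\|\phi(x,y)\|_2\leq 1$. For the log-partition difference, I would use that $\log Z_\theta(x)$ is convex in $\theta$ with $\nabla_\theta \log Z_\theta(x) = \En_{y\sim \pi_\theta(\cdot\mid x)}[\phi(x,y)]$, which lies in the convex hull of $\{\phi(x,y')\}_{y'}$ and hence has $\ell_2$-norm at most $1$. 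This makes $\log Z_\theta(x)$ a $1$-Lipschitz function of $\theta$, giving $|\log Z_\theta(x) - \log Z_{\theta'}(x)|\leq \|\theta-\theta'\|_2$. Combining the two bounds,
\begin{align}
\max_{x,y}\left|\log\frac{\pi_\theta(y\mid x)}{\pi_{\theta'}(y\mid x)}\right| \leq 2\|\theta-\theta'\|_2.
\end{align}

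With this Lipschitz bound in hand, it suffices to take $\Psi = \{\pi_\theta : \theta\in T\}$ where $T$ is any $(\epdisc/2)$-net of the Euclidean ball $\{\theta\in\RR^d : \|\theta\|_2\leq B\}$ in $\ell_2$. By a standard volumetric argument (e.g., Vershynin, Corollary 4.2.13), such a net exists with cardinality at most $(3B/(\epdisc/2))^d = (6B/\epdisc)^d$, which yields the claimed bound $\cN(\Pi_{\phi,B},\epdisc)\leq (6B/\epdisc)^d$. There is no real obstacle here; the only thing to verify carefully is the $1$-Lipschitz property of $\log Z_\theta(x)$, which follows immediately from the gradient computation and $\|\phi(x,y)\|_2\leq 1$.
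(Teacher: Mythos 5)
Your proof is correct and follows essentially the same route as the paper: decompose the log-ratio into the linear term $\langle \phi(x,y), \theta-\theta'\rangle$ and the log-partition difference, bound each by $\|\theta-\theta'\|_2$, and conclude with a standard $(\epdisc/2)$-net of the Euclidean ball of radius $B$. The only cosmetic difference is in bounding the log-partition term: you invoke the $1$-Lipschitz property via $\nabla_\theta \log Z_\theta(x) = \En_{y\sim\pi_\theta(\cdot\mid x)}[\phi(x,y)]$, whereas the paper sandwiches the ratio $Z_{\theta_i}(x)/Z_\theta(x)$ directly using $\exp(\langle\phi(x',y'),\theta_i-\theta\rangle) \in [\exp(-\epdisc/2),\exp(\epdisc/2)]$; both yield the same bound.
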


\begin{proof}[\pfref{lemma:softmax-net}]
By a standard packing argument, there is a set $\{\theta_1,\dots,\theta_N\}$ of size $(6B/\epdisc)^d$ such that for every $\theta \in \RR^d$ with $\norm{\theta}_2 \leq B$ there is some $i \in [N]$ with $\norm{\theta_i - \theta}_2 \leq \epdisc/2$. Now for any $x\in\MX$ and $y \in \MY$,
\begin{align}
\log \frac{\pi_{\theta}(y\mid x)}{\pi_{\theta_i}(y\mid x)} 
&= \log \frac{\exp(\langle \phi(x,y),\theta\rangle)}{\exp(\langle \phi(x,y),\theta_i\rangle)} + \log \frac{\E_{(x',y')\sim \piz} \exp(\langle \phi(x',y'), \theta_i\rangle)}{\E_{(x',y')\sim \piz} \exp(\langle \phi(x',y'), \theta\rangle)} \\ 
&= \langle \phi(x,y),\theta-\theta_i\rangle + \log \frac{\E_{(x',y')\sim \piz} \left[\exp(\langle \phi(x',y'), \theta\rangle)\exp(\langle \phi(x',y'), \theta_i-\theta\rangle)\right]}{\E_{(x',y')\sim \piz} \exp(\langle \phi(x',y'), \theta\rangle)}.
\end{align}
The first term is bounded by $\epdisc/2$ in magnitude. In the second term, we have $\exp(\langle \phi(x',y'),\theta_i-\theta\rangle) \in [\exp(-\epdisc/2),\exp(\epdisc/2)]$, so the ratio of expectations lies in $[\exp(-\epdisc/2),\exp(\epdisc/2)]$ as well, and so the log-ratio lies in $[-\epdisc/2,\epdisc/2]$. In all, we get $\left|\log \frac{\pi_{\theta}(y|x)}{\pi_{\theta_i}(y\mid x)}\right| \leq \epdisc$. Thus, $\{\pi_{\theta_1},\dots,\pi_{\theta_N}\}$ is an $\epdisc$-net for $\Pi$.
\end{proof}

\begin{lemma}\label{lemma:sec-softmax-bound}
Let $r:\MX\times\MY \to [-\Rmax,\Rmax]$ be a reward function and let $T \in \NN$ and $\beta>0$. If $\lambda \geq 4\beta^2 B^2 + \Rmax^2$ then for any $\pistar \in \Pi_{\phi,B}$,
\[\SEC(\Pi_{\phi,B}, r,T, \beta,\lambda; \pistar) \lesssim d\log(T+1).\]
\end{lemma}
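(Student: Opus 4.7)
My plan is to reduce the SEC bound to the elliptic potential lemma (\cref{lemma:elliptic}) via a dimensional lift that absorbs the reward offset $r(x,y)-r(x,y')$ into a linear form. First I would write each $\pi^{(t)} \in \Pi_{\phi,B}$ as $\pi_{\theta^{(t)}}$ with $\|\theta^{(t)}\|_2 \leq B$ and set $w^{(t)} := \theta^{(t)} - \theta^\star$, so $\|w^{(t)}\|_2 \leq 2B$. Because the partition functions of $\pi^{(t)}$ and $\pistar$ cancel pairwise in the log-ratio differences, the SEC integrand at step $t$ simplifies to the affine-in-$w^{(t)}$ function
\begin{align}
F_t(x,y,y') := \beta\langle \phi(x,y)-\phi(x,y'),\, w^{(t)}\rangle - \bigl(r(x,y) - r(x,y')\bigr).
\end{align}

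To linearize $F_t$, I would pick a scale $\kappa := \Rmax/\beta > 0$ and lift to $\RR^{d+1}$ via $\widetilde{w}^{(t)} := (w^{(t)},\kappa)$ and $\widetilde\phi(x,y,y') := \bigl(\beta(\phi(x,y)-\phi(x,y')),\, -(r(x,y)-r(x,y'))/\kappa\bigr)$, so that $F_t = \langle \widetilde{w}^{(t)},\widetilde\phi\rangle$ with the bounds $\|\widetilde{w}^{(t)}\|_2^2 \leq R^2 := 4B^2 + \Rmax^2/\beta^2$ and $\|\widetilde\phi\|_2^2 \leq K := 8\beta^2$. Setting $\widetilde{A}_i := \EE^{(i)}[\widetilde\phi \widetilde\phi^\top]$, we get $\Tr(\widetilde{A}_i)\leq K$. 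By Jensen, $(\EE^{(t)}[F_t])^2 \leq \EE^{(t)}[F_t^2] = (\widetilde{w}^{(t)})^\top \widetilde{A}_t \widetilde{w}^{(t)}$, and, using $\|\widetilde{w}^{(t)}\|^2 \leq R^2$ together with $a\vee b \geq (a+b)/2$ and the choice $\mu := \lambda/R^2$, we get
\begin{align}
\lambda \vee \sum_{i<t} \EE^{(i)}[F_t^2] \;\geq\; \tfrac{1}{2}(\widetilde{w}^{(t)})^\top \Gamma_{t-1} \widetilde{w}^{(t)}, \qquad \Gamma_{t-1} := \mu I_{d+1} + \sum_{i<t}\widetilde{A}_i.
\end{align}

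A Cauchy–Schwarz step in the $\Gamma_{t-1}$-norm then gives $(\widetilde{w}^{(t)})^\top \widetilde{A}_t \widetilde{w}^{(t)} \leq \Tr(\Gamma_{t-1}^{-1}\widetilde{A}_t)\cdot (\widetilde{w}^{(t)})^\top \Gamma_{t-1} \widetilde{w}^{(t)}$. Summing over $t$ and invoking \cref{lemma:elliptic} in $\RR^{d+1}$ (with regularizer $\mu$ and trace bound $K$) yields
\begin{align}
\SEC(\Pi_{\phi,B}, r, T, \beta, \lambda; \pistar) \;\leq\; 2\sum_{t=1}^{T} \Tr(\Gamma_{t-1}^{-1}\widetilde{A}_t) \;\leq\; \frac{2(d+1)K\log\bigl((T+1)K/\mu\bigr)}{\mu \log(1+K/\mu)}.
\end{align}
The hypothesis $\lambda \geq 4\beta^2 B^2 + \Rmax^2$ combined with $\kappa = \Rmax/\beta$ gives $R^2 K = 32\beta^2 B^2 + 8\Rmax^2 \leq 8\lambda$, so $K/\mu = KR^2/\lambda \leq 8$. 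Since $x\mapsto \log(1+x)/x$ is decreasing, $\mu\log(1+K/\mu) \gtrsim K$ on this range, while $\log((T+1)K/\mu) \lesssim \log(T+1)$; the bound collapses to the advertised $O(d\log(T+1))$.

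The main obstacle is the constant-in-$w^{(t)}$ offset $\Delta r$ in $F_t$, which blocks a direct application of standard linear-SEC / eluder-dimension machinery. Adding a single extra coordinate at scale $\kappa$ removes the offset at the price of inflating both $R^2$ and $K$; choosing $\kappa$ is the key balancing act, since $\kappa$ too small blows up $\|\widetilde\phi\|^2$ while $\kappa$ too large blows up $\|\widetilde{w}\|^2$. The hypothesis $\lambda \geq 4\beta^2 B^2 + \Rmax^2$ is exactly what is needed for the balanced product $R^2K$ to be $O(\lambda)$, which is in turn what makes the elliptic potential bound collapse to $d\log(T+1)$.
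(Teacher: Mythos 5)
Your proof is correct and takes essentially the same route as the paper: lift to $\RR^{d+1}$ by appending a coordinate for the reward offset so the SEC integrand becomes a linear form, then reduce to the elliptic potential lemma (\cref{lemma:elliptic}). The only difference is a rescaling---you put $\beta$ into the lifted features and choose $\kappa=\Rmax/\beta$, whereas the paper defines $\phit(x,y):=[\phi(x,y),\,r(x,y)/\Rmax]$ and $\thetat^{(t)}:=[\beta(\theta^{(t)}-\theta^\star),\,-\Rmax]$, which makes $\Tr(\Sigma^{(t)})\lesssim 1$ a constant and lets the potential lemma run with unit regularizer, streamlining the final step; the two lifts are related by a change of scale (when $\lambda=4\beta^2B^2+\Rmax^2$ your $\mu=\lambda/R^2=\beta^2$ and your $\Gamma_{t-1}$ is exactly $\beta^2$ times the paper's), so the derivations coincide.
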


\begin{proof}[\pfref{lemma:sec-softmax-bound}]
Fix $\pi^{(1)},\dots,\pi^{(T)} \in \Pi_{\phi,B}.$ By definition, there are some $\theta^{(1)},\dots,\theta^{(T)} \in \RR^d$ with $\norm{\theta^{(t)}}_2 \leq B$ and \[\pi^{(t)}(y\mid x)\propto \piz(y\mid x)\exp(\langle \phi(x,y),\theta^{(t)}\rangle)\] for all $t \in [T]$ and $(x,y)\in\MX\times\MY$. Similarly, there is some $\theta^\star \in \RR^d$ with $\norm{\theta^\star}_2 \leq B$ and $\pistar(y\mid x) \propto \piz(y\mid x)\exp(\langle \phi(x,y),\theta^\star\rangle)$. 

Define $\phit:\MX\times\MY \to \RR^{d+1}$ by $\phit(x,y) := [\phi(x,y), \frac{r(x,y)}{\Rmax}]$ and define $\thetat^{(t)} := [\beta(\theta^{(t)} - \theta^\star), -\Rmax]$. Then for any $t \in [T]$ we have
\begin{align}
&\frac{\EE^{(t)}\left[\beta\log\frac{\pi^{(t)}(y\mid x)}{\pistar(y\mid x)} - r(x,y) - \beta\log\frac{\pi^{(t)}(y'\mid x)}{\pistar(y'\mid x)} + r(x,y')\right]^2}{\lambda \lor \sum_{i=1}^{t-1} \EE^{(i)}\left[\left(\beta\log\frac{\pi^{(t)}(y\mid x)}{\pistar(y\mid x)} - r(x,y) - \beta\log\frac{\pi^{(t)}(y'\mid x)}{\pistar(y'\mid x)} + r(x,y')\right)^2\right]} \\ 
&= \frac{\EE^{(t)}\left[\langle \phit(x,y) - \phit(x,y'), \thetat^{(t)}\rangle\right]^2}{\lambda \lor \sum_{i=1}^{t-1} \EE^{(i)}\left[\left(\langle \phit(x,y) - \phit(x,y'), \thetat^{(t)}\rangle\right)^2\right]} \\ 
&\leq \frac{(\thetat^{(t)})^\top \Sigma^{(t)} \thetat^{(t)}}{\lambda \lor \sum_{i=1}^{t-1} (\thetat^{(t)})^\top \Sigma^{(i)} \thetat^{(t)}}
\end{align}
where for each $i \in [T]$ we have defined $\Sigma^{(i)} := \EE^{(i)}\brk*{ (\phit(x,y) - \phit(x,y'))(\phit(x,y) - \phit(x,y'))^\top}$. Observe that $\norm{\thetat^{(t)}}_2^2 \leq 4\beta^2 B^2 + \Rmax^2 \leq \lambda$ by assumption on $\lambda$. Therefore,
\begin{align}
\frac{(\thetat^{(t)})^\top \Sigma^{(t)} \thetat^{(t)}}{\lambda \lor \sum_{i=1}^{t-1} (\thetat^{(t)})^\top \Sigma^{(i)} \thetat^{(t)}}
&\lesssim \frac{(\thetat^{(t)})^\top \Sigma^{(t)} \thetat^{(t)}}{\lambda + \sum_{i=1}^{t-1} (\thetat^{(t)})^\top \Sigma^{(i)} \thetat^{(t)}} \\ 
&\leq \frac{(\thetat^{(t)})^\top \Sigma^{(t)} \thetat^{(t)}}{(\thetat^{(t)})^\top \left(I_d + \sum_{i=1}^{t-1}\Sigma^{(i)}\right) \thetat^{(t)}} \\ 
&\leq \lambda_{\max}\left(\left(I_d + \sum_{i=1}^{t-1}\Sigma^{(i)}\right)^{-1/2} \Sigma^{(t)} \left(I_d + \sum_{i=1}^{t-1}\Sigma^{(i)}\right)^{-1/2}\right) \\ 
&\leq \Tr \left(\left(I_d + \sum_{i=1}^{t-1}\Sigma^{(i)}\right)^{-1/2} \Sigma^{(t)} \left(I_d + \sum_{i=1}^{t-1}\Sigma^{(i)}\right)^{-1/2}\right) \\ 
&= \Tr\left(\left(I_d + \sum_{i=1}^{t-1}\Sigma^{(i)}\right)^{-1}\Sigma^{(t)}\right).
\end{align}
Observe that $\Tr(\Sigma^{(t)}) \leq \max_{x,y} \norm{\phit(x,y)}_2^2 \lesssim 1$. Hence by \cref{lemma:elliptic}, we have
\begin{align}
&\sum_{t=1}^T\frac{\EE^{(t)}\left[\beta\log\frac{\pi^{(t)}(y\mid x)}{\pistar(y\mid x)} - r(x,y) - \beta\log\frac{\pi^{(t)}(y'\mid x)}{\pistar(y'\mid x)} + r(x,y')\right]^2}{\lambda \lor \sum_{i=1}^{t-1} \EE^{(i)}\left[\left(\beta\log\frac{\pi^{(t)}(y\mid x)}{\pistar(y\mid x)} - r(x,y) - \beta\log\frac{\pi^{(t)}(y'\mid x)}{\pistar(y'\mid x)} + r(x,y')\right)^2\right]} \\ 
&\lesssim \sum_{t=1}^T \Tr\left(\left(I_d + \sum_{i=1}^{t-1}\Sigma^{(i)}\right)^{-1}\Sigma^{(t)}\right) \\ 
&\lesssim d\log(T+1).
\end{align}
Since $\pi^{(1)},\dots,\pi^{(T)} \in \Pi$ were arbitrary, this completes the proof.
\end{proof}

The proof is now immediate from \cref{thm:xpo-sharpening-apx} and the above lemmas.

\begin{proof}[Proof of \cref{thm:xpo-softmax-apx}]
By the assumption on $\theta^\star$ and choice of $\beta$, the model $\pistarb$ defined by $\pistarb(y\mid x) \propto \piref(y\mid x)^{1+\beta^{-1}}$ satisfies $\pistarb = \pi_{(1+\beta^{-1})\theta^\star} \in \Pi_{\phi,B}$. By \cref{lemma:softmax-net}, we have $\cN(\Pi_{\phi,B},\epdisc) \leq (6B/\epdisc)^d$. Take $\Rmax := \sqrt{4\beta^2 B^2 + (2B+\log |\MY|)^2}$. We know that $r(x,y) := \log\piref(y\mid x)$ satisfies $|r(x,y)| \leq 2B+\log|\MY|$ for all $x,y$. By \cref{lemma:sec-softmax-bound}, we therefore get that $\SEC(\Pi_{\phi,B},r,T,\beta,\Rmax^2;\piref) \lesssim d\log(T+1)$. Substituting these bounds into \cref{thm:xpo-sharpening-apx} yields the claimed result.
\end{proof}

\end{document}